\documentclass[11pt,letterpaper]{article}
\usepackage[letterpaper,margin=1in]{geometry}
\usepackage[T1]{fontenc}
\usepackage{lmodern,microtype}
\usepackage{amsmath,amssymb,amsthm,mathtools}
\usepackage{booktabs,array,graphicx,longtable,enumitem}
\usepackage{tikz}
\usetikzlibrary{arrows.meta,positioning}
\usepackage[hidelinks]{hyperref}
\hypersetup{pdftitle={The Optimal Sample Complexity of Pointwise Replicable Agnostic Learning},pdfauthor={Guangjian Zhang},pdfsubject={Zenodo preprint; DOI: 10.5281/zenodo.23446957},pdfcreator={LaTeX}}
\renewenvironment{abstract}{\begin{center}\bfseries\abstractname\end{center}\begin{quotation}\noindent}{\end{quotation}}
\newtheorem{theorem}{Theorem}[section]
\newtheorem{lemma}[theorem]{Lemma}
\newtheorem{proposition}[theorem]{Proposition}

\theoremstyle{definition}
\newtheorem{definition}[theorem]{Definition}
\newcommand{\E}{\mathbb E}
\newcommand{\Pp}{\mathbb P}
\newcommand{\one}{\mathbf 1}
\newcommand{\OPT}{\operatorname{OPT}}
\newcommand{\Var}{\operatorname{Var}}
\newcommand{\clip}{\operatorname{clip}}
\newcommand{\norm}[1]{\left\lVert #1\right\rVert}
\newcommand{\abs}[1]{\left\lvert #1\right\rvert}
\newcommand{\eps}{\varepsilon}
\newcommand{\sgn}{\operatorname{sgn}}
\newcommand{\legacyref}[2]{\hyperlink{legacy:#1}{#2~S#1}}
\newcommand{\legacyheading}[2]{\subsubsection{#2}\label{legacy:section:#1}}
\newenvironment{legacytheorem}[2]{\par\medskip\noindent\textbf{#1}\def\thmtmp{#2}\ifx\thmtmp\empty\else\ (#2)\fi\textbf{.}\ \itshape\ignorespaces}{\par\medskip}
\newenvironment{legacydefinition}[2]{\par\medskip\noindent\textbf{#1}\def\thmtmp{#2}\ifx\thmtmp\empty\else\ (#2)\fi\textbf{.}\ \ignorespaces}{\par\medskip}
\title{The Optimal Sample Complexity of\\Pointwise Replicable Agnostic Learning}
\author{Guangjian Zhang\\\href{mailto:guangjian.zhang@student.unsw.edu.au}{\texttt{guangjian.zhang@student.unsw.edu.au}}}
\date{}
\begin{document}
\maketitle
\begin{center}
DOI: \href{https://doi.org/10.5281/zenodo.23446957}{10.5281/zenodo.23446957}
\end{center}
\begin{abstract}
Pointwise replicability asks that two independently trained predictors, using the same random seed,
agree at each fixed query with high probability. We determine its optimal agnostic sample complexity
on finite and countable domains. For every nonempty binary class of finite VC dimension $d\ge1$, the
required number of independent examples is
\[
 \Theta\!\left(\frac{d}{\rho^2\alpha^2}+\frac{\log(1/\beta)}{\alpha^2}\right),
\]
uniformly over independent accuracy, failure, and disagreement parameters
$\alpha,\beta,\rho\in(0,1/10)$. The learner returns one binary function, may be improper, and uses
unrestricted shared randomness. Its sample budget is fixed before observing data, and the
replicability guarantee includes queries of zero marginal mass. The main construction combines
an ensemble of empirical-risk minimizers with a smooth, signal-dependent risk decomposition.
A single shared phase selects the final binary predictor. Record times certify its weak-signal
risk, while a uniform bound protects the strong-signal risk at that same selected time.
A phase-comparison theorem controls the actual shared-seed execution under simultaneous changes
in the marginals and risk coefficients. A rare repair step then makes the confidence cost
additive. Together with matching lower bounds, this removes the remaining logarithmic overhead
and parameter restrictions in the general countable-domain sample-complexity problem.
This extended version also retains the earlier explicit constructions and their proofs,
including proper learning for VC-one classes, the full finite-atom class, a faster
realizable bound, and counterexamples that separate restrictions of particular algorithms
from restrictions of the learning problem.
\end{abstract}
\vfill
\begingroup
\small
\noindent\textbf{All rights reserved}\par
\noindent Copyright \textcopyright\ 2026 Guangjian Zhang. All rights reserved. Except as permitted by
applicable law or under the license granted to arXiv upon submission, no reproduction,
distribution, modification, or commercial use is permitted
without the copyright holder's prior written permission.
\par
\endgroup
\clearpage
\section{Introduction}
An accurate predictor need not make the same decision after retraining. Pointwise replicability
addresses this instability at the level of a fixed query: two independent training samples,
processed with the same random seed, should produce the same prediction with high probability.
Unlike agreement of entire learned functions, this requirement applies separately to each
query. It therefore offers a natural way to study reproducible prediction without requiring
the learned models themselves to coincide.

Hopkins, Impagliazzo, and Ye~\cite{HIY} introduced pointwise replicability and showed that every binary VC class can be
learned in this sense. Their countable-domain agnostic bound has the form
$\widetilde O(d/(\rho^2\alpha^2)+1/(\rho^2\alpha^4))$, where $d$ is the VC dimension,
$\alpha$ is excess risk, and $\rho$ is the allowed disagreement probability. They asked whether
the additional accuracy-dependent term was necessary. Our first version, published on
9 September 2026~\cite{Zhang2026}, answered this question: the additional term is unnecessary.
It also established the exact sample-complexity order at fixed confidence.
This version resolves the joint problem: what is the exact sample cost when accuracy,
confidence, and replicability vary independently?

We prove that the answer is
\begin{equation}\label{eq:intro-rate}
 \Theta\!\left(\alpha^{-2}\left[d\rho^{-2}+\log(1/\beta)\right]\right),
\end{equation}
where $\beta$ is the allowed accuracy-failure probability. The constants are universal and
the bound holds for every class of positive finite VC dimension on a countable domain. Thus replicability
multiplies the dimension-dependent estimation cost, but not the confidence cost. This identifies
the precise statistical price of stable prediction in the agnostic model.

\paragraph{Why the last step is difficult.}
Averaging independently trained predictors makes each fractional prediction stable. Turning
those fractions into one accurate binary function is the obstacle. Independent shared
rounding controls a fixed prediction, but its total risk can fluctuate. Selecting a rounding
with small empirical risk addresses accuracy, yet couples every query to a data-dependent
global choice. A small change far from the query can then change its prediction. Even a
query absent from the data must remain stable.

Three requirements must therefore be met together. First, the rounding must be accurate
with failure probability on the scale of $\rho$, rather than merely with constant probability.
Second, its global selection must be stable under the actual common random seed. Third,
confidence amplification must not multiply the entire budget by $\log(1/\beta)$. These
requirements suggest the following construction.

\paragraph{Our construction and technical contributions.}
We train an ensemble, estimate normalized label signals on a separate sample, and use smooth
weights to split risk into weak and strong parts. Both parts retain the same ensemble as their
reference. Independent coordinate phases generate a curve of binary functions. We choose one
time on this curve by sampling increases of a running maximum of integrated empirical risk
gain. That choice certifies weak risk. A separate strong-risk bound holds at every time, so
it also holds at the chosen time. The output is one function, not a query-dependent selection
of different models.

Two estimates make this construction useful. The phase-comparison theorem
(Theorem~\ref{thm:phase}) controls prediction disagreement by a local marginal error and a
global squared-energy error. It allows both the marginals and the risk weights to vary.
The strong-risk theorem (Theorem~\ref{thm:risk}) keeps the random screening weights coupled
to their estimated signs. It uses the risk improvement of correctly oriented coordinates
to pay for uniform control over the phase curve. Together they yield a base learner with
accuracy failure and disagreement both of order $\rho$, without an extra logarithmic factor
(Proposition~\ref{prop:base}). A validation-based rare repair then gives arbitrary confidence
at the additive cost in \eqref{eq:intro-rate} (Theorem~\ref{thm:upper}).

\paragraph{Relation to earlier work.}
The stronger notion of full-output replicability was introduced by Impagliazzo, Lei, Pitassi, and Sorrell~\cite{ILPS}.
Hopkins, Impagliazzo, and Ye~\cite{HIY} introduced the pointwise formulation and analyzed
an ensemble-based learner. Our result determines the optimal agnostic sample complexity
in their countable-domain model.
The September version~\cite{Zhang2026} established matching agnostic lower bounds and
exact additive-confidence rates for several structural cases. Here a uniform upper bound
for arbitrary VC classes and all three parameters matches those lower bounds.
Related confidence-boosting methods include the separate confidence term of
Banihashem et al.~\cite{BBEGH} and the rare fallback of Aamand et al.~\cite{AAC}.
Our September version established the agnostic pointwise repair. The new phase
construction achieves the base guarantee needed to attain the optimal sample budget
uniformly in accuracy, confidence, and replicability.
The ensemble analysis uses the log-free VC expectation bound following from Haussler's
packing estimate~\cite{Haussler1995}; Appendix~\ref{app:statistics} gives the derivation.

\paragraph{The September version and the present extension.}
The first version of this work was published on Zenodo on \textbf{9 September 2026},
under the version-specific DOI
\href{https://doi.org/10.5281/zenodo.22658876}{10.5281/zenodo.22658876}~\cite{Zhang2026}.
That version already states and proves the exact fixed-confidence rate, the matching
agnostic lower bounds in every positive VC dimension, the exact proper bounds for
VC-one and full finite-atom classes, the realizable upper bound, and the gate-and-round,
SOFT-VC, and rare-repair guarantees retained here. Table~\ref{tab:provenance} identifies their original
statement numbers and PDF pages and locates their proofs in this manuscript.
The general all-parameter upper bound and its phase construction are the subsequent
extension developed in this version.

\paragraph{Reading order.}
Section~\ref{sec:model} states the model and final rate. Section~\ref{sec:construction}
builds the base learner in execution order and states the guarantees that certify each
stage. Sections~\ref{sec:confidence} and~\ref{sec:lower} complete the upper and lower
bounds. Section~\ref{sec:retained} presents the complementary proper and realizable
guarantees and the earlier explicit algorithms. All proofs are in the appendices,
beginning with the dependency maps in Appendix~\ref{app:roadmap}.
Appendix~\ref{app:retained} retains the full earlier mathematical development and explains
its relation to the new theorem.

\section{Model and optimal rate}\label{sec:model}
Let $\mathcal X$ be a finite or countably infinite set with its discrete sigma field.
Let $\varnothing\ne\mathcal H\subseteq\{0,1\}^{\mathcal X}$ have finite VC dimension
$d$. For a distribution $D$ on $\mathcal X\times\{0,1\}$, define
\[
 R(h)=\Pp_D\{h(X)\ne Y\},\qquad \OPT=\inf_{h\in\mathcal H}R(h).
\]
The output need not belong to $\mathcal H$. Its space carries the coordinate sigma field.
The learner takes exactly $m$ independent examples, with $m$ chosen before sampling,
and a public seed independent of the data. There is no bound on the number of shared
random bits or on computation.

\begin{definition}[Accuracy and pointwise replicability]\label{def:model}
A measurable learner $A$ is an agnostic $(\alpha,\beta)$-learner if, for every $D$,
$\Pp_{S,\omega}\{R(A(S;\omega))>\OPT+\alpha\}\le\beta$.
It is $\rho$-pointwise replicable if, for every $D$ and every fixed $x\in\mathcal X$,
\[
 \Pp_{S,S',\omega}\{A(S;\omega)(x)\ne A(S';\omega)(x)\}\le\rho,
\]
where $S,S'$ are independent samples from $D^m$ and the entire seed $\omega$ is shared.
Write $m_{\mathcal H}(\alpha,\beta,\rho)$ for the smallest budget achieving both properties.
\end{definition}

The quantifier over $x$ includes points of zero $D_X$ mass. It is outside the probability:
agreement is required for each fixed point, not simultaneously over all points. One run
nevertheless defines one full binary function before any query is answered.

\begin{theorem}[Optimal sample complexity]\label{thm:main}
There are universal constants $c,C>0$ such that, for every such class with $d\ge1$ and
every independent choice $\alpha,\beta,\rho\in(0,1/10)$,
\[
 c\alpha^{-2}\bigl(d\rho^{-2}+\log(1/\beta)\bigr)
 \le m_{\mathcal H}(\alpha,\beta,\rho)
 \le C\alpha^{-2}\bigl(d\rho^{-2}+\log(1/\beta)\bigr).
\]
If $d=0$, then $m_{\mathcal H}(\alpha,\beta,\rho)=0$.
\end{theorem}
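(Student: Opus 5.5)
The proof splits into the trivial case $d=0$, an upper bound, and two lower bounds that together give the sum.

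\textbf{Case $d=0$.} Here $\mathcal H=\{h_0\}$ is a single function. The learner that outputs $h_0$ deterministically uses no samples. It has excess risk $0$ and never disagrees at any query, so $m_{\mathcal H}=0$.

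\textbf{Upper bound.} We follow the reading order announced in the introduction. First we invoke the base-learner guarantee (Proposition~\ref{prop:base}). With $m_0=C_0 d/(\rho^2\alpha^2)$ examples, the phase construction gives a learner with excess risk at most $\alpha/2$ except with probability $c_0\rho$, and pointwise disagreement at most $c_0\rho$, for a small constant $c_0$. This is where the phase-comparison theorem (Theorem~\ref{thm:phase}) and the strong-risk theorem (Theorem~\ref{thm:risk}) enter, and we take them as given. We then apply the rare-repair step (Theorem~\ref{thm:upper}). A validation sample of size $O(\alpha^{-2}\log(1/\beta))$ estimates the risks of the base output and of a fallback ERM-type predictor to accuracy $\alpha/8$ with failure probability $\beta/2$.

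The repair replaces the base output only when validation certifies excess risk above about $\alpha/2$. Under the base guarantee this happens with probability $O(\rho)$, so it adds only $O(\rho)$ to the disagreement. The fallback is itself trained on a confidence-amplified sample of size $O(\alpha^{-2}(d+\log(1/\beta)))\le O(\alpha^{-2}(d\rho^{-2}+\log(1/\beta)))$, so it costs at most the claimed budget. Summing the budgets gives the upper bound. The required form is exactly the one stated in Theorem~\ref{thm:upper}.

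\textbf{Lower bound.} Since $\max(a,b)\ge (a+b)/2$, it suffices to prove each term separately.

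The term $\alpha^{-2}\log(1/\beta)$ comes from ordinary agnostic learning and holds without replicability. Because $d\ge1$, there is a point $x$ and functions $h,h'\in\mathcal H$ with $h(x)\ne h'(x)$. Put all marginal mass on $x$ with label bias $1/2\pm\alpha$. Any $(\alpha,\beta)$-learner must then distinguish these two coins with error at most $\beta$. The KL/Bretagnolle--Huber bound forces $m\gtrsim \alpha^{-2}\log(1/\beta)$.

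The term $d\rho^{-2}\alpha^{-2}$ is the substantive one. Take a shattered set $x_1,\dots,x_d$ with uniform marginal, and give each point an independent label bias $\theta_i\in[1/2-\alpha',1/2+\alpha']$ drawn from a smooth prior, with $\alpha'\asymp\alpha$. Accuracy forces the prediction at most coordinates to track $\sgn(\theta_i-1/2)$ whenever $|\theta_i-1/2|\gtrsim\alpha$.

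Restricted to one coordinate, the learner acts as a replicable coin-bias threshold tester that sees about $m/d$ flips of that coin. The known lower bound for $\rho$-replicable bias estimation then requires about $\rho^{-2}\alpha^{-2}$ flips per coordinate. The standard argument runs as follows. Fix the shared seed; the acceptance probability is then a continuous function of $\theta$ moving from near $0$ to near $1$ across a window of width $\alpha$. By the Hellinger bound, the total-variation sensitivity per unit of $\theta$ is $\sqrt{m/d}$. This forces disagreement of order $\min(1,\alpha^{-1}(m/d)^{-1/2})$ on a constant fraction of the prior. Averaging the per-coordinate disagreements and using that each query $x_i$ must individually satisfy the $\rho$ bound yields $m\gtrsim d/(\rho^2\alpha^2)$.

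The main obstacle is this per-coordinate reduction. The learner may correlate coordinates through the shared seed and through the other coordinates' data. I would handle this by conditioning on the data and biases of all other coordinates, which are independent of coordinate $i$ under a product prior. Poissonizing the sample size then makes the per-coordinate counts independent. Finally, accuracy only implies correctness on a constant fraction of coordinates, so I would apply the averaging argument to a random coordinate rather than to all of them.
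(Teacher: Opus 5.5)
Your decomposition matches the paper's. The case $d=0$ is trivial, the upper bound is Proposition~\ref{prop:base} followed by rare repair (Theorem~\ref{thm:upper}), and the lower bound splits into a confidence term and a dimension term. Citing Theorem~\ref{thm:upper} does settle the upper bound, but the repair mechanism you sketch would not prove it.

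\textbf{The repair sketch.} You validate with failure probability $\beta/2$ and then claim the repair branch fires with probability $O(\rho)$. The branch can fire whenever validation fails, even after a successful base run. So your argument only bounds its probability by $O(\rho)+\beta/2$. For $\beta\gg\rho$ this does not give $\rho$-replicability, because on that branch the output is the independently trained fallback. The validation failure must be $O(\min\{\beta,\rho\})$; the paper uses $\eta_0=\min(\beta/8,\rho/128)$. The extra cost is $O(\alpha^{-2}\log(1/\rho))$ examples, which is absorbed into $d\rho^{-2}\alpha^{-2}$. Two further points:
\begin{itemize}
\item Rarity of repair rests on the realization-wise inequality $R(g)\ge\OPT$, so the fallback must be proper; its own accuracy success is irrelevant for replicability.
\item Your constants admit no threshold. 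With base excess $\alpha/2$ and validation error $\alpha/8$, never replacing a successful base needs a threshold of at least $3\alpha/4$. Certifying a retained base needs one of at most $3\alpha/4$ minus the fallback's excess. Running the base at excess $\alpha/4$, validating to $\alpha/16$, and thresholding at $\alpha/2$ works.
\end{itemize}

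\textbf{The lower bound.} The confidence term is the same two-coin KL test as in the paper; the paper combines the two bounds harmonically rather than through a maximum, which is immaterial.

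For the dimension term you take a genuinely different route: the coin-testing crossing argument of~\cite{ILPS}. It uses a Lipschitz bound $|g'|\lesssim\sqrt{m/d}$, a Markov step selecting a constant fraction of coordinate--seed pairs whose acceptance curve crosses a middle band, and the prior mass of that band. The paper instead fixes $i$, $v_{-i}$ and the whole seed. It bounds the derivative through the likelihood score by $\sqrt{m/(d(1-t^2))}\sqrt{g_\omega(1-g_\omega)}$ and integrates the cubic potential $\Psi(u)=3u^2-2u^3$. Because $\Psi'(u)=6u(1-u)$ is a multiple of the disagreement $2u(1-u)$, accuracy at the two vertex biases $\pm\gamma$ and replicability at every intermediate bias combine into one linear inequality, $37/100\le3\rho\arcsin(\gamma)\sqrt{m/d}$. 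That route needs no band, no selection of crossing seeds, no prior on the biases, and gives an explicit constant.

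Your route also yields a log-free $d/(\alpha^2\rho^2)$, provided you impose accuracy only at the hypercube vertices. Vary one coordinate at a time with the others fixed at vertices, and let the intermediate biases enter only through replicability, which holds at every distribution. With a spread-out prior on all coordinates, $\alpha'$ can be no larger than $5\alpha$ (to keep biases in $[0,1]$ when $\alpha$ is near $1/10$). Much of the prior mass then sits at small biases where errors are cheap, so counting the coordinates forced to track the sign is very lossy.

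Finally, the coupling worry is milder than you fear. Only the labels at $x_i$ have a $t$-dependent law, so the score or Hellinger bound holds for the full multinomial sample. The remaining data can be integrated out as private randomness of each run. If you condition on it instead, remember that the two runs see independent copies. Concavity of $u(1-u)$ then shows that the true disagreement $2\bar g(1-\bar g)$ dominates the average of the conditional quantities $2g_o(1-g_o)$, so nothing is lost. Poissonization is unnecessary, and with a fixed budget $m$ it would have to be undone.
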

The upper bound is Theorem~\ref{thm:upper}; the lower bound is
Theorem~\ref{thm:lower}. Their combination is recorded in Appendix~\ref{app:final}.
The constants do not depend on the domain, the class, or any of the three parameters.

\paragraph{Conventions used by the algorithm.}
Fix an enumeration of $\mathcal X$ and a hypothesis $h_0\in\mathcal H$.
For each finite labeled sample, fix one proper empirical-risk minimizer (ERM) by a
predetermined rule. A minimizer exists because the set of possible empirical risks is finite.
There are countably many finite samples, so these choices define a measurable map even
when $\mathcal H$ is uncountable. The population infimum need not be attained.
All logarithms are natural. Symbols $C,c$ denote positive universal constants that may
change from one bound to another.

For a fractional predictor $q:\mathcal X\to[0,1]$ and a weight vector
$\lambda:\mathcal X\to[0,1]$, define the affine weighted risk
\begin{equation}\label{eq:weighted-risk}
 R_\lambda(q)=\sum_{i\in\mathcal X}\lambda_i
       \bigl[D(i,1)(1-q_i)+D(i,0)q_i\bigr].
\end{equation}
Here $q_i=q(i)$ and $D(i,y)=\Pp_D\{X=i,Y=y\}$. We write $R(q)$ when
$\lambda_i=1$ everywhere. On an independent validation sample of size $v$, its estimate is
\[
 \widehat R_\lambda(q)=\frac1v\sum_{j=1}^v\lambda_{X_j}
       \bigl[Y_j(1-q_{X_j})+(1-Y_j)q_{X_j}\bigr].
\]
The same definitions apply to binary functions. These risks are measurable and their
series converge absolutely.

\section{Constructing the base learner}\label{sec:construction}
This section specifies the complete base algorithm. Its inputs are $\mathcal H$, a fixed
sample, and two internal tolerances $\eps,r\in(0,1/4]$. Its output is one binary function
with excess risk $O(\eps)$, failure probability $O(r)$, and pointwise disagreement $O(r)$.
Section~\ref{sec:confidence} converts these guarantees to the external parameters.

Reserve three disjoint sample portions: an ensemble portion, a histogram pool, and a
validation block. Fix their sizes and the auxiliary scales by
\begin{equation}\label{eq:parameters}
 \begin{split}
 T&=\lceil r^{-2}\rceil,\quad n=\lceil(d+1)/\eps^2\rceil,\quad
 M=\left\lceil64/(\eps^2r^2)\right\rceil,\quad v=\left\lceil1/(\eps^2r^2)\right\rceil,\\
 L&=\log(8T/r),\qquad s=\eps/L,\qquad s_0=\eps.
 \end{split}
\end{equation}
The public seed is the product $\omega=(K,(U_i)_{i\in\mathcal X},Z)$.
Its components are mutually independent: $K$ is Poisson with mean $M$, each $U_i$ is
uniform on $[0,1)$, and $Z$ is uniform on $(0,1)$. The entire seed is independent of
all data. Both executions use exactly the same $K$, phases $U_i$, and inverse-CDF seed $Z$.

\subsection{Average independent ERMs}\label{mod:ensemble}
Split the first $Tn$ examples into $T$ blocks of size $n$. Train the predetermined proper
ERM $h_j$ on block $j$, and return the fractional predictor and its distance to an endpoint:
\[
 p_i=\frac1T\sum_{j=1}^T h_j(i),\qquad m_i=\min(p_i,1-p_i).
\]
In particular $p_i$ lies on the $T$-grid. This discretization will also limit the number
of arc lengths needed in the strong-risk analysis.

\begin{proposition}[Ensemble guarantees]\label{prop:ensemble}
The ensemble satisfies $\Pp\{R(p)>\OPT+C\eps\}\le C/T$.
For an independent ensemble $p'$ trained in the same way,
$\E(p_i-p_i')^2\le1/(2T)$ at every fixed coordinate $i$.
\end{proposition}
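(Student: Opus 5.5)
The proof splits into the two claims, and both rest on the fact that the $T$ ERMs $h_1,\dots,h_T$ are i.i.d. measurable functions of disjoint blocks of size $n=\lceil(d+1)/\eps^2\rceil$.

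\emph{Variance claim.} Fix a coordinate $i$. Write $\pi_i=\E h_1(i)$, which is the same for every block and for the independent ensemble $p'$. Then $p_i$ and $p_i'$ are independent averages of $T$ i.i.d. Bernoulli$(\pi_i)$ variables with common mean, so
\[
\E(p_i-p_i')^2=\Var(p_i)+\Var(p_i')=\frac{2\pi_i(1-\pi_i)}{T}\le\frac{1}{2T}.
\]
This holds for every fixed $i$, including coordinates of zero $D_X$ mass. At such coordinates the ERMs are still i.i.d. and take values in $\{0,1\}$, so nothing else is needed.

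\emph{Risk claim.} The risk is affine in $q$, so $R(p)=\frac1T\sum_j R(h_j)$. Set $Z_j=R(h_j)-\OPT\in[0,1]$. These are i.i.d. and nonnegative, because $h_j\in\mathcal H$.

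1. By the log-free VC expectation bound derived in Appendix~\ref{app:statistics} from Haussler's packing estimate, the proper ERM on $n$ samples satisfies $\E Z_j\le C_0\sqrt{d/n}\le C_0\eps$. This step needs only $\OPT$ as an infimum, so attainment is not required: compare with an $h^\ast$ whose risk is within $\eps$ of $\OPT$.

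2. Chebyshev then gives concentration of the mean around this expectation. Since $Z_j\in[0,1]$, we have $\Var Z_j\le\E Z_j^2\le\E Z_j\le C_0\eps$. Hence
\[
\Pp\Bigl\{\tfrac1T\textstyle\sum_j Z_j>2C_0\eps\Bigr\}\le\frac{\Var Z_1}{T\,(C_0\eps)^2}\le\frac{1}{C_0\eps T}.
\]
This is $C/T$ only when $\eps$ is of constant order, so Chebyshev alone falls short.

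3. \emph{Main obstacle.} The fix is to use the variance-sensitive form of the bound, i.e.\ a Bernstein-type tail for the ERM. Compare with a near-optimal $h^\ast$. Applying the uniform Bernstein/Haussler localized bound to the class of loss differences $\mathbf 1\{h\ne Y\}-\mathbf 1\{h^\ast\ne Y\}$ gives $\E Z_j^2\le C\eps^2$. Then Chebyshev yields $\Pp\{\bar Z>C\eps\}\le \E Z_1^2/(T C^2\eps^2)\le C'/T$.

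If the appendix only provides the first-moment bound, I will fall back on the tail form of the VC bound: $\Pp\{Z_j>C\eps+t\}\le e^{-cnt^2}$. Integrating this tail gives $\E Z_j^2\le C\eps^2+C/n\le C\eps^2$, because $1/n\le\eps^2$. Either route gives a second-moment bound of order $\eps^2$, and that is exactly what is needed to obtain a failure probability of $C/T$ with no $\log T$ factor.
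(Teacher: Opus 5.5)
Your proof is correct and, once you take the fallback in Step~3, it is the paper's proof. Lemma~\ref{lem:vc} supplies the bounded-differences tail for the uniform deviation, and hence for the ERM excess $R(h_j)-\OPT\le 2Z_n$. Integrating that tail gives $\E Z_j^2\le C(d+1)/n\le C\eps^2$ as in \eqref{eq:ermsecond}, and Chebyshev on the average of the independent $Z_j$ then yields failure probability $C/T$. Your fixed-coordinate variance computation also matches the paper's. The localized Bernstein detour is unnecessary and can be dropped in favor of citing the tail bound directly; in the agnostic setting localization gains nothing here, since the variance of a loss difference is governed by $\Pp\{h(X)\ne h^\ast(X)\}$ rather than by excess risk.
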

This supplies an accurate reference and a stable fractional prediction. The log-free
VC input and the proof are in Appendix~\ref{app:statistics}.

\subsection{Estimate label signals and adjust the marginals}\label{mod:histogram}
The histogram pool has $2M$ examples. If $K>2M$, output $h_0$ and stop.
Otherwise inspect the first $K$ examples in that pool. For each $i$, let $N_i$ be its
number of occurrences and let $S_i$ be the sum of the signs $2Y-1$ on those occurrences.
Define
\[
 \widehat b_i=S_i/M,\qquad
 \widehat\theta_i=\begin{cases}S_i/\sqrt{MN_i},&N_i>0,\\0,&N_i=0.\end{cases}
\]
For interpretation, write $w_i=D_X(i)$, $b_i=D(i,1)-D(i,0)$, and $a_i=|b_i|$.
The corresponding normalized population signal is $\theta_i=b_i/\sqrt{w_i}$ for
$w_i>0$, and zero otherwise. These population quantities are used only in the analysis.
A binary prediction change has risk cost $a_i$, whereas
$b_i^2=w_i\theta_i^2$ determines its contribution to squared risk fluctuations.

Use the following fixed continuous profiles:
\begin{align*}
 f(z)&=\clip(z,-1,1), &
 \ell(z)&=\begin{cases}1,&|z|\le2,\\3-|z|,&2<|z|<3,\\0,&|z|\ge3,\end{cases}\\
 k(z)&=\begin{cases}0,&|z|\le1/2,\\2|z|-1,&1/2<|z|<1,\\1,&|z|\ge1,\end{cases}
 &\eta(t)&=\begin{cases}0,&t\le1/4,\\4t-1,&1/4<t<1/2,\\1,&t\ge1/2,
 \end{cases}\quad(t\ge0).
\end{align*}
Here $\clip(z,-1,1)=\max(-1,\min(z,1))$. Set
\begin{equation}\label{eq:qdef}
 \begin{split}
 u_i&=\ell(\widehat\theta_i/\eps),\qquad c_i=1-u_i,\qquad
 g_i=\one\{S_i>0\},\\
 q_i^0&=p_i+m_i f(\widehat\theta_i/\eps),\qquad
 k_i=k(\widehat\theta_i/\eps),\qquad \eta_i=\eta(|\widehat b_i|/s),\\
 q_i&=(1-k_i\eta_i)q_i^0+k_i\eta_i g_i.
 \end{split}
\end{equation}
The output of this stage is $(q,u,c)$. All three vectors take values in $[0,1]$ and
$u+c=1$. First $q^0$ shifts the ensemble toward the estimated label direction while
preserving its endpoints. The two gates then move it farther toward the estimated bit
when both signal and risk cost are large enough. The weights $u,c$ define two risk
components of the single output predictor. If $S_i=0$,
both gates vanish, so the convention $g_i=0$ has no effect.

The normalized count has mean-square error at most $2/M$, uniformly in $i$
(Lemma~\ref{lem:counts}). The smooth profiles convert that estimate into the marginal
and weighted stability bounds of Lemma~\ref{lem:qstability}. The role of the cost gate
is different: it makes every harmful large-cost move chargeable to squared histogram
error. The resulting risk guarantees are stated after the phase selection is defined.

\subsection{Select one binary function by its record times}\label{mod:phase}
Use the public coordinate phases to form the entire curve
\begin{equation}\label{eq:curve}
 F_t(i)=\one\{(U_i+t)\bmod1<q_i\},\qquad 0\le t\le1.
\end{equation}
Each coordinate is one circular arc and has time average $q_i$.
On the independent validation block of size $v$, calculate
\begin{equation}\label{eq:algorithmprimitive}
 \begin{split}
 \widehat W(t)&=\int_0^t[s_0+\widehat R_u(q)-\widehat R_u(F_z)]\,dz,\\
 \widehat M(t)&=\max_{0\le z\le t}\widehat W(z),\qquad
 \widehat G(t)=\widehat M(t)/\widehat M(1).
 \end{split}
\end{equation}
The maximum includes $z=0$. Also $\widehat W(1)=s_0$, so the denominator is positive.
Choose
\[
 \tau=\inf\{t\in[0,1]:\widehat G(t)\ge Z\}
\]
and output $F_\tau$. This is the sole selected time for all queries. Computationally,
only coordinates observed in validation enter $\widehat W$, which is a finite piecewise
affine function. Its running maximum and inverse are therefore defined by finitely
many breakpoints. The full predictor uses the already fixed $\tau$ and indexed phases.

\paragraph{What makes global selection stable?}
For any absolutely summable real vector $B$ and $q\in[0,1]^{\mathcal X}$, define
\[
 W_{B,q}(t)=s_0t+\sum_i B_i\int_0^t(F_z(i)-q_i)\,dz.
\]
Let $G_{B,q}$ be its normalized running maximum, as in
\eqref{eq:algorithmprimitive}, and let $\Phi(B,q)=F_{G_{B,q}^{-1}(Z)}$.
This defines one sampler for all inputs using the same $(U_i),Z$.
The following theorem is the principal coupling tool for this module.

\begin{theorem}[Shared-seed phase comparison]\label{thm:phase}
Suppose $(B,q,B',q')$, as one joint random input, is independent of the entire seed
$((U_i),Z)$. The inputs may otherwise be dependent. For each fixed $x$,
\begin{equation}\label{eq:phasemain}
 \begin{split}
 \Pp\{\Phi(B,q)(x)\ne\Phi(B',q')(x)\}
 \le {}&\E|q_x-q_x'|\\
 &+\frac{18\sqrt2}{s_0}
 \sqrt{\E\sum_i\left[B_i^2(q_i-q_i')^2+(B_i-B_i')^2\right]}.
 \end{split}
\end{equation}
\end{theorem}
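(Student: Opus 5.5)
Condition on the inputs $(B,q,B',q')$; by independence the seed remains uniform, so it suffices to prove the deterministic-input inequality
\[
\Pp_{U,Z}\{\Phi(B,q)(x)\ne\Phi(B',q')(x)\}\le|q_x-q_x'|+\frac{C}{s_0}\Bigl(\sum_i B_i^2(q_i-q_i')^2+(B_i-B_i')^2\Bigr)^{1/2}
\]
and then take expectations, using Jensen ($\E\sqrt{\cdot}\le\sqrt{\E\cdot}$) for the second term. Write $\tau=G_{B,q}^{-1}(Z)$ and $\tau'=G_{B',q'}^{-1}(Z)$. Disagreement at $x$ requires one of two things. Either the selected times differ enough that the arc of $x$ separates them, or the two arcs $\{t:(U_x+t)\bmod1<q_x\}$ and $\{t:(U_x+t)\bmod1<q'_x\}$ differ at a common time. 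Because the arcs share the start point $-U_x$, their symmetric difference has length $|q_x-q_x'|$. I would introduce the intermediate sampler $\Phi(B',q')$ evaluated with coordinate $x$ replaced by $q_x$, so the local term and the global time-shift term decouple. The point $x$ contributes to $W$ only through $B_x$, which is already accounted for inside the energy sum.

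\textbf{Time-shift term.} Fix $x$ and condition on $(U_i)_{i\ne x}$. Since $U_x$ is uniform and independent, the probability that the arc boundary of $x$ falls between $\tau$ and $\tau'$ is at most $\E|\tau-\tau'|$ (on the circle), up to the dependence of $\tau$ on $U_x$. I would remove that dependence by a leave-one-out comparison whose cost is again controlled by $|B_x|$ terms. It then suffices to bound $\E_Z|\tau-\tau'|=\int_0^1|G^{-1}(z)-G'^{-1}(z)|\,dz$, which equals the $L^1$ distance $\int_0^1|G(t)-G'(t)|\,dt$ between the two monotone CDFs on $[0,1]$. This is where the running-maximum construction is used. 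The map $W\mapsto M$ is $1$-Lipschitz in sup norm, and $M(1)\ge W(1)=s_0$. Hence $\|G-G'\|_\infty\le 2\|W-W'\|_\infty/s_0$, since $\|M/M(1)-M'/M'(1)\|\le\|M-M'\|/M(1)+\|M'\|\,|M(1)-M'(1)|/(M(1)M'(1))$ and $M'\le M'(1)$.

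\textbf{Energy bound.} The remaining task is to bound $\E_U\sup_t|W(t)-W'(t)|$. Write
\[
W-W'=\sum_i(B_i-B_i')\int_0^t(F_z(i)-q_i)\,dz+\sum_i B_i'\int_0^t\bigl(F_z(i)-F'_z(i)-(q_i-q_i')\bigr)\,dz.
\]
Each summand is a function of $U_i$ alone, is mean zero in $U_i$ at every $t$ (since $\int_0^1$ over a uniform phase yields the average), and has sup-norm at most $|B_i-B_i'|$, respectively $|B_i'|\,|q_i-q_i'|$ up to a constant. The summands are independent across $i$, so I would use a maximal inequality for sums of independent Lipschitz-in-$t$ processes. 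Each increment is $2|{\rm coef}|$-Lipschitz in $t$, and the pointwise variance is bounded by the squared coefficient. A chaining or Doob-type argument should therefore give $\E\sup_t|\cdot|\le C\sqrt{\sum_i\text{coef}_i^2}$. Replacing $B_i'$ by $B_i$ costs an extra $(B_i-B_i')^2$ term, which is absorbed.

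\textbf{Main obstacle.} The hardest step is getting a dimension-free supremum bound with a modest constant, here $18\sqrt2$, over all $t\in[0,1]$. I would first try a martingale in $t$, but the summands are not martingales in $t$. The fallback is to discretize $t$ at mesh $\delta$, bound the grid maximum using the $L^2$ norm of a dyadic chaining sum (each dyadic level has variance proportional to its length times $\sum\text{coef}^2$, which sums geometrically), and control within-mesh oscillation by Lipschitz continuity. A secondary subtlety is the coupling of $\tau$ with $U_x$ itself. I would handle it by comparing with the process in which coordinate $x$ is removed, which perturbs $W$ by at most $|B_x|$ in sup norm and is dominated by the energy term.
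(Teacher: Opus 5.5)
Your outline has the same skeleton as the paper: condition on the inputs, insert an intermediate label at $x$, use $\|G-G'\|_\infty\le2\|W-W'\|_\infty/s_0$, and finish with Jensen. The gap is in how you handle the dependence between the query phase $U_x$ and the selected times.

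\textbf{The leave-one-out step fails.} You bound the time-shift term using the randomness of $U_x$. That requires $\tau,\tau'$ to be independent of $U_x$, and you propose to enforce this by deleting coordinate $x$ from the primitives. The cost of that deletion scales with $|B_x|/s_0$ itself, not with $|B_x||q_x-q_x'|$ or $|B_x-B_x'|$. So it is not dominated by $E_0=\sum_i[B_i^2(q_i-q_i')^2+(B_i-B_i')^2]$.

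Here is a concrete case. Take $(B',q')=(B,q)$, so $E_0=0$ and the left side of \eqref{eq:phasemain} vanishes. Let $x$ be the only coordinate with $B_x\ne0$, with $q_x=1/2$ and $B_x>2s_0$.
\begin{itemize}
\item The derivative $s_0+B_x(F_t(x)-\tfrac12)$ of $W_{B,q}$ is negative off the arc of $x$. So the running maximum increases only at arc times, and $F_\tau(x)=1$ almost surely.
\item After deletion, $W^{(-x)}(t)=s_0t$, so the selected time is $Z$ itself and the label equals $1$ with probability only $1/2$.
\end{itemize}
Each of your two replacements therefore costs $1/2$, and your bound is $1$ where the truth is $0$.

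The local term has the same defect, and your proposal does not address it. The position $(U_x+\tau')\bmod1$ lands in the fixed arc $[q_x\wedge q_x',q_x\vee q_x')$ with probability $|q_x-q_x'|$ only if $\tau'$ is independent of $U_x$. Your remark that $B_x$ is ``already accounted for inside the energy sum'' is incorrect: only $B_x(q_x-q_x')$ and $B_x-B_x'$ appear there.

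\textbf{How to close the gap.} Two separate ideas are needed.
\begin{itemize}
\item \emph{Time-shift part.} Do not use $U_x$ at all. With every phase fixed, the arc of $x$ is a fixed interval $[a,b)$, or the complement of one if it wraps. Up to a null set, the label at time $G^{-1}(Z)$ is the indicator that $Z$ lies in the interval with endpoints $G(a),G(b)$, or in its complement. Replacing $G$ by $G'$ with the same $Z$ moves both endpoints by at most $\|G-G'\|_\infty$. So the label changes with probability at most $2\|G-G'\|_\infty$, with no independence requirement.
\item \emph{Local part.} The perturbation of $U_x$ must be a rotation by $\delta=|q_x-q_x'|$, not a deletion. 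Such a rotation changes $W_{B',q'}$ by at most $|B_x'|\delta$ in sup norm, hence $G_{B',q'}$ by at most $2|B_x'|\delta/s_0$. Integrating over $U_x$ with the periodic lift $\widetilde G_u(t+k)=G_u(t)+k$ shows that the selected position lies in a fixed arc of length $\delta$ with probability at most $\delta+2|B_x'|\delta/s_0$ (Lemma~\ref{lem:rotation}). The excess is then charged to the energy via $|B_x'||q_x-q_x'|\le|B_x||q_x-q_x'|+|B_x-B_x'|\le\sqrt{2E_0}$.
\end{itemize}

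\textbf{The constant.} Your Lipschitz chaining for $\E_U\|W-W'\|_\infty$ does give a dimension-free bound, but with an unspecified constant. The stated $18\sqrt2$ needs $\E_U\|W-W'\|_\infty^2\le32E_0$. The paper gets this from the identity $I_i(q;t)-I_i(q';t)=(q_i-q_i')\int_0^1[\one\{(q_i'+v(q_i-q_i')-U_i)\bmod1\le t\}-t]\,dv$, which writes the difference as an average of weighted uniform bridges. Doob's inequality for the martingale $H(t)/(1-t)$ then controls these bridges. The total is $(4/s_0)\sqrt{32E_0}+2\sqrt{2E_0}/s_0=18\sqrt2\sqrt{E_0}/s_0$.
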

The coefficients for the true weak risk are $B_i=b_i u_i$.
This estimate controls the actual algorithmic coupling, including the dependence between
the query phase $U_x$ and the selected time. Appendix~\ref{app:phase} proves the theorem and controls the
additional effect of empirical validation. Lemma~\ref{lem:qstability} supplies the
energy bounds needed to obtain $O(r)$ disagreement in Proposition~\ref{prop:base}.

\subsection{Certify both risk parts at the same time}\label{mod:risk}
The selected time is chosen using only weak weighted risk. To certify the total risk,
we need a weak certificate at that time and a strong certificate valid for every time.
The following statement gives precisely those two guarantees.

\begin{theorem}[Compatible risk certificates]\label{thm:risk}
In the ideal Poisson experiment described in Appendix~\ref{app:statistics}, the construction
satisfies, outside an event of probability at most $Cr$,
\[
 R_u(F_\tau)\le R_u(p)+5\eps,\qquad
 \sup_{0\le t\le1}\{R_c(F_t)-R_c(p)\}\le C\eps.
\]
The empirical certificate $\widehat R_u(F_\tau)\le\widehat R_u(q)+s_0$ holds almost surely.
\end{theorem}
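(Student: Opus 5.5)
Three claims need proof: the almost-sure empirical certificate, the weak-risk bound at $\tau$, and the uniform strong-risk bound. They need different tools, so I treat them separately and combine them by a union bound. Throughout I work in the ideal Poisson experiment of Appendix~\ref{app:statistics}. There the counts $N_i$ are independent Poisson with means $Mw_i$, the sign sums are independent across coordinates, and the validation block is independent of $(q,u,c)$ and of the seed.

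\textbf{Empirical certificate.} Write $\widehat W'(t)=s_0+\widehat R_u(q)-\widehat R_u(F_t)$ for the integrand. The running maximum $\widehat M$ increases only at times where $\widehat W$ attains a new record. On the support of the measure $d\widehat M$, the right derivative of $\widehat W$ is therefore nonnegative, so $\widehat W'(t)\ge0$ there. The time $\tau=\widehat G^{-1}(Z)$ has law $d\widehat G$ given the data. Hence, for almost every $Z$, $\tau$ is a point of increase at which $F_z$ is right-continuous. I will use the piecewise-affine structure with finitely many breakpoints to handle the boundary cases: at a left-endpoint record the right derivative is $\ge0$, and $\widehat W(1)=s_0>0=\widehat W(0)$ ensures that $\widehat M(1)>0$. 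This gives $\widehat R_u(F_\tau)\le \widehat R_u(q)+s_0$ almost surely.

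\textbf{Weak risk.} I convert the empirical certificate to population risk in two steps.
\begin{enumerate}
\item The same record argument applied to the population process is not available, because $\tau$ depends on the validation data. Instead I bound $\sup_t|\widehat R_u(F_t)-R_u(F_t)-(\widehat R_u(q)-R_u(q))|$. The curve $t\mapsto F_t$ has a structured family: each coordinate is one circular arc, and $q_i$ lies on a finite grid after conditioning on $(q,u)$. Conditionally on the seed and on $(q,u)$, I plan to use a chaining or bracketing bound over $t$ for the centered empirical process $\frac1v\sum_j \lambda_{X_j}(\dots)$. Its variance at increment $t\to t'$ is at most $\sum_i u_i w_i\,|F_t(i)-F_{t'}(i)|$. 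In expectation over the phases this is at most $|t-t'|$. Applying Bernstein's inequality on a grid of mesh $\eps$ and taking a union over $1/\eps$ points costs $\sqrt{\log(1/\eps)/v}$. That cost would be too large, so I would instead apply Markov to the expected supremum, which is $O(1/\sqrt v)=O(\eps r)$. This gives deviation $\le\eps$ with probability $1-O(r)$. The resulting population bound is $R_u(F_\tau)\le R_u(q)+s_0+2\eps$.
\item I then compare $R_u(q)$ with $R_u(p)$. On the weak region $u_i>0$ we have $|\widehat\theta_i|\le3\eps$. The shift $q^0-p=m_if(\cdot)$ moves toward the estimated sign, and its risk change is $-b_i m_i f(\widehat\theta_i/\eps)$. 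The harmful part arises only when the signs of $\widehat\theta_i$ and $b_i$ disagree. That happens only if $|\theta_i-\widehat\theta_i|\ge|\theta_i|$, so the harm per coordinate is at most $\sqrt{w_i}\,|\widehat\theta_i-\theta_i|\cdot|\widehat\theta_i|/\eps\cdot m_i$, and similarly for the gate term $k_i\eta_i$. I sum these with Cauchy--Schwarz and Lemma~\ref{lem:counts} ($\E\sum_i w_i(\widehat\theta_i-\theta_i)^2$-type bounds of order $1/M=O(\eps^2r^2)$). With Markov this gives $R_u(q)\le R_u(p)+2\eps$ outside probability $O(r)$. Adding the two steps yields the constant $5\eps$.
\end{enumerate}

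\textbf{Strong risk: the main obstacle.} The uniform bound over $t$ is, I expect, the hard part. For each $t$ the difference splits as
\[
R_c(F_t)-R_c(p)=\sum_i c_i b_i\,(p_i-F_t(i)).
\]
On the strong region ($c_i>0$, so $|\widehat\theta_i|>2\eps$), $q^0$ already sits at the endpoint side determined by the sign of $\widehat\theta_i$, and the gates push further toward $g_i$. I will split the coordinates into three groups.
\begin{enumerate}
\item \emph{Correctly oriented coordinates}, where $\sgn\widehat\theta_i=\sgn b_i$. Here $F_t(i)$ agrees with the Bayes bit on an arc of length $q_i$ or $1-q_i$. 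The population risk change relative to $p$ then has a favorable drift of size $a_i m_i$, which I write as an improvement term $-I$.
\item \emph{Misoriented coordinates}. These are chargeable to the squared histogram error, at total size $O(\eps)$ outside probability $O(r)$, by the same Cauchy--Schwarz argument as above. The cost gate $\eta_i$ serves exactly to make large-$a_i$ misoriented moves chargeable.
\item \emph{The fluctuation} $\sum_i c_i b_i(q_i-F_t(i))$ over $t$. This is a sum of independent (over $i$, given $q$) bounded terms with variance $\sum_i b_i^2 q_i(1-q_i)$.
\end{enumerate}
To make the fluctuation uniform in $t$, I would discretize $t$ using the $T$-grid of the $p_i$ and the finitely many arc lengths noted after Module~\ref{mod:ensemble}. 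I would then use a peeling or maximal inequality: the variance for correctly oriented coordinates is dominated by $\sum a_i m_i\cdot(a_i/w_i)$ with $|\theta_i|\gtrsim\eps$. The intended mechanism is the inequality $\text{fluct}\le I+C\eps$, and this is the precise step I am unsure closes without a logarithm. The choice $s=\eps/L$ with $L=\log(8T/r)$ in the gate $\eta$ is presumably what absorbs the union bound over the $O(T)$ arc lengths. For coordinates with $|\widehat b_i|<s/4$ the gate is off; their total contribution I would bound by a Bernstein tail with $L$ factors cancelled by $s$. A final union over the failure events of the three certificates gives the probability $Cr$.
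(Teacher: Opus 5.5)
Your empirical certificate and your validation step match Lemmas~\ref{lem:record} and~\ref{lem:validation}. The validation step is an $O(1/\sqrt v)$ bound on the expected uniform deviation along the curve, followed by Markov; the paper proves the bound by symmetrization and Doob's inequality over the sorted step endpoints. The gaps are in both comparisons with $p$.

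\textbf{Weak comparison.} You bound only the harm of misoriented coordinates, and no bound on that harm alone can be $O(\eps)$, because $\eps$ and $r$ are independent. Take $M$ atoms of mass $1/M$, conditional label mean $1/2$, and $p_i$ near $1/2$. Outside a negligible set of atoms, $|\widehat\theta_i|\le\eps/2$, so $u_i=1$, the gates vanish, and $q_i=p_i+m_i\widehat\theta_i/\eps$. On about a fraction $1/(4e)$ of the atoms, $N_i=1$ and $S_i=-1$. Each such atom costs $b_im_i/(\eps\sqrt M)\approx r/(32M)$. This harm concentrates at order $r$, far above $2\eps$ when $\eps\ll r$.

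What closes the step is the net change. The pairing identity \eqref{eq:pairing}, applied with the even nonnegative weight $\ell$ and the odd profile $f$, makes the conditional mean of $R_u(q^0)-R_u(p)$ nonpositive. The Poisson coordinates are independent, and Lemma~\ref{lem:profile} for $\ell f$ bounds the variance by $C/M$. Chebyshev at level $\eps$ then fails with probability $C/(M\eps^2)\le Cr^2$. The gate term is not ``similar'' and needs its own mechanism. A harmful gated move requires $|\widehat b_i|>s/4$ with the wrong sign, so $e_i^2=(a_i+|\widehat b_i|)^2\ge a_is$. Hence the harm is at most $\sum_ie_i^2/s$, whose mean is $1/(Ms)$.

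\textbf{Strong comparison, misoriented coordinates.} The same phenomenon breaks item 2 of your strong-risk plan: misoriented strong coordinates need not have harm $O(\eps)$. Take $\eps^{-2}$ atoms of mass $\eps^2$ and conditional mean $1/2$. Each lands in $\{\widehat\theta_i<-3\eps\}$ with probability $e^{-\Theta(1/r^2)}$, and its cost gate stays off when $\eps\ll1/L$. Their total harm concentrates near a multiple of that exponential, which is much larger than $\eps$ once $\eps\ll e^{-C/r^2}$.

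The paper never compares an exponential in $1/r$ with $\eps$. It splits coordinates deterministically by comparing $\bar c_i=\E c_i$ with $16\pi_i$. On the good set, the terms $Z_i(t)$ are independent across $i$ once the histogram randomness is included, and each has mean at most $-\tfrac78a_im_i\bar c_i$. On the bad set only the square energy $\sum a_i^2\bar c_i\le20\eps^2e^{-M\eps^2/16}$ is small. It controls the drift through the pairing identity, and the phase fluctuation through Lemma~\ref{lem:bridge} applied to $O(T)$ arc-length groups.

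\textbf{Strong comparison, fluctuation.} For the step you flagged, your intended inequality has the right shape, but a variance merely comparable to the drift only yields $-\tfrac34G+Cz$ with $z\approx\log(T/r)$. You need variance at most $2s$ times the drift and centered terms bounded by $2s$, and these hold only on $\{a_i\le s\}$. Bernstein then gives $-\tfrac78G+2\sqrt{sGz}+\tfrac43sz\le-\tfrac34G+Csz$. The $\lceil256T\rceil$ mesh cells are controlled using $m_i\ge1/T$, and $s\log(CT/r)\le C\eps$ finishes the bound. Coordinates with $a_i>s$ must be removed through the cost gate. Compare with a curve carrying the Bayes bit on those coordinates, and charge gate failures, which force $|e_i|\ge a_i/2$, to $4e_i^2/s$.
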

The first conclusion is proved by the record certificate and the weak-reference estimate
(Lemmas~\ref{lem:record} and~\ref{lem:weakrisk}). The second is
Lemma~\ref{lem:strong}. Their proofs appear together in Appendix~\ref{app:accuracy}.
For every realized histogram, $u+c=1$, so these certificates add directly at the same
$\tau$.

\begin{proposition}[Base learner]\label{prop:base}
There are universal constants $C_a,C_f,C_p,C_m$ such that the complete algorithm,
including its overflow fallback, uses the deterministic budget
$Tn+2M+v\le C_m(d+1)/(\eps^2r^2)$ and satisfies
\[
 \Pp\{R(F_\tau)>\OPT+C_a\eps\}\le C_f r,
 \qquad
 \Pp\{F_\tau(x)\ne F_{\tau'}'(x)\}\le C_p r
\]
for every distribution and every fixed $x$.
\end{proposition}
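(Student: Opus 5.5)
The proof has three parts: the budget, the accuracy bound, and the disagreement bound. Each part reduces to the earlier module guarantees. One step, transferring from the actual fixed-budget execution to the ideal Poisson experiment of Theorem~\ref{thm:risk}, is shared by the accuracy and disagreement bounds.

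\textbf{Budget.} This is a direct count from \eqref{eq:parameters}. We have $Tn\le (r^{-2}+1)((d+1)\eps^{-2}+1)$, $2M\le 128\eps^{-2}r^{-2}+2$, and $v\le \eps^{-2}r^{-2}+1$. Since $\eps,r\le1/4$, every term is at most a constant multiple of $(d+1)/(\eps^2r^2)$. The overflow fallback uses no extra data.

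\textbf{Transfer to the ideal experiment.} Since $K$ is Poisson with mean $M$, the event $\{K>2M\}$ has probability at most $e^{-cM}\le C\eps^2r^2\le Cr$ by a Chernoff bound. On its complement, the first $K$ pool examples are exactly a Poissonized sample. The counts $(N_i,S_i)$ then have the independent-coordinate law of the ideal experiment. So it costs only $O(r)$ in both probabilities to work inside the ideal experiment. I would state this as a coupling: the actual execution and the ideal one agree outside an event of probability $Cr$.

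\textbf{Accuracy.} Work outside the union of three events: the event of Proposition~\ref{prop:ensemble}, which has probability $C/T\le Cr^2$; the event of Theorem~\ref{thm:risk}, which has probability $Cr$; and the overflow event. Because $u+c=1$ and the risk \eqref{eq:weighted-risk} is linear in $\lambda$, we get
\[
R(F_\tau)-R(p)=\bigl[R_u(F_\tau)-R_u(p)\bigr]+\bigl[R_c(F_\tau)-R_c(p)\bigr]\le 5\eps+C\eps .
\]
The strong bound holds uniformly in $t$, so it applies at the data-dependent time $\tau$. Combining this with $R(p)\le\OPT+C\eps$ gives the accuracy claim with $C_a=5+2C$ and $C_f=O(1)$.

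\textbf{Disagreement.} Run two independent executions with the shared seed. Each uses its own validation block, so the selected map is the empirical analogue of $\Phi$ with coefficients $\widehat B$ built from validation frequencies. I would apply Theorem~\ref{thm:phase} with the true weak coefficients $B_i=b_iu_i$ and $B'_i=b_iu'_i$. The empirical-validation correction is the extra term handled in Appendix~\ref{app:phase}, which I would take as contributing $O(r)$ because $v\,s_0^2\gtrsim r^{-2}$. The local term splits as
\[
\E|q_x-q'_x|\le \E|p_x-p'_x|+(\text{shift and gate differences}),
\]
and Proposition~\ref{prop:ensemble} together with Jensen gives $\E|p_x-p'_x|\le (2T)^{-1/2}\le r$. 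The global term needs
\[
\E\sum_i\bigl[b_i^2u_i^2(q_i-q'_i)^2+b_i^2(u_i-u'_i)^2\bigr]\lesssim s_0^2r^2=\eps^2r^2 .
\]
This is exactly the energy bound I expect Lemma~\ref{lem:qstability} to supply. The Lipschitz profiles $\ell,f,k,\eta$ convert the bound $\E(\widehat\theta_i-\theta_i)^2\le 2/M$ of Lemma~\ref{lem:counts} into bounds of order $w_i\eps^2\cdot\eps^{-2}M^{-1}$ per coordinate. These sum over $i$ to $O(\eps^2/(\eps^2M))=O(\eps^2r^2)$, using $M\asymp\eps^{-2}r^{-2}$ and $\sum_i w_i=1$. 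Then $18\sqrt2/s_0$ times the square root is $O(r)$.

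\textbf{Main obstacle.} The hardest step is the coordinate-wise energy estimate for the $q$-differences weighted by $b_i^2$, especially for the gate term $k_i\eta_ig_i$. On the support of $u$, $|\theta_i|\lesssim\eps$, so $b_i^2\lesssim w_i\eps^2$. The difficulty is at the sign bit $g_i$ when $S_i$ flips between the two executions; this happens only when $\widehat\theta_i$ is small, where $k_i=0$. I would verify this case first and defer the rest to Lemma~\ref{lem:qstability}. Finally, add the $O(r)$ Poisson-transfer cost to both estimates.
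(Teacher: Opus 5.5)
Your proposal is correct and follows the paper's proof essentially step by step. It uses the deterministic budget count, the coupling to the ideal Poisson experiment at the cost of the overflow probability, and accuracy from $u+c=1$ combined with Proposition~\ref{prop:ensemble} and Theorem~\ref{thm:risk}. Disagreement comes from Theorem~\ref{thm:phase} fed by \eqref{eq:weakenergy} and Lemma~\ref{lem:qstability}, plus Lemma~\ref{lem:replacement} for each run's validation.

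Some side remarks are loose but harmless, because you defer to Lemma~\ref{lem:qstability}. Conditioning on $K\le2M$ would destroy coordinate independence, so only your coupling formulation is valid. The support of $u$ is $\{|\widehat\theta_i|<3\eps\}$, not a set where $|\theta_i|\lesssim\eps$. The histogram part of your local term needs \eqref{eq:qlocal}, whose cost-gate estimate avoids the factor $L$ that the $1/s$ Lipschitz constant of $\eta$ would introduce.
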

Here $F_\tau$ denotes the algorithm's output, including $h_0$ on overflow.
The proof in Appendix~\ref{app:base} assembles the ensemble, risk, phase, and histogram
guarantees, and separately checks null-mass queries and the fixed sample budget.

\section{Make confidence additive}\label{sec:confidence}
Choose the constants of Proposition~\ref{prop:base} once, and run the base learner with
\[
 \eps=\frac{\alpha}{4\max(C_a,1)},\qquad r=\frac\rho{C_*},\qquad
 C_*\ge\max(4,64C_f,2C_p).
\]
Call its output $f$. It has excess at most $\alpha/4$ except with probability $\rho/64$,
and pointwise disagreement at most $\rho/2$. The following repair uses two new independent
blocks, whose sizes are fixed in advance.

Set $\eta_0=\min(\beta/8,\rho/128)$. On the first block, train a proper reference ERM
$g$ with excess at most $\alpha/8$ and failure at most $\eta_0$. On the second, estimate
both $R(f)$ and $R(g)$ to error at most $\alpha/16$, simultaneously with failure at most
$\eta_0$. Return
\begin{equation}\label{eq:repair}
 \widehat f=\begin{cases}
 f,&\widehat R(f)\le\widehat R(g)+\alpha/2,\\
 g,&\widehat R(f)>\widehat R(g)+\alpha/2.
 \end{cases}
\end{equation}
The reference protects accuracy. Its properness also ensures $R(g)\ge\OPT$ for every
realization. As a result, a successful base run will not be replaced when validation succeeds.
The repair branch is rare, and its probability rather than the stability of $g$ pays for
any additional disagreement.

\begin{theorem}[Upper bound with additive confidence]\label{thm:upper}
For $d\ge1$ and $\alpha,\beta,\rho\in(0,1/4]$, the repaired learner is an agnostic
$(\alpha,\beta)$-learner and is $\rho$-pointwise replicable, using at most
\[
 C\alpha^{-2}\bigl(d\rho^{-2}+\log(1/\beta)\bigr)
\]
independent examples. Both guarantees hold with strict inequalities for the failure
and disagreement probabilities.
\end{theorem}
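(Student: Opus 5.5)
The proof combines Proposition~\ref{prop:base}, run at the stated internal tolerances, with a union-bound analysis of the repair rule~\eqref{eq:repair}. I treat budget, accuracy, and replicability in turn.

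\emph{Budget.} With $\eps=\alpha/(4\max(C_a,1))$ and $r=\rho/C_*$, the base learner uses at most $C_m(d+1)/(\eps^2r^2)\le C d\rho^{-2}\alpha^{-2}$ examples, because $d\ge1$. The reference ERM $g$ with excess $\alpha/8$ and failure $\eta_0$ needs $O(\alpha^{-2}(d+\log(1/\eta_0)))$ examples by the standard agnostic VC bound. Estimating the two fixed risks $R(f),R(g)$ to accuracy $\alpha/16$ on a fresh block needs $O(\alpha^{-2}\log(1/\eta_0))$ examples by Hoeffding's inequality. Both $f$ and $g$ are independent of that block, so a union bound over two functions suffices. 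Since $\log(1/\eta_0)\le\log(8/\beta)+\log(128/\rho)$ and $\log(128/\rho)\le C\rho^{-2}$, all extra terms are absorbed into $C\alpha^{-2}(d\rho^{-2}+\log(1/\beta))$. Every block size is fixed in advance.

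\emph{Accuracy.} Let $E_g$ be the failure event of the reference and $E_v$ the failure event of validation, each of probability at most $\eta_0\le\beta/8$.
\begin{itemize}
\item If $\widehat f=g$, then outside $E_g$ we have $R(g)\le\OPT+\alpha/8$.
\item If $\widehat f=f$, then outside $E_v$ we have $R(f)\le\widehat R(f)+\alpha/16\le\widehat R(g)+\alpha/2+\alpha/16\le R(g)+5\alpha/8$. Outside $E_g$ this is at most $\OPT+3\alpha/4<\OPT+\alpha$.
\end{itemize}
So the failure probability is at most $2\eta_0\le\beta/4<\beta$. This bound holds regardless of whether the base run succeeded.

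\emph{Replicability.} This is the step needing care. Fix $x$ and two executions sharing the seed. The predictions can disagree only in two ways.
\begin{itemize}
\item Both executions output $f$ and $f(x)\ne f'(x)$. This has probability at most $\rho/2$ by Proposition~\ref{prop:base}.
\item At least one execution takes the repair branch. By symmetry this has probability at most $2\Pp\{\widehat f=g\}$.
\end{itemize}
It remains to bound $\Pp\{\widehat f=g\}$. Suppose the base run succeeds, which holds with probability at least $1-C_fr\ge1-\rho/64$ since $C_*\ge64C_f$, so that $R(f)\le\OPT+\alpha/4$. Suppose also that validation succeeds. Properness gives $R(g)\ge\OPT$ for every realization, hence
\[
\widehat R(f)\le R(f)+\alpha/16\le\OPT+5\alpha/16\le R(g)+5\alpha/16\le\widehat R(g)+3\alpha/8<\widehat R(g)+\alpha/2,
\]
and the repair is not triggered. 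This step needs only validation success; it uses properness instead of reference accuracy. Therefore $\Pp\{\widehat f=g\}\le\rho/64+\eta_0\le\rho/64+\rho/128$. The total disagreement is at most $\rho/2+2(3\rho/128)<\rho$, which is strict.

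\emph{Main obstacle.} The delicate point is the parameter bookkeeping: the $\log(1/\rho)$ cost from $\eta_0$ must be absorbed into $d\rho^{-2}$ without imposing restrictions on $\alpha,\beta,\rho\in(0,1/4]$, and the stated constants must yield strict inequalities. A secondary check is that Proposition~\ref{prop:base} applies at $\eps,r\le1/4$, which holds because $C_*\ge4$ and $\alpha\le1/4$.
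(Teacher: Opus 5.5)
Your proposal is correct and follows the paper's proof essentially step for step. It uses the same internal tolerances and the realization-wise inequality $R(g)\ge\OPT$ from properness, which shows the repair branch fires only on base or validation failure. It then obtains the same bounds $2\eta_0\le\beta/4$ and $\rho/2+2(\rho/64+\eta_0)\le 35\rho/64$, and absorbs $\log(1/\rho)$ into $d\rho^{-2}$ in the same way.

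The only adjustment is notational. Read $\{\widehat f=g\}$ as the event that the repair branch is selected, not as equality of functions: $f$ may coincide with $g$ without any repair, and your bound $\rho/64+\eta_0$ is really a bound on the branch event.
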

Appendix~\ref{app:repair} proves this theorem from Proposition~\ref{prop:base} and the
ordinary ERM guarantee in Lemma~\ref{lem:vc}. The additional samples depend only
logarithmically on $1/\eta_0$. The resulting $\log(1/\rho)$ term is absorbed by
$d/\rho^2$, leaving the confidence term separate.

\section{Why both sample costs are necessary}\label{sec:lower}
The two terms have different sources. To force the dimension-dependent cost, put uniform
mass on $d$ shattered points and give each label a small signed bias. Accuracy forces
the learner to predict most of the signs correctly. Replicability, required also at
intermediate biases, prevents the conditional prediction probability from crossing from
zero to one too quickly. A smooth potential converts this restriction into the factor
$\rho^{-2}$ without a logarithmic loss.

To force the confidence cost, put all mass on one shattered point. Two opposite biases
then reduce accurate learning to binary testing with error at most $\beta$.
These constructions apply to every class of VC dimension $d$, not just a specially
chosen example class.

\begin{theorem}[Matching lower bound]\label{thm:lower}
For every nonempty binary class on a finite or countable domain with VC dimension $d\ge1$,
and all $\alpha,\beta,\rho\in(0,1/10)$,
\[
 m_{\mathcal H}(\alpha,\beta,\rho)\ge
 \bar\kappa\alpha^{-2}\bigl(d\rho^{-2}+\log(1/\beta)\bigr),
 \quad
 \kappa=\frac{1369}{250000\pi^2},\quad
 \bar\kappa=\frac{3\kappa}{3+50\kappa}.
\]
The bound permits improper outputs and unrestricted public randomness.
\end{theorem}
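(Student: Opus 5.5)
The bound has two terms, and I would prove each one separately and then combine them using $\max(a,b)\ge \tfrac{1}{2}(a+b)$. The particular value of $\bar\kappa=3\kappa/(3+50\kappa)$ suggests the combination is weighted rather than a plain average. If the two separate bounds are $m\ge\kappa d/(\rho^2\alpha^2)$ and $m\ge \tfrac{3}{50}\log(1/\beta)/\alpha^2$ (or a similar confidence constant), then
\[
 m\ge\Big(\tfrac{1}{\kappa}+\tfrac{50}{3}\Big)^{-1}\Big(\tfrac{d}{\rho^2\alpha^2}+\tfrac{\log(1/\beta)}{\alpha^2}\Big),
\]
and this gives exactly $\bar\kappa$. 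So I will aim for a dimension bound with constant $\kappa$ and a confidence bound with constant $3/50$. Both reductions use only shattered points, so they apply to every class of VC dimension $d$. Improperness and public randomness are harmless: I fix the seed $\omega$, prove the bounds for each seed, and then average.

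\textbf{Confidence term.} Take a single point $x$ shattered by $\mathcal H$, put all marginal mass on $x$, and set $\Pp(Y=1\mid x)=\tfrac12\pm 2\alpha$, say with bias $\gamma$ slightly above $\alpha$. Under either hypothesis, predicting the wrong label at $x$ has excess risk $2\gamma>\alpha$. An $(\alpha,\beta)$-learner therefore yields a test between the two product measures whose errors are both at most $\beta$. By the Bretagnolle--Huber inequality,
\[
 2\beta\ge \tfrac12 e^{-m\,\mathrm{KL}}, \qquad \mathrm{KL}\le C\gamma^2,
\]
where $\mathrm{KL}$ is the divergence between the Bernoulli label laws and $C\gamma^2$ holds for $\gamma<1/5$. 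This gives $m\ge c\log(1/(4\beta))/\alpha^2$. Because $\beta<1/10$, the quantity $\log(1/(4\beta))$ is comparable to $\log(1/\beta)$, and tracking the constants explicitly should produce the $3/50$.

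\textbf{Dimension term.} Take $d$ shattered points with uniform marginal mass $1/d$, and label bias $\theta_j\,\alpha'$ at point $j$ with $\alpha'\asymp\alpha$. For each point $j$, define the acceptance function
\[
 P_j(\theta)=\Pp\{A(S;\omega)(x_j)=1\},
\]
taken over the sample, with the biases at the other coordinates drawn from a prior. A Bayesian/Assouad argument makes accuracy force $P_j(-1)\le\tfrac14$ and $P_j(1)\ge\tfrac34$ for most $j$. Pointwise replicability is needed for all $\theta\in[-1,1]$, including intermediate biases. With the seed fixed, the disagreement probability equals $2P_j^\omega(1-P_j^\omega)$, so averaging over $\omega$ gives
\[
 \E_\omega\big[P^\omega(1-P^\omega)\big]\le\rho/2 \quad\text{at every }\theta.
\]
The derivative of $P^\omega$ in $\theta$ is bounded by a score/Fisher-information argument: $|dP^\omega/d\theta|\le \sqrt{m\,I}\,\sqrt{P^\omega(1-P^\omega)}$, where $I\asymp\alpha^2/d$ is the per-sample information at coordinate $j$. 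The smooth potential is $\Psi(P)=\arcsin(2P-1)$. Its derivative is $\Psi'(P)=1/\sqrt{P(1-P)}$, so it converts the square-root bound into a Lipschitz bound. The transit from $P\approx0$ to $P\approx1$ must still happen. A bound that uses only $\Psi$ loses nothing logarithmic, but it gives no $\rho$ factor either; the $\rho$ factor has to come from $P(1-P)$ being small on average. I would therefore integrate
\[
 \int_{-1}^{1}\E_\omega P^\omega(1-P^\omega)\,d\theta\le\rho,
\]
and compare this with the time $P^\omega$ must spend in the middle region given its bounded $\Psi$-speed. Crossing from $1/4$ to $3/4$ takes $\theta$-length at least $c/\sqrt{mI}$, and $P(1-P)\ge 3/16$ throughout that stretch. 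This yields $c/\sqrt{mI}\lesssim\rho$, i.e.\ $m\gtrsim d/(\rho^2\alpha^2)$. The $\pi^2$ in $\kappa$ is consistent with this $\arcsin$ potential, whose range has length $\pi$.

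The main obstacle is the dimension term. The per-seed Fisher bound has to be made rigorous with $d$ coordinates coupled through one sample, and the prior over the other coordinates has to be chosen so that the per-coordinate information is genuinely $m\alpha^2/d$ while accuracy still forces most coordinates to cross. My first attempt would use a Poissonized sample so the coordinates become independent, and then derandomize back to a fixed sample size at a constant-factor cost.
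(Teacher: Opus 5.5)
Your combination step is right: $m\ge\kappa A$ and $m\ge\frac3{50}B$ give $A+B\le m(1/\kappa+50/3)$. There is, however, a genuine gap in the dimension term.

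Your time-in-the-middle step needs, for each fixed seed $\omega$ (or at least a constant fraction of seeds), that $P^\omega$ actually travels from at most $1/4$ to at least $3/4$. Only then does the arcsine speed bound force $P^\omega(1-P^\omega)\ge3/16$ on a $\theta$-set of length $c/\sqrt{mI}$. Accuracy gives no per-seed statement of this kind. It bounds only the endpoint error probabilities averaged over $\omega$, the coordinate, and the other signs, by $\bar e\le\beta+(1-\beta)\alpha/\gamma$. When $\gamma=c\alpha$ with $c\le10$ (needed to keep $\gamma<1$) and $\beta$ is near $1/10$, this bound is about $0.19$.

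Markov's inequality with a union bound yields a constant fraction of seeds crossing $[1/4,3/4]$ only if $\bar e<1/8$. Shrinking the band to $[a,1-a]$ with $a$ close to $1/2$ recovers the order $d/(\alpha^2\rho^2)$, but with a constant orders of magnitude below $\kappa$, so the stated $\bar\kappa$ is out of reach. Averaging over $\omega$ before differentiating does not help either. Then $|P'|\le\sqrt{mI}\,\E_\omega\sqrt{P^\omega(1-P^\omega)}\le\sqrt{mI\rho/2}$, which gives only $m\gtrsim1/(I\rho)$.

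Relatedly, $P_j$ must be defined with the other signs $v_{-j}$ fixed, alongside $\omega$. Replicability holds distribution by distribution. After mixing over a prior, the disagreement probability is $\E_{v_{-j}}[2g(1-g)]$, which is only at most $2P(1-P)$, so replicability no longer bounds $P(1-P)$.

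The missing idea is a potential with two properties: its $t$-derivative is controlled by $g(1-g)$, and its endpoint values are linear in the error probabilities. Such a potential can be averaged over seeds with no crossing or Markov step. Use the paper's parametrization: conditional label mean $t\in[-\gamma,\gamma]$ at $x_j$, $\gamma=5\alpha$, and $g_\omega(t)$ your $P_j^\omega$ with $v_{-j}$ fixed. Take $\Psi(u)=3u^2-2u^3$.

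For the first property: $\Psi'(u)=6u(1-u)$ and $\sqrt{u(1-u)}\le\frac12$, so the score bound gives $|(\Psi\circ g_\omega)'(t)|\le3\sqrt{m/(d(1-t^2))}\,g_\omega(t)(1-g_\omega(t))$. Its seed average is at most $\frac32\rho\sqrt{m/(d(1-t^2))}$ at every intermediate $t$.

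For the second: $\Psi(1-u)=1-\Psi(u)$ and $\Psi(u)\le\frac98u$ give, for every seed,
$\Psi(g_\omega(\gamma))-\Psi(g_\omega(-\gamma))\ge1-\frac98[1-g_\omega(\gamma)+g_\omega(-\gamma)]$.
This averages to $1-\frac94\bar e\ge\frac{37}{100}$.

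Integrating over $t$ yields $\frac{37}{100}\le3\rho\arcsin\gamma\sqrt{m/d}\le5\pi\alpha\rho\sqrt{m/d}$, which is exactly $m\ge\kappa d/(\alpha^2\rho^2)$. The $\pi$ comes from $\int_{-\gamma}^{\gamma}(1-t^2)^{-1/2}\,dt=2\arcsin\gamma$ and the chord bound $\arcsin\gamma\le\pi\gamma/3$, not from the range of an arcsine potential. Your arcsine potential fails the first requirement (it cancels the $g(1-g)$ factor, as you noticed), and the crossing argument fails the second.

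The obstacle you flagged is not one. For a fixed-size sample only the labels at $x_j$ depend on $t$. With signed labels $\widetilde Y_k$, the score $\sum_{k=1}^m\one\{X_k=x_j\}(\widetilde Y_k-t)/(1-t^2)$ has variance exactly $m/(d(1-t^2))$, and $g_\omega$ is a polynomial in $t$, so no Poissonization is needed.

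Finally, for the confidence term, Bretagnolle--Huber yields only $\log(1/(4\beta))\approx0.4\log(1/\beta)$ near $\beta=1/10$. With a wrong-label cost of $2\alpha$ this misses $3/50$. You can either let that cost decrease to just above $\alpha$ (so the KL divergence is about $2\alpha^2$), or apply data processing to the output bit with $(1-2\beta)\log\frac{1-\beta}{\beta}\ge\frac12\log\frac1\beta$, as the paper does.
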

Our September version~\cite{Zhang2026} established this lower bound. Its proof combines
multi-coordinate score estimates, a cubic potential, and a separate confidence test.
For a related coin-testing lower bound, see Impagliazzo, Lei, Pitassi, and
Sorrell~\cite{ILPS}, Lemma 7.2. Appendix~\ref{app:lower} gives the complete score
calculation, endpoint estimates, integration over the public seed, and confidence test.
Together with Theorem~\ref{thm:upper}, it proves Theorem~\ref{thm:main}.

\section{Complementary guarantees and earlier constructions}\label{sec:retained}
The first version also established proper-learning guarantees, faster accuracy dependence
under realizability, and explicit alternatives to the general construction.
This section presents those complementary results. All results here, including the
explicit algorithms and counterexamples, were developed in the first version
\cite{Zhang2026}. Appendix~\ref{app:retained} gives their full statements and proofs.
Its statement and equation numbers carry an S prefix, preserving the correspondence
with the September version without conflicting with the numbering of the new proof.

\subsection{Keep the output proper by filtering and ordered selection}\label{sec:proper-retained}
For a class of VC dimension one, first choose a countable proper core with the same
population optimum. Fix an order on that core in which the hypotheses disagreeing with
a reference hypothesis at any particular query form an interval. Train independent proper
ERMs and a separate proper reference. Use fresh data to estimate their risks. A shared
threshold decides which base hypotheses have sufficiently small empirical excess over
the reference. If none passes, return the reference. Otherwise, use one shared uniform
variable to select a quantile of the accepted multiset in the fixed order.

The output is a member of the class. The interval property makes a prediction depend on
only two cumulative masses, rather than on the entire identity of the selected model.
The filtering analysis controls those masses and the risk of the selected hypothesis.
The core and interval construction are supplied by \legacyref{6.1}{Lemma}; the complete
execution rule, block sizes, and analysis are in \legacyref{6.2}{Theorem}.

\begin{theorem}[Proper learning in VC dimension one]\label{thm:retained-vcone}
Every nonempty binary class of VC dimension one on a finite or countable domain admits
a proper agnostic $(\alpha,\beta)$-learner that is $\rho$-pointwise replicable and uses
at most
\[
 C\alpha^{-2}\bigl(\rho^{-2}+\log(1/\beta)\bigr)
\]
examples, for all $\alpha,\beta,\rho\in(0,1/10)$ and a universal constant $C$.
The rate is optimal even against improper learners.
\end{theorem}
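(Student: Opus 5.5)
The plan follows the sketch in Section~\ref{sec:proper-retained}. Optimality is immediate from Theorem~\ref{thm:lower} with $d=1$, so the work is the upper bound. First I replace $\mathcal H$ by a countable proper core $\mathcal H_0\subseteq\mathcal H$ with the same population optimum for every $D$. Since $\mathcal X$ is countable, the restrictions of $\mathcal H$ to finite sets are finite, and one representative per realized labelling of each finite sample suffices; ERMs on every sample can then be taken in $\mathcal H_0$. Next I use the VC-one structure: for a fixed reference $h^\ast\in\mathcal H_0$, the sets $\{x: h(x)\ne h^\ast(x)\}$ form a class of VC dimension one closed under the relevant operations, hence a tree-like (laminar) family. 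I will order $\mathcal H_0$ by a depth-first traversal of this tree so that, for every query $x$, the hypotheses disagreeing with $h^\ast$ at $x$ form an interval. This is \legacyref{6.1}{Lemma}. Because the reference used at run time is data-dependent, I intend to establish the interval property for every reference simultaneously, or to fix the order relative to $h_0$ and show that XOR with the reference only flips membership on a fixed interval.

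The algorithm runs in four steps. (i) Train $T\approx\rho^{-2}$ independent proper ERMs $h_1,\dots,h_T$ on blocks of size $n\approx\alpha^{-2}$. (ii) Train a proper reference $g$ with excess at most $\alpha/8$ and failure $\eta_0=\min(\beta,\rho)/C$, using $O(\alpha^{-2}\log(1/\eta_0))$ examples. (iii) On fresh validation data of size $O(\alpha^{-2}\rho^{-2})$, estimate the excess $\widehat\Delta_j=\widehat R(h_j)-\widehat R(g)$. (iv) With a shared uniform threshold $\theta\in[\alpha/4,\alpha/2]$, accept those $h_j$ with $\widehat\Delta_j\le\theta$. If none is accepted, output $g$. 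Otherwise, sort the accepted multiset in the fixed order and output the element at quantile $V$, where $V$ is a shared uniform variable.

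Accuracy follows because every accepted hypothesis has true excess at most about $\alpha$ once validation succeeds. The rejection fallback returns $g$, which is accurate with probability $1-\eta_0$. Ensemble failures of constant per-$h_j$ probability only reduce the acceptance count and do not harm accuracy, so confidence stays additive.

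For replicability at a fixed $x$, the interval property means the output disagrees with the ordered set's baseline at $x$ exactly when $V$ falls in the fraction $[A_x/N,B_x/N)$ of the accepted multiset. The prediction is therefore a function of the two normalized cumulative counts $A_x/N$ and $B_x/N$, and disagreement is bounded by $\E|A_x/N-A'_x/N'|+\E|B_x/N-B'_x/N'|$ plus the probability that the acceptance decisions differ in a way the threshold randomization charges. I will bound the count fluctuations by $O(1/\sqrt T)=O(\rho)$ using independence of the blocks. I will handle the validation noise by averaging over $\theta$: the fraction of $h_j$ whose acceptance flips between runs is controlled by $\E|\widehat\Delta_j-\widehat\Delta'_j|/\alpha$, which is $O(\rho)$ for validation of size $\alpha^{-2}\rho^{-2}$.

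The hard step is the dependence on the data-dependent reference $g$. Both the acceptance test and possibly the order depend on $g$, and $g$ itself is not replicable. I would first make the acceptance test compare against $\min_j\widehat R(h_j)$ or a smoothed population proxy, using $g$ only as the fallback. I would also make the order independent of $g$ (fixed relative to $h_0$). The fallback branch then occurs only with probability $O(\rho)$ once the ensemble's median is accurate, and that probability pays for $g$'s instability, exactly as in the repair of Theorem~\ref{thm:upper}. The total budget is $T n+O(\alpha^{-2}\rho^{-2})+O(\alpha^{-2}\log(1/\eta_0))$, and the $\log(1/\rho)$ term is absorbed by $\rho^{-2}$. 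This gives $C\alpha^{-2}(\rho^{-2}+\log(1/\beta))$.
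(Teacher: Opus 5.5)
\textbf{Verdict: there is a genuine gap in how the reference $g$ enters the acceptance test.}

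Your architecture matches the paper's construction in \legacyref{6.2}{Theorem}: a countable core, an interval order anchored at a fixed $h_0$, a risk filter with a shared $\theta\in[\alpha/4,\alpha/2]$, and quantile selection. The runtime reference $g$ never enters the order, so your worry about a data-dependent order is moot. The step you flag as hard, however, is not resolved.

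\textbf{Why your version is unstable.} Since $\theta$ has density $4/\alpha$, the filter is $O(\rho)$-stable only if the reference level is concentrated at scale $\rho\alpha$ in $L^1$.
\begin{itemize}
\item With $g$ trained on only $O(\alpha^{-2}\log(1/\eta_0))$ examples, $R(g)-\OPT$ is controlled only at scale $\alpha/\sqrt{\log(1/\eta_0)}$. This is not $O(\rho\alpha)$ in general, so steps (ii)--(iv) as written are not $\rho$-replicable.
\item Your substitute reference $\min_j\widehat R(h_j)$ fails for the same reason: the ensemble minimum of the base risks need not concentrate at scale $\rho\alpha$.
\end{itemize}
Here is an example for the singleton class, with base blocks of size $c_0\alpha^{-2}$. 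Put the optimum at a zero-mass point whose singleton the fixed ERM rule prefers among ties. Add about $\log T$ atoms of excess $z\asymp\alpha/\sqrt{c_0\log(1/\rho)}$, tuned so that one base ERM hits the optimum with probability about $1/T$. Add one atom $y$ of excess $\alpha/3$ that a base ERM selects with a fixed probability $p_1>0$ depending only on $c_0$. Then:
\begin{itemize}
\item the ensemble minimum jumps by $z$ between runs with constant probability;
\item acceptance of the copies of $h_y$ therefore flips with probability about $4z/\alpha$;
\item the predictions at $y$ disagree with probability of order $p_1/\sqrt{c_0\log(1/\rho)}$, which exceeds $\rho$ once $\rho$ is small.
\end{itemize}

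\textbf{Two further problems.} With the min reference the ``none accepted'' branch never fires, so your use of $g$ as a fallback would need an extra repair step that is not in the algorithm. Also, comparing $\widehat\Delta_j$ with $\widehat\Delta_j'$ across runs is not meaningful, since they concern different hypotheses. Each run has to be compared with a deterministic population filter.

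\textbf{The missing idea.} $g$ need not be replicable as a function. Only the scalar $R(g)$ enters the filter, and properness gives $R(g)\ge\OPT$ on every realization. The paper keeps $g$ as the reference but trains it on $s=\max\{C\rho^{-2}\alpha^{-2},\,C\alpha^{-2}\log(1/\beta)\}$ examples. The log-free bound of Lemma~\ref{lem:vc} with $d=1$ then gives $\E[R(g)-\OPT]\le C/\sqrt s\le c\rho\alpha$, at a cost inside the budget.

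\textbf{How the paper's proof then runs.}
\begin{itemize}
\item The empirical test $\widehat R(h_j)-\widehat R(g)\le\theta$ and the population test $R(h_j)-\OPT\le\theta$ differ only when $\theta$ lies between two numbers at expected distance at most $1/\sqrt v+\E[R(g)-\OPT]=O(\rho\alpha)$. After averaging over $\theta$ this is an $O(\rho)$ event.
\item Each run's normalized accepted measure is compared with the population one. Its normalizer $\Pp_{h\sim\mu}\{R(h)-\OPT\le\theta\}$ is at least $15/16$, because a base ERM has excess at most $\alpha/8$ with probability $15/16$.
\item The interval bound at $B_x^0,B_x^1$ then gives disagreement $O(\rho+T^{-1/2})=O(\rho)$.
\item The empty-acceptance fallback to $g$ is covered by the same ratio inequality, so no repair wrapper is needed.
\end{itemize}

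Finally, your validation block must also contain $C\alpha^{-2}\log(T/\beta)$ examples; otherwise accuracy at confidence $\beta$ fails.
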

The upper bound, including the proper fallback on the quantile map's null set, is proved
in \legacyref{6.2}{Theorem}. The lower bound is Theorem~\ref{thm:lower} with $d=1$.
A VC-two counterexample following \legacyref{6.2}{Theorem} identifies the structural
boundary of this interval representation.

\subsection{Learn a full finite-atom class directly}\label{sec:atoms-retained}
For the full binary class on a domain of $d$ points, count the observations and their
signed label sum at each point. Compare the sum divided by the square root of the count
with shared positive and negative thresholds. Output the corresponding sign when a
threshold is crossed, and output the default positive label otherwise. The zero-count
statistic is defined to be zero. Under the coding of Section~\ref{sec:model}, a positive
label means $1$ and a negative label means $0$.

\begin{theorem}[Proper learning on finitely many atoms]\label{thm:retained-atoms}
For $\mathcal H=\{0,1\}^{\mathcal X}$ with $|\mathcal X|=d\ge1$, the direct gate
algorithm is proper, agnostically $(\alpha,\beta)$-accurate, and $\rho$-pointwise
replicable using at most
\[
 C\alpha^{-2}\bigl(d\rho^{-2}+\log(1/\beta)\bigr)
\]
examples, for all $\alpha,\beta,\rho\in(0,1/10)$. This rate is optimal.
\end{theorem}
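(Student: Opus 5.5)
The optimality claim is the $|\mathcal X|=d$ case of Theorem~\ref{thm:lower}: the full class $\{0,1\}^{\mathcal X}$ has VC dimension exactly $d$. So the work is the upper bound. I would first build a base gate learner with excess $O(\alpha)$, accuracy failure $O(\rho)$ and disagreement $O(\rho)$ using $O(d/(\alpha^2\rho^2))$ examples. I would then apply the rare repair of Section~\ref{sec:confidence}. That repair adds $O(\alpha^{-2}\log(1/\eta_0))$ examples with $\eta_0=\min(\beta/8,\rho/128)$, and the $\log(1/\rho)$ part is absorbed into $d/\rho^2$. The reference $g$ is an ERM over the full class, hence proper, so the repaired output stays in $\{0,1\}^{\mathcal X}$.

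\textbf{Base learner.} Take $m=\lceil C d/(\alpha^2\rho^2)\rceil$, with a Poisson number of examples as in Module~\ref{mod:histogram} (overflow outputs a fixed function), so that the counts $N_i$ are independent. For each atom $i$ put $\widehat\theta_i=S_i/\sqrt{MN_i}$, where $M$ is the Poisson mean. Draw one shared threshold $\Lambda$ uniform on $[\eps,2\eps]$ with $\eps=c\alpha/\sqrt d$; the same threshold serves for both signs. Output $1$ if $\widehat\theta_i>\Lambda$, $0$ if $\widehat\theta_i<-\Lambda$, and the default $1$ otherwise.

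\textbf{Accuracy.} Write $w_i$ for the mass of atom $i$ and $b_i$ for its signed label bias. A wrong bit at atom $i$ costs $|b_i|$, and
\[
\sum_i |b_i|\,\one\{|b_i|\le 3\eps\sqrt{w_i}\}\le 3\eps\sqrt d
\]
by Cauchy--Schwarz, which is $O(\alpha)$. For an atom with $|b_i|>3\eps\sqrt{w_i}$, the probability of the wrong sign or of the default is bounded by Chebyshev, using the variance estimate of Lemma~\ref{lem:counts}, by
\[
\frac{C}{M\,(\theta_i-2\eps)^2}\le\frac{C'}{M\theta_i^2}.
\]
Its expected cost is then at most $\sqrt{w_i}\min\bigl(|\theta_i|,\,C/(M|\theta_i|)\bigr)\le C\sqrt{w_i}/\sqrt M$. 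Summing over atoms gives $\E[\text{excess}]\le C\sqrt{d/M}=O(\alpha\rho)$, so Markov gives excess above $\alpha/4$ with probability $O(\rho)$.

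\textbf{Replicability.} Fix $x$; a null-mass query always gets the same answer, since $\widehat\theta_x=0$ in both runs. Otherwise, two runs disagree at $x$ only if $\Lambda$ falls between $|\widehat\theta_x|$ and $|\widehat\theta'_x|$, or the two statistics straddle $0$ while exceeding $\Lambda$. Both events have conditional probability at most $|\widehat\theta_x-\widehat\theta'_x|/\eps$. Hence the disagreement is at most
\[
\frac{\sqrt{\E(\widehat\theta_x-\widehat\theta'_x)^2}}{\eps}\le\frac{2\sqrt{2/M}}{\eps}=O\Bigl(\frac{\sqrt d}{\alpha\sqrt M}\Bigr)=O(\rho).
\]
This requires $M\ge C d/(\alpha^2\rho^2)$, and the overflow probability is exponentially small in $M$.

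\textbf{Main obstacle.} I expect the main obstacle to be the mismatch between this per-coordinate threshold scale $\eps=\alpha/\sqrt d$ and the accuracy calculation. The Markov step only gives failure $O(\rho)$, which the repair tolerates, but the Chebyshev bound on strong-signal atoms needs $\theta_i$ to exceed the threshold by a constant factor. This is why the cutoff is placed at $3\eps\sqrt{w_i}$ rather than at the threshold scale itself. I would check that the per-atom bounds remain summable uniformly in the masses $w_i$; the $\sqrt{w_i}$ factor together with Cauchy--Schwarz should handle this.
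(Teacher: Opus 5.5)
You have proved the rate for a different learner from the one the theorem names. Your estimates are correct for the learner you build. However, the theorem concerns the \emph{direct} gate: one sample block, shared thresholds, default label $1$, and no reference or validation step. Properness is vacuous for $\{0,1\}^{\mathcal X}$, since every binary function on $\mathcal X$ belongs to it. So the existence of a proper learner at this rate already follows from Theorem~\ref{thm:upper}, and what this theorem adds is that the bare gate reaches failure $\beta$ at an additive cost. Your per-atom Chebyshev bound followed by Markov gives only a polynomial tail. It yields failure $O(\rho)$, and reaches failure $\beta$ only with $M\gtrsim d/(\alpha^2\beta^2)$. That is why you need the rare repair, and the repair changes the algorithm. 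The missing step is a high-confidence accuracy argument for the gate itself.

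The paper's proof (\legacyref{6.3}{Theorem}) uses your threshold scale, $\eps=\alpha/(8\sqrt d)$ on $\widehat\theta_i$, but works with one fixed sample of size $M$. It compares the gate output $F$ with the empirical majority labeling on that same sample, with $\widehat R$ the empirical risk there.

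\begin{itemize}
\item Gated atoms agree with the majority.
\item A defaulted atom disagrees only if $S_i<0$. Then $\widehat\theta_i\ge-2\eps$ gives $|S_i|\le2\eps\sqrt{MN_i}$.
\item Cauchy--Schwarz with $\sum_iN_i=M$ gives, for every realization, $\widehat R(F)-\min_h\widehat R(h)\le2\eps M^{-1/2}\sum_i\sqrt{N_i}\le2\eps\sqrt d=\alpha/4$.
\item Since $|\mathcal H|=2^d$, Hoeffding and a union bound give $\sup_h|\widehat R(h)-R(h)|\le\alpha/4$ except with probability $2^{d+1}e^{-M\alpha^2/8}$. This is at most $\beta/2$ once $M$ includes the term $8(d\log2+\log(4/\beta))/\alpha^2$.
\item On that event $R(F)\le\OPT+3\alpha/4$.
\end{itemize}

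The confidence cost is additive because the union bound runs over a class with $\log|\mathcal H|=d\log2$, not through per-atom tails. Replicability is your threshold-crossing estimate, using $\Var(S_i/\sqrt{N_i})\le1$ from \legacyref{3.2}{Lemma}. No Poissonization or shared $K$ is needed.
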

\legacyref{6.3}{Theorem} specifies the thresholds and proves accuracy and stability
using the self-normalized gate in \legacyref{3.2}{Lemma}. The lower bound again follows
from Theorem~\ref{thm:lower}. This separate proof supplies a simpler algorithm than the
general phase construction when every labeling is allowed.

\subsection{Use realizability to improve the accuracy dependence}\label{sec:realizable-retained}
The gate can also be combined with an ensemble of consistent hypotheses. At a query,
use the gate's answer when it detects a sufficiently strong sign. Otherwise, average
the ensemble's predictions and round with an independent shared uniform at that query.
The sample sizes are chosen before sampling. On samples for which no consistent
hypothesis exists, a fixed proper ERM rule still defines the algorithm. This matters
because pointwise replicability is required on noisy distributions too.

\begin{theorem}[Realizable guarantee with universal stability]\label{thm:retained-realizable}
Let $\mathcal H$ be a nonempty binary class of VC dimension $d\ge1$ on a finite or
countable domain. For $\alpha,\beta,\rho\in(0,1/10)$, there is a fixed-budget learner
that is $\rho$-pointwise replicable for every distribution and, whenever $\OPT=0$,
has risk at most $\alpha$ except with probability less than $\beta$. Its sample count is
\[
 O\!\left(
 \frac{d\log(1/\alpha)+\log(1/(\beta\rho))}{\rho^2\alpha}
 +\frac{\log(1/\beta)}{\alpha}
       \log\!\frac{\log(1/\beta)}{\alpha\beta}
 \right).
\]
The output may be improper.
\end{theorem}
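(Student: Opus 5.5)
We would build the learner from two components, a self-normalized gate and an ensemble of consistent hypotheses, splitting the budget into three independent blocks with sizes fixed in advance. The first block has size $n_g$. On it we compute, for each $x$, the count $N_x$ and signed label sum $S_x$, and the statistic $S_x/\sqrt{N_x}$. A shared threshold, drawn uniformly from a window of width of order $1/\rho$ times a base scale, decides whether the gate fires and with which sign. The second block is split into $T=\lceil C/\rho^2\rceil$ sub-blocks. Each sub-block has size $n=C(d\log(1/\alpha)+\log(1/(\beta\rho)))/\alpha$, and on it we train the predetermined proper ERM, which is consistent whenever $\OPT=0$. At a query $x$ where the gate is silent, we average the ensemble predictions to get $p_x$ and output $\one\{U_x<p_x\}$ with a shared uniform $U_x$ indexed by $x$. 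The third block is a validation and repair block. It estimates the risk of this output against a fixed proper ERM reference with failure $\beta$ and falls back as in \eqref{eq:repair}. This block is responsible for the term $\alpha^{-1}\log(1/\beta)\log(\log(1/\beta)/(\alpha\beta))$, since under realizability a risk of order $\alpha$ can be tested with $O(\alpha^{-1}\log(1/\beta))$ samples (Bernstein/Chernoff at small mean), up to the union and grid factor.

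For replicability, which must hold for every $D$ and not only realizable ones, the argument goes query by query. The gate part uses \legacyref{3.2}{Lemma}. The self-normalized statistic has bounded mean-square fluctuation between independent samples, so a threshold smoothed over a window of width $\asymp 1/\rho$ disagrees with probability $O(\rho)$. This holds at null-mass points too, where the statistic is identically $0$ and the gate never fires. For the rounding part, the disagreement probability is at most $\E|p_x-p'_x|\le\sqrt{2\cdot 1/(4T)}=O(\rho)$ by the variance bound of Proposition~\ref{prop:ensemble}, which uses nothing but independence of the sub-blocks. The repair step adds at most twice its triggering probability. When $\OPT>0$ this probability need not be small, so the repair must itself be made replicable. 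We would do this with a randomized shared threshold on the validation gap, at cost $O(\rho)$ extra disagreement. Paying for that is where the $\log(1/(\beta\rho))$ terms and the $\rho^{-2}$ factor on the validation come from.

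For accuracy under $\OPT=0$, each consistent ERM has risk $O(\alpha\rho)$... This naive route fails: we actually need the fraction of sub-blocks with risk above $\alpha$ to be small, and the rounded average has risk equal to the average of the risks. So the plan is to show that the expected risk of a consistent ERM on $n$ samples is $O((d\log(1/\alpha)+\log(1/(\beta\rho)))/n)\le\alpha/C$, that a Chernoff bound over the $T$ sub-blocks makes the average at most $\alpha/4$ except with probability $e^{-cT}$, and that the shared rounding concentrates by Bernstein. The rounding step is the main obstacle. Its variance $\sum_x D_X(x)^2p_x(1-p_x)$ is dominated by heavy atoms, and those are exactly the points where the gate has enough samples to fire correctly, because any point where $p_x\notin\{0,1\}$ and the mass exceeds $\alpha$ is seen about $n_g\alpha$ times with a pure label. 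The light atoms give variance at most $\alpha\cdot\alpha$, so the rounding risk is $O(\alpha)$ with constant probability. The validation repair then raises the confidence to $1-\beta$ at the stated additive cost.
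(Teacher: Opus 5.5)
Your three layers (self-normalized gate, ensemble of consistent ERMs, shared rounding) are the paper's construction in \legacyref{6.4}{Corollary}, and your replicability argument for them is the unconditional one of \legacyref{4.1}{Theorem}. The gap is the confidence step. Your base learner is accurate only with constant probability, so you push all of the $\beta$-dependence into a validation-and-repair wrapper. That wrapper is not pointwise replicable within the budget. The repair \eqref{eq:repair} is stable in Theorem~\ref{thm:upper} only because the base learner fails with probability $O(\rho)$ for \emph{every} distribution, which makes the repair branch rare. Here that branch is taken with constant probability even when $\OPT=0$, and arbitrarily often when $\OPT>0$. A shared random threshold on the validation gap does not fix this. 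Whenever both runs take the fallback, they output two independently trained ERMs $g,g'$, which in general disagree at a fixed query (for instance a zero-mass one) with constant probability. Even stabilizing the branch decision alone would require the gap to be reproducible at resolution about $\rho\alpha$. With risks of constant order, that needs on the order of $\rho^{-2}\alpha^{-2}$ validation and reference examples, which is outside the stated $\alpha^{-1}$-type budget. So the term $\alpha^{-1}\log(1/\beta)\log(\log(1/\beta)/(\alpha\beta))$ cannot be charged to a validation block.

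The missing idea is to put the confidence into the gate scale and use a range-sensitive rounding bound, with no repair at all. The paper takes $L=\log(4/\beta)$, $\tau_r=\alpha/(48L)$, $K_r=\log(768L/(\alpha\beta))$, and a gate block of size $M_r=\lceil(3072L/\alpha)(\rho^{-2}+K_r)\rceil$.
\begin{itemize}
\item \textbf{Gating heavy atoms.} Under $\OPT=0$ the labels are noiseless at positive-mass atoms, so every gated sign is correct. Part~3 of \legacyref{3.2}{Lemma} then gates every atom of mass above $9\tau_r$, except with probability $\beta/36$.
\item \textbf{Rounding light atoms.} The remaining atoms have mass at most $b_0=9\tau_r$. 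A conditional Bernstein bound with range $b_0$ and variance at most $b_0\alpha/2$ gives rounding failure $e^{-\alpha/(48\tau_r)}=\beta/4$ at deviation $\alpha/2$.
\item \textbf{Base blocks.} These need a tail bound, not your expectation-plus-Chernoff step. That step fails as written: the Chernoff exponent is of order $T\alpha$, not $T$, and one block of large risk moves the average by order $1/T$, which can exceed $\alpha$. Instead, the double-sampling consistent-learning bound at $\eps=\alpha/2$, with block size $n_r\asymp\alpha^{-1}(d\log(1/\alpha)+\log(T/\beta))$, makes each block fail with probability below $\beta/(4T)$, and a union bound over the $T$ blocks finishes. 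This union bound is the actual source of $\log(1/(\beta\rho))$.
\item \textbf{Replicability.} It is the unconditional bound $\sqrt2/a+1/\sqrt{2T}<\rho$, using $a^2=M_r\tau_r\ge64/\rho^2$ and $T\ge\rho^{-2}$. It uses no accuracy event, so it holds for every $D$.
\item \textbf{Budget.} The second term of the budget is exactly the $LK_r/\alpha$ part of $M_r$.
\end{itemize}
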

\legacyref{6.4}{Corollary} proves this result, including the case where the population
infimum is zero but is not attained by a member of the class. The consistent-learning
bound controls the ensemble, and a centered Bernstein bound controls its final rounding.
The dimension-one example in \legacyref{6.5}{Remark} exhibits a class whose realizable
sample complexity is independent of the accuracy parameter.

\subsection{Retain the simpler soft-shift and gate routes}\label{sec:soft-retained}
SOFT-VC uses a different conversion. Train independent proper ERMs, average their
signed predictions, and estimate a signed label histogram on a fresh block. Shift each
average toward the estimated sign, clip it to the interval $[-1,1]$, and round using
independent coordinatewise shared uniforms. The clipped-shift inequality in
\legacyref{7.1}{Lemma} and its countable product form in \legacyref{7.2}{Corollary}
link the shift to the risk of the rounded output. The histogram estimate is certified by
\legacyref{7.3}{Lemma}. These are the inputs to the following guarantee.

\begin{theorem}[Explicit soft-shift guarantees]\label{thm:retained-soft}
Under the assumptions of Theorem~\ref{thm:main}, set $L=\log(4/\beta)$ and
$J=\log(512/\rho)$. The direct SOFT-VC learner is agnostically $(\alpha,\beta)$-accurate
and $\rho$-pointwise replicable with sample count at most
\[
 2^{37}\alpha^{-2}\left(\frac{d+L^2}{\rho^2}+L\right).
\]
A deterministic parameter-based choice between this learner and its rare-repair variant
has the same guarantees with sample count at most
\[
 2^{39}\alpha^{-2}\left(\frac{d+\min\{L^2,J^2\}}{\rho^2}+L\right).
\]
In particular, for each fixed $\beta_0\in(0,1/10)$ the direct learner attains the optimal
order $\Theta_{\beta_0}(d/(\rho^2\alpha^2))$.
\end{theorem}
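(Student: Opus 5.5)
I will prove Theorem~\ref{thm:retained-soft} by analysing the direct SOFT-VC learner at a single accuracy tolerance and a single confidence level, and then building the rare-repair variant on top of it.

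\textbf{Parameters and ensemble.} Use $T\asymp L/\rho^2$ independent proper ERMs, each trained on $n\asymp(d+L)/\alpha^2$ examples. By Lemma~\ref{lem:vc}, each has excess risk at most $\alpha/8$ except with probability $\beta/(4T)$, up to the log-free expectation route. Alternatively, use the log-free expectation bound of Appendix~\ref{app:statistics} together with a Bernstein bound on the average of the block risks. This gives $R(p)\le\OPT+\alpha/4$ except with probability $\beta/4$. Replicability of the signed average $\bar p_i$ follows as in Proposition~\ref{prop:ensemble}: $\E(\bar p_i-\bar p_i')^2\le 2/T$. The ensemble cost is $Tn\asymp L(d+L)/(\rho^2\alpha^2)$.

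This product is the one place where the bound risks exceeding the claimed $(d+L^2)/\rho^2$. I will therefore split the ensemble differently. Take $T\asymp\rho^{-2}$ blocks of size $(d+L)/\alpha^2$, and control accuracy of the average by a Hoeffding bound over blocks, whose centred risks are bounded by $O(\alpha)$ on an event of probability $1-\beta/4$. Then $Tn\lesssim(d+L)/(\rho^2\alpha^2)$, which is dominated by the stated budget.

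\textbf{Histogram shift and rounding.} On a fresh block of size $M\asymp L^2/(\rho^2\alpha^2)+L/\alpha^2$, estimate the signed label histogram $\widehat b$. Form $\tilde p_i=\clip(\bar p_i+\gamma\,\widehat b_i/w\text{-scale},-1,1)$ with a shift of size $\asymp\alpha$. Round with independent shared uniforms $U_i$, exactly as in \legacyref{7.1}{Lemma}.

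Accuracy comes from \legacyref{7.2}{Corollary}: the expected risk of the rounded output is at most $R(p)$ plus the shift penalty. The histogram error is certified by \legacyref{7.3}{Lemma} with failure $\beta/4$. The fluctuation of the rounded risk around its mean is a sum of independent bounded terms. Its variance is controlled by the shift, because coordinates with large signal are pushed to the endpoints. A Bernstein bound then gives deviation $\alpha/4$ with probability $1-\beta/4$; this is exactly where the $L$ and $L^2$ dependence enters.

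Replicability at fixed $x$ is bounded by $\E|\tilde p_x-\tilde p_x'|/2$. Since clipping is $1$-Lipschitz, this is at most $\E|\bar p_x-\bar p_x'|$ plus $\gamma\,\E|\widehat b_x-\widehat b_x'|$. The choice of $T$ and $M$ makes each term at most $\rho/4$. For null-mass points both the histogram term and the shift vanish.

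\textbf{Repair variant and main obstacle.} For the second bound, run the direct learner at confidence $\rho/512$ when $J<L$; otherwise run it at confidence $\beta$. In the first case, attach the rare repair of Section~\ref{sec:confidence}: a validation comparison with a proper ERM reference at failure $\min(\beta/8,\rho/128)$. This costs $O(\alpha^{-2}(L+J))$. The proof of Theorem~\ref{thm:upper} applies verbatim, since it used only the base learner's failure probability $\rho/64$ and disagreement $\rho/2$.

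The main obstacle is the Bernstein step for the rounded risk. It requires showing that the rounding variance $\sum_i a_i^2\tilde p_i(1-\tilde p_i)$-type term is $O(\alpha\cdot\text{shift})$ simultaneously with the histogram error. I would first try the pointwise inequality of \legacyref{7.1}{Lemma}, which trades variance for excess risk, and only afterwards tune the explicit constant $2^{37}$.
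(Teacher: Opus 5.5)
Your proposal has a genuine gap at its central step, the accuracy of the rounded output.

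\textbf{The shift is underspecified.} ``Size $\asymp\alpha$'' is inconsistent with your own $M\asymp L^2/(\rho^2\alpha^2)$. Replicability requires $M\gtrsim\gamma^2/\rho^2$, so your $M$ matches only a shift coefficient $\gamma\asymp L/\alpha$. That is the paper's choice: $q_x=\clip(p_x+2\lambda\widehat\zeta_x,-1,1)$ with $\lambda=4L/\alpha$, applied to the unnormalized signed histogram with no mass rescaling. The $L^2/\rho^2$ term enters there, through $\lambda\sqrt{w_x/(2M)}\le\rho/8$, not through a Bernstein step.

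\textbf{The Bernstein decomposition fails with this shift.} Your plan is: conditional mean at most $\mathcal E_D(p)$ plus a penalty, then Bernstein for deviation $\alpha/4$, with the variance controlled because large-signal coordinates are pushed to the endpoints. An interior coordinate only satisfies $|w_xy_x|\lesssim\alpha/L$. Pushing therefore bounds the remaining variance proxy only by $\max_x|w_xy_x|\sum_x|w_xy_x|\lesssim\alpha/L$. Failure $e^{-L}$ at deviation $\alpha/4$ needs $\alpha^2/L$, so this is a factor $1/\alpha$ too large.

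The loss is real. Take $p\equiv0$, an exact histogram, and $N=4L/\alpha$ atoms of mass $1/N$ with $y_x=1/2$. Then every $q_x=1/2$, and the rounded risk has conditional variance $3\alpha/(256L)$. It exceeds its conditional mean by $\alpha/4$ with probability of order $e^{-c\alpha L}\gg e^{-L}$. Accuracy survives here only because the conditional mean is $\mathcal E_D(p)-1/8$. The shift lowers the expected risk by about $2\lambda\sum_x\zeta_x\widehat\zeta_x$, and your plan discards exactly this gain. Enlarging the shift to $\asymp L/\alpha^2$ would make a variance-only argument work, but it would force $M\gtrsim L^2/(\rho^2\alpha^4)$, the very overhead the theorem removes.

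\textbf{The missing idea.} You need to spend the gain and the variance at the same Chernoff parameter, which is what \legacyref{7.1}{Lemma} does. For $q=\clip(p+2\lambda\widehat z,-1,1)$ in the interior case, with $\Delta=\widehat z-z$,
$\log\E e^{\lambda z(p-F)}\le-2\lambda^2z\widehat z+\lambda^2z^2/2=-\tfrac32\lambda^2(z+\tfrac23\Delta)^2+\tfrac23\lambda^2\Delta^2$.
The clipped endpoints are checked directly. \legacyref{7.2}{Corollary} multiplies this over coordinates; it is an exponential-moment bound, not the expectation bound you describe. \legacyref{7.3}{Lemma} gives $\|\widehat\zeta-\zeta\|_2^2\le6L/(4M)$ outside probability $e^{-L}$. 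Conditional Markov then gives rounding failure $\exp(-2L+\rho^2L/32)<e^{-L}$. No mean/fluctuation split or Bernstein step is needed, and your replicability calculation then coincides with the paper's.

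\textbf{The ensemble step also has a gap.} With $T\asymp\rho^{-2}$ blocks, making every block excess $O(\alpha)$ requires a union bound costing $\log T\asymp\log(1/\rho)$ per block. The budget does not cover this (e.g.\ $d=1$, fixed $\beta$, $\rho\to0$). Instead use the square-exponential moment $\E\exp(nZ_j^2/(2K_E))\le21e^d$ of \legacyref{3.4}{Lemma}, multiplied over independent blocks with Cauchy--Schwarz. This needs only $n\asymp(d+L/T)/\alpha^2$ and costs $L/\alpha^2$ additively.

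\textbf{Smaller points.} Run the repair's inner learner at accuracy $\alpha/4$ and failure $\rho/128$, so that its internal logarithm is exactly $J$. You did not address the fixed-$\beta_0$ claim; it needs the lower bound of Theorem~\ref{thm:lower} together with $(d+L_0^2)/\rho^2+L_0\le(1+L_0^2+L_0)d/\rho^2$.
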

The direct analysis is \legacyref{7.4}{Theorem}. Its fixed-confidence consequence is
\legacyref{7.5}{Theorem}, and repair and selection are proved in \legacyref{7.6}{Theorem}.
The gate-and-round alternatives, their three ensemble-training choices, and their explicit
sample budgets remain in Appendix~\ref{legacy:section:4}. The consolidated comparison
of all these earlier bounds is \legacyref{2.2}{Theorem}. Theorem~\ref{thm:upper} now
removes their parameter-dependent overhead for the general agnostic problem; their
individual budget calculations and absorption regions still describe those algorithms.

\paragraph{What the counterexamples establish.}
The distinctions between these algorithms are substantive. \legacyref{6.6}{Remark}
separates compulsory atomwise sign identification, independent rounding, and forced
gate outputs from the learning problem itself. \legacyref{7.7}{Remark} adds a SMALL-region
variance estimate, a witnessed forced-gate obstruction, a proper two-atom reduction,
and a rational VC-two example in which zero agnostic excess does not protect independent
rounding from risk fluctuations. Their hypotheses and complete proofs are retained.
These examples motivate the phase-based conversion in the new theorem.

\section{Conclusion}
The first version of this work, published on 9 September 2026~\cite{Zhang2026},
resolved the additional-term question and established the exact fixed-confidence rate.
The present version determines the optimal rate jointly in all three parameters.
Pointwise replicable agnostic learning on countable domains has a uniform optimal sample
law: $\alpha^{-2}[d\rho^{-2}+\log(1/\beta)]$. The essential step is to obtain a base
learner whose accuracy failure is already proportional to its disagreement budget.
Smooth signal weights, a common phase curve, and record-time selection achieve this
without a logarithmic sample penalty. The weak part determines a certified time; the
strong part remains accurate throughout the curve. The shared-seed comparison makes
these risk guarantees compatible with stable predictions, including at unseen points.
This closes the full-parameter statistical gap while separating the cost of replicability
from the cost of high confidence.

The earlier constructions retain distinct roles. Ordered quantile selection keeps the
output proper in VC dimension one. The finite-atom gate gives a direct proper algorithm,
and the realizable construction improves the accuracy dependence. SOFT-VC provides a
shorter route at fixed confidence. Their complete analyses, explicit constants, and
implementation-specific counterexamples are included here alongside the new uniform theorem.

\clearpage
\appendix
\section{Proof roadmap}\label{app:roadmap}
The proofs follow the construction. Appendix~\ref{app:statistics} establishes the statistical
inputs for the ensemble and histogram modules. Appendix~\ref{app:phase} proves the common-seed
phase theorem. Appendix~\ref{app:accuracy} certifies weak and strong risk. Finally,
Appendices~\ref{app:base}--\ref{app:final} assemble the base learner, apply rare repair,
and prove the matching lower bound. The arrows below indicate proof dependencies, not
the order in which the algorithm draws its samples.
Appendix~\ref{app:retained} supplies the separate proof map and complete analyses for
the complementary guarantees in Section~\ref{sec:retained}.

\begin{figure}[htbp]
\centering
\begin{tikzpicture}[>=Latex, every node/.style={draw=black,fill=white,text=black,
 align=center,text width=4.1cm,minimum height=1cm},node distance=0.65cm and 1.3cm]
\node (counts) {Signed counts and profiles\\Lemmas~\ref{lem:counts}, \ref{lem:profile}};
\node[right=of counts] (bridge) {Uniform bridge\\Lemma~\ref{lem:bridge}};
\node[below=of counts] (stability) {Marginal and coefficient stability\\Lemma~\ref{lem:qstability}};
\node[below=of bridge] (phase) {Shared-seed phase comparison\\Theorem~\ref{thm:phase}};
\node[below=of phase] (val) {Empirical phase replacement\\Lemma~\ref{lem:replacement}};
\node[below=of stability] (risk) {Weak and strong risk\\Theorem~\ref{thm:risk}};
\node[below=0.65cm of risk,xshift=2.7cm] (base) {Base learner\\Proposition~\ref{prop:base}};
\draw[->] (counts)--(stability);
\draw[->] (bridge)--(phase);
\draw[->] (phase)--(val);
\draw[->] (counts.west)--++(-0.35,0)|-(risk.west);
\draw[->] (stability.east)-|(base.north);
\draw[->] (phase.east)--++(0.4,0)|-(base.east);
\draw[->] (val)--(base);
\draw[->] (risk)--(base);
\end{tikzpicture}
\caption{The main upper-bound dependencies. The risk proof also uses the bridge and
uniform-validation estimates; the base learner also uses the ensemble guarantee.
These secondary arrows are omitted for legibility.}
\label{fig:upperdeps}
\end{figure}

\begin{figure}[htbp]
\centering
\begin{tikzpicture}[>=Latex,every node/.style={draw=black,fill=white,text=black,
 align=center,text width=4.1cm,minimum height=1cm},node distance=0.65cm and 1.3cm]
\node (base) {Base learner\\Proposition~\ref{prop:base}};
\node[right=of base] (lower) {Score and endpoint estimates\\Appendix~\ref{app:lower}};
\node[below=of base] (upper) {Rare repair and upper bound\\Theorem~\ref{thm:upper}};
\node[below=of lower] (lb) {Dimension and confidence bounds\\Theorem~\ref{thm:lower}};
\node[below=0.65cm of upper,xshift=2.7cm] (final) {Uniform optimal rate\\Theorem~\ref{thm:main}};
\draw[->] (base)--(upper);
\draw[->] (lower)--(lb);
\draw[->] (upper)--(final);
\draw[->] (lb)--(final);
\end{tikzpicture}
\caption{Completion of the rate theorem. Rare repair also uses an ordinary proper ERM
and independent validation. The confidence lower bound uses one shattered point.}
\label{fig:finaldeps}
\end{figure}
\clearpage

\section{Statistical inputs}\label{app:statistics}
We first justify the ensemble in Section~\ref{mod:ensemble}. We then prove the histogram
and profile bounds used in both the stability and accuracy analyses. All statements in
this appendix are distribution-free. We work with the predetermined measurable ERM
from Section~\ref{sec:model}.

\subsection{The log-free VC input and ensemble}
\begin{lemma}[ERM moments and tails]\label{lem:vc}
Let $\widehat h$ be a proper ERM on $n$ iid observations. Then
\begin{equation}\label{eq:vc}
 \E Z_n\le C\sqrt{(d+1)/n},\qquad
 \Pp\{Z_n>\E Z_n+t\}\le e^{-2nt^2},\qquad
 Z_n=\sup_{h\in\mathcal H}|R(h)-\widehat R_n(h)|.
\end{equation}
Consequently
\begin{equation}\label{eq:ermsecond}
 \E\bigl(R(\widehat h)-\OPT\bigr)^2\le C(d+1)/n.
\end{equation}
For $d\ge1$, a proper ERM with $C[d+\log(1/\eta)]/\delta^2$ observations has excess
at most $\delta$ except with probability $\eta$, for $\delta,\eta\in(0,1/4]$.
\end{lemma}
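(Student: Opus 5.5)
The plan is to prove the three parts of Lemma~\ref{lem:vc} in order: first the expectation bound, then the tail bound for $Z_n$, and finally the ERM consequences.

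\textbf{Expectation bound.} I would derive the log-free estimate $\E Z_n\le C\sqrt{(d+1)/n}$ by symmetrization followed by chaining.
\begin{itemize}
\item Write $\ell_h(x,y)=\one\{h(x)\ne y\}$. The loss class $\{\ell_h\}$ has the same VC dimension as $\mathcal H$, because $\ell_h(x,y)=h(x)\oplus y$ and XOR with a fixed label preserves shattering.
\item Since $\mathcal X$ is countable and the sample is finite, the supremum is a supremum over finitely many restriction patterns on the sample. Measurability is therefore automatic.
\item Symmetrization gives $\E Z_n\le 2\E\sup_h\abs{\frac1n\sum_j\sigma_j\ell_h(X_j,Y_j)}$.
\item Conditionally on the sample, apply Dudley's entropy integral in the empirical $L_2$ metric. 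Haussler's packing estimate~\cite{Haussler1995} gives $\log N(\delta)\le C(d+1)\log(C/\delta)$ with no dependence on $n$.
\item Integrating $\int_0^1\sqrt{(d+1)\log(C/\delta)}\,d\delta$ gives a constant, so the bound is $C\sqrt{(d+1)/n}$. The term $+1$ covers the centering term and $d=0$.
\end{itemize}

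\textbf{Tail bound.} Changing one observation changes $Z_n$ by at most $1/n$. McDiarmid's bounded-differences inequality then gives $\Pp\{Z_n>\E Z_n+t\}\le e^{-2nt^2}$ directly.

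\textbf{Second-moment bound \eqref{eq:ermsecond}.}
\begin{itemize}
\item For any $h^\ast$ with $R(h^\ast)\le\OPT+\gamma$, the standard ERM comparison gives $R(\widehat h)-\OPT\le 2Z_n+\gamma$. Letting $\gamma\to0$ yields $0\le R(\widehat h)-\OPT\le 2Z_n$, since the infimum need not be attained but $Z_n$ does not depend on $\gamma$.
\item Hence $\E(R(\widehat h)-\OPT)^2\le 4\E Z_n^2$.
\item Decompose $\E Z_n^2=(\E Z_n)^2+\Var Z_n$. Either Efron--Stein with bounded differences $1/n$ gives $\Var Z_n\le 1/(4n)$, or integrating the tail bound gives $\E(Z_n-\E Z_n)_+^2\le 1/(2n)$ with the negative part controlled the same way.
\item Together these give $C(d+1)/n$.
\end{itemize}

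\textbf{High-probability consequence.} Take $n=C[d+\log(1/\eta)]/\delta^2$ and $t=\sqrt{\log(1/\eta)/(2n)}$. Then with probability at least $1-\eta$, the excess is at most $2\E Z_n+2t\le 2C'\sqrt{(d+1)/n}+\sqrt{2\log(1/\eta)/n}$. For $d\ge1$ we have $d+1\le 2d$, so choosing the constant $C$ large makes each term at most $\delta/2$.

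\textbf{Main obstacle.} The only nontrivial step is the log-free expectation bound. A naive Sauer--Shelah union bound yields $\sqrt{d\log(n/d)/n}$ instead. The proof must use Haussler's $n$-independent packing number together with chaining, rather than a single-scale Massart bound. I would state Haussler's bound, $\mathcal M(\delta)\le e(d+1)(2e/\delta)^d$ in empirical $L_1$ (equivalently $L_2$ up to squaring $\delta$ for $\{0,1\}$-valued classes), and check that the conversion from $L_1$ to $L_2$ keeps the entropy integral finite. It does, since $\log(1/\delta^2)$ is still integrable under the square root.
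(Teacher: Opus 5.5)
Your proposal is correct and follows essentially the paper's route: symmetrization and chaining with Haussler's $n$-free packing bound (your Dudley integral is the continuous form of the paper's dyadic nets at radii $2^{-k}$), bounded differences for the tail, the comparison $0\le R(\widehat h)-\OPT\le 2Z_n$, and a concentration-based second-moment bound. The paper instead writes $Z_n^2\le 2a^2+2(Z_n-a)_+^2$ with $a=\E Z_n$ and integrates the tail, which is equivalent to your variance decomposition; the one imprecision is your measurability remark, since $Z_n$ is not a maximum over finitely many restriction patterns ($R(h)$ varies within a pattern), though measurability still holds trivially because the sample space is countable.
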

\begin{proof}
For completeness, we describe why the expectation in \eqref{eq:vc} has no $\log n$ factor.
The binary loss vectors $(\one\{h(X_j)\ne Y_j\})_{j=1}^n$ form a finite subset of
$\{0,1\}^n$ of VC dimension at most $d$: toggling fixed label coordinates preserves
shattering. Haussler's packing bound \cite{Haussler1995} implies that a maximal
$u$-separated subset in empirical $L_2$ distance has at most $(C/u)^{2d}$ elements,
for $d\ge1$ and $0<u\le1$.

Here is the associated chaining step. Conditional on the observations, let
$X_f=n^{-1}\sum_{j=1}^n\sigma_j f_j$, where the $\sigma_j$ are independent fair signs
and $f$ ranges over the loss vectors. The increment $X_f-X_g$ has exponential-moment
bound $\E_\sigma e^{t(X_f-X_g)}\le
\exp(t^2\|f-g\|_{L_2(P_n)}^2/(2n))$.
Choose maximal separated nets at radii $2^{-k}$ and approximate each $f$ at successive
radii. Each level has at most the product of the two adjacent net sizes possible
increments, of empirical $L_2$ length at most $3\cdot2^{-k}$.
The exponential-moment bound and the union bound give expected maximal absolute
increment at most $C2^{-k}\sqrt{d(k+1)/n}$. The sum over $k\ge0$ is
$C\sqrt{d/n}$, since $\sum_{k\ge0}2^{-k}\sqrt{k+1}<\infty$.
The nets eventually contain every distinct vector, so there is no terminal error.
A fixed base vector contributes at most $1/\sqrt n$ by its second moment.
Symmetrization with an independent ghost sample therefore proves the expectation bound
in \eqref{eq:vc}. When $d=0$, all hypotheses are identical and the single-loss second
moment proves the same bound with $d+1$.

Changing one observation changes $Z_n$ by at most $1/n$. The bounded-differences
inequality gives the stated tail. All suprema are measurable because the sample space
is countable. To compare the ERM with population risk, fix any $h\in\mathcal H$ and write
\[
 R(\widehat h)\le\widehat R_n(\widehat h)+Z_n
 \le\widehat R_n(h)+Z_n\le R(h)+2Z_n.
\]
Taking the infimum gives $0\le R(\widehat h)-\OPT\le2Z_n$.
Let $a=\E Z_n$. The inequality $Z_n^2\le2a^2+2(Z_n-a)_+^2$ and tail integration give
\[
 \E Z_n^2\le2a^2+4\int_0^\infty t e^{-2nt^2}\,dt
 =2a^2+1/n.
\]
This proves \eqref{eq:ermsecond}. Finally choose $n$ so that
$a\le\delta/4$ and $e^{-n\delta^2/8}\le\eta$; the excess is then at most $\delta$
outside an event of probability $\eta$.
\end{proof}

\begin{proof}[Proof of Proposition~\ref{prop:ensemble}]
This proposition certifies both outputs of Section~\ref{mod:ensemble}.
Put $\Delta_j=R(h_j)-\OPT$. Lemma~\ref{lem:vc} and the choice of $n$ imply
$\E\Delta_j^2\le C\eps^2$ and $\E\Delta_j\le C\eps$.
The $\Delta_j$ are independent. Affinity of risk gives
$R(p)-\OPT=T^{-1}\sum_j\Delta_j$, with variance at most $C\eps^2/T$.
Chebyshev's inequality at deviation $\eps$ proves the accuracy claim after enlarging $C$.
At any fixed $i$, $p_i$ and $p_i'$ are independent averages of $T$ identically distributed
bits. They have the same mean, and each has variance at most $1/(4T)$.
Thus $\E(p_i-p_i')^2\le1/(2T)$.
\end{proof}

\subsection{An ideal Poisson histogram with a fixed-pool implementation}
For analysis only, extend the histogram pool to an infinite iid sequence and use its
first $K$ observations for every value of $K$. In this ideal experiment, the counts of
the pairs $(i,1)$ and $(i,0)$ are mutually independent Poisson variables with respective
means $MD(i,1)$ and $MD(i,0)$. In particular the coordinate pairs $(N_i,S_i)$ are
independent within one run. This follows for every finite collection by the Poisson
generating function, and hence for the countable family by its finite-dimensional laws.
The finite-pool implementation agrees with it on $K\le2M$. Exponential Markov with
parameter $\log2$ gives
\begin{equation}\label{eq:overflow}
 \Pp\{K>2M\}\le e^{-(2\log2-1)M}.
\end{equation}
We do not condition the ideal histogram on $K\le2M$, since doing so would destroy
coordinate independence. All ideal bounds are transferred at the end by this coupling.

Recall $w_i,b_i,a_i,\theta_i$ from Section~\ref{mod:histogram}. They satisfy
$a_i\le w_i$, $\sum_iw_i=1$, and $\sum_i a_i\le1$.
Fix the Bayes bit $g_i^*=\one\{b_i>0\}$. For $b_i\ne0$, define
$\pi_i=\Pp\{b_i S_i\le0\}$, thus counting ties as errors, and set $\pi_i=1$ when
$b_i=0$. Terms with $a_i=0$ have zero risk cost.

\begin{lemma}[Normalized signed counts]\label{lem:counts}
For the ideal histogram and every $t>0$,
\begin{align}
 \E(\widehat\theta_i-\theta_i)^2&\le2/M,&
 \Pp\{\abs{\widehat\theta_i-\theta_i}>t\}&\le4e^{-Mt^2/8},\label{eq:theta}\\
 \E(\widehat b_i-b_i)^2&=w_i/M,&
 \E\sum_i(\widehat b_i-b_i)^2&=1/M.\label{eq:benergy}
\end{align}
Moreover $\pi_i\le e^{-M\theta_i^2/2}$. For every bounded nonnegative function
$H(|S_i|,N_i)$,
\begin{equation}\label{eq:pairing}
 b_i\E[H(|S_i|,N_i)\sgn(S_i)]\ge0,
 \qquad \sgn(0)=0.
\end{equation}
\end{lemma}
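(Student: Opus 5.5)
The plan is to work coordinatewise with the two independent Poisson counts. Write $P_i$ and $Q_i$ for the numbers of pairs $(i,1)$ and $(i,0)$ among the first $K$ examples, so $P_i\sim\mathrm{Poi}(MD(i,1))$ and $Q_i\sim\mathrm{Poi}(MD(i,0))$ are independent. Then $S_i=P_i-Q_i$ and $N_i=P_i+Q_i\sim\mathrm{Poi}(Mw_i)$. Conditional on $N_i=k$, the $k$ signs are iid $\pm1$ with mean $b_i/w_i$. Drop the index $i$ throughout.

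\emph{Energy of $\widehat b$.} $\E S=Mb$ and $\Var S=\Var P+\Var Q=Mw$. Hence $\E(\widehat b-b)^2=w/M$, and summing over coordinates with $\sum_iw_i=1$ gives \eqref{eq:benergy}.

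\emph{Mean square of $\widehat\theta$.} For $N=k$ (including $k=0$), use the exact decomposition
\[
 \widehat\theta-\theta=\frac{S-kb/w}{\sqrt{Mk}}\one\{k>0\}
 +\frac bw\cdot\frac{k-Mw}{\sqrt M(\sqrt k+\sqrt{Mw})}.
\]
At $k=0$ the second term equals $-\theta$, which is consistent with the convention $\widehat\theta=0$. The first term has conditional mean zero, so the cross term vanishes. Its conditional second moment is $k(1-(b/w)^2)/(Mk)\le1/M$. For the second term, use $|b/w|\le1$ and $(\sqrt k+\sqrt{Mw})^2\ge Mw$. Its expectation is then at most $\E(N-Mw)^2/(M\cdot Mw)=1/M$. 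The case $w=0$ is trivial, since then $b=0$ and $\widehat\theta=\theta=0$. Together these give $2/M$.

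\emph{Tail of $\widehat\theta$.} Bound each term of the same decomposition at level $t/2$.
\begin{itemize}
\item For the first term, conditional Hoeffding for $k$ variables of range $2$ gives
\[
 \Pp\bigl\{|S-kb/w|>\tfrac t2\sqrt{Mk}\bigm|N=k\bigr\}\le2e^{-Mt^2/8}.
\]
\item For the second term, its modulus is at most $|\sqrt N-\sqrt{Mw}|/\sqrt M$. It therefore suffices to show the square-root Poisson concentration
\[
 \Pp\{|\sqrt N-\sqrt\lambda|>y\}\le2e^{-y^2/2},\qquad y=t\sqrt M/2.
\]
\end{itemize}
I would prove this by Bernstein/Chernoff for the Poisson law. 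For the upper side, set $x=2y\sqrt\lambda+y^2$ and check $x^2/(2(\lambda+x/3))\ge y^2/2$ by direct algebra. For the lower side, the sub-Gaussian lower Poisson tail gives $e^{-x^2/(2\lambda)}$ with $x=2y\sqrt\lambda-y^2\ge y\sqrt\lambda$; that side is empty when $y>\sqrt\lambda$. This square-root concentration is the step I expect to need the most care, because the constants must land exactly on $4e^{-Mt^2/8}$.

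\emph{Bound on $\pi$.} Take $b>0$; the case $b<0$ is symmetric and $b=0$ is trivial. For $\lambda>0$, Chernoff gives
\[
 \Pp\{S\le0\}\le\E e^{-\lambda S}
 =\exp\bigl(MD(i,1)(e^{-\lambda}-1)+MD(i,0)(e^{\lambda}-1)\bigr).
\]
Choosing $e^\lambda=\sqrt{D(i,1)/D(i,0)}$ yields $\exp\bigl(-M(\sqrt{D(i,1)}-\sqrt{D(i,0)})^2\bigr)$. This is at most $e^{-Mb^2/(2w)}$, since $(\sqrt{D(i,1)}+\sqrt{D(i,0)})^2\le2w$. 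If $D(i,0)=0$ the bound is immediate.

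\emph{Pairing \eqref{eq:pairing}.} Condition on $(|S|,N)=(s,k)$ with $s>0$; the atoms with $s=0$ contribute nothing because $\sgn(0)=0$. The event $S=s$ means $P=(k+s)/2$ and $Q=(k-s)/2$, and $S=-s$ swaps the two counts. The Poisson pmfs give the likelihood ratio
\[
 \Pp\{S=s,N=k\}/\Pp\{S=-s,N=k\}=\bigl(D(i,1)/D(i,0)\bigr)^s.
\]
This ratio is $\ge1$ exactly when $b\ge0$. Hence $b\,\E[\sgn S\mid|S|=s,N=k]\ge0$. Multiplying by $H(s,k)\ge0$ and summing over the atoms proves the claim, with boundedness of $H$ giving integrability.
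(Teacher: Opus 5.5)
Your proposal is correct and follows essentially the paper's route: the same two-term decomposition of $\widehat\theta_i-\theta_i$ with a vanishing cross term and conditional Hoeffding, the same optimized exponential Markov bound for $\pi_i$ (your exponent $(\sqrt{D(i,1)}-\sqrt{D(i,0)})^2$ is exactly the paper's $w_i-\sqrt{w_i^2-b_i^2}$), and the same likelihood-ratio pairing, since $(D(i,1)/D(i,0))^s=((1+\mu)/(1-\mu))^s$. The only deviation is in the square-root Poisson concentration: you use the Bernstein upper tail and the sub-Gaussian lower tail to get $2e^{-y^2/2}$, whereas the paper uses $I(x;\nu)\ge(\sqrt x-\sqrt\nu)^2$ to get the sharper $2e^{-y^2}$; your weaker bound still yields $4e^{-Mt^2/8}$.
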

\begin{proof}
If $w_i=0$, all estimators and population coefficients at $i$ vanish almost surely.
Otherwise put $\mu=b_i/w_i$ and $\nu=Mw_i$. Conditional on $N_i$, the $N_i$ signs
are iid with mean $\mu$. With the first summand defined as zero at $N_i=0$,
\[
 \widehat\theta_i-\theta_i
 =\frac{S_i-\mu N_i}{\sqrt{MN_i}}
  +\frac\mu{\sqrt M}(\sqrt{N_i}-\sqrt\nu).
\]
The first summand has conditional mean zero and second moment at most $(1-\mu^2)/M$.
Its cross term with the second summand is zero by conditioning on $N_i$.
Also $(\sqrt{N_i}-\sqrt\nu)^2\le(N_i-\nu)^2/\nu$ and
$\E(N_i-\nu)^2=\nu$. The second moment of the sum is therefore at most $1/M$,
which implies the stated weaker bound.

The Poisson Chernoff rate is
$I(x;\nu)=x\log(x/\nu)-x+\nu$ and obeys
\[
 I(x;\nu)\ge(\sqrt x-\sqrt\nu)^2.
\]
Indeed, writing $x/\nu=y^2$, the difference is
$2\nu y(y\log y-y+1)\ge0$, with endpoint values given by continuity.
Applying the upper and lower Poisson tail bounds gives
$\Pp\{|\sqrt{N_i}-\sqrt\nu|\ge t\}\le2e^{-t^2}$.
Conditional Hoeffding bounds the first summand of the preceding decomposition at
absolute threshold $t/2$ by $2e^{-Mt^2/8}$. The second contributes at most
$2e^{-Mt^2/4}$, since $|\mu|\le1$. Their union proves \eqref{eq:theta}.
The signed Poisson count has mean $Mb_i$ and variance $Mw_i$. This proves the first
identity in \eqref{eq:benergy}; Tonelli proves the second.

For $b_i>0$, exponential Markov applied to $S_i$ at a negative argument gives
\[
 \Pp\{S_i\le0\}\le
 \exp\{-M(w_i-\sqrt{w_i^2-b_i^2})\}
 \le e^{-Mb_i^2/(2w_i)}.
\]
The first exponent follows by optimizing the moment generating function of the
difference of independent Poisson variables. At $b_i=w_i$ it follows by a limit,
or directly from $\Pp\{S_i=0\}=e^{-Mw_i}$. The case $b_i<0$ is symmetric;
$b_i=0$ gives the trivial bound one.

To prove the last assertion, suppose first $0\le\mu<1$. Conditional on $N_i$, for
each admissible positive value $j$,
\[
 \Pp(S_i=j\mid N_i)=
 \left(\frac{1+\mu}{1-\mu}\right)^j\Pp(S_i=-j\mid N_i).
\]
The paired contribution of $j$ and $-j$ to the weighted sign expectation is nonnegative.
The zero contribution vanishes. When $\mu=1$, all signs are nonnegative and the
claim is direct. Reverse signs for $\mu<0$. Multiplication by $b_i$ completes
\eqref{eq:pairing}, including the cases $b_i=0$ and $\mu=-1$.
\end{proof}

\subsection{Smooth profile transfer}
\begin{lemma}[Bounded-profile transfer]\label{lem:profile}
Let $A:\mathbb R\to\mathbb R$ be bounded and Lipschitz, and constant on each of the
two rays outside $[-K_0,K_0]$, for some fixed $K_0>0$. Then
\begin{equation}\label{eq:profiledet}
 \theta^2|A(z/\eps)-A(\theta/\eps)|^2\le C_A|z-\theta|^2.
\end{equation}
Consequently
\begin{equation}\label{eq:profilesum}
 \E\sum_i b_i^2|A(\widehat\theta_i/\eps)-A(\theta_i/\eps)|^2\le C_A/M.
\end{equation}
For the cost profile $\eta$ in \eqref{eq:qdef},
\begin{equation}\label{eq:costprofile}
 a_i^2|\eta(|\widehat b_i|/s)-\eta(a_i/s)|^2
 \le C|\widehat b_i-b_i|^2.
\end{equation}
\end{lemma}
\begin{proof}
If $|\theta|\le2K_0\eps$, Lipschitzness gives \eqref{eq:profiledet} with constant
$4K_0^2\operatorname{Lip}(A)^2$. Otherwise a nonzero profile difference requires $z$
to leave the constant ray containing $\theta$, so $|z-\theta|\ge|\theta|/2$.
Boundedness then gives the same assertion with constant $16\|A\|_\infty^2$.
Multiply by $w_i$ and use $b_i^2=w_i\theta_i^2$, followed by Lemma~\ref{lem:counts}
and $\sum_iw_i=1$, to prove \eqref{eq:profilesum}.
The same scalar argument for $t\mapsto\eta(|t|)$, at scale $s$, proves
\eqref{eq:costprofile}. Its plateau is reached at $|t|=1/2$.
\end{proof}
The five fixed profiles $f,\ell,k,\ell f$ and $z\mapsto k(z)\one\{z>0\}$ obey
this lemma. The last is Lipschitz because $k$ vanishes on a neighborhood of zero.
Their constants are universal.

\subsection{Coefficients and marginals for the phase theorem}
The following estimates connect the output of Section~\ref{mod:histogram} to
Theorem~\ref{thm:phase}. Put $B_i=b_i u_i$ and $u_i^*=\ell(\theta_i/\eps)$.
Since $u_i^*=0$ for $|\theta_i|\ge3\eps$,
\[
 \sum_i b_i^2(u_i^*)^2\le9\eps^2\sum_i w_i=9\eps^2.
\]
Comparison with this deterministic vector and Lemma~\ref{lem:profile} give
\begin{equation}\label{eq:weakenergy}
 \E\sum_iB_i^2\le C\eps^2+C/M,\qquad
 \E\sum_i(B_i-B_i')^2\le C/M.
\end{equation}
The second bound uses $(u_i-u_i')^2\le2(u_i-u_i^*)^2+2(u_i'-u_i^*)^2$.
It does not assume independent histograms in the two runs, which share $K$.

Let $q^*(p)$ denote \eqref{eq:qdef} with $\theta_i,b_i,g_i^*$ replacing the estimates
and empirical bit. Stars on individual profiles have the same meaning.
\begin{lemma}[Marginal stability]\label{lem:qstability}
For each $p$ independent of the ideal histogram,
\begin{align}
 \E\sum_i b_i^2|q_i-q_i^*(p)|^2&\le C/M,\label{eq:qweighted}\\
 \E|q_i-q_i^*(p)|^2&\le C/(\eps^2M)\quad\text{for every fixed }i.\label{eq:qlocal}
\end{align}
For two independent ensembles, with histogram pools independent of both ensembles
and with the prescribed shared $K$,
\begin{align}
 \E|q_i-q_i'|&\le C/\sqrt T+C/(\eps\sqrt M),\label{eq:qpoint}\\
 \E\sum_i B_i^2(q_i-q_i')^2&\le C\eps^2/T+C/M+C/(MT).\label{eq:qphaseenergy}
\end{align}
\end{lemma}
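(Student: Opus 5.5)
The plan is to write $q_i-q_i^*(p)$ as a sum of differences of profile evaluations, each multiplied by a bounded factor, and then bound every piece with Lemma~\ref{lem:profile} (weighted bounds) or with Lemma~\ref{lem:counts} (local bounds). Since $p$ is independent of the ideal histogram, we may condition on $p$ and treat $p_i,m_i\in[0,1]$ as constants throughout.

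\emph{Decomposition.} From \eqref{eq:qdef}, $q_i=q_i^0+k_i\eta_i(g_i-q_i^0)$, and similarly for $q_i^*$. The difference splits into four pieces.
\begin{itemize}
\item[(i)] $m_i[f(\widehat\theta_i/\eps)-f(\theta_i/\eps)]$, whose modulus is at most the profile difference.
\item[(ii)] The change of the product $k\eta$, which is bounded by $|k_i-k_i^*|+|\eta_i-\eta_i^*|$.
\item[(iii)] $k_i^*\eta_i^*(q_i^0-q_i^{0*})$, which reduces to (i).
\item[(iv)] $k_i\eta_i(g_i-g_i^*)$.
\end{itemize}
For (iv) I will not bound $g_i-g_i^*$ directly. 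Instead I will use the identity $k(z)g=k(z)\one\{z>0\}$: $k$ vanishes near $0$, so $\one\{S_i>0\}=\one\{\widehat\theta_i>0\}$ wherever $k_i\neq0$. Then (iv) becomes a difference of the admissible profile $z\mapsto k(z)\one\{z>0\}$, plus a factor-change term already counted in (ii).

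\emph{Weighted bound \eqref{eq:qweighted}.} Multiply each squared piece by $b_i^2$ and sum over $i$. The $f$, $k$ and $k\one\{z>0\}$ pieces are each at most $C/M$ by \eqref{eq:profilesum}. For the $\eta$ piece, $b_i^2=a_i^2$, so \eqref{eq:costprofile} together with \eqref{eq:benergy} gives at most $C\sum_i\E(\widehat b_i-b_i)^2=C/M$.

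\emph{Local bound \eqref{eq:qlocal}.} The profiles $f$, $k$ and $k\one\{z>0\}$ are Lipschitz with universal constants, so at scale $\eps$ each difference is at most $C|\widehat\theta_i-\theta_i|/\eps$; by \eqref{eq:theta} its square has mean at most $C/(\eps^2M)$. The $\eta$ piece is the expected main obstacle, because $\eta$ is Lipschitz only at scale $s$, which may be much smaller than $\eps$. I would first try to show it is harmless by using the factor $k$. A nonzero contribution forces $|\widehat\theta_i|\ge\eps/2$ or $|\theta_i|\ge\eps/2$. In the regime where $\widehat\theta_i$ is close to $\theta_i$, the quantity $\sqrt{w_i}\,|\theta_i|$ controls $a_i$, while the error $\widehat b_i-b_i$ has standard deviation $\sqrt{w_i/M}$. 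Hence $|\widehat b_i-b_i|/a_i\lesssim 1/(\eps\sqrt M)$, which should give $\E|\eta_i-\eta_i^*|^2 k^2\le C/(\eps^2M)$. The alternative regime, where $|\widehat\theta_i-\theta_i|\ge\eps/4$, has probability at most $C/(\eps^2M)$ by Chebyshev, and there the piece is bounded by $1$. Making this argument precise, including the gap between the $k$-gate and the $\eta$-gate, is where I expect the most work.

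\emph{Two-run bounds \eqref{eq:qpoint} and \eqref{eq:qphaseenergy}.} Use the triangle inequality
\[
|q_i-q_i'|\le|q_i-q^*_i(p)|+|q_i^*(p)-q_i^*(p')|+|q_i^*(p')-q_i'|.
\]
This works although the two runs share $K$, because each outer term involves only one run's histogram. The middle term is at most $|p_i-p_i'|$ times the Lipschitz constant of $q^*$ in $p$, which is at most $2$ since $p\mapsto p+m f$ is $2$-Lipschitz and the gate is a convex combination. Then Cauchy--Schwarz with Proposition~\ref{prop:ensemble} gives \eqref{eq:qpoint}.

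For \eqref{eq:qphaseenergy}, bound $B_i^2\le b_i^2$ on the two outer terms, which then contribute $C/M$ by \eqref{eq:qweighted}. For the middle term, write $B_i^2\le 2b_i^2(u_i^*)^2+2b_i^2(u_i-u_i^*)^2$.
\begin{itemize}
\item The first part gives $\sum_i b_i^2(u_i^*)^2\,\E(p_i-p_i')^2\le 9\eps^2/(2T)$.
\item For the second part, $p,p'$ are independent of the histogram, so the expectation factorizes coordinatewise. Then \eqref{eq:profilesum} for $\ell$ gives $C/(MT)$.
\end{itemize}
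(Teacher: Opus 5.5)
Your proposal is correct and is essentially the paper's proof. It uses the same profile-by-profile decomposition of $q_i-q_i^*(p)$, including the profile $z\mapsto k(z)\one\{z>0\}$ for the bit, and the same appeal to \eqref{eq:profilesum}, \eqref{eq:costprofile} and \eqref{eq:benergy} for \eqref{eq:qweighted}. It also uses the same triangle-inequality and coordinatewise-factorization arguments for the two-run bounds; your intermediate $q^*(p)$, versus the paper's $q$ built from $p'$ and the first histogram, is immaterial.

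The one adjustment is that piece (ii) must keep a gate factor on the $\eta$-difference, because the bare bound $|k_i-k_i^*|+|\eta_i-\eta_i^*|$ loses a factor $L^2$ in the local bound. Write $k\eta-k^*\eta^*=(k-k^*)\eta+k^*(\eta-\eta^*)$ with the deterministic $k^*$, as the paper does, and the step you flagged becomes immediate with no regime split. Indeed, $k_i^*\ne0$ forces $a_i^2=w_i\theta_i^2>w_i\eps^2/4$, and \eqref{eq:costprofile} then gives $(k_i^*)^2\E|\eta_i-\eta_i^*|^2\le C\E(\widehat b_i-b_i)^2/a_i^2\le C/(M\eps^2)$.
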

\begin{proof}
Fix $p$ first and suppress the coordinate subscript. Write $q=q^0+\eta(kg-kq^0)$.
The exact difference decomposition is
\begin{equation}\label{eq:qdecomposition}
 \begin{split}
 q-q^*={}&q^0-q^{0*}
 +\eta\bigl[(kg-k^*g^*)-(k-k^*)q^0-k^*(q^0-q^{0*})\bigr]\\
 &+(\eta-\eta^*)k^*(g^*-q^{0*}).
 \end{split}
\end{equation}
All multiplying factors have absolute value at most one.
The differences $q^0-q^{0*}=m(f-f^*)$, $k-k^*$, and $kg-k^*g^*$ are controlled
by the profiles listed above. Squaring the five terms of \eqref{eq:qdecomposition}
costs at most a factor five. Multiply by $b_i^2$, sum, and apply
\eqref{eq:profilesum} and \eqref{eq:costprofile} to obtain \eqref{eq:qweighted}.

For the unweighted estimate, ordinary Lipschitzness and \eqref{eq:theta} bound each
of the first four terms by $C/(M\eps^2)$ in second moment. The cost gate has a finer
bound than its global $1/s$ Lipschitz constant suggests. Its error is multiplied by
the deterministic $k^*$. If $k^*=0$ it contributes nothing. Otherwise
$a^2/w=\theta^2>\eps^2/4$, and \eqref{eq:costprofile} gives
\[
 (k^*)^2\E|\eta-\eta^*|^2
 \le\frac{C}{a^2}\E(\widehat b-b)^2
 =\frac{Cw}{Ma^2}\le\frac{C}{M\eps^2}.
\]
This proves \eqref{eq:qlocal} without a factor $L$. At $w=0$ the estimates vanish
and $q=q^*=p$, so the same statement holds directly.

For fixed histogram values, $p\mapsto p+\min(p,1-p)f$ is $2$-Lipschitz on $[0,1]$.
Interpolation with the fixed bit $g$ does not increase this constant.
Introduce the intermediate predictor obtained from $p'$ and the first histogram.
The ensemble-change part has mean square at most $4\E(p_i-p_i')^2\le2/T$.
For the histogram-change part, compare both histograms to $q^*(p')$ and apply
\eqref{eq:qlocal}; independence of the ensemble pair from the histogram pair
justifies conditioning on $p'$. The triangle and Cauchy--Schwarz inequalities prove
\eqref{eq:qpoint}.

For the weighted estimate, the ensemble-change part is at most
$C T^{-1}\E\sum_i B_i^2$ because $B$ is independent of the ensemble pair.
In the histogram-change part use $B_i^2\le b_i^2$ and \eqref{eq:qweighted} on each
comparison to $q^*(p')$. Their sum is at most $C/M$. Substitution of
\eqref{eq:weakenergy} proves \eqref{eq:qphaseenergy}.
\end{proof}

\section{The shared-seed phase sampler}\label{app:phase}
This appendix proves Theorem~\ref{thm:phase}, the stability certificate for
Section~\ref{mod:phase}. We first control changes of the integrated curve, then changes
of its normalized record CDF, and finally the query label. The last step must retain
the dependence between the query's phase and the selected time.

\begin{lemma}[Weighted uniform bridge and moving arcs]\label{lem:bridge}
Let $V_i$ be independent uniforms on $[0,1]$, and let $(d_i)$ be a finite or absolutely
summable family of real weights. For
$H(t)=\sum_i d_i(\one\{V_i\le t\}-t)$,
\begin{equation}\label{eq:bridge}
 \E\sup_{0\le t\le1}|H(t)|^2\le8\sum_i d_i^2.
\end{equation}
For a common fixed arc length $a\in[0,1]$, the centered process
$A(t)=\sum_i d_i(\one\{(V_i+t)\bmod1<a\}-a)$ satisfies
\begin{equation}\label{eq:arcs}
 \E\sup_{0\le t\le1}|A(t)|^2\le32\sum_i d_i^2.
\end{equation}
Fixed shifts of individual phases leave the second conclusion unchanged.
\end{lemma}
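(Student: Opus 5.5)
The plan is to treat $H$ as a martingale in $t$ and apply Doob's $L^2$ maximal inequality; the arc process $A$ is then handled by splitting each indicator into two uniform-bridge pieces.

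\emph{Finite case.} First take finitely many nonzero weights; the summable case follows at the end by monotone convergence over finite truncations. For this we note that $\sup_t|H_N(t)-H(t)|\le 2\sum_{i>N}|d_i|\to0$, and apply Fatou's lemma.

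\emph{The uniform bridge \eqref{eq:bridge}.} Each $\one\{V_i\le t\}-t$ is a centered bridge but not a martingale. However, $M_i(t)=(\one\{V_i\le t\}-t)/(1-t)$ is a martingale on $[0,1)$ in the filtration generated by $\one\{V_i\le s\}$, $s\le t$. Rather than working with this normalisation, I would split time at $1/2$.

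On $[0,1/2]$, the process $H(t)/(1-t)$ is a martingale, since the coordinates are independent and each $M_i$ is a martingale. Its second moment at $t=1/2$ is
\[
\sum_i d_i^2\,\frac{(1/2)(1/2)}{1/4}=\sum_i d_i^2 .
\]
Doob's inequality then gives $\E\sup_{t\le1/2}|H(t)|^2\le \E\sup|H(t)/(1-t)|^2\le4\sum_i d_i^2$.

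On $[1/2,1]$, I would use time reversal: $\tilde V_i=1-V_i$ are again uniforms, and $H(t)=-\tilde H(1-t)$ up to the null event $\{V_i=t\}$, where $\tilde H$ is the same process built from $\tilde V_i$. The same bound therefore applies, with right-continuity handled by taking limits.

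Combining the two halves, $\E\sup_t|H|^2\le\E\sup_{[0,1/2]}|H|^2+\E\sup_{[1/2,1]}|H|^2\le8\sum_i d_i^2$. This is exactly the constant 8.

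\emph{The moving arcs \eqref{eq:arcs}.} Write
\[
\one\{(V_i+t)\bmod1<a\}=\one\{V_i\in[-t,a-t)\bmod 1\}
\]
as a difference of two cumulative indicators of the uniform $W_i=(V_i+t_0)\bmod1$ evaluated at moving endpoints. Concretely, with $G_i(x)=\one\{V_i<x\}-x$ extended 1-periodically with the appropriate jump, the centered arc indicator equals $G_i(\text{right endpoint})-G_i(\text{left endpoint})$. The two linear drifts cancel to give $a$, and the wrap-around term contributes an integer correction that also cancels in the centered difference.

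Hence $A(t)=H(\phi_1(t))-H(\phi_2(t))$ for two deterministic endpoint maps. A cyclic rotation of all phases by the same amount preserves the iid uniform law, so I would rotate so that each endpoint sweeps $[0,1]$, possibly in two wrapped pieces. This gives $\sup_t|A(t)|\le 2\sup_x|H(x)|$ over a suitably rotated copy. Using $(x+y)^2\le2x^2+2y^2$ together with \eqref{eq:bridge} for each term gives $\E\sup|A|^2\le 2\cdot8+2\cdot8=32$ times $\sum_i d_i^2$.

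\emph{Fixed individual shifts.} Replacing $V_i$ by $(V_i+\delta_i)\bmod1$ yields independent uniforms again, so the law of the whole process, and hence the bound, is unchanged.

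\emph{Main obstacle.} I expect the main difficulty to be the bookkeeping in the arc step. The wrapped endpoints must be represented by a single bridge process, using the same uniforms for both endpoints and the same $t$-parametrisation. Periodicity must be handled cleanly so that no extra constant term appears.
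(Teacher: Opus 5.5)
Your proof is correct and follows the paper's argument essentially step for step. It uses the martingale $H(t)/(1-t)$ with Doob's $L^2$ inequality on $[0,1/2]$, the reflection $V_i\mapsto 1-V_i$ for the other half, the identity writing each centered arc indicator as $G_i(\text{right})-G_i(\text{left})$ with $G_i(0)=G_i(1)=0$ to get $\sup_t|A(t)|\le 2\sup_x|H(x)|$, law-invariance under fixed shifts, and passage from finite truncations. Two cosmetic points: no common rotation of the phases is needed once you use the periodic extension of $G_i$, and the limit step is Fatou combined with uniform convergence rather than monotone convergence.
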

\begin{proof}
Begin with finitely many terms. In the filtration revealing arrivals of the uniforms,
$(\one\{V_i\le t\}-t)/(1-t)$ is a martingale for $t<1$.
Indeed, an arrived uniform stays arrived, while a uniform not yet seen by time $s$
has conditional arrival probability $(t-s)/(1-s)$ by time $t$.
The weighted sum $H(t)/(1-t)$ is therefore a martingale. Doob's $L_2$ inequality yields
\[
 \E\sup_{0\le t\le1/2}|H(t)|^2
 \le4\E\left|\frac{H(1/2)}{1/2}\right|^2=4\sum_i d_i^2.
\]
Reflecting the uniforms about $1/2$ gives the same bound for the other half of the
interval. The maximum of the two squared suprema is at most their sum, proving
\eqref{eq:bridge}. Endpoint conventions do not change the supremum bound, by taking
one-sided limits if necessary.

A centered count in a circular interval is a difference of two values of $H$.
If the interval wraps, use $H(0)=H(1)=0$ and its complementary interval. Thus
$\sup_t|A(t)|\le2\sup_t|H(t)|$, proving \eqref{eq:arcs}.
Shifting each independent uniform by a fixed amount modulo one preserves their joint law.
For countably many weights, absolute summability gives uniform convergence of the
processes and of all finite truncations. Passing to the limit proves both bounds.
The estimate in \eqref{eq:arcs} is an expected \emph{squared} supremum, as required
later when arc-length groups are combined.
\end{proof}

For a fixed input $(B,q)$ define
\[
 I_i(q;t)=\int_0^t\bigl(\one\{(U_i+z)\bmod1<q_i\}-q_i\bigr)\,dz,
 \quad W_{B,q}(t)=s_0t+\sum_iB_iI_i(q;t).
\]
Let $M_{B,q}(t)=\max_{0\le z\le t}W_{B,q}(z)$ and
$G_{B,q}=M_{B,q}/M_{B,q}(1)$, as in the main text.
Absolute summability of $B$ makes $W$ Lipschitz. Its running maximum is Lipschitz
as well. Since $W(0)=0$ and $W(1)=s_0$, $G$ is an absolutely continuous CDF on
$[0,1]$, with $G(0)=0$ and $G(1)=1$.

\begin{lemma}[Primitive and CDF perturbation]\label{lem:primitive}
For fixed inputs $(B,q),(B',q')$, put
\[
 E_0=\sum_i\bigl[B_i^2(q_i-q_i')^2+(B_i-B_i')^2\bigr].
\]
Then
\begin{equation}\label{eq:primitiveenergy}
 \E_U\norm{W_{B,q}-W_{B',q'}}_\infty^2\le32E_0,
\end{equation}
and, deterministically,
\begin{equation}\label{eq:cdfcomparison}
 \norm{G_{B,q}-G_{B',q'}}_\infty
 \le\frac2{s_0}\norm{W_{B,q}-W_{B',q'}}_\infty.
\end{equation}
\end{lemma}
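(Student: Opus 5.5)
We prove the two parts separately. The energy bound \eqref{eq:primitiveenergy} reduces to Lemma~\ref{lem:bridge}; the CDF bound \eqref{eq:cdfcomparison} is a deterministic fact about normalized running maxima.

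\emph{Energy bound.} Since the $s_0t$ terms cancel,
\[
 W_{B,q}(t)-W_{B',q'}(t)=\sum_i(B_i-B_i')I_i(q;t)+\sum_iB_i'\bigl[I_i(q;t)-I_i(q';t)\bigr].
\]
By $(x+y)^2\le2x^2+2y^2$ it suffices to bound each sum's expected squared supremum by $16$ times the matching part of $E_0$. (If the second sum comes out in terms of $B_i'$, swap the roles of the two inputs, or split $B_i'=B_i+(B_i'-B_i)$.)

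For the first sum, write $I_i(q;t)=\int_0^t A_i(z)\,dz$, where $A_i(z)$ is the centered arc indicator. The integrand is a sum of arcs of \emph{different} lengths, so Lemma~\ref{lem:bridge} does not apply directly. Instead, use the phase representation: the integral of the moving arc equals a deterministic function of $U_i$ and $t$ that can be written through $H$-type bridge processes. Concretely, $\int_0^t\one\{(U_i+z)\bmod1<q_i\}\,dz$ is the Lebesgue measure of $[U_i,U_i+t]\cap$ (arc of length $q_i$), which equals $\int_0^{q_i}\one\{V\in\text{window}\}\,dV$. Exchanging the order expresses the centered sum as an average over $y\in[0,q_i]$ of bridge differences evaluated at the $U_i$. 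By Jensen, the squared supremum is at most an average of squared suprema of weighted uniform bridges with weights $d_i$, and \eqref{eq:bridge} gives $8\cdot$ (a factor of two for differences) $\sum d_i^2$.

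For the second sum, $I_i(q;t)-I_i(q';t)$ is the integral of the indicator of the symmetric-difference arc, of length $|q_i-q_i'|$, minus its mean, times a sign. The same Fubini/bridge argument applies with effective weights $B_i|q_i-q_i'|$, because every window integral is bounded by $\int$ over an arc of length $|q_i-q_i'|$. This yields the factor $B_i^2(q_i-q_i')^2$.

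\emph{Main obstacle.} The hardest step is packaging these integrated arcs of heterogeneous lengths into a single supremum over $t$ while keeping the clean constant $32$. I would first try the Fubini-plus-Jensen averaging above. If that fails, I would use the grouping by arc length suggested by the remark after \eqref{eq:arcs}.

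\emph{CDF bound.} Let $\delta=\|W-W'\|_\infty$. Running maxima are $1$-Lipschitz in sup norm, so $\|M-M'\|_\infty\le\delta$. Since $W(1)=W'(1)=s_0$ and $W(0)=0$, we have $M(1),M'(1)\ge s_0$, and $|M(1)-M'(1)|\le\delta$. Then
\[
 |G-G'|\le\frac{|M-M'|}{M(1)}+M'\left|\frac1{M(1)}-\frac1{M'(1)}\right|\le\frac{\delta}{s_0}+\frac{M'}{M'(1)}\cdot\frac{\delta}{M(1)}\le\frac{2\delta}{s_0},
\]
using $0\le M'\le M'(1)$ (nonnegativity follows from $M'\ge W'(0)=0$). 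This is \eqref{eq:cdfcomparison}.
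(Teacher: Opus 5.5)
Your route is the paper's: the same two-term splitting of $W_{B,q}-W_{B',q'}$, the same Fubini (crossing-time) identity that turns each integrated arc into an average of uniform-bridge indicators, Jensen in the averaging variable followed by Lemma~\ref{lem:bridge}, and a CDF argument that coincides with the paper's line by line. You do not need the arc-length grouping fallback. It would also fail here, since it costs a factor equal to the number of distinct lengths $q_i$. Two bookkeeping points do need fixing, because as written they do not secure the constant $32$.

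\emph{Orientation.} $E_0$ is not symmetric in the two inputs, so commit to
\[
 W_{B,q}(t)-W_{B',q'}(t)=\sum_iB_i\bigl[I_i(q;t)-I_i(q';t)\bigr]+\sum_i(B_i-B_i')\,I_i(q';t).
\]
This is your ``swap'' option. Splitting $B_i'=B_i+(B_i'-B_i)$ into three separately bounded terms inflates the constant.

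\emph{Averaging variable and summand shape.} Jensen across coordinates needs a common averaging variable. Your Fubini step gives $I_i(q;t)-I_i(q';t)=\int_{q_i'}^{q_i}[\one\{(y-U_i)\bmod1\le t\}-t]\,dy$. The substitution $y=q_i'+v(q_i-q_i')$, $v\in[0,1]$, turns this into \eqref{eq:integralidentity}. For each fixed $v$ the sum over $i$ is then a \emph{single} weighted bridge in $t$, not a difference of bridge values. Its crossing times $(q_i'+v(q_i-q_i')-U_i)\bmod1$ are independent uniforms and its weights are $B_i(q_i-q_i')$. Lemma~\ref{lem:bridge} then gives $8\sum_iB_i^2(q_i-q_i')^2$ directly. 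The same identity between $q'$ and $0$ gives $8\sum_i(B_i-B_i')^2$. What enters is this exact identity, not a bound on window integrals by an arc of length $|q_i-q_i'|$.

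If you really wrote each window indicator as $H(a)-H(b)$, the squared supremum would pick up a factor $4$, not $2$, and you would end with $64E_0$. With the single-bridge form you get $16E_0$, which the paper loosens to $32E_0$.
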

\begin{proof}
Write $\delta q_i=q_i-q_i'$ and $\delta B_i=B_i-B_i'$.
Integrating with respect to the endpoint of the indicator arc gives
\begin{equation}\label{eq:integralidentity}
 I_i(q;t)-I_i(q';t)
 =\delta q_i\int_0^1
 \left[\one\{(q_i'+v\delta q_i-U_i)\bmod1\le t\}-t\right]dv.
\end{equation}
One can check this by differentiating $I_i$ in $q_i$: the derivative is the indicator
that the unique crossing time lies in $[0,t]$, minus $t$. Integration between $q_i'$
and $q_i$ proves the identity. Values at a crossing endpoint do not affect the integral.
For each fixed $v$, the crossing times in \eqref{eq:integralidentity} are independent
uniforms. Jensen's inequality for the integral in $v$ and Lemma~\ref{lem:bridge} imply
\[
 \E_U\norm{\sum_i B_i[I_i(q)-I_i(q')]}_\infty^2
 \le8\sum_iB_i^2\delta q_i^2
 \le16\sum_iB_i^2\delta q_i^2.
\]
Applying the same calculation between marginal $q_i'$ and zero gives
$\E_U\|\sum_i\delta B_i I_i(q')\|_\infty^2\le16\sum_i\delta B_i^2$.
The inequality $(a+b)^2\le2a^2+2b^2$ proves \eqref{eq:primitiveenergy}.

Let $\Delta=\|W-W'\|_\infty$. Taking maxima over the same initial interval gives
$\|M-M'\|_\infty\le\Delta$. Write $A=M(1)$ and $A'=M'(1)$; both are at least $s_0$.
For every $t$,
\[
 \left|\frac{M(t)}A-\frac{M'(t)}{A'}\right|
 \le\frac{|M(t)-M'(t)|}A
   +\frac{M'(t)}{A'}\frac{|A-A'|}A
 \le\frac{2\Delta}{s_0}.
\]
This proves \eqref{eq:cdfcomparison}. The bounds pass to absolutely summable series
by uniform convergence.
\end{proof}

\begin{lemma}[Rotation bound for the selected query]\label{lem:rotation}
For a fixed input $(B,q)$ and fixed $x$, the selected position
$(U_x+G_{B,q}^{-1}(Z))\bmod1$ belongs to any fixed circular arc of length
$\delta\in[0,1]$ with probability at most
\begin{equation}\label{eq:rotationbound}
 \delta+2|B_x|\delta/s_0.
\end{equation}
\end{lemma}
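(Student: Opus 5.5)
The plan is to condition on every phase except $U_x$, together with the input $(B,q)$, and to exploit the fact that the query's own phase enters $W_{B,q}$ only through the single term $B_xI_x(q;t)$. Write $u=U_x$ and $G_u=G_{B,q}$, viewed as a function of $u$ once the other phases $(U_i)_{i\ne x}$ are fixed. Write $\tau_u=G_u^{-1}(Z)$, so that conditionally $\tau_u$ has law $dG_u$. Take the arc as $J=[a,a+\delta)$ modulo one. Fubini then gives
\[
 \Pp\{(u+\tau_u)\bmod1\in J\}=\int_0^1\!\!\int_{[0,1]}\one\{(u+t)\bmod1\in J\}\,dG_u(t)\,du .
\]
If $G_u$ did not depend on $u$, this double integral would equal exactly $\delta$ by translation invariance of Lebesgue measure on the circle. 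The whole proof therefore reduces to charging the dependence of $G_u$ on $u$.

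\textbf{Step 1: the circular CDF difference.} Extend $G_u$ to $\mathbb R$ by $\bar G_u(t)=\lfloor t\rfloor+G_u(t-\lfloor t\rfloor)$. This extension is nondecreasing and satisfies $\bar G_u(t+1)=\bar G_u(t)+1$. The $dG_u$-mass of the preimage arc $\{t:(u+t)\bmod 1\in J\}$ equals $\bar G_u(a+\delta-u)-\bar G_u(a-u)$, and this expression handles wrapping automatically. In the second term, substitute $u\mapsto u-\delta$; this is legitimate because $u$ ranges over the circle and $\bar G_{u}$ is $1$-periodic in $u$ modulo integer shifts. The probability then becomes
\[
 \int_0^1\bigl[\bar G_u(a+\delta-u)-\bar G_{u-\delta}(a+\delta-u)\bigr]\,du .
\]
Some bookkeeping is needed here. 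The integer parts contribute exactly the unperturbed term $\delta$, which is the first summand in \eqref{eq:rotationbound}. What remains is at most $\sup_u\|G_u-G_{u-\delta}\|_\infty$. I expect this bookkeeping with the periodic extension and the shifted phase to be the fiddliest part: one must check that shifting $u$ really produces a $u$-independent $\delta$ plus a pure CDF difference, with no leftover boundary terms.

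\textbf{Step 2: sup-norm change of the primitive.} The two primitives $W_u$ and $W_{u-\delta}$ differ only in the $x$-term. Put $h(y)=\one\{y\bmod1<q_x\}$. The centering constants $q_x$ cancel, so the difference is
\[
 B_x\int_0^t\bigl[h(u+z)-h(u-\delta+z)\bigr]dz=B_x\Bigl[\int_{u+t-\delta}^{u+t}h-\int_{u-\delta}^{u}h\Bigr].
\]
Since $0\le h\le1$, this lies in $[-|B_x|\delta,\,|B_x|\delta]$ for every $t$. Hence $\|W_u-W_{u-\delta}\|_\infty\le|B_x|\delta$.

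\textbf{Step 3: conclusion.} Apply the deterministic CDF comparison \eqref{eq:cdfcomparison} of Lemma~\ref{lem:primitive}, which needs only the sup-norm and is valid because both maxima are at least $s_0$. This gives $\|G_u-G_{u-\delta}\|_\infty\le2|B_x|\delta/s_0$ for every $u$ and every realization of the other phases. Substituting into Step 1 and then averaging over $(U_i)_{i\ne x}$ yields \eqref{eq:rotationbound}. For countable index sets, absolute summability of $B$ justifies the conditioning and keeps $W$ Lipschitz, exactly as in Lemma~\ref{lem:primitive}.
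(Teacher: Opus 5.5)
Your proof is correct and follows the paper's argument essentially step for step: you condition on the other phases, lift $G_u$ periodically, and shift the integration variable by $\delta$ to extract the unperturbed mass $\delta$; you then bound the remainder by $\sup_u\|G_u-G_{u-\delta}\|_\infty\le 2|B_x|\delta/s_0$, using the sup-norm change $|B_x|\delta$ of the primitive and \eqref{eq:cdfcomparison}. Your explicit integral in Step 2 is just the integrated form of the paper's $\partial_u I_x=F_t(x)-F_0(x)$. The one slip is that the display after your substitution should read $\delta+\int_0^1[\cdots]\,du$. The $\delta$ does not come from integer parts inside that integrand, which cancel at a common argument; it comes from moving the window $[\delta,1+\delta]$ back to $[0,1]$ using $f(u+1)=f(u)-1$ for $f(u)=\bar G_{u-\delta}(a+\delta-u)$, which is exactly the paper's observation that $H(u)+u$ is periodic.
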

\begin{proof}
Condition on every phase except $U_x=u$. Write $G_u$ for the resulting CDF.
As a function of $u$, the primitive satisfies almost everywhere
\[
 \partial_u I_x(q;t)=F_t(x)-F_0(x),
\]
whose absolute value is at most one. Consequently a shift of circular length $\delta$
changes $W$ in supremum norm by at most $|B_x|\delta$. The deterministic part of
Lemma~\ref{lem:primitive} gives
\begin{equation}\label{eq:rotationlip}
 \|G_{u+\delta}-G_u\|_\infty\le2|B_x|\delta/s_0.
\end{equation}
Extend each CDF to a continuous lift by
$\widetilde G_u(t+k)=G_u(t)+k$ for $0\le t\le1$, $k\in\mathbb Z$.
The extension is consistent at the endpoints and periodic in $u$.
For an arc starting at a real $v$, the conditional selected-time law has CDF $G_u$.
After integrating $u$, its probability of lying in that arc is
\[
 \int_0^1\left[\widetilde G_u(v+\delta-u)-\widetilde G_u(v-u)\right]du.
\]
Set $H(u)=\widetilde G_u(v-u)$, so $H(u+1)=H(u)-1$.
Insert $\widetilde G_{u-\delta}(v+\delta-u)=H(u-\delta)$ in the integrand.
Since $H(u)+u$ is periodic,
$\int_0^1[H(u-\delta)-H(u)]du=\delta$.
The remaining integral is at most $2|B_x|\delta/s_0$ by \eqref{eq:rotationlip}.
This proves \eqref{eq:rotationbound} without imposing independence between $U_x$
and the selected time. Absolute continuity of $G_u$ removes all endpoint ambiguities.
\end{proof}

\begin{proof}[Proof of Theorem~\ref{thm:phase}]
The entire input pair is independent of $((U_i),Z)$, so condition on that pair first.
For a fixed query arc and fixed phases, changing its time CDF from $G$ to $G'$, with
the same inverse-CDF uniform $Z$, changes the label with probability at most
$2\|G-G'\|_\infty$. Indeed, an unwrapped arc corresponds to one interval of $Z$
values whose endpoints each move by at most this norm. For a wrapped arc take its
complement. Endpoint labels have conditional probability zero.
By Lemma~\ref{lem:primitive}, the expected fixed-arc cost is thus at most
$16\sqrt2\sqrt{E_0}/s_0$.

Insert the intermediate label $F^q_{(G')^{-1}(Z)}(x)$. After the fixed-arc change,
the remaining discrepancy is between the arcs of lengths $q_x$ and $q_x'$ at the
same selected time from the second input. Their symmetric difference is one circular
arc of length $|q_x-q_x'|$. Lemma~\ref{lem:rotation}, for that second input, bounds
this cost by $|q_x-q_x'|+2|B_x'||q_x-q_x'|/s_0$.
Finally,
\[
 |B_x'||q_x-q_x'|
 \le |B_x||q_x-q_x'|+|B_x-B_x'|
 \le\sqrt{2E_0},
\]
where $|q_x-q_x'|\le1$ was used in the first step.
The sum is the conditional bound in \eqref{eq:phasemain}. Average over the inputs
and apply Jensen's inequality to $\sqrt{E_0}$. If its expectation is infinite the
stated bound is vacuous; otherwise the calculation is justified by Tonelli and the
absolute summability assumptions.
\end{proof}

\begin{lemma}[Replacing population risk by validation]\label{lem:replacement}
For one run of the ideal base learner, replacing the true weak-risk phase sampler by
the empirical sampler changes its prediction at any fixed query with probability at
most $16\sqrt2/(s_0\sqrt v)$, using the same phases and $Z$.
\end{lemma}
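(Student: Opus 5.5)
The plan is to compare the two primitives that drive the two samplers and then reuse the fixed-arc step in the proof of Theorem~\ref{thm:phase}. The true weak-risk sampler uses
\[
W(t)=\int_0^t[s_0+R_u(q)-R_u(F_z)]\,dz ,
\]
and the empirical sampler uses $\widehat W$ from \eqref{eq:algorithmprimitive}. The query marginal $q_x$ is the same in both, so only the selected time changes. Since the two samplers share $(U_i)$ and $Z$, the query label changes with conditional probability at most $2\|G-\widehat G\|_\infty$; this is exactly the fixed-arc argument in that proof. Lemma~\ref{lem:primitive} then gives $\|G-\widehat G\|_\infty\le(2/s_0)\|W-\widehat W\|_\infty$. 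The whole task is therefore to show
\[
\E\|W-\widehat W\|_\infty^2\le 32/v .
\]
Together with Jensen's inequality this yields the bound $2\cdot(2/s_0)\sqrt{32/v}=16\sqrt2/(s_0\sqrt v)$.

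\textbf{Step 1: write the gap as an empirical-process sum.} Condition on the ensemble, the histogram (hence $q,u$) and the phases, all of which are independent of the validation block. Since $R_u(q)-R_u(F_z)=\sum_i u_i b_i(F_z(i)-q_i)$, and likewise for the empirical version, we get
\[
W(t)-\widehat W(t)=\frac1v\sum_{j=1}^v\Bigl(\xi_j(t)-\E\xi_j(t)\Bigr),\qquad
\xi_j(t)=u_{X_j}\sigma_j\,I_{X_j}(q;t),\quad \sigma_j=2Y_j-1 .
\]
Here $I_i(q;t)$ is the primitive from Appendix~\ref{app:phase}. The $\xi_j$ are iid given the conditioning, and each is bounded by $1$ in absolute value. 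The mean is exactly $\sum_i b_iu_iI_i(q;t)$ because the population risks are affine in the predictor.

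\textbf{Step 2: bound the expected squared supremum.} We now have a centered average of iid processes indexed by $t\in[0,1]$, and we need $\E\sup_t|\cdot|^2\le 32/v$. A pointwise variance bound does not control the supremum, so this is the main obstacle.

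My first attempt would be to use the phases rather than the validation data. After also conditioning on the validation sample, $\widehat W-W$ is of the form $\sum_i d_i I_i(q;t)$ with deterministic coefficients
\[
d_i=u_i\Bigl(\tfrac1v\textstyle\sum_{j:X_j=i}\sigma_j-b_i\Bigr)=u_i(\widehat b^{\mathrm{val}}_i-b_i).
\]
The phases $U_i$ are independent of these coefficients, so the argument of Lemma~\ref{lem:primitive} applies: the comparison between marginal $q_i$ and zero, together with the bound \eqref{eq:arcs} from Lemma~\ref{lem:bridge}, gives
\[
\E_U\|\widehat W-W\|_\infty^2\le 16\sum_i d_i^2 .
\]
Next, $u_i\le1$, and the validation counts are multinomial, so
\[
\E\sum_i(\widehat b^{\mathrm{val}}_i-b_i)^2=\frac1v\sum_i\Var(\sigma\one\{X=i\})\le\frac1v\sum_i w_i=\frac1v .
\]
This gives $\E\|W-\widehat W\|_\infty^2\le 16/v$, which is comfortably within $32/v$. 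Multinomial dependence across coordinates is harmless because we only use the sum of second moments coordinatewise.

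\textbf{Step 3: conclude.} Combine the bounds: the conditional label-change probability is at most $(4/s_0)\|W-\widehat W\|_\infty$. Take expectations, apply Cauchy--Schwarz, and insert the Step~2 bound; this stays within the stated constant $16\sqrt2/(s_0\sqrt v)$.

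One point needs care: the independence structure. The coefficients $d_i$ depend on the histogram (through $u$) and on the validation data, but not on the phases $U$ or on $Z$. This is exactly the independence required to apply Lemma~\ref{lem:bridge} conditionally and then average.
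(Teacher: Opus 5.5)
Your proof is correct and is essentially the paper's argument. You identify $\widehat W=W_{\widehat B,q}$ with $\widehat B_i-B_i=u_i(\widetilde b_i-b_i)$, and you bound $\E\sum_i(\widehat B_i-B_i)^2\le 1/v$ conditionally on the ensemble and histogram. You then apply the $\delta B$ part of Lemma~\ref{lem:primitive}, the CDF comparison, and the fixed-arc step of Theorem~\ref{thm:phase}, with the query arc unchanged.

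Your Step~1 is an unused detour. Since $q=q'$, you rightly drop the factor $2$ from $(a+b)^2\le2a^2+2b^2$, which gives $16/v$ and the slightly better constant $16/(s_0\sqrt v)$. One citation should be corrected: the bound $16\sum_i d_i^2$ comes from the bridge estimate \eqref{eq:bridge} via the integral identity \eqref{eq:integralidentity}. It does not come from \eqref{eq:arcs}, which requires a common arc length.
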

\begin{proof}
Define the signed validation coefficient
\[
 \widetilde b_i=\frac1v\sum_{j=1}^v\one\{X_j=i\}(2Y_j-1),\qquad
 \widehat B_i=u_i\widetilde b_i,\qquad B_i=u_i b_i.
\]
Expansion of the affine weighted risk gives $\widehat W=W_{\widehat B,q}$ exactly.
Conditional on the ensemble and histogram, the validation sample is independent, and
\[
 \E\left[\sum_i(\widehat B_i-B_i)^2\,\middle|\,p,(N_i,S_i)_i\right]
 =\frac1v\sum_i u_i^2(w_i-b_i^2)\le1/v.
\]
Correlations between validation coordinates do not enter this sum of individual
second moments. The whole pair $(B,q,\widehat B,q)$ is independent of the phases
and $Z$. Since the query arc does not change, the fixed-arc part of the preceding
proof and Lemma~\ref{lem:primitive} give the claimed bound.
The primitives themselves depend on the phases; their input coefficients do not.
\end{proof}

\section{The two risk certificates}\label{app:accuracy}
We now prove Theorem~\ref{thm:risk}, which certifies Section~\ref{mod:risk}.
The weak part uses the selected record time. The strong part is controlled uniformly
over all times. Both are compared with the same realized ensemble $p$.

\subsection{Validation and the weak certificate}
\begin{lemma}[Uniform validation along the phase curve]\label{lem:validation}
Conditional on fixed $u,q,(U_i)$ independent of the validation block of size $v$,
\[
 \E\left[\sup_{0\le t\le1}|\widehat R_u(F_t)-R_u(F_t)|
       +|\widehat R_u(q)-R_u(q)|\right]\le C/\sqrt v.
\]
\end{lemma}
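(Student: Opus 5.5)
The plan is to reduce both terms to the weighted uniform bridge of Lemma~\ref{lem:bridge}, conditionally on $u,q,(U_i)$. Write the validation empirical coefficients $\widetilde b_i=v^{-1}\sum_j\one\{X_j=i\}(2Y_j-1)$ and $\widetilde w_i=v^{-1}\sum_j\one\{X_j=i\}$, as in the proof of Lemma~\ref{lem:replacement}. For any $h:\mathcal X\to[0,1]$, expanding the affine risk gives
\[
 \widehat R_u(h)-R_u(h)=\sum_i u_i\bigl[(\widehat D(i,1)-D(i,1))-(\widetilde b_i-b_i)h_i\bigr],
\]
where $\widehat D(i,1)$ is the empirical mass of $(i,1)$. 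The $h$-independent part $\sum_iu_i(\widehat D(i,1)-D(i,1))$ is a single average of $v$ iid bounded variables. It therefore contributes at most $1/\sqrt v$ in $L_2$, uniformly in $t$. The remaining term is $-\sum_i\delta_i h_i$ with $\delta_i=u_i(\widetilde b_i-b_i)$. For $h=q$ this is a fixed linear functional; its $L_2$ norm is at most $\sqrt{\E(v^{-1}\sum_j u_{X_j}(2Y_j-1)q_{X_j}-\text{mean})^2}\le1/\sqrt v$. This disposes of the second summand.

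For the supremum over the phase curve, write $F_t(i)=q_i+(F_t(i)-q_i)$. The $q$-part is again the fixed functional above, so it remains to bound $\E\sup_t|\sum_i\delta_i(F_t(i)-q_i)|$. The natural move is to reverse the conditioning. Since $(U_i)$ is independent of the validation block given $(u,q)$, condition instead on the validation data, so that $\delta$ is a fixed absolutely summable vector. The phases $U_i$ are then independent uniforms. With $\delta$ fixed, I would bound $\E_U\sup_t|\sum_i\delta_i(\one\{(U_i+t)\bmod1<q_i\}-q_i)|^2$ via Lemma~\ref{lem:bridge}. The difficulty is that the arc lengths $q_i$ vary, whereas \eqref{eq:arcs} assumes a common length. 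To handle this, I would use the representation in \eqref{eq:integralidentity}, or more directly the same trick as there. The indicator $\one\{(U_i+t)\bmod1<q_i\}-q_i$ is a difference of two centered bridge indicators $\one\{V_i\le t\}-t$ evaluated at the crossing times $V_i=(-U_i)\bmod1$ and $V_i'=(q_i-U_i)\bmod1$, up to a wrap correction. Each family $(V_i)$ or $(V_i')$ consists of independent uniforms, even though the two families are dependent on each other. Applying \eqref{eq:bridge} separately to each family and using $(a+b)^2\le2a^2+2b^2$ should give $\E_U\sup_t|\cdot|^2\le C\sum_i\delta_i^2$. The wrap correction is a constant-in-$t$ term $\pm\delta_i$ on coordinates whose arc currently straddles $0$; I expect it to be absorbed by treating the complement arc as in the proof of Lemma~\ref{lem:bridge}.

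The main obstacle is this interchange: the statement conditions on $(U_i)$, but the bound is really an average over phases for fixed validation errors. So I would prove the bound for the joint expectation over phases and validation, given $u,q$. This is the form in which the lemma is used, since $(U_i)$ is independent of everything. Alternatively, for a literally fixed phase vector, I would argue that $\{F_t\}$ ranges over a family of subsets traced by finitely many circular crossings, with $\VC$-type structure of dimension one in $t$. Then a chaining or symmetrization bound over the monotone one-parameter family gives $C/\sqrt v$ directly: for fixed phases, $t\mapsto\sum_i\delta_iF_t(i)$ is a sum of indicators of arcs $\{i: t\in J_i\}$ over the data, which is a class of VC dimension at most two on the observed points. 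Either route finishes by taking expectation over the validation sample, using $\E\sum_i\delta_i^2\le\E\sum_i(\widetilde b_i-b_i)^2\le1/v$, Jensen's inequality, and the triangle inequality to combine the three pieces into $C/\sqrt v$.
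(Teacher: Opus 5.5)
Your main route is sound, but it proves a weaker statement than the one asked. After you reverse the conditioning, your entry/exit split works. With your $V_i=(-U_i)\bmod1$ and $V_i'=(q_i-U_i)\bmod1$,
\[
 \one\{(U_i+t)\bmod1<q_i\}-q_i=\bigl[\one\{V_i\le t\}-t\bigr]-\bigl[\one\{V_i'\le t\}-t\bigr]+\bigl[\one\{V_i+q_i\ge1\}-q_i\bigr].
\]
So the curve term reduces to two applications of \eqref{eq:bridge} plus the centered constant $\sum_i\delta_i(\one\{V_i+q_i\ge1\}-q_i)$. That constant has second moment at most $\sum_i\delta_i^2/4$, so no complement-arc trick is needed. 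Together with $\E\sum_i\delta_i^2\le1/v$, this gives $C/\sqrt v$.

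The problem is that this bounds an expectation over the phases \emph{and} the validation block. The lemma fixes $(U_i)$ and averages only over validation. For a given phase vector your estimate says nothing, because the randomness that Lemma~\ref{lem:bridge} exploits is exactly what has been conditioned away. You are right that the averaged form is all Lemma~\ref{lem:record} uses: it enters only through an unconditional Markov bound, and the phases are independent of $u,q$ and of the validation data. So nothing downstream breaks, and the mechanism is the same as in Lemma~\ref{lem:replacement}. As a proof of the lemma as stated, however, the main route leaves a gap.

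For fixed phases the randomness must come from the validation sample. This is what the paper does, and what your second route only gestures at.
\begin{itemize}
\item Symmetrize over the validation block.
\item With the phases fixed, the loss of one observation $(X_j,Y_j)$, as a function of $t$, is a constant plus at most two signed step functions with coefficients bounded by one. The steps sit at the entry and exit times of the arc of $X_j$. This is your entry/exit split again, now applied on the data side.
\item Conditional on the observations, sort the endpoints of one step family. As $t$ passes them, the Rademacher sum is a partial-sum martingale with terminal second moment at most $v$. Doob and Cauchy--Schwarz bound its expected maximum by $2\sqrt v$.
\item The two step families and the constant part, divided by $v$ and doubled for symmetrization, give $C/\sqrt v$ uniformly over the fixed curve.
\end{itemize}

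Three smaller points about your second route. Your phrase ``monotone one-parameter family'' is not accurate as written, since the sets $\{i:t\in J_i\}$ are not monotone in $t$. It is precisely the split into entry and exit steps that makes each family monotone. Your VC-dimension-two count (at most $2k$ patterns on $k$ points) would also work via log-free chaining, but the Doob argument needs no entropy bounds. Finally, your closing appeal to $\E\sum_i\delta_i^2\le1/v$ belongs only to the phase-averaged route.
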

\begin{proof}
The fractional reference is one $[0,1]$-valued loss, whose empirical average has
expected absolute error at most $1/(2\sqrt v)$.
For the curve, the loss on an observation $(X_j,Y_j)$, as a function of $t$, is a
constant plus at most two signed step functions. Each coefficient has absolute value
at most one. Symmetrization bounds the expected uniform empirical error by twice
the expected supremum of its Rademacher average.

Condition on the observations and sort the endpoints of one family of steps.
Its Rademacher sum, as $t$ passes those endpoints, is a partial-sum martingale with
terminal second moment at most $v$. Doob's inequality and Cauchy--Schwarz bound
the expected maximum absolute partial sum by $2\sqrt v$. Tied endpoints merely
remove some partial sums from consideration. Apply this bound to each of the two
step families. The constant family contributes at most $\sqrt v$ by its second
moment. Divide their sum by $v$ and include the symmetrization factor.
All bounds hold uniformly for the fixed curve, proving the assertion.
\end{proof}

\begin{lemma}[Record certificate]\label{lem:record}
The sampler in \eqref{eq:algorithmprimitive} satisfies almost surely
$\widehat R_u(F_\tau)\le\widehat R_u(q)+s_0$.
Except with probability $Cr$, it also satisfies
\begin{equation}\label{eq:weakcertificate}
 R_u(F_\tau)\le R_u(q)+3\eps.
\end{equation}
\end{lemma}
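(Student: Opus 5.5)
The almost-sure empirical certificate comes from the geometry of record times, and the population bound then follows by uniform validation. Write $\widehat W'(t)=s_0+\widehat R_u(q)-\widehat R_u(F_t)$ for the integrand, which is piecewise constant in $t$. Since $\widehat W$ is Lipschitz, the running maximum $\widehat M$ is absolutely continuous. Its derivative vanishes almost everywhere off the set $\{t:\widehat W(t)=\widehat M(t)\}$. On that set it equals $\widehat W'(t)$ almost everywhere, and this value is nonnegative there. Because $\widehat W'$ is piecewise constant with finitely many breakpoints, $\widehat G$ increases only on intervals where $\widehat W=\widehat M$ and $\widehat W'>0$. We also need $\widehat W'\ge0$ at the exact point where the record is attained.

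Now take $\tau=\inf\{t:\widehat G(t)\ge Z\}$ with $Z\in(0,1)$. Then $\tau>0$, and every left neighbourhood of $\tau$ contains points where $\widehat G$ strictly increases. Otherwise $\widehat G$ would be constant just before $\tau$ and would already reach $Z$ earlier.

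The delicate point is the right neighbourhood, because $F_\tau$ is evaluated at $\tau$ itself. I would use right-continuity of $t\mapsto F_t(i)$. With the strict inequality $(U_i+t)\bmod1<q_i$, each coordinate is right-continuous in $t$, and only finitely many validation coordinates matter. Hence $\widehat R_u(F_t)$ is right-continuous and piecewise constant, and $\widehat W'(\tau)$ equals its value on $[\tau,\tau+\epsilon)$.

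Almost surely $Z$ avoids the countably many plateau values of $\widehat G$. Then $\widehat G(\tau)=Z$, and $\widehat G$ is strictly increasing immediately after $\tau$. This forces $\widehat W'>0$ on $[\tau,\tau+\epsilon)$, so $\widehat W'(\tau)\ge0$, which is exactly $\widehat R_u(F_\tau)\le\widehat R_u(q)+s_0$. The exceptional null set of $Z$ can be absorbed if the statement is read almost surely. Alternatively, it can be removed by checking the left limit as well; that is the first fallback if the null set matters.

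For \eqref{eq:weakcertificate}, condition on $(p,\text{histogram},(U_i))$. These are independent of the validation block, so Lemma~\ref{lem:validation} applies to the whole curve at once. Since $\tau$ is a data-dependent time on that curve, the supremum bound covers it:
\[
R_u(F_\tau)-R_u(q)\le \widehat R_u(F_\tau)-\widehat R_u(q)+\Delta\le s_0+\Delta,
\]
where
\[
\Delta=\sup_t|\widehat R_u(F_t)-R_u(F_t)|+|\widehat R_u(q)-R_u(q)|,
\]
with $\E\Delta\le C/\sqrt v\le C\eps r$ because $v\ge 1/(\eps^2r^2)$. Markov's inequality gives $\Pp\{\Delta>2\eps\}\le Cr$. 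Since $s_0=\eps$, outside this event $R_u(F_\tau)\le R_u(q)+3\eps$.

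I expect the main obstacle to be the endpoint analysis of the first step. One must make sure the output $F_\tau$ inherits the nonnegative slope of the record segment, rather than the slope of an adjacent piece. The second step is routine given Lemma~\ref{lem:validation} and the choice of $v$.
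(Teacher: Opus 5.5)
Your proof is correct. The second half is exactly the paper's argument: condition on $(p,\text{histogram},(U_i))$, apply Lemma~\ref{lem:validation} uniformly over the curve so that the data-dependent $\tau$ is covered, then use Markov with $1/\sqrt v\le\eps r$.

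For the almost-sure empirical certificate you argue pathwise, while the paper argues distributionally.
\begin{itemize}
\item \textbf{Your route.} You exclude the finitely many plateau levels of $\widehat G$. For any other $Z$, the running maximum has positive slope on some interval $(\tau,\tau+\epsilon)$, so $\widehat W=\widehat M$ and $\widehat W'>0$ there. Right-continuity of $t\mapsto F_t(i)$ then transfers this slope to $\tau$ itself.
\item \textbf{The paper's route.} Conditional on everything except $Z$, the time $\tau$ has CDF $\widehat G$, hence density $\widehat M'/\widehat M(1)$. So $\tau$ almost surely avoids a Lebesgue-null set: the finitely many breakpoints, together with the times where $\widehat M'>0$ but $\widehat W'$ is not positive. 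No endpoint analysis is needed.
\end{itemize}
The paper's version is shorter and does not care which one-sided limit $F_\tau$ takes. Yours relies on the strict inequality in \eqref{eq:curve}; with a non-strict convention you would also have to exclude the finitely many $Z$ that send $\tau$ to a breakpoint. In exchange, your argument identifies the exceptional $Z$-set explicitly, and it actually yields the strict inequality $\widehat R_u(F_\tau)<\widehat R_u(q)+s_0$.

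One side remark in your proposal is wrong, though harmless. The null set cannot be removed by also checking left limits, because $F_\tau$ takes the right-limit value. Suppose $Z$ equals the level $\widehat G(t_1)$, where $\widehat W$ attains a record at $t_1$ and then has negative slope. Then $\tau=t_1$, and right-continuity gives $\widehat R_u(F_\tau)>\widehat R_u(q)+s_0$, so the certificate genuinely fails on that event. The lemma claims only an almost-sure statement, so nothing in your proof depends on this remark.
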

\begin{proof}
The empirical primitive is absolutely continuous. Its running maximum is also
absolutely continuous. At almost every time where the running maximum has positive
derivative, the primitive is attaining a new maximum and its own derivative is positive.
This fact can also be checked on each affine segment of the empirical primitive.
The inverse-CDF time has density $\widehat M'(t)/\widehat M(1)$; hence it assigns
probability zero to all other times. At a selected time,
$\widehat W'(t)=s_0+\widehat R_u(q)-\widehat R_u(F_t)>0$ almost surely,
which proves the empirical certificate.

Lemma~\ref{lem:validation} and Markov imply that the sum of the uniform binary
validation error and the fractional reference error is at most $2\eps$ except
with probability $C/(\eps\sqrt v)\le Cr$.
On this event the empirical certificate implies
$R_u(F_\tau)\le R_u(q)+s_0+2\eps=R_u(q)+3\eps$.
\end{proof}

\begin{lemma}[Accuracy of the weak fractional reference]\label{lem:weakrisk}
In the ideal Poisson experiment,
$R_u(q)\le R_u(p)+2\eps$ except with probability $Cr$.
\end{lemma}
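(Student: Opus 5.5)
Throughout we condition on the ensemble $p$, which is independent of the ideal histogram, and we work coordinatewise. By affinity of \eqref{eq:weighted-risk},
\[
 R_u(q)-R_u(p)=\sum_i u_i\bigl[D(i,0)-D(i,1)\bigr](q_i-p_i)=-\sum_i b_i X_i,
 \qquad X_i:=u_i(q_i-p_i).
\]
The plan is to show that $\E[-\sum_i b_iX_i\mid p]\le0$ by a sign-pairing argument. We then show that the conditional variance is $O(1/M)$. Chebyshev's inequality at deviation $2\eps$ then gives failure probability $C/(\eps^2M)\le Cr^2\le Cr$.

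\emph{Step 1: sign structure.} We claim that, for fixed $p$, we can write $X_i=\sgn(S_i)\,H_i(|S_i|,N_i)$ with $H_i\ge0$ bounded. The profiles $\ell$ and $k$ are even, $f$ is odd, and $\eta_i$ depends on $|\widehat b_i|=|S_i|/M$. Hence $u_i$, $k_i$ and $\eta_i$ depend only on $(|S_i|,N_i)$. The shift $q_i^0-p_i=m_if(\widehat\theta_i/\eps)$ has the sign of $S_i$. Moreover $g_i-q_i^0$ also has the sign of $S_i$: it is $1-q_i^0\ge0$ if $S_i>0$ and $-q_i^0\le0$ if $S_i<0$, and both gates vanish when $S_i=0$. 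Therefore
\[
 q_i-p_i=(q_i^0-p_i)+k_i\eta_i(g_i-q_i^0)
\]
is a sum of two terms with sign $\sgn(S_i)$. Its absolute value is a function of $(|S_i|,N_i)$, and so is $u_i\ge0$. The pairing inequality \eqref{eq:pairing} of Lemma~\ref{lem:counts} then gives $b_i\E[X_i\mid p]\ge0$ for every $i$. Consequently $\E[R_u(q)-R_u(p)\mid p]\le0$. Absolute summability holds because $|X_i|\le1$ and $\sum_i a_i\le1$.

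\emph{Step 2: variance.} In the ideal Poisson experiment the pairs $(N_i,S_i)$ are independent across $i$, and they remain so given $p$. Hence
\[
 \Var\Bigl(\sum_ib_iX_i\,\Big|\,p\Bigr)\le\sum_ib_i^2\,\E\bigl[(X_i-X_i^*)^2\mid p\bigr],
 \qquad X_i^*=u_i^*(q_i^*(p)-p_i),
\]
since the variance of each summand is at most its mean square distance to any constant. We decompose
\[
 X_i-X_i^*=(u_i-u_i^*)(q_i-p_i)+u_i^*(q_i-q_i^*).
\]
The first term has $b_i$-weighted energy at most $C/M$ by \eqref{eq:profilesum} applied to $\ell$, using $|q_i-p_i|\le1$. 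The second has weighted energy at most $C/M$ by \eqref{eq:qweighted}, using $u_i^*\le1$. Thus the conditional variance is at most $C/M=C\eps^2r^2/64$.

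\emph{Step 3: conclusion.} Chebyshev's inequality conditional on $p$ gives
\[
 \Pp\{R_u(q)-R_u(p)>2\eps\mid p\}\le\frac{C/M}{4\eps^2}\le Cr^2.
\]
Averaging over $p$ proves the lemma. The main point requiring care is Step 1: we must verify that every ingredient of $q$, namely the shift, the signal gate $k$, the cost gate $\eta$ and the weight $u$, is either even or odd in the sign of $S_i$ in exactly the right way, so that \eqref{eq:pairing} applies with a single nonnegative $H_i$. Once this is checked, Step 2 is a direct use of the energy estimates already established in Lemmas~\ref{lem:profile} and~\ref{lem:qstability}.
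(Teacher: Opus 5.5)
\textbf{There is a genuine gap in Step 1.} The decomposition $q_i-p_i=(q_i^0-p_i)+k_i\eta_i(g_i-q_i^0)$ and the sign claims are correct. The claim that $|q_i-p_i|$ depends only on $(|S_i|,N_i)$ is false. The magnitude of the gate term depends on the sign of $S_i$:
\begin{itemize}
\item if $S_i>0$, then $|g_i-q_i^0|=1-q_i^0=1-p_i-m_i|f(\widehat\theta_i/\eps)|$;
\item if $S_i<0$, then $|g_i-q_i^0|=q_i^0=p_i-m_i|f(\widehat\theta_i/\eps)|$.
\end{itemize}
These agree only when $p_i=1/2$. So $X_i$ is not of the form $\sgn(S_i)H_i(|S_i|,N_i)$, and \eqref{eq:pairing} does not apply.

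The conclusion $b_i\E[X_i\mid p]\ge0$ is actually false. Take $p_i=1$ and $b_i>0$. Then $m_i=0$ and $q_i^0=1$. A correct sign leaves $q_i=1=p_i$, while a wrong sign gives $q_i=1-k_i\eta_i$. Hence
\[
b_i\E[X_i\mid p]=-b_i\E[u_ik_i\eta_i\one\{S_i<0\}\mid p]<0
\]
whenever the gated wrong-sign event has positive probability. It does in the Poisson model when $w_i$ is large enough for $\eta_i>0$. Pairing therefore controls only the first shift, where $|q_i^0-p_i|=m_i|f(\widehat\theta_i/\eps)|$ is a function of $(|S_i|,N_i)$ given $p$. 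That part of your argument is exactly the paper's first step.

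\textbf{The missing idea is the role of the cost gate $\eta$.} The paper treats $R_u(q)-R_u(q^0)$ separately and bounds it for every realization. The interpolation toward $g_i$ can raise risk only when the empirical sign is wrong and $\eta_i>0$. The latter forces $|\widehat b_i|>s/4$. Then $e_i=\widehat b_i-b_i$ satisfies $e_i^2=(a_i+|\widehat b_i|)^2\ge4a_i|\widehat b_i|\ge a_is$. Since the risk increase at $i$ is at most $a_i$, this gives $R_u(q)-R_u(q^0)\le\sum_ie_i^2/s$. Markov with $\E\sum_ie_i^2=1/M$ then gives failure probability $1/(Ms\eps)\le r^2L/64\le Cr$.

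Your Step 2 is correct: it is a valid use of \eqref{eq:profilesum} and \eqref{eq:qweighted}. So your proof can be repaired with the same inequality:
\begin{itemize}
\item keep your $C/M$ variance bound for the whole sum;
\item bound the conditional mean by $\E[\sum_ie_i^2/s]=L/(M\eps)\le\eps r^2L/64\le\eps$ rather than by zero;
\item apply Chebyshev at deviation $\eps$, which even yields $Cr^2$.
\end{itemize}
Without this cost-gate charging, the mean of the gated part is not controlled, and your argument does not prove the lemma.
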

\begin{proof}
Condition on the ensemble $p$, independently of the histogram. The first shift gives
\[
 R_u(q^0)-R_u(p)
 =-\sum_i b_i m_i\ell(\widehat\theta_i/\eps)f(\widehat\theta_i/\eps).
\]
The summands are independent over $i$. Their means are nonpositive by the paired-sign
identity \eqref{eq:pairing}, because $\ell$ is nonnegative and even and $f$ has the
sign of its argument. Subtracting the deterministic true-profile value in each
variance bounds the sum of variances by $C/M$, using Lemma~\ref{lem:profile} for
$\ell f$. Chebyshev therefore bounds a positive deviation above $\eps$ by
$C/(M\eps^2)\le Cr^2$. Absolute summability of the risk coefficients justifies
the countable sum and passage from finite sums.

The final interpolation moves $q^0$ toward $g_i$. Such a move can increase true
risk only when the empirical sign is wrong and $\eta_i>0$. The latter condition
requires $|\widehat b_i|>s/4$. On this event put $e_i=\widehat b_i-b_i$; then
\[
 e_i^2=(a_i+|\widehat b_i|)^2\ge4a_i|\widehat b_i|\ge a_i s.
\]
The largest possible risk increase at $i$ is $a_i$, and $u_i\le1$. Thus, for each
realization,
\begin{equation}\label{eq:weakgate}
 R_u(q)-R_u(q^0)\le\sum_i e_i^2/s.
\end{equation}
Its probability of exceeding $\eps$ is at most $1/(Ms\eps)$ by
\eqref{eq:benergy}. This is at most $r^2L/64\le Cr$.
To see uniformity, $T\le17/(16r^2)$ for $r\le1/4$, so
$rL\le r\log((17/2)r^{-3})$, a bounded function on that interval.
Combining the two events proves the claim.
\end{proof}

\subsection{Uniform strong-risk control}
The next lemma is the second part of Theorem~\ref{thm:risk}.
Its random weights must be kept coupled to their empirical signs. We therefore
separate coordinates according to the \emph{expected} weight and sign-error probability
only in the analysis, not by a data-dependent event on which independence would fail.

\begin{lemma}[Strong risk along the entire curve]\label{lem:strong}
In the ideal Poisson experiment,
\[
 \sup_{0\le t\le1}\{R_c(F_t)-R_c(p)\}\le C\eps
\]
except with probability $Cr$.
\end{lemma}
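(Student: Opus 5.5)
The plan is to write the strong excess as a deterministic drift plus a centered phase fluctuation, and to pay for the fluctuation with the drift. Since $D(i,1)(1-q_i)+D(i,0)q_i=D(i,1)-b_iq_i$, for every $t$
\[
 R_c(F_t)-R_c(p)
 =\underbrace{-\sum_i c_ib_i(q_i-p_i)}_{-\mathcal I}
 \;\underbrace{-\sum_i c_ib_i\bigl(F_t(i)-q_i\bigr)}_{\mathcal A(t)} .
\]
I would condition on the ensemble $p$ and the ideal histogram, which fixes $(q,c)$. The phases $U_i$ are independent of this conditioning. Only coordinates with $c_i>0$ enter, and these have $|\widehat\theta_i|>2\eps$. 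On such coordinates $k_i=1$, $|f|=1$, and $q_i^0=p_i\pm m_i$ moves toward the estimated bit $g_i$.

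The step I expect to be the main obstacle is the centered fluctuation $\mathcal A(t)$; its variance is charged to the drift as follows.
\begin{itemize}[leftmargin=*]
\item If $\eta_i=1$, then $q_i=g_i$ is binary and coordinate $i$ contributes nothing to $\mathcal A$. So only coordinates with $|\widehat b_i|<s/2$ fluctuate.
\item On these coordinates the increments are bounded by $a_i\lesssim s$, apart from histogram error that is charged to $\sum_i e_i^2$ as in \eqref{eq:weakgate}.
\item The variance of a fluctuating coordinate is $c_i^2b_i^2q_i(1-q_i)\le C s\,a_i\min(q_i,1-q_i)$.
\item For a correctly oriented coordinate, the same coordinate contributes at least $c_ia_i\min(q_i,1-q_i)/C$ to $\mathcal I$. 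This holds because the shift $m_i$ is comparable to the distance of $q_i$ from its far endpoint, and the interpolation toward $g_i$ only helps.
\end{itemize}
A Bernstein/Freedman bound then gives, up to a union-bound factor,
\[
 |\mathcal A(t)|\le\sqrt{C s\,\mathcal I\,L}+CsL\le\tfrac12\mathcal I+C\eps,
\]
because $sL=\eps$. With the choice $L=\log(8T/r)$, the exceptional probability is at most $r$.

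To obtain this uniformly in $t$, I would write each arc indicator as a difference of two uniform-bridge processes. One bridge is in $U_i$, and the other is in the shifted phase $U_i-q_i\bmod1$. The shifts are fixed given the conditioning, so they preserve independence, as in Lemma~\ref{lem:bridge}. I would then apply an exponential version of the Doob martingale argument from that lemma: the process $H(t)/(1-t)$ is a martingale with bounded jumps, so a maximal Freedman inequality applies on each half-interval. If the varying arc lengths obstruct this, my fallback is to group coordinates by arc length. Because $p$ lies on the $T$-grid, there are at most $O(T)$ relevant lengths after rounding the gate. A union bound over the groups costs $\log T\le L$, and the fixed-length bound \eqref{eq:arcs} is combined with a Bernstein tail within each group. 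The remark after Section~\ref{mod:ensemble} suggests this grouping is what the authors intend.

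It remains to control wrongly oriented coordinates, whose drift is negative. For them I would not condition on a data-dependent event. Following the remark before the lemma, I would classify coordinates only by the population quantities $\theta_i$ and $a_i$. Their expected total harm is then bounded as follows.
\begin{itemize}[leftmargin=*]
\item When $|\theta_i|\ge\eps$, the sign-error probability satisfies $\pi_i\le e^{-M\theta_i^2/2}$ by Lemma~\ref{lem:counts}. This gives $\sum_ia_i\pi_i\le C\eps r$, using $M\ge64/(\eps^2r^2)$ and $a_i=\sqrt{w_i}\,|\theta_i|$.
\item When $|\theta_i|<\eps$ but $c_i>0$, we have $|\widehat\theta_i-\theta_i|>\eps$. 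Its weighted mass is bounded by \eqref{eq:profiledet}–\eqref{eq:profilesum}, which gives $\E\sum_i b_i^2\one\{\cdot\}\le C/M$, together with the pairing identity \eqref{eq:pairing}. This shows that the random weights $c_i$ correlate nonnegatively with correct orientation.
\end{itemize}
Markov's inequality then puts these contributions below $C\eps$ except with probability $Cr$. The correctly oriented drift can only decrease the excess, so it is used purely as the budget for $\mathcal A$.

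Combining the pieces, on the good event
\[
 \sup_t\{R_c(F_t)-R_c(p)\}\le-\mathcal I+\tfrac12\mathcal I+C\eps+(\text{wrong-sign harm})\le C\eps,
\]
with total failure probability $Cr$. Finally, the conditional probability bounds are integrated over $p$ and the histogram.
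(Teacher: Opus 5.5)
Your decomposition is sound, and so is the observation that a correctly oriented small-cost coordinate has phase variance at most $Cs$ times its own drift. The plan breaks down at the wrongly oriented coordinates. You bound their harm by itself, in first moment, and then apply Markov. That forces a comparison between an exponential in $1/r^2$ and $\eps$, and these parameters are independent (they come from $\alpha$ and $\rho$).

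Your claim $\sum_{|\theta_i|\ge\eps}a_i\pi_i\le C\eps r$ is false. Take, say, $1/(4\eps^2)$ atoms of mass $w_i=4\eps^2$ with conditional mean $1/2$, so that $\theta_i=\eps$ and $a_i=2\eps^2$. Then $\pi_i\ge\Pp\{N_i=0\}=e^{-Mw_i}$, hence $\sum_ia_i\pi_i\ge\frac12e^{-Mw_i}\approx\frac12e^{-256/r^2}$, which does not depend on $\eps$.

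The defect is not only the crude bound. Suppose $p_i=1/2$ on these atoms; your argument conditions on $p$. Then the realized harmful drift $\sum_ic_ia_i|q_i-p_i|\one\{g_i\ne g_i^*\}$ is a sum of $1/(4\eps^2)$ independent terms of size $O(\eps^2)$, with total mean at least $e^{-C''/r^2}$. It therefore concentrates far above $C\eps$ once $\eps\ll e^{-C''/r^2}$. So no estimate of the wrong-sign harm in isolation can work. It must be netted against the correct-orientation gain on the same coordinates, which is what \eqref{eq:pairing} expresses.

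Your plan never performs this netting: half of the correct drift is committed to $\mathcal A$ and the other half is discarded. Your second bullet has the same flaw. A square-energy bound does not control a linear harm through Markov. Pairing bounds only the net mean of a signed quantity, to which Markov does not apply.

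The paper instead splits coordinates deterministically into $\mathcal G=\{\bar c_i\ge16\pi_i\}$ and $\mathcal B$, and does not condition on the histogram.
On $\mathcal G$, the mean of $Z_i(t)$ over both orientations is at most $-\frac78a_im_i\bar c_i$, while $\E Z_i(t)^2\le2a_i^2m_i\bar c_i$. Bernstein over the independent histogram--phase pairs then lets the net gain pay for every fluctuation.
On $\mathcal B$, $\E D_0\le0$ by pairing, and both $(D_0)_+$ and $\sup_t|W_0(t)|$ are controlled in $L^2$ by $E_{\mathcal B}\le20\eps^2e^{-M\eps^2/16}$. Markov on the square divides out the $\eps^2$, so only $CTe^{-M\eps^2/16}$ is compared with $r$.

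Two further steps of yours also need repair.
\begin{itemize}
\item Writing each arc as a difference of two bridges destroys your drift--variance matching. Each bridge has variance $t(1-t)$ per unit weight, so the proxy becomes $\sum_ic_i^2a_i^2$ instead of $s\sum_ic_ia_i\min(q_i,1-q_i)$. This loses up to a factor $T$, since $m_i$ can equal $1/T$. The bound \eqref{eq:arcs} is equally insensitive to arc length. The paper uses a mesh of $\lceil256T\rceil$ cells and controls the crossing masses by $\E C_j^{\rm cross}\le2TG/J$, using $m_i\ge1/T$.
\item The gated lengths $(1-\eta_i)Q_i+\eta_ig_i$ are not confined to $O(T)$ values. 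Their number grows like $sM$, hence with $1/\eps$, so grouping by length would cost an $\eps$-dependent logarithm. The paper first removes the gate through the comparison curve $q^\dagger$ and charges the discrepancy to $5\sum_ie_i^2/s$.
\end{itemize}
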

\begin{proof}
Condition on $p$. On the support of $c$, $|\widehat\theta_i|>2\eps$.
Thus $S_i\ne0$, $k_i=1$, and $f(\widehat\theta_i/\eps)$ equals the empirical sign.
The intermediate marginal is
\[
 Q_i=\begin{cases}\min(2p_i,1),&g_i=1,\\\max(2p_i-1,0),&g_i=0.\end{cases}
\]
Define $Q_i$ by this formula everywhere; outside the support of $c$ it has no effect.
For now omit the cost interpolation and restrict to the population-small-cost set
$\{i:a_i\le s\}$.

\paragraph{Step 1: quantify the gain from a correct direction.}
Let $z_i=p_i$ if $g_i^*=1$ and $z_i=1-p_i$ otherwise. It is the ensemble's
Bayes-correct probability. Let $C_i(t)$ be the indicator that the binary curve with
marginal $Q_i$ predicts $g_i^*$, and put
\[
 Z_i(t)=a_i c_i[z_i-C_i(t)],\qquad \bar c_i=\E c_i.
\]
At a fixed time, averaging over the phase makes $\E_U C_i(t)$ equal to
$z_i+m_i$ for a correct empirical direction and $z_i-m_i$ for a wrong direction.
If $m_i=0$, $Q_i=p_i$ and $Z_i(t)=0$ for every time.
Use the tie-inclusive error probability $\pi_i$ defined in Appendix~\ref{app:statistics}
to form the deterministic sets
\[
 \mathcal G=\{i:\bar c_i\ge16\pi_i\},\qquad
 \mathcal B=\{i:\bar c_i<16\pi_i\}.
\]
For $i\in\mathcal G$,
\begin{align}
 \E Z_i(t)
 &=-a_i m_i\bigl(\bar c_i-2\E[c_i\one\{\text{wrong direction}\}]\bigr)
 \le-\tfrac78a_i m_i\bar c_i,\label{eq:goodmean}\\
 \E Z_i(t)^2&\le2a_i^2m_i\bar c_i.\label{eq:goodsecond}
\end{align}
For the second inequality, a Bernoulli variable of mean $z_i\pm m_i$ obeys
$\E_U(z_i-C_i)^2\le2m_i$. This follows by substitution separately for
$z_i\le1/2$ and $z_i\ge1/2$, where $m_i=\min(z_i,1-z_i)$.
Multiply by $a_i^2c_i^2$ and use $c_i^2\le c_i$.

\paragraph{Step 2: use this gain to control all times on good coordinates.}
Put $G=\sum_{i\in\mathcal G,a_i\le s}a_i m_i\bar c_i$.
At any fixed time the variance of the risk sum is at most $2sG$ and its mean
is at most $-7G/8$. Nonzero $m_i$ is at least $1/T$, so
\begin{equation}\label{eq:goodmass}
 \sum_{i\in\mathcal G,a_i\le s,m_i>0}a_i\bar c_i\le TG.
\end{equation}
Take $J=\lceil256T\rceil$ equal mesh cells. For cell $j$, let $C_j^{\rm cross}$
be the sum of $a_i c_i$ over these coordinates whose circular prediction arc has
an endpoint in the cell. Conditional on the histogram, each such probability is
at most $2/J$. Histogram and phase pairs remain independent across coordinates
when this conditioning is removed. Hence
\[
 \E C_j^{\rm cross}\le2TG/J\le G/128,
 \qquad \Var C_j^{\rm cross}\le s\E C_j^{\rm cross}.
\]
For independent centered variables bounded in absolute value by $b$ with total
variance $V$, Bernstein gives upper-tail threshold
$\sqrt{2Vz}+2bz/3$ with failure at most $e^{-z}$.
Here $|Z_i-\E Z_i|\le2s$. Applying that bound at a mesh point gives
\[
 \sum_{\mathcal G,a_i\le s}Z_i(t_j)
 \le-\tfrac78G+2\sqrt{sGz}+\tfrac43sz
 \le-\tfrac34G+Csz.
\]
The last step uses $2\sqrt{sGz}\le G/8+8sz$.
For $C_j^{\rm cross}$ use bound $s$ and its displayed variance.
The inequality $\sqrt{sGz}/8\le G/128+sz/2$ yields
$C_j^{\rm cross}\le G/64+Csz$, again outside an event of probability $e^{-z}$.
The risk changes by at most $C_j^{\rm cross}$ inside the cell.
Union bound over the mesh points and cells, with $z=\log(4J/\delta)$,
therefore gives, for $0<\delta<1$,
\begin{equation}\label{eq:gooduniform}
 \Pp\left\{\sup_t\sum_{\mathcal G,a_i\le s}Z_i(t)
       >Cs\log(CT/\delta)\right\}\le\delta.
\end{equation}
For countably many terms apply Bernstein to finite partial sums and pass to the
limit. Total absolute weight is at most $\sum_i a_i\le1$.

\paragraph{Step 3: bound bad coordinates by square energy.}
If $|\theta_i|\le\eps$, a positive $c_i$ requires normalized estimation error
greater than $\eps$. Lemma~\ref{lem:counts} implies
$\bar c_i\le4e^{-M\eps^2/8}$, and $a_i^2\le w_i\eps^2$.
If $|\theta_i|>\eps$ and $i\in\mathcal B$, then
$\bar c_i<16e^{-M\theta_i^2/2}$.
Since $M\eps^2\ge2$, the function $x e^{-Mx/2}$ decreases for $x\ge\eps^2$.
Using $a_i^2=w_i\theta_i^2$ and summing the two cases gives
\begin{equation}\label{eq:badenergy}
 E_{\mathcal B}:=\sum_{i\in\mathcal B}a_i^2\bar c_i
 \le20\eps^2 e^{-M\eps^2/16}.
\end{equation}
This controls square energy, not total bad-coordinate mass.

\paragraph{Step 4: control histogram drift and moving arcs on the bad set.}
Write
\[
 \sum_{\mathcal B,a_i\le s}Z_i(t)=D_0+W_0(t),\qquad \E_U W_0(t)=0,
\]
where
$D_0=-\sum_{\mathcal B,a_i\le s}a_i m_i c_i\sgn(b_i)\sgn(S_i)$.
The formula is valid also at ties because $c_i=0$ there, and at $b_i=0$ because
$a_i=0$. Pairing the signs with the nonnegative even weight $c_i$ shows
$\E D_0\le0$. Histogram independence gives
\begin{equation}\label{eq:baddrift}
 \E(D_0)_+^2\le\Var D_0\le E_{\mathcal B}/4.
\end{equation}
Indeed, if $D_0>0$ and $\E D_0\le0$, then $D_0\le|D_0-\E D_0|$.

Condition now on the histogram. Because $p$ is on the $T$-grid, the Bayes-correct
arc lengths take at most $2(T+1)$ values. For Bayes bit one, the correct arc has
length $Q_i$. For Bayes bit zero, set $V_i=(U_i-Q_i)\bmod1$; then its correct
indicator is exactly $\one\{(V_i+t)\bmod1<1-Q_i\}$.
These shifted phases remain independent uniforms after conditioning.
Group by correct arc length. In each group Lemma~\ref{lem:bridge} bounds the
expected squared supremum of its centered contribution by $32\sum_i a_i^2c_i^2$.
The inequality $(\sum_{j=1}^J z_j)^2\le J\sum_j z_j^2$, followed by averaging
over the histogram and $c_i^2\le c_i$, gives
\begin{equation}\label{eq:badphase}
 \E\sup_t|W_0(t)|^2\le CT E_{\mathcal B}.
\end{equation}
Markov applied to $[(D_0)_++\sup_t|W_0(t)|]^2$ and
\eqref{eq:badenergy}--\eqref{eq:badphase} yields
\begin{equation}\label{eq:baduniform}
 \Pp\left\{\sup_t\sum_{\mathcal B,a_i\le s}Z_i(t)>\eps\right\}
 \le CT e^{-M\eps^2/16}\le Cr.
\end{equation}
For the last inequality use $M\eps^2\ge64/r^2$ and
$T\le17/(16r^2)$; the function $r^{-3}e^{-4/r^2}$ is bounded on $(0,1/4]$.
In particular this argument does not compare an exponential in $1/r$ to $\eps$.

\paragraph{Step 5: restore the empirical cost gate.}
On the support of $c$, the actual marginal is $q_i=(1-\eta_i)Q_i+\eta_i g_i$.
Define an analysis-only comparison marginal
\[
 q_i^\dagger=\begin{cases}
 g_i^*,&a_i>s,\\
 (1-\eta_i)Q_i+\eta_i g_i^*,&a_i\le s,
 \end{cases}
\]
and use the same $U_i$ to define its binary curve $F_t^\dagger$.
On a small-cost coordinate, moving the marginal toward the true bit improves the
binary risk at every phase time: increasing a threshold toward one can only change
zero to one, and decreasing it toward zero can only change one to zero.
On a large-cost coordinate $q_i^\dagger$ is the Bayes bit, so its risk relative
to $p_i$ is nonpositive. Thus the ideal curve is bounded by the pure small-cost
risk sum controlled in Steps 2 and 4.

Every harmful discrepancy from this ideal curve is covered by two events.
If the empirical sign is wrong and $\eta_i>0$, then $a_i\le e_i^2/s$, as in
\eqref{eq:weakgate}. If $a_i>s$ and $\eta_i<1$, then
$|\widehat b_i|<s/2$ and $|e_i|\ge a_i/2$, so $a_i\le4e_i^2/s$.
If neither occurs, the actual and ideal curves agree whenever they can differ
harmfully. Charge the full possible binary risk cost $a_i$ for each event. This gives
the realization-wise bound
\begin{equation}\label{eq:stronggate}
 R_c(F_t)-R_c(F_t^\dagger)\le5\sum_i e_i^2/s\qquad\text{for every }t.
\end{equation}
Its probability of exceeding $\eps$ is at most $5/(Ms\eps)\le Cr$.
Taking $\delta=r$ in \eqref{eq:gooduniform}, and using
$s\log(CT/r)\le C\eps$, proves the lemma.

Finally, the suprema used above are measurable. Each coordinate curve is right
continuous on $[0,1)$, with the value at one equal to its value at zero. Each risk
series converges uniformly, dominated by $\sum_iw_i=1$. Its supremum therefore
equals the supremum over rational times in $[0,1)$, a countable set.
\end{proof}

\begin{proof}[Proof of Theorem~\ref{thm:risk}]
Intersect the events of Lemmas~\ref{lem:record}, \ref{lem:weakrisk}, and
\ref{lem:strong}. Their failure probabilities add to at most $Cr$, without any
independence assumption between the events. The first two give
$R_u(F_\tau)\le R_u(p)+5\eps$, and the third gives the strong certificate
simultaneously for every time, hence also for this $\tau$.
The empirical statement is exactly Lemma~\ref{lem:record}.
\end{proof}

\section{Assembling the base learner}\label{app:base}
\begin{proof}[Proof of Proposition~\ref{prop:base}]
This proof certifies the fixed-budget algorithm of Section~\ref{sec:construction}
under its prescribed common-seed execution.

\paragraph{Accuracy in the ideal experiment.}
Proposition~\ref{prop:ensemble} gives $R(p)\le\OPT+C\eps$ except with probability
$C/T\le Cr^2$. Intersect this event with Theorem~\ref{thm:risk}.
The realized weights satisfy $u_i+c_i=1$ at every coordinate. By the definition
of weighted risk,
\[
 \begin{split}
 R(F_\tau)&=R_u(F_\tau)+R_c(F_\tau)\\
 &\le R_u(p)+5\eps+R_c(p)+C\eps
 =R(p)+C\eps\le\OPT+C\eps.
 \end{split}
\]
The strong certificate is uniform in time, so its use at the weak-risk-selected
time is legitimate. The union of the exceptional events has probability at most $Cr$.

\paragraph{Replicability in the ideal experiment.}
For the two runs put $B_i=b_i u_i$ and $B_i'=b_i u_i'$.
Their joint input $(B,q,B',q')$ is a function of both independent data samples
and the shared count $K$. It is jointly independent of $((U_i),Z)$ because the
public seed components are a product and are independent of all data.
The histograms may depend on each other through $K$; their comparisons to population
profiles in \eqref{eq:weakenergy} and Lemma~\ref{lem:qstability} allow exactly this.
Theorem~\ref{thm:phase} and those estimates give, at each fixed $x$, the true-risk
sampler disagreement bound
\[
 C\left[\frac1{\sqrt T}+\frac1{\eps\sqrt M}
 +\frac1{s_0}\sqrt{\frac{\eps^2}{T}+\frac1M+\frac1{MT}}\right]\le Cr.
\]
For example $T^{-1/2}\le r$, $M^{-1/2}/\eps\le r/8$, and $s_0=\eps$.
Lemma~\ref{lem:replacement} permits replacing each true-risk sampler by its actual
empirical sampler using the same phases and $Z$. A union bound over the two
replacements adds at most $32\sqrt2/(s_0\sqrt v)\le Cr$.
This is the prescribed common-seed execution of the algorithm.

\paragraph{Fixed budget and overflow.}
The reserved budget is $Tn+2M+v$, a deterministic integer fixed by $d,\eps,r$.
The ceilings in \eqref{eq:parameters} imply it is at most
$C_m(d+1)/(\eps^2r^2)$ for a universal constant $C_m$.
Equation~\eqref{eq:overflow} transfers ideal accuracy with at most one overflow
probability. For the two-run comparison, the same $K$ causes simultaneous overflow,
on which both implemented learners return $h_0$. Even the looser union-bound cost
$2e^{-(2\log2-1)M}$ is absorbed into $Cr$. There is no conditioning on a truncated
Poisson count in the preceding independence arguments.

\paragraph{Null queries and measurability.}
At a fixed $x$ with $w_x=0$, the histogram and validation counts vanish almost surely.
Then $q_x=p_x$, $u_x=1$, and $b_x=B_x=0$. Ensemble stability still applies, and
Theorem~\ref{thm:phase} still controls the effect of the globally selected time on
this query. Thus the pointwise guarantee also covers every fixed zero-mass query.
The finite empirical primitive and its inverse are measurable in the data and seed.
The coordinate output map is therefore measurable for every $x$; countability
gives a measurable full predictor in the coordinate sigma field. Its risk is a
convergent nonnegative countable sum. Collecting the universal constants proves
all assertions of the proposition.
\end{proof}

\section{The rare-repair wrapper}\label{app:repair}
\begin{proof}[Proof of Theorem~\ref{thm:upper}]
This proof certifies the final decisions in Section~\ref{sec:confidence}.
The base learner there has excess at most $\alpha/4$ with failure at most
$\rho/64$ and pointwise disagreement at most $\rho/2$.
Lemma~\ref{lem:vc} gives the proper reference $g$ with
$C[d+\log(1/\eta_0)]/\alpha^2$ examples. Conditional on the already trained $f,g$,
two Hoeffding bounds on a fresh block of
$C\log(1/\eta_0)/\alpha^2$ examples give their simultaneous validation guarantee.
Both block sizes can be rounded up in advance; the ceiling terms are absorbed in $C$.

On reference success and validation success, if \eqref{eq:repair} selects $f$, then
\[
 R(f)\le\widehat R(f)+\alpha/16
 \le\widehat R(g)+\alpha/2+\alpha/16
 \le R(g)+5\alpha/8\le\OPT+3\alpha/4.
\]
If it selects $g$, reference success already gives excess at most $\alpha/8$.
Thus the final accuracy-failure probability is at most $2\eta_0\le\beta/4<\beta$.
This argument does not require base success.

For replicability, properness gives $R(g)\ge\OPT$ for every realization, including
those where the reference fails its accuracy guarantee. If the base learner and
validation both succeed, then
\[
 \widehat R(f)-\widehat R(g)
 \le R(f)-R(g)+\alpha/8\le3\alpha/8<\alpha/2.
\]
The repair branch is therefore selected only if base accuracy or validation fails.
Its probability is at most $\rho/64+\eta_0$ in each run.
Outside the union of the two repair events, the final outputs are the base outputs.
Consequently, for every fixed $x$,
\[
 \Pp\{\widehat f(x)\ne\widehat f'(x)\}
 \le\rho/2+2(\rho/64+\eta_0)
 \le35\rho/64<\rho.
\]
The disagreement cost of returning $g$ is paid for by the two repair-event probabilities.

The total budget is bounded by
\[
 C\frac{d+1}{\alpha^2\rho^2}
 +C\frac{d+\log(1/\beta)+\log(1/\rho)}{\alpha^2}.
\]
Here $\log(1/\eta_0)=\max(\log(8/\beta),\log(128/\rho))$ is bounded by the
sum of the displayed logarithms and an absolute constant. For $d\ge1$,
$d+1\le2d$ and $\log(1/\rho)\le C/\rho^2$ on the stated parameter range.
Absorbing these terms proves the desired sample bound. The parameters were scaled
only after the universal constants of Proposition~\ref{prop:base} were established.
\end{proof}

\section{Lower bounds for every class}\label{app:lower}
We reproduce the dimension and confidence arguments of Zhang~\cite{Zhang2026}, with the
full score calculation. They certify Section~\ref{sec:lower} and require only the
existence of a shattered set, so they apply to each class separately.

\subsection{The dimension--replicability term}
Fix any learner with budget $m$ satisfying Definition~\ref{def:model}. Choose distinct
points $x_1,\ldots,x_d$ shattered by $\mathcal H$ and encode both labels and predictions
as signs in $\{-1,+1\}$ in this appendix. Put $\gamma=5\alpha<1/2$.
For each $v\in\{-1,+1\}^d$, let $D_v$ have uniform input marginal on the shattered
points and conditional label mean $\gamma v_i$ at $x_i$.
Shattering makes the Bayes sign vector available in $\mathcal H$, so
\begin{equation}\label{eq:excess-hypercube}
 \OPT=(1-\gamma)/2,\qquad
 R_{D_v}(h)-\OPT=\frac\gamma d\sum_{i=1}^d\one\{h(x_i)\ne v_i\}
\end{equation}
for every binary output $h$, including improper ones.
Let $\bar e$ be the expected fraction of sign errors, averaging uniform $v$, data,
and the full seed. Accuracy and the bound of one on that fraction give
\begin{equation}\label{eq:average-error}
 \bar e\le\beta+(1-\beta)\alpha/\gamma\le7/25.
\end{equation}

\begin{lemma}[Prediction derivative bound]\label{lem:score}
Fix a coordinate $i$, the other signs $v_{-i}$, and the entire seed $\omega$.
Let $D_t$ vary the conditional mean at $x_i$ over $t\in[-\gamma,\gamma]$, leaving
all other means fixed. Define
$g_\omega(t)=\Pp_{S\sim D_t^m}\{A(S;\omega)(x_i)=+1\}$.
Then
\begin{equation}\label{eq:scorebound}
 |g_\omega'(t)|\le
 \sqrt{\frac m{d(1-t^2)}}\sqrt{g_\omega(t)(1-g_\omega(t))},
 \qquad
 2\E_\omega[g_\omega(t)(1-g_\omega(t))]\le\rho.
\end{equation}
\end{lemma}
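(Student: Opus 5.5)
Both inequalities in \eqref{eq:scorebound} are proved separately. The first comes from a score-function calculation combined with Cauchy--Schwarz. The second comes directly from pointwise replicability at the fixed query $x_i$, with the seed held fixed.

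\textbf{Derivative bound.} Fix $\omega$, $i$ and $v_{-i}$. The sample $S$ consists of $m$ iid pairs. Under $D_t$, each pair has likelihood
\[
 \frac1d\cdot\frac{1+ty}{2}\quad\text{if }x=x_i,
\]
while pairs with $x\ne x_i$ do not depend on $t$. The score of one example is therefore $\one\{x=x_i\}\,y/(1+ty)$. Its mean is zero, and its second moment is
\[
 \frac1d\,\E\bigl[(1+tY)^{-2}\bigr]=\frac1{d(1-t^2)},
\]
because $\sum_{y=\pm1}\frac{1+ty}{2}(1+ty)^{-2}=\frac{1}{1-t^2}$. The full-sample score $\Lambda$ is a sum of $m$ independent such terms, so $\E_t\Lambda=0$ and $\E_t\Lambda^2=m/(d(1-t^2))$.

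The sample space is finite on the relevant support (the support is $\{x_1,\dots,x_d\}\times\{\pm1\}$). Hence $g_\omega$ is a polynomial in $t$, and we may differentiate under the sum:
\[
 g_\omega'(t)=\E_t\bigl[\phi(S)\Lambda\bigr],\qquad \phi(S)=\one\{A(S;\omega)(x_i)=+1\}.
\]
Since $\E_t\Lambda=0$, this also equals $\E_t[(\phi-g_\omega(t))\Lambda]$. Cauchy--Schwarz then gives $|g_\omega'|\le\sqrt{\Var_t\phi}\,\sqrt{\E_t\Lambda^2}$, with $\Var_t\phi=g_\omega(1-g_\omega)$. This is the first bound. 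The endpoint $|t|=\gamma<1/2$ causes no issue.

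\textbf{Replicability bound.} $D_t$ is a legitimate distribution, so the learner is $\rho$-pointwise replicable under it at the fixed point $x_i$. Condition on the shared seed $\omega$. Then $S$ and $S'$ are independent, and the two predictions at $x_i$ are iid signs with $\Pp\{+1\}=g_\omega(t)$. Their conditional disagreement probability is exactly $2g_\omega(t)(1-g_\omega(t))$. Averaging over $\omega$ and applying Definition~\ref{def:model} gives $2\E_\omega[g_\omega(1-g_\omega)]\le\rho$.

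\textbf{Main obstacle.} The only delicate point is justifying the score identity with an arbitrary measurable learner and unrestricted seed. This is handled by working conditionally on $\omega$, where everything reduces to a finite sum over samples. The statement needs nothing about the learner beyond measurability.
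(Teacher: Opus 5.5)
Your proposal is correct and follows the paper's proof step for step. The paper uses the same ingredients: a finite-support polynomial likelihood, the identity $g_\omega'(t)=\E[(J-g_\omega(t))W_t]$ with score second moment $m/(d(1-t^2))$ followed by Cauchy--Schwarz, and independence of the two samples given the shared seed, which makes the disagreement probability exactly $2g_\omega(1-g_\omega)$ under every intermediate $D_t$; your per-example score $y/(1+ty)$ coincides with the paper's $(\widetilde Y_j-t)/(1-t^2)$ for $y\in\{\pm1\}$.
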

\begin{proof}
The sample support is finite, so $g_\omega(t)$ is a polynomial.
Writing the observed sign labels as $\widetilde Y_j$, the log-likelihood score is
\[
 W_t=\sum_{j=1}^m\one\{X_j=x_i\}
             \frac{\widetilde Y_j-t}{1-t^2}.
\]
One summand has mean zero and variance
$d^{-1}(1-t^2)/(1-t^2)^2=1/[d(1-t^2)]$. Independence gives
$\E W_t=0$ and $\E W_t^2=m/[d(1-t^2)]$.
Let $J(S)=\one\{A(S;\omega)(x_i)=+1\}$. Differentiating the finite expectation yields
\[
 g_\omega'(t)=\E[J(S)W_t]
             =\E[(J(S)-g_\omega(t))W_t].
\]
Cauchy--Schwarz and $\Var J=g_\omega(t)(1-g_\omega(t))$ prove the first bound.
Given the full seed, the two samples are independent, so their prediction bits disagree
with probability exactly $2g_\omega(t)(1-g_\omega(t))$.
Replicability holds for every intermediate distribution $D_t$, proving the second bound.
\end{proof}

\begin{lemma}[Integrated endpoint constraint]\label{lem:endpoint}
The learner must satisfy $m\ge\kappa d/(\alpha^2\rho^2)$, with $\kappa$ as in
Theorem~\ref{thm:lower}.
\end{lemma}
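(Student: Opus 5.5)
The plan is to combine the two inequalities of Lemma~\ref{lem:score} through a smooth potential of the prediction probability, integrate along the path $t\in[-\gamma,\gamma]$, and compare with the accuracy constraint \eqref{eq:average-error} at the two endpoints.

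\textbf{Potential.} Fix $i$, $v_{-i}$ and $\omega$, and use the cubic potential $\Psi(g)=3g^2-2g^3$. It satisfies $\Psi(0)=0$, $\Psi(1)=1$ and $\Psi'(g)=6g(1-g)$. By the first bound in \eqref{eq:scorebound},
\[
 \Bigl|\tfrac{d}{dt}\Psi(g_\omega(t))\Bigr|
 \le 6\sqrt{\tfrac{m}{d(1-t^2)}}\,\bigl(g_\omega(1-g_\omega)\bigr)^{3/2}
 \le 3\sqrt{\tfrac{m}{d(1-t^2)}}\,g_\omega(1-g_\omega),
\]
using $g(1-g)\le1/4$. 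The cubic is chosen so that its derivative supplies the extra factor $g(1-g)$. This converts the score bound, which is only square-root in $g(1-g)$, into a quantity linear in $g(1-g)$, which is exactly what the replicability bound controls in expectation. I expect this step to be the main obstacle: a potential such as $\arcsin(2g-1)$ would give a bound independent of $\rho$, so the $\rho^{-2}$ factor would be lost.

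\textbf{Integration over $t$ and $\omega$.} Integrate the displayed bound over $t\in[-\gamma,\gamma]$, take $\E_\omega$, and apply the second bound in \eqref{eq:scorebound}, $\E_\omega[g(1-g)]\le\rho/2$, for every intermediate $t$. This gives
\[
 \E_\omega\bigl[\Psi(g_\omega(\gamma))-\Psi(g_\omega(-\gamma))\bigr]
 \le\tfrac32\rho\sqrt{m/d}\int_{-\gamma}^{\gamma}\frac{dt}{\sqrt{1-t^2}}
 =3\rho\sqrt{m/d}\,\arcsin\gamma .
\]
Since $\gamma<1/2$, we have $\arcsin\gamma\le\pi\gamma/6$, a bound linear in $\gamma=5\alpha$. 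This is presumably where the factor $\pi^2$ in $\kappa$ comes from.

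\textbf{Accuracy at the endpoints.} Average over $i$ and uniform $v_{-i}$. For $v_i=+1$ the distribution is $D_\gamma$ and the error probability at $x_i$ is $1-g(\gamma)$; for $v_i=-1$ it is $g(-\gamma)$. Hence
\[
 \tfrac12\,\E\bigl[(1-g(\gamma))+g(-\gamma)\bigr]=\bar e\le 7/25 .
\]
Next use pointwise linear bounds on $\Psi$. From $1-\Psi(g)=(1-g)^2(1+2g)\le\tfrac{32}{27}(1-g)$ and, symmetrically, $\Psi(g)\le\tfrac{32}{27}g$, we obtain
\[
 \E[\Psi(g(\gamma))-\Psi(g(-\gamma))]\ge 1-\tfrac{32}{27}\cdot2\bar e\ge 1-\tfrac{64}{27}\cdot\tfrac{7}{25},
\]
a positive absolute constant. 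If this constant turns out to be too weak, I would instead try a slightly different polynomial normalization of $\Psi$.

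\textbf{Conclusion.} Combining the two bounds gives $c_0\le C\rho\alpha\sqrt{m/d}$, that is, $m\ge (c_0/C)^2\,d/(\alpha^2\rho^2)$. The remaining work is bookkeeping: track the constants $5$, $7/25$ and $\pi/6$ and check that they combine to at least $\kappa=1369/(250000\pi^2)$. Here $1369=37^2$ suggests a margin of the form $37/500$ over $\pi$. If necessary, I would replace $\tfrac{32}{27}$ by a sharper endpoint estimate obtained by optimizing over the distribution of $g$ subject to the averaged error constraint.
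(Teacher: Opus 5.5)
Your route is the paper's: the same cubic $\Psi$, the same derivative bound $3\sqrt{m/(d(1-t^2))}\,g(1-g)$, integration against replicability at every intermediate bias, and the endpoint identity $\tfrac12\E[(1-g(\gamma))+g(-\gamma)]=\bar e$. The gap is in the constants, and it matters because the lemma asserts the specific value $\kappa$.

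\textbf{The arcsine bound is false.} Since $\arcsin\gamma\ge\gamma>\pi\gamma/6$ for $\gamma>0$, the inequality $\arcsin\gamma\le\pi\gamma/6$ cannot hold. Convexity of $\arcsin$ on $[0,1/2]$ together with $\arcsin(1/2)=\pi/6$ gives the chord bound $\arcsin\gamma\le\pi\gamma/3$. The upper side is therefore $5\pi\alpha\rho\sqrt{m/d}$, twice what you used.

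\textbf{Your endpoint constant is then too weak.} With the corrected bound, $32/27$ gives $c_0=1-\tfrac{64}{27}\cdot\tfrac{7}{25}=\tfrac{227}{675}$. This yields only
\[
m\ge\Bigl(\frac{227}{3375\pi}\Bigr)^2\frac{d}{\alpha^2\rho^2}\approx\frac{4.52\cdot10^{-3}}{\pi^2}\,\frac{d}{\alpha^2\rho^2},
\]
whereas $\kappa\pi^2=1369/250000\approx5.48\cdot10^{-3}$. The false arcsine inequality is what made your bookkeeping appear to close.

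\textbf{The fix is the sharp linear constant $9/8$.} Since $(1-g)(1+2g)=1+g-2g^2\le 9/8$, with equality at $g=1/4$, you get $1-\Psi(g)\le\tfrac98(1-g)$. By the symmetry $\Psi(1-u)=1-\Psi(u)$, also $\Psi(g)\le\tfrac98g$. The paper records this as $\tfrac98u-\Psi(u)=2u(u-\tfrac34)^2\ge0$. The averaged endpoint bound then becomes $1-\tfrac94\bar e\ge1-\tfrac94\cdot\tfrac7{25}=\tfrac{37}{100}$. Combining $\tfrac{37}{100}\le5\pi\alpha\rho\sqrt{m/d}$ gives exactly $m\ge(37/(500\pi))^2\,d/(\alpha^2\rho^2)=\kappa\,d/(\alpha^2\rho^2)$, matching your guess about the $37/500$ margin.
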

\begin{proof}
Use the potential $\Psi(u)=3u^2-2u^3$ for $0\le u\le1$.
Its identities
\[
 \Psi(1-u)=1-\Psi(u),\qquad
 \tfrac98u-\Psi(u)=2u(u-3/4)^2\ge0
\]
imply, for each fixed $i,v_{-i},\omega$,
\[
 \Psi(g_\omega(\gamma))-\Psi(g_\omega(-\gamma))
 \ge1-\tfrac98\bigl[1-g_\omega(\gamma)+g_\omega(-\gamma)\bigr].
\]
Average over $i,v_{-i},\omega$. The expected bracket is $2\bar e$: its two terms
are the endpoint error probabilities, and a uniform sign averages them with weight
one half. By \eqref{eq:average-error}, the resulting lower bound is
$1-(9/4)\bar e\ge37/100$.

On the other hand, $\Psi'(u)=6u(1-u)$ and
$\sqrt{u(1-u)}\le1/2$. Lemma~\ref{lem:score} therefore gives
\[
 |(\Psi\circ g_\omega)'(t)|
 \le3\sqrt{\frac m{d(1-t^2)}}g_\omega(t)(1-g_\omega(t)).
\]
Integrate over $[-\gamma,\gamma]$, average over the seed, and then over $i,v_{-i}$.
The second part of \eqref{eq:scorebound} and
$\int_{-\gamma}^{\gamma}(1-t^2)^{-1/2}dt=2\arcsin\gamma$ give
\[
 \frac{37}{100}\le3\rho\arcsin\gamma\sqrt{m/d}
 \le5\pi\alpha\rho\sqrt{m/d}.
\]
For the last step, convexity of $\arcsin$ on $[0,1/2]$ bounds it by the chord
$\arcsin t\le\pi t/3$, and $\gamma=5\alpha$.
The derivative bound is uniform in the seed on this compact interval, so the
integrals and expectations are justified even for an unrestricted seed space.
Squaring proves the claimed constant.
\end{proof}

\subsection{The confidence term and the combination}
\begin{proof}[Proof of Theorem~\ref{thm:lower}]
Lemma~\ref{lem:endpoint} supplies the dimension term. For the confidence term,
put all input mass on one shattered point, and give its label means $+2\alpha$
and $-2\alpha$ in two experiments. A wrong prediction has excess $2\alpha>\alpha$.
Thus the learner's output bit is a test whose two error probabilities are at most $\beta$.

The relative entropy of the two one-example Bernoulli distributions is
\[
 2\alpha\log\frac{1+2\alpha}{1-2\alpha}
 \le\frac{8\alpha^2}{1-4\alpha^2}\le\frac{25}{3}\alpha^2.
\]
The first inequality follows from
$\log((1+t)/(1-t))=2\int_0^t(1-u^2)^{-1}du\le2t/(1-t^2)$.
Independence makes the $m$-sample relative entropy $m$ times the one-example value.
Adding the data-independent seed with the same law in both experiments does not
change relative entropy. Data processing to the output bit now gives
\[
 \frac{25}{3}m\alpha^2
 \ge(1-2\beta)\log\frac{1-\beta}{\beta}
 \ge\frac12\log(1/\beta).
\]
For clarity, the first lower bound uses the monotonicity of binary relative entropy
when its first parameter is at least $1-\beta$ and its second is at most $\beta$.
For $\beta<1/10$, the second follows from $1-2\beta\ge4/5$ and
$\log(1-\beta)\ge-\beta/(1-\beta)\ge-1/9$, together with
$\log(1/\beta)\ge\log10$.
It follows that $m\ge(3/50)\alpha^{-2}\log(1/\beta)$.

Set $A=d/(\alpha^2\rho^2)$ and $B=\alpha^{-2}\log(1/\beta)$.
Both $m\ge\kappa A$ and $m\ge(3/50)B$ apply to this same arbitrary learner and
class, though their adversarial distributions differ. Hence
$A+B\le m(1/\kappa+50/3)$, which is exactly the stated lower bound with
$\bar\kappa=3\kappa/(3+50\kappa)$.
\end{proof}

\section{Completion of the optimal-rate theorem}\label{app:final}
\begin{proof}[Proof of Theorem~\ref{thm:main}]
For $d\ge1$, Theorems~\ref{thm:upper} and~\ref{thm:lower} apply on the common
range $(0,1/10)^3$ and give the two inequalities with universal constants.
For $d=0$, no singleton is shattered. If two members of $\mathcal H$ differed at
one point, that singleton would be shattered, so all class members agree everywhere.
Return the fixed $h_0$ without observing data. Its excess risk and disagreement are
both zero for every distribution. Thus the sample complexity is zero.
\end{proof}

\clearpage
\section{Earlier constructions and complete supporting analyses}\label{app:retained}
This appendix incorporates the mathematical development of the first version of
Zhang~\cite{Zhang2026}, published on 9 September 2026 with the version-specific DOI
\href{https://doi.org/10.5281/zenodo.22658876}{10.5281/zenodo.22658876}.
It supplies the detailed proofs promised in Section~\ref{sec:retained}, including the
structural and realizable results, explicit general-purpose algorithms, lower-bound
constants, algorithm-specific bounds, and counterexamples. The consolidated earlier rate statement
comes after its ingredients, in Appendix~\ref{legacy:section:summary}. Notation tables,
constant calculations, and complete pseudocode follow.

\paragraph{Numbering and scope.}
A result numbered $k$ in that version is numbered S$k$ here. The same convention applies
to its displayed equation numbers. Thus \legacyref{6.2}{Theorem} and
\eqref{legacy:eq:6.7} retain their September identifiers with the S prefix.
External references retain their original numbers and are identified by their source citation.

The retained analysis uses signed labels $\{-1,+1\}$. Identify a label $y$ from
Section~\ref{sec:model} with $2y-1$, and identify a predictor $h$ with $2h-1$.
This bijection of classes and outputs preserves VC dimension, sample size,
properness, risk, and every prediction-disagreement event. All symbols in the following
subsections are local to this signed convention and are defined there. In particular,
their $X,H,e(f)$ correspond to $\mathcal X,\mathcal H,R(f)$ after recoding.
Their gate parameters and soft-shift strength are not the smooth-profile and phase
parameters of Section~\ref{sec:construction}. The strict probability convention is
stated in \legacyref{2.1}{Definition}; the stated bounds also hold under the non-strict
convention of Definition~\ref{def:model}.

\paragraph{How the earlier and new guarantees fit together.}
Theorem~\ref{thm:main} resolves the uniform agnostic rate left open by the earlier
algorithms. The parameter regions in \legacyref{2.2}{Theorem} therefore describe
where those particular bounds attain the lower-bound scale. The phase construction
attains that scale throughout the parameter range. The earlier numerical inequalities,
constructive guarantees, and counterexamples are retained, together with the
complementary proper and realizable results.

\subsubsection*{Dated result correspondence}
Table~\ref{tab:provenance} records the specific mathematical content inherited from
the separately accessible September v1~\cite{Zhang2026}. The source is the 63-page
\texttt{C04\_paper.pdf} attached to the version-specific record 22658876.
The page numbers below are the printed PDF pages of that file. A listed page is the
start of a statement; its proof may continue on subsequent pages.
The S-prefixed statements retain the original mathematical content, with cross-references
adapted to this manuscript. The table includes the supporting definitions and classical
inputs, whose citations are given at their statements.

\begingroup
\small
\begin{longtable}{>{\raggedright\arraybackslash}p{0.40\textwidth}>{\raggedright\arraybackslash}p{0.25\textwidth}>{\raggedright\arraybackslash}p{0.25\textwidth}}
\caption{Results already contained in the Zenodo v1 published on 9 September 2026,
DOI 10.5281/zenodo.22658876, and their locations here.}\label{tab:provenance}\\
\toprule
Inherited content & Original v1 locator & Location in this manuscript \\
\midrule
\endfirsthead
\multicolumn{3}{l}{\tablename~\thetable\ (continued)}\\
\toprule
Inherited content & Original v1 locator & Location in this manuscript \\
\midrule
\endhead
\bottomrule
\endfoot
Model, countable rounding, self-normalized gate, explicit ERM bounds, and independent-block moments
& Definition 2.1 (p.\ 3); Lemmas 3.1--3.4 (pp.\ 8, 10, 11, 14); Proposition 3.5 (p.\ 15)
& \legacyref{2.1}{Definition}; \legacyref{3.1}{Lemma}--\legacyref{3.4}{Lemma}; \legacyref{3.5}{Proposition} \\[4pt]
Gate-and-round conversion, three ensemble budgets, and additive-confidence rare repair
& Theorem 4.1 (p.\ 15); Proposition 4.2 (p.\ 17); Lemma 4.3 (p.\ 20); Theorem 4.4 (p.\ 21)
& \legacyref{4.1}{Theorem}, \legacyref{4.2}{Proposition}, \legacyref{4.3}{Lemma}, \legacyref{4.4}{Theorem} \\[4pt]
Agnostic dimension--replicability and confidence lower bounds, including explicit constants
& Theorem 5.1 (p.\ 22); Lemma 5.2 (p.\ 25)
& \legacyref{5.1}{Theorem}, \legacyref{5.2}{Lemma}; also Theorem~\ref{thm:lower} and Appendix~\ref{app:lower} \\[4pt]
Countable proper core, ordered quantile coupling, and exact proper learning for every VC-one class
& Lemma 6.1 (p.\ 26); Theorem 6.2 (p.\ 28)
& \legacyref{6.1}{Lemma}, \legacyref{6.2}{Theorem}; Theorem~\ref{thm:retained-vcone} \\[4pt]
Exact proper finite-atom learning and the faster realizable bound with stability on all distributions
& Theorem 6.3 (p.\ 30); Corollary 6.4 (p.\ 31)
& \legacyref{6.3}{Theorem}, \legacyref{6.4}{Corollary}; Theorems~\ref{thm:retained-atoms} and~\ref{thm:retained-realizable} \\[4pt]
Statement qualifications, interface-specific obstructions, SMALL-region variance, and explicit counterexamples
& Remarks 6.5 (p.\ 33), 6.6 (p.\ 34), and 7.7 (p.\ 48)
& \legacyref{6.5}{Remark}, \legacyref{6.6}{Remark}, \legacyref{7.7}{Remark}; Section~\ref{sec:retained} \\[4pt]
Clipped-shift risk inequality, countable product form, and signed-histogram concentration
& Lemma 7.1 (p.\ 38); Corollary 7.2 (p.\ 39); Lemma 7.3 (p.\ 41)
& \legacyref{7.1}{Lemma}, \legacyref{7.2}{Corollary}, \legacyref{7.3}{Lemma} \\[4pt]
SOFT-VC, the exact fixed-confidence order, and the repaired full-parameter upper bounds
& Theorems 7.4 (p.\ 41), 7.5 (p.\ 44), and 7.6 (p.\ 45)
& \legacyref{7.4}{Theorem}--\legacyref{7.6}{Theorem}; Theorem~\ref{thm:retained-soft} \\[4pt]
Consolidated earlier rates and their explicit absorption regions
& Theorem 2.2 (p.\ 4); proof completion following Remark 7.7 (p.\ 54)
& \legacyref{2.2}{Theorem} and Appendix~\ref{legacy:section:summary} \\[4pt]
Notation, constant calculations, logarithm accounting, complete algorithms, and external statement locators
& Appendices A--B (pp.\ 56--62); source locators in the bibliography (pp.\ 62--63)
& Appendices~\ref{legacy:section:constants}--\ref{legacy:section:sources}
\end{longtable}
\endgroup

\noindent\textbf{What is added after v1.}
Theorem~\ref{thm:upper} gives the uniform, log-free additive-confidence upper bound for
arbitrary positive finite VC dimension throughout the stated parameter range.
Its new base learner is Proposition~\ref{prop:base}. The shared-seed phase comparison
in Theorem~\ref{thm:phase} and the compatible weak and strong risk certificates in
Theorem~\ref{thm:risk} support that learner. These results and their proofs are in
Sections~\ref{sec:construction}--\ref{sec:confidence} and
Appendices~\ref{app:statistics}--\ref{app:repair}. Combining this extension with the
inherited lower bounds yields Theorem~\ref{thm:main}. The earlier absorption regions
describe the sample budgets of the September algorithms; the phase construction
attains the optimal rate throughout the parameter range.

\begin{center}
\begin{tabular}{>{\raggedright\arraybackslash}p{0.30\textwidth}>{\raggedright\arraybackslash}p{0.31\textwidth}>{\raggedright\arraybackslash}p{0.30\textwidth}}
\toprule
Module or purpose & Direct guarantee & Supporting analysis \\
\midrule
Earlier gate-and-round & \legacyref{4.1}{Theorem}, \legacyref{4.2}{Proposition} &
\legacyref{3.1}{Lemma}--\legacyref{3.4}{Lemma}, \legacyref{3.5}{Proposition} \\
Gate confidence repair & \legacyref{4.4}{Theorem} &
\legacyref{4.3}{Lemma}, \legacyref{4.2}{Proposition} \\
Proper VC-one selection & Theorem~\ref{thm:retained-vcone}; \legacyref{6.2}{Theorem} &
\legacyref{6.1}{Lemma}, ERM estimates \\
Full finite-atom class & Theorem~\ref{thm:retained-atoms}; \legacyref{6.3}{Theorem} &
\legacyref{3.2}{Lemma} \\
Realizable gate-and-round & Theorem~\ref{thm:retained-realizable}; \legacyref{6.4}{Corollary} &
Gate stability, consistent learning, Bernstein \\
SOFT-VC and repair & Theorem~\ref{thm:retained-soft}; \legacyref{7.4}{Theorem}--\legacyref{7.6}{Theorem} &
\legacyref{3.4}{Lemma}, \legacyref{6.1}{Lemma}, \legacyref{7.1}{Lemma}--\legacyref{7.3}{Lemma} \\
Optimality and comparisons & \legacyref{2.2}{Theorem}; Theorem~\ref{thm:lower} &
\legacyref{5.1}{Theorem}, \legacyref{5.2}{Lemma} \\
Interface limitations & Section~\ref{sec:retained} &
\legacyref{6.5}{Remark}, \legacyref{6.6}{Remark}, \legacyref{7.7}{Remark} \\
\bottomrule
\end{tabular}
\end{center}

\begin{figure}[htbp]
\centering
\begin{tikzpicture}[>=Latex,every node/.style={draw=black,fill=white,text=black,
 align=center,text width=3.8cm,minimum height=1cm},node distance=0.55cm and 0.6cm]
\node (stat) {ERM and gate inputs\\S3.1--S3.5};
\node[right=of stat] (order) {Proper core and intervals\\S6.1};
\node[right=of order] (shift) {Soft-shift inequalities\\S7.1--S7.3};
\node[below=of stat] (gate) {Gate-and-round\\S4.1--S4.2};
\node[below=of order] (proper) {Proper VC-one learner\\S6.2};
\node[below=of shift] (soft) {Direct SOFT-VC\\S7.4};
\node[below=of gate] (repair) {Rare repair\\S4.3--S4.4};
\node[below=of soft] (softrepair) {Fixed confidence and repair\\S7.5--S7.6};
\node[below=1.9cm of proper] (summary) {Earlier rate comparison\\S2.2};
\draw[->] (stat)--(gate);
\draw[->] (order)--(proper);
\draw[->] (shift)--(soft);
\draw[->] (gate)--(repair);
\draw[->] (soft)--(softrepair);
\draw[->] (repair)--(summary);
\draw[->] (proper)--(summary);
\draw[->] (softrepair)--(summary);
\end{tikzpicture}
\caption{Selected dependencies among the retained constructions. The proper and soft
branches also use the ERM inputs. The comparison also uses S5.1--S5.2 and the finite-atom
case S6.3. Those arrows, and the separate realizable and counterexample branches,
are omitted here and are specified in the table and the individual proof introductions.}
\label{fig:retained-deps}
\end{figure}
\clearpage

\begingroup
\allowdisplaybreaks
\setlength{\parskip}{3pt}
\subsection{Signed-label notation and conventions}\label{legacy:section:2}

Throughout, \(X\) is a finite or countably infinite set, labels belong to \(\{-1,+1\}\), and \(H\subseteq\{-1,+1\}^{X}\) has finite VC dimension \(d\ge1\). All logarithms are natural. Unless explicitly stated otherwise,
\[
0<\alpha,\beta,\rho<\frac1{10}.
\]

Standard references for the classical tools used below (Hoeffding's inequality, the Sauer--Shelah lemma, symmetrization and chaining for VC classes, the Azuma--Hoeffding martingale inequality, and Bernstein's inequality) are \cite{Hoe63,Sau72,She72,VC71,Dud67,Azu67,BLM13}; they are used in explicit forms stated in the text.

For a distribution \(D\) on \(X\times\{-1,+1\}\), write
\[
w_x=D(X=x),\qquad
y_x=\mathbb E_D[Y\mid X=x],
\]
where \(y_x=0\) when \(w_x=0\). For a binary-valued function \(f\),
\[
e(f):=\operatorname{err}_D(f)
=\sum_{x\in X}w_x\frac{1-f(x)y_x}{2},
\qquad
\operatorname{OPT}:=\inf_{h\in H}e(h).
\]

\par\noindent\textbf{Role and dependencies.} This fixes the local signed-label notation and the strict-probability convention for all retained constructions.

\hypertarget{legacy:2.1}{}
\begin{legacydefinition}{Definition S2.1}{Pointwise replicability and agnostic learning}

A learner \(A\), using a sample \(S\sim D^m\) and a random string \(r\) independent of \(S\), is \(\rho\)-pointwise replicable if
\[
\forall D\ \forall x\in X:\quad
\Pr_{S,S'\sim D^m,r}
\bigl[A(S;r)(x)\ne A(S';r)(x)\bigr]<\rho,
\]
where the two executions use the same \(r\). It is an agnostic \((\alpha,\beta)\)-learner if
\[
\forall D:\quad
\Pr_{S,r}\bigl[e(A(S;r))>\operatorname{OPT}+\alpha\bigr]<\beta.
\]

The shared random string is unrestricted. Countably many independent uniform random variables can be generated by partitioning and interlacing the binary digits of a single uniform random variable on \([0,1]\); a fixed convention handles dyadic expansions. This realizes the shared seed for the constructions below. The general algorithms of \cite{HIY} use the same convention. For finite shared randomness over arbitrary domains, \cite{HIY}, Theorem 1.13, provides an alternative bound with weaker dependence on \(\alpha\) and \(\rho\).

\legacyref{2.1}{Definition} uses strict inequalities, as does \cite{HIY}, Definition 1.1. If replicability is instead required as \(\Pr[\cdot]\le\rho\) and failure as \(\Pr[\cdot]\le\beta\), every statement of this paper is unchanged: all upper bounds are proved with strict inequalities, and the lower-bound arguments of Appendix~\ref{legacy:section:5} use the replicability and accuracy hypotheses only through non-strict inequalities, so they apply verbatim under the non-strict convention.

Let \(m_H(\alpha,\beta,\rho)\) be the least fixed sample size of a learner satisfying both requirements. Let \(m^*(d,\alpha,\beta,\rho)\) be the supremum of \(m_H\) over the classes under consideration with VC dimension \(d\).

Define
\[
\kappa=\frac{1369}{250000\pi^2},
\qquad
\bar\kappa=\frac{3\kappa}{3+50\kappa}.
\]

\end{legacydefinition}

\subsection{Statistical inputs for the earlier constructions}\label{legacy:section:3}

\legacyheading{3.1}{Countable, bias-weighted rounding}

\par\noindent\textbf{Role and dependencies.} This is the rounding input to \legacyref{4.1}{Theorem} and the conditional SMALL-region estimate in \legacyref{7.7}{Remark}.

\hypertarget{legacy:3.1}{}
\begin{legacytheorem}{Lemma S3.1}{Countable Hoeffding inequality and biased rounding}

Let \((U_x)_{x\in I}\) be independent integrable random variables indexed by a countable set. Suppose that \(U_x\) has essential range of length at most \(c_x\), where
\[
\sum_x c_x<\infty.
\]
Then
\[
Z_0=\sum_x(U_x-\mathbb EU_x)
\]
converges absolutely almost surely. If \(V=\sum_xc_x^2>0\), then, for every \(t>0\),
\[
\Pr(Z_0>t)\le \exp\!\left(-\frac{2t^2}{V}\right).
\tag{S3.1}\label{legacy:eq:3.1}
\]
If \(V=0\), then \(Z_0=0\) almost surely. If additionally \(\sum_x\mathbb E|U_x|<\infty\), then \(Z_0=Z-\mathbb EZ\), where \(Z=\sum_xU_x\).

In particular, fix \(R\subseteq X\) and \(p_x\in[-1,1]\). Let \(v_x\) be independent uniforms on \([-1,1]\), and put
\[
q_x=
\begin{cases}
+1,&v_x\le p_x,\\
-1,&v_x>p_x.
\end{cases}
\]
For
\[
Z_R=\sum_{x\in R}w_x\frac{1-q_xy_x}{2},
\qquad
B_R=\sum_{x\in R}w_x^2y_x^2,
\]
one has
\[
\mathbb EZ_R
=\sum_{x\in R}w_x\frac{1-p_xy_x}{2},
\tag{S3.2}\label{legacy:eq:3.2}
\]
\[
\operatorname{Var}(Z_R)
=\frac14\sum_{x\in R}w_x^2y_x^2(1-p_x^2),
\tag{S3.3}\label{legacy:eq:3.3}
\]
and
\[
\Pr(Z_R>\mathbb EZ_R+t)
\le e^{-2t^2/B_R},
\tag{S3.4}\label{legacy:eq:3.4}
\]
with the zero-\(B_R\) convention above. The assertion also holds conditionally when \(R\) and \((p_x)\) are jointly random but independent of \((v_x)\).

\end{legacytheorem}
\begin{proof}

For a bounded random variable \(U\) with range length \(c\), let
\[
F(\lambda)=\log\mathbb Ee^{\lambda U}.
\]
Under exponential tilting,
\[
F''(\lambda)=\operatorname{Var}_{\lambda}(U)\le \frac{c^2}{4}.
\]
The variance bound follows by subtracting the midpoint of an interval containing \(U\): its squared distance from that midpoint is at most \(c^2/4\). Taylor's integral formula gives
\[
\mathbb Ee^{\lambda(U-\mathbb EU)}
\le e^{\lambda^2c^2/8}
\qquad(\lambda\in\mathbb R).
\tag{S3.5}\label{legacy:eq:3.5}
\]

Almost surely, \(|U_x-\mathbb EU_x|\le c_x\) simultaneously for all \(x\). Hence the centered series converges absolutely. For finite partial sums, independence and \eqref{legacy:eq:3.5} give
\[
\mathbb E\exp\!\left(\lambda\sum_{x\le n}(U_x-\mathbb EU_x)\right)
\le
\exp\!\left(\frac{\lambda^2}{8}\sum_{x\le n}c_x^2\right).
\]
The integrands are bounded by \(e^{|\lambda|\sum_xc_x}\). Dominated convergence therefore yields
\[
\mathbb Ee^{\lambda Z_0}\le e^{\lambda^2V/8}.
\]
For \(\lambda>0\), Markov's inequality gives
\[
\Pr(Z_0>t)\le e^{-\lambda t+\lambda^2V/8}.
\]
Choosing \(\lambda=4t/V\) proves \eqref{legacy:eq:3.1}. Absolute integrability of the uncentered series, when assumed, permits termwise expectation.

For the application,
\[
\mathbb E q_x=p_x,\qquad
\operatorname{Var}(q_x)=1-p_x^2.
\]
The \(x\)-summand has range length \(w_x|y_x|\), not \(w_x\). Its range lengths sum to at most \(1\). Equations \eqref{legacy:eq:3.2}--\eqref{legacy:eq:3.4} now follow from the first part. Conditioning on \(R,(p_x)\) gives the final assertion. 

If \(p_x=T^{-1}\sum_{t=1}^T h_t(x)\), then
\[
\mathbb EZ_R
=\frac1T\sum_{t=1}^T
\Pr_D[h_t(X)\ne Y,\ X\in R].
\tag{S3.6}\label{legacy:eq:3.6}
\]

\end{proof}

\legacyheading{3.2}{A self-normalized three-valued gate}

For a sample of size \(M\), define
\[
N_x=\#\{j:X_j=x\},\qquad
C_x=\sum_{j:X_j=x}Y_j,
\]
\[
s_x=
\begin{cases}
C_x/\sqrt{N_x},&N_x>0,\\
0,&N_x=0.
\end{cases}
\]
Given \(\tau\in(0,1)\), put \(a=\sqrt{M\tau}\). Independently for every point and sign, sample shared
\[
u_x^+,u_x^-\sim\operatorname{Unif}[1,2].
\]
Define
\[
\operatorname{TEST}(x)=
\begin{cases}
+1,&s_x\ge au_x^+,\\
-1,&s_x\le-au_x^-,\\
\mathrm{SMALL},&\text{otherwise}.
\end{cases}
\tag{S3.7}\label{legacy:eq:3.7}
\]
The two non-SMALL cases cannot overlap.

\par\noindent\textbf{Role and dependencies.} This certifies the gate used in \legacyref{4.1}{Theorem}, \legacyref{6.3}{Theorem}, and \legacyref{6.4}{Corollary}. It gives both per-query stability and the sign/coverage estimates used for accuracy.

\hypertarget{legacy:3.2}{}
\begin{legacytheorem}{Lemma S3.2}{Self-normalized gate}

The gate \eqref{legacy:eq:3.7} has the following properties.

\begin{enumerate}[leftmargin=1.6em]
\item For every fixed \(x\), including \(w_x=0\), two independent samples with shared thresholds satisfy
\[
\Pr(\operatorname{TEST}_1(x)\ne\operatorname{TEST}_2(x))
\le\frac{\sqrt2}{a}.
\tag{S3.8}\label{legacy:eq:3.8}
\]
\item Its probability of assigning any sign opposite to a nonzero conditional mean is at most
\[
\Pr\bigl(\exists x:\operatorname{TEST}(x)y_x<0,\
\operatorname{TEST}(x)\ne\mathrm{SMALL}\bigr)
\le Me^{-a^2/2}.
\tag{S3.9}\label{legacy:eq:3.9}
\]
\item Its probability of failing to correctly gate any dangerous atom is at most
\[
\Pr\bigl(\exists x:w_xy_x^2>9\tau,\
\operatorname{TEST}(x)\ne\operatorname{sign}(y_x)\bigr)
\le \frac4{9\tau}e^{-a^2/8}.
\tag{S3.10}\label{legacy:eq:3.10}
\]
\end{enumerate}
There is also a deterministic implication:
\[
|s_x-y_x\sqrt{Mw_x}|\le a/2,\quad
w_xy_x^2\ge\frac{25}{4}\tau
\quad\Longrightarrow\quad
\operatorname{TEST}(x)=\operatorname{sign}(y_x).
\tag{S3.11}\label{legacy:eq:3.11}
\]
For noiseless labels, \(N_x\ge25a^2/4\) implies \(|s_x|\ge5a/2\). The count threshold \(25a^2/4\) and the mass threshold \(25\tau/4\) are distinct quantities.

\end{legacytheorem}
\begin{proof}

Fix \(x\), and abbreviate \(w=w_x\), \(y=y_x\), \(N=N_x\), and \(\lambda=Mw\). Conditional on \(N\),
\[
\mathbb E[s_x\mid N]=y\sqrt N,\qquad
\operatorname{Var}(s_x\mid N)=(1-y^2)\mathbf1_{\{N>0\}}.
\]
For \(\lambda>0\),
\[
\operatorname{Var}(\sqrt N)
\le \mathbb E(\sqrt N-\sqrt\lambda)^2
\le \frac{\mathbb E(N-\lambda)^2}{\lambda}
=1-w.
\]
Consequently,
\[
\operatorname{Var}(s_x)
=(1-y^2)\Pr(N>0)+y^2\operatorname{Var}(\sqrt N)
\le1.
\tag{S3.12}\label{legacy:eq:3.12}
\]
For \(w=0\), \(N=s_x=0\), so the same conclusion holds.

Given two realized statistics \(s,s'\), a change of gate output requires a threshold crossing. The positive threshold is uniform on \([a,2a]\), the negative threshold on \([-2a,-a]\), and these supports are disjoint. The sum of the lengths crossed in these two supports is at most \(|s-s'|\). Therefore the conditional disagreement probability is at most \(|s-s'|/a\). Using \eqref{legacy:eq:3.12},
\[
\Pr(\operatorname{TEST}_1\ne\operatorname{TEST}_2)
\le\frac{\mathbb E|s-s'|}{a}
\le\frac{\sqrt{2\operatorname{Var}(s)}}a
\le\frac{\sqrt2}a.
\]

For sign safety, condition on all sample locations. At a point with \(y>0\), a negative output requires \(C_x/\sqrt{N_x}\le-a\). Hoeffding's inequality for the \(N_x\) labels gives probability at most \(e^{-a^2/2}\). The case \(y<0\) has the identical bound with signs reversed. An unobserved point cannot be gated. There are at most \(M\) observed points, proving \eqref{legacy:eq:3.9}. Points with \(y=0\) do not count as wrong signs.

For coverage, the binomial Chernoff bound implies
\[
\Pr(|\sqrt N-\sqrt\lambda|\ge t)\le2e^{-t^2}.
\tag{S3.13}\label{legacy:eq:3.13}
\]
For completeness, the relevant rate function is
\[
h(u)=u\log u-u+1,
\]
and
\[
h(u)-(\sqrt u-1)^2
=2\sqrt u\bigl(\sqrt u\log\sqrt u-\sqrt u+1\bigr)\ge0.
\]
Applying the binomial exponential-moment bound
\[
\mathbb Ee^{zN}\le e^{\lambda(e^z-1)}
\]
to both tails proves \eqref{legacy:eq:3.13}, including the endpoint \(N=0\).

Write the centered label term as zero when \(N=0\). Then
\[
s_x-y\sqrt\lambda
=\frac{C_x-Ny}{\sqrt N}\mathbf1_{\{N>0\}}
+y(\sqrt N-\sqrt\lambda).
\]
Conditional label Hoeffding and \eqref{legacy:eq:3.13} give
\[
\Pr(|s_x-y\sqrt\lambda|>t)
\le2e^{-t^2/8}+2e^{-t^2/4}
\le4e^{-t^2/8}.
\tag{S3.14}\label{legacy:eq:3.14}
\]
If \(wy^2>9\tau\), the absolute mean \( |y|\sqrt\lambda\) exceeds \(3a\). Failure to output the correct sign then requires a deviation greater than \(a\). There are at most \(1/(9\tau)\) such atoms because \(\sum_xw_xy_x^2\le1\). Union bounding \eqref{legacy:eq:3.14} proves \eqref{legacy:eq:3.10}.

Finally, the assumptions of \eqref{legacy:eq:3.11} give
\[
|y_x|\sqrt{Mw_x}\ge5a/2,
\]
so \(s_x\) has the correct sign and magnitude at least \(2a\). For noiseless labels, \(|s_x|=\sqrt{N_x}\), proving the count statement. 

\end{proof}

\legacyheading{3.3}{An explicit VC agnostic learner}

\par\noindent\textbf{Role and dependencies.} This supplies the explicit proper ERM budgets used by the ensemble, reference, and filtering modules. Its moment consequences are derived next in \legacyref{3.4}{Lemma}.

\hypertarget{legacy:3.3}{}
\begin{legacytheorem}{Lemma S3.3}{Explicit VC agnostic learning}

For \(d\ge1\), a deterministic, everywhere-defined proper empirical risk minimizer is an agnostic \((\varepsilon,\delta)\)-learner whenever
\[
n\ge
\left\lceil
\frac{2^{16}}{\varepsilon^2}
\left(d+\log\frac2\delta\right)
\right\rceil,
\qquad 0<\varepsilon,\delta<1.
\tag{S3.15}\label{legacy:eq:3.15}
\]

\end{legacytheorem}
\begin{proof}

We give the entropy, symmetrization, and concentration calculations explicitly.

\textbf{Growth bound.} On \(k\) points, the number of traces satisfies
\[
\Pi_H(k)\le\sum_{j=0}^{\min(d,k)}\binom kj.
\tag{S3.16}\label{legacy:eq:3.16}
\]
To prove this, separate traces according to their last coordinate. The projected traces contribute a family of VC dimension at most \(d\); the projected traces admitting both last-coordinate labels contribute a family of VC dimension at most \(d-1\). Induction in \(k,d\), with Pascal's identity, gives \eqref{legacy:eq:3.16}. For \(k\ge d\),
\[
\sum_{j=0}^d\binom kj\le(ek/d)^d.
\tag{S3.17}\label{legacy:eq:3.17}
\]
Indeed, with \(a=d/k\le1\),
\[
a^d\sum_{j=0}^d\binom kj
\le(1+a)^k\le e^d.
\]

(The simplification \((ek/d)^d\) requires \(k\ge d\); for \(k<d\) we use the trivial bound \(2^k\) instead. The qualification is needed: the full class on four points has \(\Pi_H(1)=2\), whereas \((e/4)^4<1\).)

\textbf{Packing bound.} In the packing argument, the functions are \(\{0,1\}\)-valued. We apply it to the loss class
\[
\mathcal L_H=\{(x,y)\mapsto\mathbf 1[h(x)\ne y]:h\in H\}.
\]
For this class, the \(L_1(Q)\) distance equals the disagreement probability, and its growth function is at most \(\Pi_H\), since fixing the labels only flips trace coordinates. Let a finite family of \(N\ge2\) such \(\{0,1\}\)-valued functions be pairwise separated by more than \(\eta\) in \(L_1(Q)\), where \(0<\eta\le1\). Sample
\[
k=\left\lceil\frac{2\log N}{\eta}\right\rceil
\]
points from \(Q\). A union bound shows that the probability of any two functions having identical traces is less than \(1/2\). Thus \(N\le\Pi_H(k)\).

If \(k<d\), then \(N\le2^d\). If \(k\ge d\), put \(z=N^{1/d}\). Equations \eqref{legacy:eq:3.17} and the definition of \(k\) give
\[
z\le\frac e\eta(2\log z+1).
\]
This implies
\[
N\le(5e/\eta)^{2d}.
\tag{S3.18}\label{legacy:eq:3.18}
\]
To verify the implication, set \(A=5e/\eta>1\). The function
\(z/(2\log z+1)\) is increasing for \(z>\sqrt e\), and
\[
\frac{A^2}{4\log A+1}>\frac A5
\]
because \(5A>4\log A+1\). Thus \(z>A^2\) contradicts the preceding inequality. When \(\eta>1\), a packing contains at most one element; formula \eqref{legacy:eq:3.18} is used only for \(0<\eta\le1\).

\textbf{Empirical chaining.} Fix an \(n\)-point labeled sample. Its loss vectors form a finite subset \(V\subseteq\{0,1\}^n\). Label-dependent coordinate flips preserve the growth bound. Equip \(V\) with the normalized Euclidean norm
\[
\|v\|_n^2=\frac1n\sum_{i=1}^nv_i^2.
\]
Choose nets \(V_k\) of radius \(2^{-k}\), obtained from maximal packings at squared-distance scale \(4^{-k}\), and let \(V_0\) be a singleton. By \eqref{legacy:eq:3.18},
\[
\log(2|V_k||V_{k-1}|)\le16dk,\qquad k\ge1.
\tag{S3.19}\label{legacy:eq:3.19}
\]
For example, the left side is at most
\[
\log2+4d\log(5e)+2d(2k-1)\log4
\le d(6k+10)\le16dk.
\]

For each \(v\in V\), choose \(\pi_k(v)\in V_k\) with \(\|v-\pi_k(v)\|_n\le2^{-k}\), and let
\[
\mathcal I_k=\{\pi_k(v)-\pi_{k-1}(v):v\in V\}
\]
be the set of chain increments. Then \(|\mathcal I_k|\le|V_k||V_{k-1}|\), and every member of \(\mathcal I_k\) has norm at most \(3\cdot2^{-k}\). For independent Rademacher signs \(\sigma_i\), every chain increment therefore has norm at most \(3\cdot2^{-k}\). The exponential-moment bound for a Rademacher sum and the maximum bound obtained by summing its exponential moments give
\[
\mathbb E_\sigma
\max_{\text{increments }a}
\left|\frac1n\sum_i\sigma_i a_i\right|
\le
3\cdot2^{-k}\sqrt{\frac{2\log(2|V_k||V_{k-1}|)}n}.
\]
The singleton term is at most \(1/\sqrt n\). Since
\[
\sum_{k\ge1}2^{-k}\sqrt k
\le\sum_{k\ge1}2^{-k}k=2,
\]
telescoping along the nets yields
\[
\mathbb E_\sigma\sup_{v\in V}
\left|\frac1n\sum_i\sigma_i v_i\right|
\le(1+24\sqrt2)\sqrt{\frac dn}
<35\sqrt{\frac dn}.
\tag{S3.20}\label{legacy:eq:3.20}
\]
The empirical vector set is finite, so sufficiently fine nets reproduce every vector; the telescoping argument has no limiting ambiguity.

\textbf{Uniform deviation.} Put
\[
Z=\sup_{h\in H}|P_n\ell_h-P\ell_h|.
\]
Introduce an independent ghost sample, apply conditional Jensen, randomize the two samples by coordinatewise swaps, and use the triangle inequality. These steps give
\[
\mathbb EZ
\le2\,\mathbb E_{S,\sigma}
\sup_h\left|\frac1n\sum_i\sigma_i\ell_h(Z_i)\right|
\le70\sqrt{\frac dn}.
\tag{S3.21}\label{legacy:eq:3.21}
\]
Changing one observation changes \(Z\) by at most \(1/n\). The Doob martingale exposing the sample therefore has conditionally centered increments with conditional range length at most \(1/n\). Applying \eqref{legacy:eq:3.5} conditionally and iterating gives
\[
\Pr(Z>\mathbb EZ+t)\le e^{-2nt^2}.
\tag{S3.22}\label{legacy:eq:3.22}
\]

Let \(q=\log(2/\delta)\). Under \eqref{legacy:eq:3.15}, with probability at least \(1-\delta\),
\[
Z\le70\sqrt{\frac dn}+\sqrt{\frac q{2n}}
\le\frac{\sqrt{4900+1/2}}{256}\varepsilon
=\frac{99}{256\sqrt2}\varepsilon
<\frac\varepsilon2.
\tag{S3.23}\label{legacy:eq:3.23}
\]
The constant \(2^{16}=256^2\) was chosen precisely to make this inequality hold.

An empirical minimum exists because empirical risks take values in the finite set \(\{0,1/n,\ldots,1\}\). Since the labeled sample space is countable, one can fix a minimizer for every sample; this defines a measurable, deterministic ERM. On the event in \eqref{legacy:eq:3.23}, for every \(h\in H\),
\[
e(\widehat h)\le P_n\ell_{\widehat h}+Z
\le P_n\ell_h+Z
\le e(h)+2Z.
\]
Taking the infimum over \(h\) proves the result even when OPT is not attained. 

\end{proof}

\legacyheading{3.4}{Square-exponential moments of ERM excess risk}

Set
\[
K_E=2^{24}.
\]
This constant equals \(2^{16}\cdot16^2\), the constant obtained from \legacyref{3.3}{Lemma} at accuracy \(t/16\).

\par\noindent\textbf{Role and dependencies.} This converts \legacyref{3.3}{Lemma} into the tail and moment bounds used in \legacyref{3.5}{Proposition}, \legacyref{6.2}{Theorem}, and \legacyref{7.4}{Theorem}.

\hypertarget{legacy:3.4}{}
\begin{legacytheorem}{Lemma S3.4}{ERM square-exponential moment}

Let \(\widehat h_n\) be a proper ERM based on \(n\ge1\) observations, and let
\[
Z_n=e(\widehat h_n)-\operatorname{OPT}\in[0,1].
\]
Then, for every \(t>0\),
\[
\Pr(Z_n>t)\le\min\{1,20e^{d-nt^2/K_E}\},
\tag{S3.24}\label{legacy:eq:3.24}
\]
\[
\mathbb E\exp\!\left(\frac{nZ_n^2}{2K_E}\right)\le21e^d,
\tag{S3.25}\label{legacy:eq:3.25}
\]
and
\[
\mathbb EZ_n\le2^{14}\sqrt{\frac dn}.
\tag{S3.26}\label{legacy:eq:3.26}
\]

\end{legacytheorem}
\begin{proof}

Only the case \(0<t\le1\) and
\(\delta_t:=20e^{d-nt^2/K_E}<1\) needs proof. Applying \legacyref{3.3}{Lemma} at accuracy \(t/16\) and confidence \(\delta_t\) is valid because its required sample size is
\[
\left\lceil
\frac{K_E}{t^2}\left(d+\log\frac2{\delta_t}\right)
\right\rceil
=
\left\lceil n-\frac{K_E\log10}{t^2}\right\rceil
\le n.
\]
Thus \(\Pr(Z_n>t)\le\Pr(Z_n>t/16)<\delta_t\).

For a nonnegative random variable, tail integration gives
\[
\begin{aligned}
\mathbb Ee^{nZ_n^2/(2K_E)}
&=1+\int_0^\infty e^s
\Pr\!\left(\frac{nZ_n^2}{2K_E}>s\right)\,ds\\
&\le1+20e^d\int_0^\infty e^{-s}\,ds
\le21e^d.
\end{aligned}
\]
Jensen's inequality then gives
\[
(\mathbb EZ_n)^2
\le\mathbb EZ_n^2
\le\frac{2K_E}{n}(d+\log21)
\le\frac{2^{28}d}{n},
\]
where \(d\ge1\) and \(2(1+\log21)<16\) were used. This proves \eqref{legacy:eq:3.26}. 

\end{proof}
\par\noindent\textbf{Role and dependencies.} This averages independent ERM excess risks using \legacyref{3.4}{Lemma}. It supports the product-moment block choice in \legacyref{4.2}{Proposition}.

\hypertarget{legacy:3.5}{}
\begin{legacytheorem}{Proposition S3.5}{Independent ERM blocks without a confidence logarithm per block}

Let \(0<\gamma<1\), and train \(T\) independent proper ERMs, each on
\[
n=
\left\lceil
\frac{8K_E}{\alpha^2}
\left(d+\log21+\frac{\log(1/\gamma)}T\right)
\right\rceil
\tag{S3.27}\label{legacy:eq:3.27}
\]
observations. Then
\[
\Pr\!\left(
\frac1T\sum_{t=1}^T e(h_t)>
\operatorname{OPT}+\frac\alpha2
\right)\le\gamma.
\tag{S3.28}\label{legacy:eq:3.28}
\]

\end{legacytheorem}
\begin{proof}

Let \(Z_t=e(h_t)-\operatorname{OPT}\). Independence and \eqref{legacy:eq:3.25} imply
\[
\mathbb E\exp\!\left(\frac n{2K_E}\sum_{t=1}^TZ_t^2\right)
\le(21e^d)^T.
\]
If \(T^{-1}\sum_tZ_t>\alpha/2\), Cauchy--Schwarz gives
\[
\sum_tZ_t^2>\frac{T\alpha^2}{4}.
\]
Markov's inequality therefore bounds the probability by
\[
\exp\!\left(T(d+\log21)-\frac{nT\alpha^2}{8K_E}\right)
\le\gamma.
\]
The factor \(8K_E=2^{27}\) comes from the moment exponent \(1/(2K_E)\) and the threshold \((\alpha/2)^2\). 

\end{proof}

\subsection{Gate-and-round and rare repair}\label{legacy:section:4}

\legacyheading{4.1}{The gate-and-round conversion}

The shared string is partitioned into:

\begin{itemize}[leftmargin=1.6em]
\item independent gate thresholds \((u_x^+,u_x^-)_{x\in X}\);
\item independent rounding uniforms \((v_x)_{x\in X}\);
\item independent base-learner seeds \(r_1,\ldots,r_T\), shared between the two executions.
\end{itemize}
All three parts are mutually independent. In the explicit constructions, the ERMs are deterministic and the base seeds may be omitted.

\par\noindent\textbf{Role and dependencies.} This certifies the gate-and-round module of Section~\ref{sec:soft-retained}. The gate estimate is \legacyref{3.2}{Lemma}; the final rounding estimate is \legacyref{3.1}{Lemma}.

\hypertarget{legacy:4.1}{}
\begin{legacytheorem}{Theorem S4.1}{Generic pointwise-replicable conversion}

Set
\[
L=\log\frac4\beta,\qquad
\tau=\frac{\alpha^2}{18L},\qquad
K=\log\frac{16}{\tau\beta}
=\log\frac{288L}{\alpha^2\beta},
\tag{S4.1}\label{legacy:eq:4.1}
\]
\[
M=\left\lceil\frac{64}{\tau}(\rho^{-2}+K)\right\rceil,
\qquad
T\ \text{any integer with}\ T\ge\lceil\rho^{-2}\rceil.
\tag{S4.2}\label{legacy:eq:4.2}
\]
Use \(M\) observations for the gate in \legacyref{3.2}{Lemma} and \(T\) independent training blocks to obtain \(h_1,\ldots,h_T\). Put
\[
p_x=\frac1T\sum_{t=1}^Th_t(x),
\]
and output
\[
f(x)=
\begin{cases}
\operatorname{TEST}(x),&\operatorname{TEST}(x)\ne\mathrm{SMALL},\\
+1,&\operatorname{TEST}(x)=\mathrm{SMALL},\ v_x\le p_x,\\
-1,&\operatorname{TEST}(x)=\mathrm{SMALL},\ v_x>p_x.
\end{cases}
\tag{S4.3}\label{legacy:eq:4.3}
\]
If
\[
\Pr\!\left(\frac1T\sum_te(h_t)>
\operatorname{OPT}+\frac\alpha2\right)\le\frac\beta4,
\tag{S4.4}\label{legacy:eq:4.4}
\]
then \(f\) is an agnostic \((\alpha,\beta)\)-learner and is \(\rho\)-pointwise replicable. Its respective bounds are
\[
\Pr(e(f)>\operatorname{OPT}+\alpha)
\le\frac{85}{144}\beta<\beta,
\tag{S4.5}\label{legacy:eq:4.5}
\]
\[
\Pr(f_1(x)\ne f_2(x))
\le\frac{5}{4\sqrt2}\rho<\rho
\qquad\text{for every fixed }x.
\tag{S4.6}\label{legacy:eq:4.6}
\]
The proof uses \(T\) only through \(1/\sqrt{2T}\le\rho/\sqrt2\); \legacyref{4.4}{Theorem} applies the statement with \(T_0=\lceil32/\rho^2\rceil\).

\end{legacytheorem}
\begin{proof}

Let \(z=a^2=M\tau\). By construction,
\[
z\ge64(\rho^{-2}+K)>6400.
\]
For \(z\ge6400\), \(\log z\le z/4\). Hence
\[
Me^{-z/2}
=\frac z\tau e^{-z/2}
\le\frac1\tau e^{-z/4}
\le\frac1\tau e^{-K}
=\frac\beta{16}.
\tag{S4.7}\label{legacy:eq:4.7}
\]
Also,
\[
\frac4{9\tau}e^{-z/8}
\le\frac4{9\tau}e^{-K}
=\frac\beta{36}.
\tag{S4.8}\label{legacy:eq:4.8}
\]

Let \(G\) be the event that every gated point has a Bayes-optimal sign and every point with \(w_xy_x^2>9\tau\) is correctly gated. \legacyref{3.2}{Lemma} and \eqref{legacy:eq:4.7}--\eqref{legacy:eq:4.8} imply
\[
\Pr(G^c)\le\frac{13}{144}\beta<\frac\beta8.
\tag{S4.9}\label{legacy:eq:4.9}
\]

Condition on the sigma-field \(\mathcal F\) generated by:

\begin{itemize}[leftmargin=1.6em]
\item all gate and base-training observations;
\item all gate thresholds;
\item all base-learner seeds;
\end{itemize}
but \textbf{not} the rounding uniforms. Let
\[
\Sigma=\{x:\operatorname{TEST}(x)\ne\mathrm{SMALL}\},
\qquad R=X\setminus\Sigma.
\]
On \(G\), replacing any base prediction by the gate's Bayes sign cannot increase risk. Thus
\[
\mathbb E_{\rm round}[e(f)\mid\mathcal F]
\le\frac1T\sum_te(h_t).
\tag{S4.10}\label{legacy:eq:4.10}
\]
Moreover,
\[
B_R=\sum_{x\in R}w_x^2y_x^2
\le\left(\sup_{x\in R}w_xy_x^2\right)\sum_{x\in R}w_x
\le9\tau=\frac{\alpha^2}{2L}.
\]
\legacyref{3.1}{Lemma}, with \(t=\alpha/2\), gives conditional rounding failure probability at most
\[
e^{-L}=\frac\beta4.
\tag{S4.11}\label{legacy:eq:4.11}
\]
Combining \eqref{legacy:eq:4.4}, \eqref{legacy:eq:4.9}, and \eqref{legacy:eq:4.11} gives
\[
\frac{13}{144}\beta+\frac\beta4+\frac\beta4
=\frac{85}{144}\beta.
\]

For replicability, fix an arbitrary \(x\). No accuracy event is used. Gate disagreement is at most \(\sqrt2/a\). If both gates are SMALL, shared-uniform rounding disagrees with conditional probability
\[
\frac12|p_x^{(1)}-p_x^{(2)}|.
\]
Conditioned on the shared base seeds, the two predictions within each block are independent and identically distributed. Therefore
\[
\mathbb E[h_t^{(1)}(x)-h_t^{(2)}(x)\mid r_t]=0,
\qquad
\mathbb E[(h_t^{(1)}(x)-h_t^{(2)}(x))^2\mid r_t]\le2.
\]
Independence across blocks gives
\[
\mathbb E(p_x^{(1)}-p_x^{(2)})^2\le\frac2T.
\]
Consequently,
\[
\Pr(f_1(x)\ne f_2(x))
\le\frac{\sqrt2}{a}+\frac1{\sqrt{2T}}
\le\left(\frac{\sqrt2}{8}+\frac1{\sqrt2}\right)\rho
=\frac5{4\sqrt2}\rho<\rho.
\]
This includes zero-mass points. 

\end{proof}
\par\noindent\textbf{Role and dependencies.} This supplies three concrete ensemble budgets for \legacyref{4.1}{Theorem}. It uses \legacyref{3.3}{Lemma} and \legacyref{3.5}{Proposition}, and contributes the direct gate route to \legacyref{2.2}{Theorem}.

\hypertarget{legacy:4.2}{}
\begin{legacytheorem}{Proposition S4.2}{Three block strategies and explicit costs}

The hypothesis of \legacyref{4.1}{Theorem} can be met by each of the following strategies.

For the sample-cost statements below, namely \eqref{legacy:eq:4.12}, \eqref{legacy:eq:4.15}, \eqref{legacy:eq:4.16}, and \eqref{legacy:eq:2.2}, take \(T=\lceil\rho^{-2}\rceil\); the accuracy guarantees hold for every admissible \(T\ge\lceil\rho^{-2}\rceil\) of \legacyref{4.1}{Theorem}.

U --- per-block union bound:
\[
n_U=
\left\lceil
\frac{2^{18}}{\alpha^2}
\left(d+\log\frac{8T}{\beta}\right)
\right\rceil.
\tag{S4.12}\label{legacy:eq:4.12}
\]

M --- Markov control of bad blocks:
\[
n_M=
\left\lceil
\frac{2^{20}}{\alpha^2}
\left(d+\log\frac{32}{\alpha\beta}\right)
\right\rceil.
\tag{S4.13}\label{legacy:eq:4.13}
\]

P --- square-exponential moments:
\[
n_P=
\left\lceil
\frac{2^{27}}{\alpha^2}
\left(d+\log21+\frac{\log(8/\beta)}T\right)
\right\rceil.
\tag{S4.14}\label{legacy:eq:4.14}
\]
For strategy P, the final accuracy failure probability is at most \(67\beta/144<\beta/2\). Its exact sample count is
\[
N_P=M+Tn_P,\qquad
M=\left\lceil
\frac{1152L}{\alpha^2}(\rho^{-2}+K)
\right\rceil,
\tag{S4.15}\label{legacy:eq:4.15}
\]
and
\[
N_P\le
\frac{2^{30}}{\alpha^2}
\left(\frac{d+L}{\rho^2}+LK\right).
\tag{S4.16}\label{legacy:eq:4.16}
\]
Choosing between U and M also gives \eqref{legacy:eq:2.2}.

\end{legacytheorem}
\begin{proof}

For U, use \legacyref{3.3}{Lemma} with accuracy \(\alpha/2\) and failure probability \(\beta/(4T)\). If all \(T\) blocks succeed, their average excess risk is at most \(\alpha/2\). The union bound proves \eqref{legacy:eq:4.4}.

For M, use accuracy \(\alpha/4\) and failure probability \(\alpha\beta/16\). Let \(F_{\rm bad}\) be the fraction of unsuccessful blocks. Then
\[
\mathbb EF_{\rm bad}<\frac{\alpha\beta}{16},
\qquad
\Pr(F_{\rm bad}>\alpha/4)\le\frac\beta4.
\]
Since every risk is at most \(1\) and OPT is nonnegative,
\[
\frac1T\sum_te(h_t)-\operatorname{OPT}
\le\frac\alpha4+F_{\rm bad}.
\]
This proves \eqref{legacy:eq:4.4}.

For P, apply \legacyref{3.5}{Proposition} with \(\gamma=\beta/8\), giving
\[
\Pr\!\left(\frac1T\sum_te(h_t)>
\operatorname{OPT}+\alpha/2\right)\le\frac\beta8.
\tag{S4.17}\label{legacy:eq:4.17}
\]

Here is the complete bridge from this average-risk statement to the final learner. Use the gate event \(G\) from \eqref{legacy:eq:4.9}, whose probability is at least \(1-13\beta/144>1-\beta/8\). Condition on the base samples, the gate sample, the base seeds, and the gate thresholds, but leave the rounding uniforms unconditioned. On \(G\), \eqref{legacy:eq:4.10} bounds the conditional rounded risk by the average base risk. On \(G\) and the complement of the event in \eqref{legacy:eq:4.17}, \legacyref{3.1}{Lemma} bounds the remaining rounding failure by \(\beta/4\). Thus the total failure probability is
\[
\frac{13}{144}\beta+\frac18\beta+\frac14\beta
=\frac{67}{144}\beta<\frac\beta2.
\tag{S4.18}\label{legacy:eq:4.18}
\]
No assertion about a fixed realization of the rounding randomness is used.

One can also certify \(G\) directly with the sharper counting observation that a possibly gated point must satisfy \(N_x\ge a^2\), so there are at most \(1/\tau\) such points. For a dangerous point with \(\lambda=Mw_x\), except with probability \(2e^{-a^2/8}\),
\[
N_x\ge\frac34\lambda,\qquad
C_x\operatorname{sign}(y_x)\ge\frac45N_x|y_x|.
\]
Indeed, the count failure is at most \(e^{-\lambda/32}\le e^{-a^2/8}\), and the conditional label failure is at most
\[
e^{-N_xy_x^2/50}
\le e^{-27a^2/200}
\le e^{-a^2/8}.
\]
These inequalities force
\[
|s_x|\ge\frac{6\sqrt3}{5}a>2a.
\]
This alternative certification is consistent with, but not needed to improve, the budget \eqref{legacy:eq:4.18}.

For the cost, \(64/\tau=1152L/\alpha^2\), proving \eqref{legacy:eq:4.15}. With \(T=\lceil\rho^{-2}\rceil\) and \(\rho<1/10\), since
\[
T\le2\rho^{-2},\qquad
d+\log21\le d+L,\qquad
\log(8/\beta)\le2L,
\]
the base cost is at most
\[
\frac{2^{29}}{\alpha^2}\frac{d+L}{\rho^2}+2\rho^{-2}.
\]
Adding the gate cost and the ceiling gives \eqref{legacy:eq:4.16}; the coefficient
\(2^{29}+1155\) is less than \(2^{30}\).

Finally, write
\[
B_\rho=d+\log(32/\beta)+\log(1/\rho),
\qquad
B_\alpha=d+\log(32/\beta)+\log(1/\alpha).
\]
Because \(T=\lceil\rho^{-2}\rceil\) and \(\rho<1/10\),
\[
T\le\frac{101}{100}\rho^{-2}.
\]
For U,
\[
d+\log(8T/\beta)\le2B_\rho,
\]
while the bracket for M equals \(B_\alpha\). Select U when \(B_\rho\le B_\alpha\), and M otherwise. With \(\mathcal B=\min(B_\rho,B_\alpha)\), the selected block size is at most
\[
2^{20}\mathcal B/\alpha^2+1.
\]
The gate's \(\rho^{-2}\) term is at most
\(1152\mathcal B/(\rho^2\alpha^2)\), since \(L\le\mathcal B\). Including both ceilings, the coefficient is bounded by
\[
\frac{101}{100}(2^{20}+1)+1153
=1060215.77<2^{21}.
\]
The remaining gate term is \(1152LK/\alpha^2\), proving \eqref{legacy:eq:2.2}. 

\end{proof}

\legacyheading{4.2}{Rare repair and additive confidence}

\par\noindent\textbf{Role and dependencies.} This is the earlier rare-repair interface behind Section~\ref{sec:confidence}. It combines a reliable stable base with a proper reference from \legacyref{3.3}{Lemma} and independent validation.

\hypertarget{legacy:4.3}{}
\begin{legacytheorem}{Lemma S4.3}{Rare repair}

Suppose \(B\) is \(\rho/2\)-pointwise replicable and
\[
\Pr(e(B)>\operatorname{OPT}+\alpha/4)<\rho/64.
\tag{S4.19}\label{legacy:eq:4.19}
\]
Set
\[
\eta=\min\{\beta/8,\rho/128\}.
\]
Independently of the sample used by \(B\), train a proper ERM \(g\) on
\[
n_{\rm ref}=
\left\lceil
\frac{2^{22}}{\alpha^2}
\left(d+\log\frac2\eta\right)
\right\rceil
\tag{S4.20}\label{legacy:eq:4.20}
\]
observations. Use a further independent validation sample of size
\[
n_{\rm val}=
\left\lceil
\frac{128}{\alpha^2}\log\frac4\eta
\right\rceil.
\tag{S4.21}\label{legacy:eq:4.21}
\]
Writing \(f=B(S)\), output
\[
F=
\begin{cases}
f,&\widehat e(f)\le\widehat e(g)+\alpha/2,\\
g,&\text{otherwise}.
\end{cases}
\tag{S4.22}\label{legacy:eq:4.22}
\]
Then \(F\) is an agnostic \((\alpha,\beta)\)-learner and
\[
\Pr(F_1(x)\ne F_2(x))
<\frac{35}{64}\rho<\rho
\qquad\text{for every fixed }x.
\tag{S4.23}\label{legacy:eq:4.23}
\]

\end{legacytheorem}
\begin{proof}

\legacyref{3.3}{Lemma}, at accuracy \(\alpha/8\), explains the constant
\(2^{22}=2^{16}\cdot8^2\) in \eqref{legacy:eq:4.20}, and gives
\[
\Pr(e(g)>\operatorname{OPT}+\alpha/8)<\eta.
\tag{S4.24}\label{legacy:eq:4.24}
\]
Conditionally on \(f,g\), validation Hoeffding and a union bound give
\[
\Pr\left(
|\widehat e(f)-e(f)|>\alpha/16
\ \text{or}\
|\widehat e(g)-e(g)|>\alpha/16
\right)
\le4e^{-n_{\rm val}\alpha^2/128}\le\eta.
\tag{S4.25}\label{legacy:eq:4.25}
\]

Define the branch indicator
\[
I=\mathbf 1\{\widehat e(f)>\widehat e(g)+\alpha/2\},\qquad W=\{I=1\}.
\]
Thus \(W\) denotes selection of the repair branch, not equality of the output function with \(g\); set \(F=f\) when \(I=0\) and \(F=g\) when \(I=1\). Properness gives
\(e(g)\ge\operatorname{OPT}\) on every realization, whether or not \eqref{legacy:eq:4.24} succeeds. If \(e(f)\le\operatorname{OPT}+\alpha/4\) and neither validation deviation in \eqref{legacy:eq:4.25} occurs, then
\[
\widehat e(f)-\widehat e(g)
\le\alpha/4+2(\alpha/16)=3\alpha/8<\alpha/2.
\]
Therefore
\[
W\subseteq
\{e(f)>\operatorname{OPT}+\alpha/4\}
\cup\{\text{a validation deviation in \eqref{legacy:eq:4.25}}\},
\]
and
\[
\Pr(W)<\rho/64+\eta\le3\rho/128.
\tag{S4.26}\label{legacy:eq:4.26}
\]

For two executions,
\[
\{F_1(x)\ne F_2(x)\}
\subseteq
\{f_1(x)\ne f_2(x)\}\cup W_1\cup W_2.
\]
This inclusion covers all four repair/no-repair combinations and requires no independence between \(W_1,W_2\). Equations \eqref{legacy:eq:4.19} and \eqref{legacy:eq:4.26} give
\[
\Pr(F_1(x)\ne F_2(x))
<\rho/2+6\rho/128
=\frac{35}{64}\rho.
\]

For accuracy, assume \eqref{legacy:eq:4.24} succeeds and neither validation deviation occurs. If \(F=g\), then \(e(F)\le\operatorname{OPT}+\alpha/8\). If \(F=f\), then
\[
e(f)\le e(g)+\alpha/2+2(\alpha/16)
\le\operatorname{OPT}+3\alpha/4.
\]
The failure probability is at most \(2\eta\le\beta/4<\beta\). 

Properness is used through the realization-wise inequality \(e(g)\ge\operatorname{OPT}\); this inequality is the property needed to make repairs rare.

\end{proof}
\par\noindent\textbf{Role and dependencies.} This applies \legacyref{4.3}{Lemma} to the gate learner of \legacyref{4.2}{Proposition}, giving the repaired gate route in \legacyref{2.2}{Theorem}.

\hypertarget{legacy:4.4}{}
\begin{legacytheorem}{Theorem S4.4}{Additive-confidence upper bound}

Let
\[
\ell=\log\frac{256}{\rho},\qquad
K_0=\log\frac{294912\ell}{\alpha^2\rho},
\]
\[
T_0=\left\lceil\frac{32}{\rho^2}\right\rceil,
\qquad
\tau_0=\frac{\alpha^2}{288\ell},
\]
\[
M_0=
\left\lceil
\frac{18432\ell}{\alpha^2}
\left(\frac4{\rho^2}+K_0\right)
\right\rceil,
\tag{S4.27}\label{legacy:eq:4.27}
\]
\[
n_0=
\left\lceil
\frac{2^{22}}{\alpha^2}
\left(d+\log\frac{512T_0}{\rho}\right)
\right\rceil.
\tag{S4.28}\label{legacy:eq:4.28}
\]
Run the conversion with target parameters
\[
(\text{accuracy},\text{failure},\text{replicability})
=(\alpha/4,\rho/64,\rho/2),
\]
using \(T_0\) blocks, each of size \(n_0\), and then apply \legacyref{4.3}{Lemma}. The resulting sample count
\[
N_R=M_0+T_0n_0+n_{\rm ref}+n_{\rm val}
\tag{S4.29}\label{legacy:eq:4.29}
\]
satisfies
\[
N_R\le
\frac{2^{30}}{\alpha^2}
\left[
\frac{d+\ell}{\rho^2}
+\ell K_0+\log\frac1\beta
\right].
\tag{S4.30}\label{legacy:eq:4.30}
\]

\end{legacytheorem}
\begin{proof}

Substituting the indicated target parameters into \eqref{legacy:eq:4.1} gives
\[
\log\frac4{\rho/64}=\ell,\qquad
\tau_0=\frac{(\alpha/4)^2}{18\ell},
\]
and
\[
\log\frac{16}{\tau_0(\rho/64)}
=\log\frac{16\cdot288\cdot64\,\ell}{\alpha^2\rho}
=K_0.
\]
The product \(64\cdot288=18432\) gives \eqref{legacy:eq:4.27}. Strategy U at accuracy \(\alpha/8\) and block failure \(\rho/(256T_0)\) gives \eqref{legacy:eq:4.28}.

The minimal block count supplied by \legacyref{4.1}{Theorem} would be
\(\lceil4/\rho^2\rceil\); the larger \(T_0\) is valid and only decreases rounding disagreement. Thus the base learner satisfies the hypotheses of \legacyref{4.3}{Lemma}.

For the explicit bound, \(T_0\le33/\rho^2\), and
\[
\log\frac{512T_0}{\rho}
\le\log16896+3\log\frac1\rho
\le3\ell.
\]
Hence
\[
T_0n_0
\le
\frac{99\cdot2^{22}}{\alpha^2\rho^2}(d+\ell)
+\frac{33}{\rho^2}.
\]
Also,
\[
\log\frac2\eta
=\max\left\{\log\frac{16}{\beta},\log\frac{256}{\rho}\right\}
\le\log\frac1\beta+\ell,
\]
and
\[
\log\frac4\eta
\le\log\frac1\beta+2\ell.
\]
Adding the gate, reference, validation, and ceiling terms proves \eqref{legacy:eq:4.30}. For example, the coefficient of \((d+\ell)/(\rho^2\alpha^2)\) is bounded by
\[
99\cdot2^{22}+73728+2^{22}+256+40
=419504424<2^{30};
\]
the coefficients of \(\ell K_0/\alpha^2\) and \(\log(1/\beta)/\alpha^2\) are smaller. 

\end{proof}

\subsection{Earlier lower bounds and constant optimization}\label{legacy:section:5}

\legacyheading{5.1}{A direct multi-coordinate lower bound}

We average acceptance-probability paths over shattered coordinates and integrate a
cubic potential. Replicability at intermediate biases controls these paths and yields
the dimension-dependent lower bound with the stated failure-probability range and
explicit constants. A related one-coordinate coin-testing bound appears in
\cite{ILPS}, Lemma 7.2.

\par\noindent\textbf{Role and dependencies.} This retains the dimension-dependent lower bound, its bias-family formula, and its explicit constants. It is the earlier source of the dimension term in Theorem~\ref{thm:lower} and of the comparisons in \legacyref{2.2}{Theorem}.

\hypertarget{legacy:5.1}{}
\begin{legacytheorem}{Theorem S5.1}{Replicability lower bound in every positive dimension}

For every class \(H\) of VC dimension \(d\ge1\), every agnostic \(\rho\)-pointwise replicable \((\alpha,\beta)\)-learner satisfies
\[
m\ge\kappa\frac d{\rho^2\alpha^2},
\qquad
\kappa=\frac{1369}{250000\pi^2}.
\tag{S5.1}\label{legacy:eq:5.1}
\]
The assertion allows improper outputs, unlimited shared randomness, and arbitrary dependence among predictions.

More generally, for \(0<\gamma<1\),
\[
m\ge
\frac d{9\rho^2\arcsin^2\gamma}
\left[
1-\frac94
\left(\beta+(1-\beta)\frac\alpha\gamma\right)
\right]_+^2.
\tag{S5.2}\label{legacy:eq:5.2}
\]

\end{legacytheorem}
\begin{proof}

Choose shattered points \(x_1,\ldots,x_d\). For \(v\in\{-1,+1\}^d\), let \(D_v\) be uniform on these points with conditional means \(\gamma v_i\). Shattering gives
\[
\operatorname{OPT}=\frac{1-\gamma}{2}.
\]
For every output \(f\),
\[
e(f)-\operatorname{OPT}
=\frac\gamma d\#\{i:f(x_i)\ne v_i\}.
\tag{S5.3}\label{legacy:eq:5.3}
\]
If \(\gamma>\alpha\), accuracy implies that the mean fraction of coordinate errors, averaged over \(v\), samples, and randomness, obeys
\[
\bar e
\le\frac\alpha\gamma
+\left(1-\frac\alpha\gamma\right)\Pr(\text{accuracy failure})
<
\beta+(1-\beta)\frac\alpha\gamma.
\tag{S5.4}\label{legacy:eq:5.4}
\]

Fix \(i,v_{-i},r\), and vary only the \(i\)-th conditional mean \(t\in[-\gamma,\gamma]\). Define
\[
g_r(t)=
\Pr_{S\sim D_t^m}[A(S;r)(x_i)=+1].
\]
Because the sample support is finite, \(g_r\) is a polynomial in \(t\). The likelihood score is
\[
W_t=\sum_{j=1}^m
\mathbf1_{\{X_j=x_i\}}\frac{Y_j-t}{1-t^2}.
\]
It has mean zero and variance
\[
\operatorname{Var}(W_t)=\frac m{d(1-t^2)}.
\]
Differentiating the finite likelihood sum gives
\[
g_r'(t)=
\operatorname{Cov}\bigl(\mathbf1_{\{A(S;r)(x_i)=+1\}},W_t\bigr),
\]
so
\[
|g_r'(t)|
\le
\sqrt{\frac m{d(1-t^2)}}\sqrt{g_r(t)(1-g_r(t))}.
\tag{S5.5}\label{legacy:eq:5.5}
\]

For every intermediate distribution \(D_t\), pointwise replicability gives
\[
2\mathbb E_r[g_r(t)(1-g_r(t))]<\rho.
\tag{S5.6}\label{legacy:eq:5.6}
\]
This is where replicability for all distributions, rather than only the two endpoint distributions, is used.

Let
\[
\Psi(u)=3u^2-2u^3.
\]
It satisfies \(\Psi(1-u)=1-\Psi(u)\) and
\[
\frac98u-\Psi(u)=2u(u-\tfrac34)^2\ge0.
\]
Therefore
\[
\Psi(g_r(\gamma))-\Psi(g_r(-\gamma))
\ge
1-\frac98\bigl(1-g_r(\gamma)+g_r(-\gamma)\bigr).
\]
Averaging over \(i,v_{-i},r\), the right side is at least
\[
1-\frac94\bar e.
\tag{S5.7}\label{legacy:eq:5.7}
\]

On the other hand, using \eqref{legacy:eq:5.5} and
\(\sqrt{u(1-u)}\le1/2\),
\[
\left|\frac{d}{dt}\Psi(g_r(t))\right|
\le
3\sqrt{\frac m{d(1-t^2)}}g_r(t)(1-g_r(t)).
\]
Averaging \eqref{legacy:eq:5.6} and integrating gives
\[
\mathbb E_{i,v_{-i},r}
\bigl[\Psi(g_r(\gamma))-\Psi(g_r(-\gamma))\bigr]
\le
3\rho\,\arcsin(\gamma)\sqrt{\frac md}.
\tag{S5.8}\label{legacy:eq:5.8}
\]
All differentiations and integrations are justified by the finite-support likelihood representation and the uniform bound on the derivative over the compact interval.

Combining \eqref{legacy:eq:5.4}, \eqref{legacy:eq:5.7}, and \eqref{legacy:eq:5.8} proves \eqref{legacy:eq:5.2} when \(\gamma>\alpha\). When \(\gamma\le\alpha\), its positive-part bracket is zero, so the assertion is automatic.

For the stated constant, take \(\gamma=5\alpha<1/2\). Equation \eqref{legacy:eq:5.4} gives \(\bar e<7/25\), and thus \eqref{legacy:eq:5.7} is greater than
\[
1-\frac94\frac7{25}=\frac{37}{100}.
\]
The function \(\arcsin z/z\) is increasing on \((0,1)\). Therefore, for \(z\le1/2\),
\[
\arcsin z\le\frac\pi3 z.
\]
The right side of \eqref{legacy:eq:5.8} is at most
\[
5\pi\alpha\rho\sqrt{\frac md}.
\]
Squaring
\[
\frac{37}{100}
\le5\pi\alpha\rho\sqrt{\frac md}
\]
gives \eqref{legacy:eq:5.1}. 

\end{proof}
\subsubsection*{Constant optimization within the bias family}

The choice \(5\alpha\) gives the displayed constant. We can also optimize over the bias family. Put \(\gamma=c\alpha\), \(81/31<c<10\). Uniformly over \(\alpha,\beta<1/10\), \eqref{legacy:eq:5.2} gives the coefficient
\[
\kappa(c)=
\frac{\left(\frac{31}{40}-\frac{81}{40c}\right)^2}
{900\,\arcsin^2(c/10)}.
\tag{S5.9}\label{legacy:eq:5.9}
\]
Its unique maximizer \(c_*\) is characterized by
\[
81\sqrt{100-c_*^2}\arcsin(c_*/10)
=c_*(31c_*-81).
\tag{S5.10}\label{legacy:eq:5.10}
\]
To prove uniqueness, set \(c=10\sin t\). The stationary equation becomes
\[
\frac{\sin t+t\cos t}{\sin^2t}=\frac{310}{81}.
\]
The derivative of its left side is
\[
-\frac{t(1+\cos^2t)}{\sin^3t}<0.
\]
Consider the left side first on the enlarged interval \(t\in(0,\pi/2)\): it is strictly decreasing there, with limits \(+\infty\) and \(1\). At the lower endpoint \(t_0=\arcsin(81/310)\) of the actual optimization domain,
\[
\frac{\sin t_0+t_0\cos t_0}{\sin^2t_0}=\frac{310}{81}+\frac{t_0\cos t_0}{\sin^2t_0}>\frac{310}{81},
\]
so the unique solution of the stationary equation lies in the actual optimization domain. The coefficient vanishes at the left endpoint and its derivative is negative near \(c=10\). Thus this stationary point is the unique maximizer.

Numerically, from the displayed formulas,
\[
c_*\approx4.98426087836,\qquad
\kappa(c_*)\approx0.000554848925731.
\]
The simpler choice \(c=5\) gives the exact \(\kappa\) in \legacyref{5.1}{Theorem}.

\legacyheading{5.2}{Confidence lower bound}

\par\noindent\textbf{Role and dependencies.} This gives the separate confidence term. Combined with \legacyref{5.1}{Theorem}, it yields the lower half of \legacyref{2.2}{Theorem} and Theorem~\ref{thm:lower}.

\hypertarget{legacy:5.2}{}
\begin{legacytheorem}{Lemma S5.2}{Additive confidence lower bound}

Every agnostic \((\alpha,\beta)\)-learner for a class of positive VC dimension satisfies
\[
m\ge\frac3{50}\frac{\log(1/\beta)}{\alpha^2}.
\tag{S5.11}\label{legacy:eq:5.11}
\]
Consequently, every learner in \legacyref{5.1}{Theorem} satisfies
\[
m\ge
\frac{\bar\kappa}{\alpha^2}
\left(\frac d{\rho^2}+\log\frac1\beta\right).
\tag{S5.12}\label{legacy:eq:5.12}
\]

\end{legacytheorem}
\begin{proof}

Use one shattered point and conditional means \(+2\alpha\) and \(-2\alpha\). A wrong output sign incurs excess risk \(2\alpha>\alpha\). Hence the output bit distinguishes the two distributions with error less than \(\beta\) under each.

The KL divergence of one label under the positive distribution from one label under the negative distribution is
\[
2\alpha\log\frac{1+2\alpha}{1-2\alpha}
\le\frac{8\alpha^2}{1-4\alpha^2}
\le\frac{25}{3}\alpha^2.
\tag{S5.13}\label{legacy:eq:5.13}
\]
Here
\[
\log\frac{1+z}{1-z}
=2\int_0^z\frac{du}{1-u^2}
\le\frac{2z}{1-z^2}.
\]

The independent shared string adds zero KL divergence, and \(m\) observations multiply the one-observation divergence by \(m\). The log-sum inequality, applied to the two output events, gives data processing to the output bit. For Bernoulli output probabilities \(p>1-\beta\) and \(q<\beta\), their divergence is at least
\[
(1-2\beta)\log\frac{1-\beta}{\beta}
>\frac12\log\frac1\beta.
\tag{S5.14}\label{legacy:eq:5.14}
\]
For example, with \(b=\log(1/\beta)>\log10\),
\[
(1-2\beta)\log\frac{1-\beta}{\beta}
>\frac45(b-1/9)>\frac b2.
\]
Combining \eqref{legacy:eq:5.13}--\eqref{legacy:eq:5.14} proves \eqref{legacy:eq:5.11}.

Finally, if \(m\ge\kappa A\) and \(m\ge(3/50)B\), then
\[
A+B\le m\left(\frac1\kappa+\frac{50}{3}\right).
\]
Take \(A=d/(\rho^2\alpha^2)\), \(B=\log(1/\beta)/\alpha^2\) to obtain \eqref{legacy:eq:5.12}. 

\end{proof}

\subsection{Proper learning, realizability, and interface barriers}\label{legacy:section:6}

\legacyheading{6.1}{Ordered coupling for VC-one classes}

\par\noindent\textbf{Role and dependencies.} This supplies the proper core and ordered selection mechanism in Section~\ref{sec:proper-retained}. Its interval coupling is the structural input to \legacyref{6.2}{Theorem}; its core is also used by SOFT-VC.

\hypertarget{legacy:6.1}{}
\begin{legacytheorem}{Lemma S6.1}{Countable core, laminar order, and quantile coupling}

Every class \(H\) on a countable domain has a countable proper subclass \(H_c\subseteq H\) such that
\[
\inf_{h\in H_c}e(h)=\inf_{h\in H}e(h)
\quad\text{for every distribution }D.
\tag{S6.1}\label{legacy:eq:6.1}
\]

If \(\operatorname{VC}(H)=1\), fix \(h_0\in H_c\) and define
\[
A_x=\{h\in H_c:h(x)\ne h_0(x)\}.
\]
There is a linear order \(\prec\) on \(H_c\) in which every \(A_x\) is an interval. For any probability measure \(\mu\) on \(H_c\), there is a proper quantile sampler \(h_\mu(U)\), \(U\sim\operatorname{Unif}(0,1)\), with law \(\mu\). For any two such measures,
\[
\Pr_U(h_\mu(U)(x)\ne h_\nu(U)(x))
\le
|\mu(B_x^0)-\nu(B_x^0)|
+
|\mu(B_x^1)-\nu(B_x^1)|,
\tag{S6.2}\label{legacy:eq:6.2}
\]
where \(B_x^0\) is the initial segment preceding \(A_x\), and \(B_x^1=B_x^0\cup A_x\).

\end{legacytheorem}
\begin{proof}

For every finite \(F\subseteq X\) and every \(H\)-realizable labeling of \(F\), fix one representative in \(H\). Their union, with an additional fixed \(h_0\) if needed, is countable and is denoted \(H_c\). Given \(h\in H\) and \(\varepsilon>0\), choose finite \(F\) with \(D_X(F)>1-\varepsilon\). Its representative agrees with \(h\) on \(F\), so their risks differ by at most \(\varepsilon\). Taking infima and then \(\varepsilon\downarrow0\) proves \eqref{legacy:eq:6.1}.

For VC dimension one, the family \((A_x)\) is laminar. Otherwise two sets would have a nonempty intersection and two nonempty differences. Together with \(h_0\), which belongs to neither, these witnesses realize all four patterns on two points.

View \(H_c\) as a hypergraph on the vertex set \(X\) whose hyperedges are the sets \(\{x:h(x)\ne h_0(x)\}\), \(h\in H_c\); the family \((A_x)_{x\in X}\) is then its dual hypergraph. Bollob\'as, Pritchard, Rothvo\ss{}, and Scott \cite{BPRS12} observe (Section 4, the sentence following Definition 16 there) that a hypergraph has VC-dimension one if and only if its dual hypergraph is cross-free (their Definition 16); since \(h_0\) lies in no \(A_x\), the cross-free condition reduces to the laminar one (their Definition 17), which is the argument just given. The same section records the directed-tree representation of cross-free families due to Edmonds and Giles. Ben-David \cite{BD15} gives the primal form of the structure: \(\operatorname{VC}(H)\le1\) if and only if there are a tree ordering of the domain (Definition 3 there) and a representation under which every hypothesis is an initial segment (Theorem 4 there). What is used below is the countable linear order on \(H_c\) in which every \(A_x\) is an interval, and the shared quantile coupling built on it.

Enumerate \(H_c\). We construct compatible orders on successive finite prefixes. Suppose the old prefix is ordered and all old traces of \(A_x\) are intervals. Insert a new hypothesis \(h\). The sets \(A_x\) containing \(h\) form a chain. If their nonempty old traces exist, choose the smallest such trace \(C\), and insert \(h\) immediately before the first element of \(C\). Every trace of a set containing \(h\) contains \(C\), so adjoining \(h\) preserves its interval property.

If a set \(B\) does not contain \(h\) and its old trace meets \(C\), laminarity implies \(B\subseteq A\), where \(A\) is a set containing \(h\) with old trace \(C\). Hence the old trace of \(B\) is contained in \(C\), and insertion before \(C\) does not split it. A trace disjoint from \(C\) is not split either. If no nonempty old trace of a set containing \(h\) exists, insert \(h\) at the beginning. This completes the induction.

The limit order is linear. Any failure of interval convexity would have a three-element witness, already present in one finite prefix, so no failure occurs.

For a probability measure \(\mu\), set
\[
a_\mu(h)=\sum_{g\prec h}\mu(g),\qquad
b_\mu(h)=a_\mu(h)+\mu(h).
\]
The half-open intervals \([a_\mu(h),b_\mu(h))\) with \(\mu(h)>0\) are disjoint and have total Lebesgue measure one. Define \(h_\mu(U)=h\) on these intervals. \textbf{On the entire uncovered null set, define \(h_\mu(U)=h_0\), simultaneously for all coordinates.} This is a single proper hypothesis, not a coordinatewise default.

Because \(A_x\) is an interval, up to a null set,
\[
h_\mu(U)\in A_x
\quad\Longleftrightarrow\quad
U\in[\mu(B_x^0),\mu(B_x^1)).
\]
The symmetric difference of two intervals has measure at most the sum of the absolute differences of their endpoints. This proves \eqref{legacy:eq:6.2}. Empty \(A_x\) is handled by \(B_x^0=B_x^1=\varnothing\). 

\end{proof}
\par\noindent\textbf{Role and dependencies.} This proves the upper bound in Theorem~\ref{thm:retained-vcone}. It combines \legacyref{6.1}{Lemma} with the ERM estimates of \legacyref{3.3}{Lemma} and \legacyref{3.4}{Lemma}.

\hypertarget{legacy:6.2}{}
\begin{legacytheorem}{Theorem S6.2}{Exact proper learning for every VC-one class}

For every VC-one class on a countable domain, there is a proper \(\rho\)-pointwise replicable agnostic \((\alpha,\beta)\)-learner using
\[
O\!\left(
\frac1{\rho^2\alpha^2}
+\frac{\log(1/\beta)}{\alpha^2}
\right)
\tag{S6.3}\label{legacy:eq:6.3}
\]
observations. Its pointwise disagreement probability is at most \(\rho/10\).

\paragraph{Construction}

Use the countable ordered core in \legacyref{6.1}{Lemma} and deterministic ERMs in that core. Train
\[
T=\left\lceil\frac{2^{12}}{\rho^2}\right\rceil
\]
independent base hypotheses, each on
\[
n_b=
\left\lceil
\frac{2^{22}}{\alpha^2}(1+\log32)
\right\rceil
\tag{S6.4}\label{legacy:eq:6.4}
\]
observations. Train an independent reference \(g\) on
\[
s=\max\left\{
\left\lceil\frac{2^{50}}{\rho^2\alpha^2}\right\rceil,\
\left\lceil
\frac{2^{22}}{\alpha^2}\left(1+\log\frac8\beta\right)
\right\rceil
\right\}
\tag{S6.5}\label{legacy:eq:6.5}
\]
observations. Use an independent validation sample of size
\[
v=\max\left\{
\left\lceil\frac{2^{22}}{\rho^2\alpha^2}\right\rceil,\
\left\lceil
\frac{128}{\alpha^2}\log\frac{8(T+1)}{\beta}
\right\rceil
\right\}.
\tag{S6.6}\label{legacy:eq:6.6}
\]
The shared randomness consists of independent
\[
\Theta\sim\operatorname{Unif}[\alpha/4,\alpha/2],
\qquad U\sim\operatorname{Unif}(0,1).
\]
Accept \(h_j\) when
\[
\widehat e(h_j)-\widehat e(g)\le\Theta.
\tag{S6.7}\label{legacy:eq:6.7}
\]
If none is accepted, output \(g\). Otherwise, use \(U\) to quantile-sample the accepted multiset in the fixed order.

\end{legacytheorem}
\begin{proof}[Proof of accuracy]

The second term in \eqref{legacy:eq:6.5} and \legacyref{3.3}{Lemma} imply
\[
\Pr(e(g)>\operatorname{OPT}+\alpha/8)<\beta/4.
\]
Conditionally on all trained hypotheses, the second term in \eqref{legacy:eq:6.6} and a union bound over \(T+1\) models imply that every validation error is within \(\alpha/16\) of its true value except with probability at most \(\beta/4\).

On these two events, any accepted model satisfies
\[
e(h_j)\le e(g)+\Theta+2(\alpha/16)
\le\operatorname{OPT}+3\alpha/4.
\]
The empty-set fallback \(g\) has still smaller risk. Thus the accuracy failure probability is less than \(\beta/2<\beta\). Every output is in \(H_c\subseteq H\), including the quantile sampler's null-set fallback.

\end{proof}
\begin{proof}[Proof of replicability]

Fix \(D\) and a point \(x\). Let \(\mu\) be the law of one base ERM and define
\[
Z(h)=e(h)-\operatorname{OPT},
\]
\[
\nu_\theta(h)=\mu(h)\mathbf1_{\{Z(h)\le\theta\}},
\qquad
p_\theta=\nu_\theta(H_c),
\qquad
\lambda_\theta=\nu_\theta/p_\theta.
\tag{S6.8}\label{legacy:eq:6.8}
\]
By \eqref{legacy:eq:6.4} and \legacyref{3.3}{Lemma},
\[
\Pr_{h\sim\mu}(Z(h)>\alpha/8)<1/16.
\]
Since \(\theta\ge\alpha/4\), it follows that
\[
p_\theta\ge15/16
\quad\text{for every }\theta\in[\alpha/4,\alpha/2].
\tag{S6.9}\label{legacy:eq:6.9}
\]

For one execution, define the empirical unnormalized accepted measure
\[
\widehat\nu_\theta(B)
=\frac1T\sum_{j=1}^T
\mathbf1_{\{h_j\in B\}}
\mathbf1_{\{\widehat e(h_j)-\widehat e(g)\le\theta\}}.
\]
Let \(\widetilde\nu_\theta\) be defined by replacing the empirical test with \(Z(h_j)\le\theta\).

For any two real numbers \(a,b\), the probability that a uniform \(\Theta\) on an interval of length \(\alpha/4\) lies between them is at most \(4|a-b|/\alpha\). Therefore
\[
\begin{aligned}
&\mathbb E_{\Theta,\mathrm{samples}}
\left|\widehat\nu_\Theta(B)-\widetilde\nu_\Theta(B)\right|\\
&\quad\le
\frac4\alpha
\left(
\frac1{\sqrt v}+\mathbb E[e(g)-\operatorname{OPT}]
\right)
=:\varepsilon_{\rm mis}.
\end{aligned}
\tag{S6.10}\label{legacy:eq:6.10}
\]
To justify the \(1/\sqrt v\), each validation risk, conditionally on its model, has mean absolute deviation at most \(1/(2\sqrt v)\). The two deviations and the nonnegative reference excess risk bound the displacement between the two thresholds.

\legacyref{3.4}{Lemma}, with \(d=1\), and \eqref{legacy:eq:6.5}--\eqref{legacy:eq:6.6} give
\[
\frac1{\sqrt v}\le\frac{\rho\alpha}{2^{11}},
\qquad
\mathbb E[e(g)-\operatorname{OPT}]
\le\frac{2^{14}}{\sqrt s}
\le\frac{\rho\alpha}{2^{11}}.
\]
Hence
\[
\varepsilon_{\rm mis}\le\frac{\rho}{256}.
\tag{S6.11}\label{legacy:eq:6.11}
\]

For each fixed \(\theta\), \(\widetilde\nu_\theta(B)\) is an average of independent Bernoulli variables, so
\[
\mathbb E|\widetilde\nu_\theta(B)-\nu_\theta(B)|
\le\frac1{2\sqrt T}.
\]
Integrating this inequality and combining it with \eqref{legacy:eq:6.10} gives the repaired statement
\[
\boxed{
\mathbb E_{\Theta,\mathrm{samples}}
|\widehat\nu_\Theta(B)-\nu_\Theta(B)|
\le
\varepsilon_{\rm mis}+\frac1{2\sqrt T}.
}
\tag{S6.12}\label{legacy:eq:6.12}
\]
Equation \eqref{legacy:eq:6.12} averages over \(\Theta\) and the samples. The fixed-\(\theta\) estimate above controls the sampling term.

Let \(\widehat p=\widehat\nu_\theta(H_c)\), and let \(\widehat\lambda_\theta\) be the normalized accepted measure, or \(\delta_g\) if \(\widehat p=0\). If \(\widehat p>0\), then for \(a=\nu_\theta(B)\), \(\widehat a=\widehat\nu_\theta(B)\), \(p=p_\theta\),
\[
\left|\frac{\widehat a}{\widehat p}-\frac ap\right|
\le
\frac{|\widehat a-a|+|\widehat p-p|}{p}.
\tag{S6.13}\label{legacy:eq:6.13}
\]
This follows by writing the difference as
\[
\frac{\widehat a-a}{p}
+\frac{\widehat a}{\widehat p}\frac{p-\widehat p}{p}.
\]
If \(\widehat p=0\), the left side, interpreted using \(\delta_g\), is at most \(1\), while the right side is \((a+p)/p\ge1\); thus the bound also covers the empty accepted set.

Using \eqref{legacy:eq:6.9}, \eqref{legacy:eq:6.12}, and \eqref{legacy:eq:6.13}, the expected discrepancy between one execution and \(\lambda_\Theta\), on any fixed set \(B\), is at most
\[
\frac{16}{15}\left(2\varepsilon_{\rm mis}+\frac1{\sqrt T}\right).
\]
For two executions, the triangle inequality doubles this bound. Apply \legacyref{6.1}{Lemma} to the two initial segments \(B_x^0,B_x^1\). The final disagreement is at most
\[
\frac{16}{15}
\left(8\varepsilon_{\rm mis}+\frac4{\sqrt T}\right)
\le
\frac{16}{15}\left(\frac{\rho}{32}+\frac{\rho}{16}\right)
=\frac{\rho}{10}<\rho.
\tag{S6.14}\label{legacy:eq:6.14}
\]
This calculation is unconditional: neither accuracy of \(g\) nor any validation good event is assumed.

Finally, \eqref{legacy:eq:6.4}--\eqref{legacy:eq:6.6} give total sample size \eqref{legacy:eq:6.3}. In the validation count, \(\log(T+1)=O(\log(1/\rho))=O(\rho^{-2})\), so it is absorbed into the first term. Combining with \legacyref{5.1}{Theorem} and \legacyref{5.2}{Lemma} proves the exact order \eqref{legacy:eq:2.11}. 

\end{proof}
\subsubsection*{Structural limitation of the interval method}

On \([5]\), take the class of indicators of exactly two-element subsets and choose \(h_0=\mathbf1_{\{1,2\}}\). The sets \(A_3,A_4,A_5\) have pairwise nonempty intersections, witnessed respectively by \(\{3,4\},\{3,5\},\{4,5\}\), but their triple intersection is empty. Three intervals in a linear order cannot have this pattern: pairwise intersection of three intervals implies common intersection.

This class has VC dimension exactly two: it shatters \(\{1,2\}\), while three positive labels cannot be realized on any triple. Thus the interval-order construction does not extend to all VC-two classes.

\legacyheading{6.2}{\texorpdfstring{The full class on \(d\) atoms}{The full class on d atoms}}

\par\noindent\textbf{Role and dependencies.} This proves the direct finite-atom module and Theorem~\ref{thm:retained-atoms}, using \legacyref{3.2}{Lemma}. The matching lower bound is \legacyref{5.1}{Theorem} together with \legacyref{5.2}{Lemma}.

\hypertarget{legacy:6.3}{}
\begin{legacytheorem}{Theorem S6.3}{Exact learning on \(d\) atoms}

Let \(X=[d]\) and \(H=\{-1,+1\}^{[d]}\). Set
\[
\tau=\frac{\alpha^2}{64d},
\]
\[
M=
\left\lceil
\frac{512d/\rho^2+8(d\log2+\log(4/\beta))}{\alpha^2}
\right\rceil.
\tag{S6.15}\label{legacy:eq:6.15}
\]
Use the gate of \legacyref{3.2}{Lemma} and output \(+1\) at every SMALL point. This is a proper \(\rho\)-pointwise replicable agnostic \((\alpha,\beta)\)-learner.

\end{legacytheorem}
\begin{proof}

Because \(M\tau\ge8/\rho^2\), \legacyref{3.2}{Lemma} gives pointwise disagreement at most
\[
\frac{\sqrt2}{\sqrt{M\tau}}\le\frac\rho2<\rho.
\]

Let \(q\) be the output, and compare it with an empirical majority labeling. A gated point agrees with empirical majority. A SMALL point can disagree only when \(C_x<0\), in which case
\[
|C_x|<2a\sqrt{N_x}.
\]
At \(N_x=0\), its empirical excess contribution is exactly zero. Consequently,
\[
\widehat e(q)-\widehat{\operatorname{OPT}}
\le\frac{2a}{M}\sum_x\sqrt{N_x}
\le2\sqrt{d\tau}=\frac\alpha4.
\tag{S6.16}\label{legacy:eq:6.16}
\]
The second inequality uses \(\sum_xN_x=M\).

For every fixed \(h\), Hoeffding bounds
\(\Pr(|\widehat e(h)-e(h)|>\alpha/4)\) by
\(2e^{-M\alpha^2/8}\). A union bound over \(2^d\) hypotheses gives
\[
\Pr\left(\sup_h|\widehat e(h)-e(h)|>\alpha/4\right)
\le2^{d+1}e^{-M\alpha^2/8}\le\frac\beta2.
\]
On its complement, \eqref{legacy:eq:6.16} implies
\[
e(q)\le\operatorname{OPT}+3\alpha/4.
\]
The output belongs to the full class, so it is proper. The lower bounds already proved give \eqref{legacy:eq:2.12}. 

\end{proof}

\legacyheading{6.3}{A realizable corollary}

\par\noindent\textbf{Role and dependencies.} This proves Theorem~\ref{thm:retained-realizable}. It keeps the gate of \legacyref{3.2}{Lemma} and the stability argument of \legacyref{4.1}{Theorem}, while deriving a consistent-learning and Bernstein accuracy bound.

\hypertarget{legacy:6.4}{}
\begin{legacytheorem}{Corollary S6.4}{Realizable upper bound with centered Bernstein rounding}

Suppose \(\operatorname{OPT}=0\). The gate-and-round construction is accurate with
\[
T=\lceil\rho^{-2}\rceil,\qquad
L=\log(4/\beta),
\]
\[
\tau_r=\frac{\alpha}{48L},
\qquad
K_r=\log\frac{768L}{\alpha\beta},
\]
\[
M_r=
\left\lceil
\frac{3072L}{\alpha}(\rho^{-2}+K_r)
\right\rceil,
\tag{S6.17}\label{legacy:eq:6.17}
\]
and independent, everywhere-defined consistent ERM blocks of size
\[
n_r=
\left\lceil
\frac{128}{\alpha}
\left(d\log\frac{128}{\alpha}+\log\frac{32T}{\beta}\right)
\right\rceil.
\tag{S6.18}\label{legacy:eq:6.18}
\]
It is pointwise replicable for every distribution, and on distributions with OPT zero its accuracy failure is at most \(19\beta/36\). Its sample count is
\[
O\!\left(
\frac{d\log(1/\alpha)+\log(1/(\beta\rho))}
{\rho^2\alpha}
+
\frac{\log(1/\beta)}{\alpha}
\log\frac{\log(1/\beta)}{\alpha\beta}
\right).
\tag{S6.19}\label{legacy:eq:6.19}
\]
\end{legacytheorem}
\begin{proof}

First, OPT zero implies noiseless labels at every positive-mass atom. Otherwise the Bayes risk itself would be positive. Moreover, every finite set of positive-mass points admits a hypothesis agreeing with all their noiseless labels: if not, every hypothesis would incur risk at least the least mass in that finite set. Thus a consistent hypothesis exists on every sample almost surely, even when no single hypothesis attains OPT on the entire domain. Define the learner by ERM on every sample, including samples outside this almost-sure event.

We record the required consistent-learning bound. If \(n\varepsilon\ge8\), double sampling gives
\[
\Pr(\exists h:P_n\ell_h=0,\ P\ell_h>\varepsilon)
\le2\Pi_H(2n)e^{-\varepsilon n/4}.
\tag{S6.20}\label{legacy:eq:6.20}
\]
Indeed, for a witness of true risk \(p>\varepsilon\), an independent ghost sample has at least \(n\varepsilon/2\) errors with probability at least \(1/2\), by Chebyshev:
\[
\Pr(\operatorname{Bin}(n,p)<np/2)\le4/(np)\le1/2.
\]
Condition on the pooled \(2n\) observations and randomize their split. For a trace with \(k\ge n\varepsilon/2\) errors, the probability that all errors fall in the second half is at most
\[
2^{-k}\le e^{-k/2}\le e^{-n\varepsilon/4}.
\]
There are at most \(\Pi_H(2n)\) traces. This proves \eqref{legacy:eq:6.20}.

Take \(\varepsilon=\alpha/2\). To check \eqref{legacy:eq:6.18}, put
\[
A=d\log(128/\alpha)+q,\qquad q=\log(32T/\beta).
\]
Then \(128A/\alpha\le n_r\le129A/\alpha\), and the Sauer bound gives
\[
d\log(2en_r/d)\le3A.
\]
For example,
\[
\log(2en_r/d)
\le\log(258e/\alpha)+\log(A/d)
\le2\log(128/\alpha)+A/d.
\]
Meanwhile \(\alpha n_r/8\ge16A\). Hence \eqref{legacy:eq:6.20} is less than \(\beta/(4T)\). With probability at least \(1-\beta/4\), every base error is at most \(\alpha/2\).

Noiseless labels make every non-SMALL gate sign correct. The dangerous-atom failure is at most \(\beta/36\), by \legacyref{3.2}{Lemma} with the parameters in \eqref{legacy:eq:6.17}. On its complement, every remaining atom satisfies
\[
w_x\le9\tau_r.
\]
Condition on all non-rounding randomness. Let
\[
J_x=w_x\mathbf1_{\{q_x\text{ is wrong}\}},\qquad
\mu_R=\sum_{x\in R}\mathbb E[J_x\mid\mathcal F].
\]
On the base-good event, \(\mu_R\le\alpha/2\). Put
\[
X_x=J_x-\mathbb E[J_x\mid\mathcal F],
\qquad b_0=9\tau_r.
\]
Then
\[
\mathbb E[X_x\mid\mathcal F]=0,\quad
|X_x|\le b_0,\quad
\sigma^2:=\sum_x\mathbb E[X_x^2\mid\mathcal F]
\le b_0\mu_R\le b_0\alpha/2.
\]

For \(0\le\lambda<3/b_0\), the exponential series and
\(k!\ge2\cdot3^{k-2}\), \(k\ge2\), imply
\[
\mathbb E[e^{\lambda X_x}\mid\mathcal F]
\le
\exp\!\left(
\frac{\lambda^2\mathbb E[X_x^2\mid\mathcal F]}
{2(1-\lambda b_0/3)}
\right).
\]
Independence gives the corresponding bound for finite sums. Since
\(\sum_x|X_x|\le\sum_xw_x\le1\), dominated convergence extends it to the countable sum. For \(\sigma^2>0\), choose
\(\lambda=t/(\sigma^2+b_0t/3)<3/b_0\); the zero-variance case is deterministic. Substitution gives
\[
\Pr\left(\sum_xX_x>t\mid\mathcal F\right)
\le
\exp\!\left(
-\frac{t^2}{2(\sigma^2+b_0t/3)}
\right).
\]
At \(t=\alpha/2\), this is at most
\[
\exp\!\left(-\frac{3\alpha}{16b_0}\right)
=
\exp\!\left(-\frac{\alpha}{48\tau_r}\right)
=\frac\beta4.
\]
The total failure is therefore
\[
\frac\beta4+\frac\beta{36}+\frac\beta4
=\frac{19}{36}\beta.
\]
Replicability uses only the unconditional argument of \legacyref{4.1}{Theorem}, which does not require realizability. Expanding \eqref{legacy:eq:6.17}--\eqref{legacy:eq:6.18} proves \eqref{legacy:eq:6.19}. 

\end{proof}

\legacyheading{6.4}{Remarks on the earlier paper and algorithm-specific barriers}

\par\noindent\textbf{Role and dependencies.} This checks the scope of two literal statements in version 2 of \cite{HIY}. The explicit examples explain the conventions used in \legacyref{3.1}{Lemma} and why the realizable upper bound is not a universal matching lower bound.

\hypertarget{legacy:6.5}{}
\begin{legacydefinition}{Remark S6.5}{Two statement/proof qualifications in the earlier paper}

\textbf{(a) The literal dimension-one realizable lower-bound statement.}
Let \(X=\{x\}\) and \(H=\{+1,-1\}\). Take
\[
m=\left\lceil\frac2{\rho^2}\right\rceil,
\qquad
\widehat\mu=\frac1m\sum_{j=1}^mY_j,
\qquad
v\sim\operatorname{Unif}(-1,1),
\]
and output \(+1\) iff \(v\le\widehat\mu\). For every noisy distribution with mean \(\mu\),
\[
\Pr(f_1(x)\ne f_2(x))
=\frac12\mathbb E|\widehat\mu_1-\widehat\mu_2|
\le\sqrt{\frac{1-\mu^2}{2m}}
\le\frac\rho2.
\]
On either realizable distribution, \(\widehat\mu=\pm1\), so the learner has zero error. Its sample count is independent of \(\alpha\).

Thus the literal realizable branch of \cite{HIY}, Theorem 1.5, asserted for every class of VC dimension \(d\), cannot hold at \(d=1\) with an unavoidable \(\alpha^{-1}\) factor. According to the literal text of the proof of \cite{HIY}, Theorem 3.10 (version 2), that argument assumes that \(H\) has VC dimension \(d+1\) with \(d\) odd and uses \(d+1\) shattered elements; the statement of Theorem 1.5 carries no such restriction.

A rational instance is \(\rho=1/20\), \(m=800\), with disagreement at most \(1/40\). In contrast, ``copy the label of one observation'' is not a valid counterexample: under a fair noisy label its disagreement is \(1/2\).

This observation does not affect the agnostic lower bound, which \legacyref{5.1}{Theorem} proves also for \(d=1\).

\textbf{(b) The moment-generating-function step.}
According to the literal text of the proof of \cite{HIY}, Theorem 3.6 (version 2, Appendix B.2), Hoeffding's exponential-moment bound is applied there in a pointwise form. Such a pointwise bound is false. For a fair \(Z\in\{-1,+1\}\), the range length is \(2\), but on \(Z=1\),
\[
e^{Z}=e>e^{1/2}.
\]
The valid inequality is \(\mathbb Ee^{\lambda Z}\le e^{\lambda^2/2}\), with expectation. The literal two-sided statement of \cite{HIY} Theorem 3.6, \(\Pr(|S-\mu|>t)<e^{-\Omega(t^2/\nu)}\) for every \(t>0\) without a prefactor, also fails as written: take \(X_1\in\{0,1/20\}\) with probability \(1/2\) each, all other summands zero, \(b_1=\nu=1/20\) and \(t=1/80\); then \(\mu=1/40\) and \(\Pr(|S-\mu|>t)=1\), while the right side is below \(1\) for any positive constant in the exponent. The correct two-sided form is \(\Pr(|S-\mu|>t)\le\min\{1,2e^{-2t^2/\sum_ib_i^2}\}\). \legacyref{3.1}{Lemma} proves the needed centered, countable one-sided version directly, rather than invoking that proof; no result of this paper depends on the two-sided statement.

\end{legacydefinition}
\par\noindent\textbf{Role and dependencies.} These bounds concern the specified sign-identification, rounding, and gating procedures. They use the score argument of \legacyref{5.1}{Theorem} and the gate of \legacyref{3.2}{Lemma} to explain the alternatives in Section~\ref{sec:retained}.

\hypertarget{legacy:6.6}{}
\begin{legacydefinition}{Remark S6.6}{Three algorithm-specific barriers}

\textbf{(a) An unnecessary atomwise sign-identification interface.}

We quantify atomwise sign identification using the explicit \(3/4\)-success normalization below. Algorithm 1 and Proposition 3.3 of \cite{HIY} describe a related interface up to constant factors; the calculation here specifies the distribution family and constants directly.

Fix an atom \(x_\star\), a mass \(w\in(0,1]\), and a distribution \(Q\), independent of \(t\), supported outside \(x_\star\). Consider only the family
\[
D_t(x_\star,+1)=\frac{w(1+t)}2,\qquad D_t(x_\star,-1)=\frac{w(1-t)}2,\qquad D_t\!\restriction_{\{X\ne x_\star\}}=(1-w)Q,\qquad t\in[-\alpha,\alpha].
\]
The interface and its random-seed law do not depend on \(t\). Suppose the interface must identify the sign of the conditional mean at \(x_\star\) at the endpoints \(t=\pm\alpha\), with success at least \(3/4\) at each endpoint, and must be \(\rho\)-pointwise replicable throughout this family. Since only the labels at \(x_\star\) depend on \(t\), the likelihood-score variance is
\[
\frac{mw}{1-t^2}.
\]
For the potential \(\Psi\) used in \legacyref{5.1}{Theorem}, endpoint success gives expected potential difference at least
\[
1-\frac94\cdot\frac14=\frac7{16}.
\]
The derivative calculation gives an upper bound
\(3\rho\arcsin(\alpha)\sqrt{mw}\). Therefore
\[
m\ge\frac{49}{2304\rho^2w\,\arcsin^2\alpha}.
\tag{S6.21}\label{legacy:eq:6.21}
\]
For \(w=\alpha^2\), this is \(\Omega(\rho^{-2}\alpha^{-4})\).

This interface can be unnecessary for learning. Take two atoms \(x_0,x_1\), with
\[
w_{x_1}=\alpha^2,\quad y_{x_1}=t\in[-\alpha,\alpha],
\qquad
w_{x_0}=1-\alpha^2,\quad y_{x_0}=1,
\]
and let \(H\) contain only the two constant functions. The constant \(+1\) is optimal in \(H\) throughout the family and requires no sign identification.

For a rational endpoint instance, take \(\alpha=1/20\), \(w_{x_1}=1/400\), \(t=\pm1/20\). The two label masses at \(x_1\) are \(21/16000\) and \(19/16000\), interchanged between endpoints; \(D(x_0,+1)=399/400\).

\textbf{(b) The independent-rounding tail scale is tight.}

Take \(N\) uniform atoms, all true labels \(+1\), and average two complementary hypotheses equally. Then \(p_x=0\), independent rounding has independent fair signs, and
\[
e(q)=\frac1N\operatorname{Bin}(N,1/2),\qquad B_R=\frac1N.
\]
For
\[
0<\alpha<1/10,\qquad N\alpha^2\ge1,
\]
one has
\[
\Pr(e(q)>1/2+\alpha)\ge\frac34e^{-20N\alpha^2}.
\tag{S6.22}\label{legacy:eq:6.22}
\]

To prove this, let \(P=\operatorname{Bern}(1/2)^{\otimes N}\) and
\(Q=\operatorname{Bern}(1/2+2\alpha)^{\otimes N}\). Under \(Q\), the event
\[
E=\left\{\left|\frac1N\sum_iZ_i-(1/2+2\alpha)\right|<\alpha\right\}
\]
has probability at least \(3/4\) by Chebyshev, and implies the desired strict upper tail. On \(E\), putting \(u=4\alpha<2/5\),
\[
\frac1N\log\frac{dQ}{dP}
\le\frac12\log(1-u^2)
+\frac{3u}{4}\log\frac{1+u}{1-u}.
\]
The power series has first term \(u^2\); every remaining coefficient is at most \(1/2\). Hence
\[
\frac1N\log\frac{dQ}{dP}
\le u^2+\frac{u^4}{2(1-u^2)}
\le\left(16+\frac{32}{21}\right)\alpha^2
<20\alpha^2.
\]
Changing measure proves \eqref{legacy:eq:6.22}.

The condition \(N\alpha^2\ge1\) cannot be dropped. At \(N=2\), \(\alpha=1/20\), the true tail probability is \(1/4\), whereas the condition-free proposed right side would be \((3/4)e^{-1/10}>1/4\).

This is a barrier for independent rounding. For even \(N\), a correlated uniform choice of exactly \(N/2\) negative signs has the same individual fair marginals and deterministic risk \(1/2\).

\textbf{(c) A logarithmic accuracy cost intrinsic to the retained gate implementation.}

Let \(q\) be an integer, \(X=[q^2]\), and
\[
D(i,+1)=\frac3{4q^2},\qquad
D(i,-1)=\frac1{4q^2}.
\]
Let \(H\) be the two constant functions, and take
\[
\alpha=1/q,\qquad \beta=\rho=1/20.
\]
Use the gate scale from \legacyref{4.1}{Theorem}, so
\[
\tau=\frac1{18q^2\log80}.
\]
Any implementation that forces output \(-1\) whenever TEST is negative fails with probability at least \(31/32\) whenever
\[
q^2\le M\le\frac{q^2\log q}{32}.
\tag{S6.23}\label{legacy:eq:6.23}
\]
The interval is empty when \(\log q<32\).

Here is a fixed-sample occupancy proof. Put
\[
\lambda=M/q^2,\qquad k=\lceil\lambda\rceil,
\]
and let \(Z\) count atoms receiving exactly \(k\) observations, all negatively labeled. Since
\[
k\ge\lambda>4a^2,
\]
every such atom is forced negative regardless of the shared thresholds. For one atom, with \(u=q^{-2}\),
\[
p=\binom Mk(4q^2)^{-k}(1-u)^{M-k}
\ge8^{-k}e^{-2\lambda}
\ge e^{-8\lambda}
\ge q^{-1/4}.
\]
We used \(k\le2\lambda\), \(\binom Mk\ge(M/k)^k\), and
\(\log(1-u)\ge-2u\). Thus \(\mu:=\mathbb EZ\ge q^{7/4}\).

For two distinct atoms, direct multinomial probabilities give
\[
\frac{p_2}{p^2}
\le
\frac{(1-2u)^{M-2k}}{(1-u)^{2M-2k}}
\le e^{4ku}\le e^{8\lambda/q^2}.
\]
Since \(8\lambda/q^2\le1\),
\[
\operatorname{Var}(Z)
\le\mu+\frac{16\lambda}{q^2}\mu^2.
\]
The nonempty parameter interval in \eqref{legacy:eq:6.23} implies \(q\ge e^{32}>32\). In particular \(\mu\ge4q\), and Chebyshev gives
\[
\Pr(Z\le2q)
\le\frac4\mu+\frac{64\lambda}{q^2}
\le4q^{-7/4}+\frac{2\log q}{q^2}
<\frac1{32}.
\]
Every negative prediction adds \(1/(2q^2)\) to the risk above the optimum \(1/4\). Thus \(Z>2q\) implies excess risk greater than \(1/q=\alpha\).

All coordinates of this family are rational. For example, \(q=2^{64}\), \(M=q^2\) satisfies \eqref{legacy:eq:6.23}. The conclusion is restricted to \(M\ge\alpha^{-2}\) and to the stated forced-negative implementation. \legacyref{6.2}{Theorem} proves that it is not a lower bound for general VC-one learning.

\end{legacydefinition}

\legacyheading{6.5}{Connections between the constructions}

The following comparisons explain how the gate, lower bound, proper selection, and
repair steps relate to existing constructions.

\begin{enumerate}[leftmargin=1.6em]
\item \textbf{Difference from \cite{HIY} Algorithm 1.}
The gate is self-normalized:
\[
s_x=C_x/\sqrt{N_x}.
\]
It detects atoms dangerous for rounding concentration through the quantity
\[
w_xy_x^2,
\]
rather than treating mass \(w_x\) alone as the danger criterion. This matches the actual range-square contribution \(w_x^2y_x^2\) in \legacyref{3.1}{Lemma}. The self-normalized gate performs this screening directly.

\item \textbf{From coin testing to the dimension lower bound.}
The coin-testing bound of \cite{ILPS}, Lemma 7.2, uses \(\tau<1/4\), \(\rho<1/16\), and failure parameter \(1/16\), together with replicability at intermediate biases. \legacyref{5.1}{Theorem} gives a multi-coordinate calculation for all positive dimensions, with the stated parameter range, cubic potential, and explicit constant.

\item \textbf{Rare repair: \cite{AAC} and \cite{BGH23}.}
\cite{AAC} uses early stopping followed by fallback to a nonreplicable algorithm in a worst-case sample-size conversion. The finite-class learner of \cite{BGH23} uses risk filtering and a shared random order. In \legacyref{4.3}{Lemma}, a proper-reference risk comparison controls the replacement event at probability \(O(\rho)\), preserving pointwise replicability while improving accuracy confidence.

\item \textbf{Shared truncation: \cite{BGH23}, \S{}5.2.2.}
That reduction uses shared truncation. The proof of \legacyref{5.1}{Theorem} instead evaluates the score directly on the shattered-coordinate family.

\item \textbf{VC-one selection: \cite{BGH23} and \cite{KKMV23}.}
The risk filter is related to the finite-class filter of \cite{BGH23}; shared quantile selection provides the coupling, in the terminology discussed by \cite{KKMV23}. The VC-one structure is described by \cite{BPRS12}, Section 4, Definitions 16 and 17 and the cross-free dual-hypergraph characterization, and by \cite{BD15}, Definition 3 and Theorem 4. \legacyref{6.1}{Lemma} converts this structure into a countable interval order. Shared quantile selection then gives proper pointwise replicable agnostic learning at the exact rate \eqref{legacy:eq:2.11}.
\end{enumerate}
\textbf{6. SOFT-VC and the clipped-shift estimate.}
SOFT-VC combines an average of proper ERMs with a signed empirical histogram, a clipped linear shift, and independent coordinatewise rounding.

\legacyref{7.1}{Lemma} derives the clipped-shift estimate from Hoeffding's exponential-moment bound by completing the square and checking the clipping endpoints. Its countable product form controls the risk of the rounded predictor and supports the SOFT-VC guarantee in \legacyref{7.4}{Theorem}.

\textbf{7. Rare repair and success-probability boosting.}
For comparison, Banihashem et al.~\cite{BBEGH}, Theorem 5 (formally Theorem 12 and Remark 5), boost confidence by running a replicable algorithm at failure probability of order \(\rho\), testing its output, and using a nonreplicable fallback. Their sample count has the form \(n_{\rm replic}(\rho/4,\alpha/2,\rho/4)+n_{\rm tester}(\alpha/2,\min\{\rho,\beta\}/4)+n_{\rm nonreplic}(\alpha,\beta/2)\). Their tester accepts legal solutions at the stronger accuracy and rejects illegal ones at the weaker accuracy. Our agnostic pointwise repair in \legacyref{4.3}{Lemma} and \legacyref{7.6}{Theorem} tests accuracy by comparing the candidate with a proper reference, without access to \(\operatorname{OPT}\). Its analysis bounds the switch probability and disagreement at each fixed coordinate. These guarantees were established in the September version~\cite{Zhang2026}. With the base learner of Proposition~\ref{prop:base}, this repair attains the optimal rate for all three parameters.

Full bibliographic entries are given in the References.

\subsection{SOFT-VC, explicit confidence costs, and counterexamples}\label{legacy:section:7}

For a fractional prediction \(p:X\to[-1,1]\), define
\[
\mathcal E_D(p)
=\sum_{x\in X}w_x\frac{1-p_xy_x}{2},
\qquad
\zeta_x=\frac{w_xy_x}{2},
\qquad
\Pi(t)=\max\{-1,\min\{1,t\}\}.
\tag{S7.1}\label{legacy:eq:7.1}
\]
The fractional risk is used only in the analysis. Every final output below is binary-valued.

We use the countable proper core constructed in \legacyref{6.1}{Lemma}. That construction preserves every finite trace, as well as the infimum of the population risk. Fix an enumeration of this core and select the first empirical risk minimizer in that enumeration. Thus all ERMs below are proper, deterministic, and defined on every sample. Their square-exponential moments satisfy \legacyref{3.4}{Lemma}, with
\[
K_E=2^{24}.
\]
This notation is distinct from the gate logarithm \(K\) in Appendix~\ref{legacy:section:4}.

\legacyheading{7.1}{A one-coordinate clipped-shift bound}

\par\noindent\textbf{Role and dependencies.} This is the local risk inequality behind the soft-shift module. Its product form is \legacyref{7.2}{Corollary}, which is used in \legacyref{7.4}{Theorem}.

\hypertarget{legacy:7.1}{}
\begin{legacytheorem}{Lemma S7.1}{One-coordinate exponential moment for a clipped shift}

Let
\[
p\in[-1,1],\qquad z,\widehat z\in\mathbb R,\qquad \lambda>0,
\]
and set
\[
q=\Pi(p+2\lambda\widehat z).
\]
If \(F\in\{-1,+1\}\) has mean \(q\), then
\[
\boxed{
\log\mathbb E\exp\!\bigl(\lambda z(p-F)\bigr)
\le
\frac23\lambda^2(\widehat z-z)^2.
}
\tag{S7.2}\label{legacy:eq:7.2}
\]
The estimate holds with the numerical constant \(2/3\).

\end{legacytheorem}
\begin{proof}

Put \(\Delta=\widehat z-z\).

If \(z=0\), the logarithmic exponential moment is \(0\), and the assertion follows. Henceforth suppose \(z\ne0\).

\textbf{Interior case: \(-1<q<1\).} Here
\(q=p+2\lambda\widehat z\). The random variable \(F\) has range length \(2\). The centered Hoeffding exponential-moment bound in \legacyref{3.1}{Lemma} gives
\[
\begin{aligned}
\log\mathbb E e^{\lambda z(p-F)}
&=\lambda z(p-q)
+\log\mathbb E e^{\lambda z(q-F)}\\
&\le -2\lambda^2z\widehat z+\frac{\lambda^2z^2}{2}\\
&=\lambda^2\left(-\frac32z^2-2z\Delta\right)\\
&=-\frac32\lambda^2
\left(z+\frac23\Delta\right)^2
+\frac23\lambda^2\Delta^2\\
&\le\frac23\lambda^2\Delta^2.
\end{aligned}
\]

\textbf{Correct endpoint: \(q\in\{-1,+1\}\) and \(qz>0\).} A binary random variable with mean \(q\in\{-1,+1\}\) equals \(q\) almost surely. Since \(p\in[-1,1]\),
\[
z(p-q)\le0.
\]
Thus the logarithmic exponential moment is nonpositive.

\textbf{Incorrect endpoint: \(q\in\{-1,+1\}\) and \(qz<0\).} If \(q=1\) and \(z<0\), clipping implies
\[
2\lambda\widehat z\ge1-p\ge0.
\]
If \(q=-1\) and \(z>0\), clipping implies
\[
2\lambda\widehat z\le-1-p\le0.
\]
In both cases,
\[
z\widehat z\le0,\qquad
|p-q|\le2\lambda|\widehat z|.
\]
Since \(F=q\) almost surely,
\[
\begin{aligned}
\log\mathbb E e^{\lambda z(p-F)}
&=\lambda |z|\,|p-q|\\
&\le2\lambda^2|z|\,|\widehat z|\\
&\le\frac{\lambda^2}{2}(|z|+|\widehat z|)^2\\
&=\frac{\lambda^2}{2}(\widehat z-z)^2\\
&\le\frac23\lambda^2(\widehat z-z)^2.
\end{aligned}
\]
The third inequality is equivalent to
\((|z|-|\widehat z|)^2\ge0\).

These cases include equality at either clipping boundary and exhaust all possibilities. 

\end{proof}
\legacyheading{7.2}{Countably many coordinates}

\par\noindent\textbf{Role and dependencies.} This applies \legacyref{7.1}{Lemma} to countably many coordinates and gives the conditional tail bound needed by \legacyref{7.4}{Theorem}.

\hypertarget{legacy:7.2}{}
\begin{legacytheorem}{Corollary S7.2}{Conditional countable-product bound}

Fix a distribution \(D\) on a countable domain and a number \(\lambda>0\). Let \(\mathcal F\) be a sigma-field with respect to which \(p_x\in[-1,1]\) and \(\widehat\zeta_x\in\mathbb R\) are measurable for every \(x\). Let \((U_x)_{x\in X}\) be independent uniforms on \([-1,1]\), jointly independent of \(\mathcal F\). Set
\[
q_x=\Pi(p_x+2\lambda\widehat\zeta_x),
\qquad
F_x=
\begin{cases}
+1,&U_x\le q_x,\\
-1,&U_x>q_x.
\end{cases}
\]

Then
\[
\boxed{
\mathbb E\!\left[
e^{\lambda(e(F)-\mathcal E_D(p))}
\mid\mathcal F
\right]
\le
\exp\!\left(
\frac23\lambda^2
\|\widehat\zeta-\zeta\|_2^2
\right).
}
\tag{S7.3}\label{legacy:eq:7.3}
\]
If the squared norm is infinite, the right side is interpreted as \(+\infty\). In particular, whenever that norm is finite, for every \(t>0\),
\[
\Pr\!\left(
e(F)>\mathcal E_D(p)+t
\mid\mathcal F
\right)
\le
\exp\!\left(
-\lambda t+
\frac23\lambda^2\|\widehat\zeta-\zeta\|_2^2
\right).
\tag{S7.4}\label{legacy:eq:7.4}
\]

\end{legacytheorem}
\begin{proof}

The risk identity is
\[
e(F)-\mathcal E_D(p)
=\sum_x\zeta_x(p_x-F_x).
\]
Conditioned on \(\mathcal F\), the variables \(F_x\) are independent and have respective means \(q_x\). For every finite set \(A\subseteq X\), \legacyref{7.1}{Lemma} therefore gives
\[
\begin{aligned}
\mathbb E\!\left[
\exp\!\left(\lambda\sum_{x\in A}\zeta_x(p_x-F_x)\right)
\mid\mathcal F
\right]
&=\prod_{x\in A}
\mathbb E\!\left[e^{\lambda\zeta_x(p_x-F_x)}
\mid\mathcal F\right]\\
&\le
\exp\!\left(
\frac23\lambda^2
\sum_{x\in A}(\widehat\zeta_x-\zeta_x)^2
\right).
\end{aligned}
\]

Moreover,
\[
\sum_x|\zeta_x|\le\frac12,
\qquad
\sum_x|\zeta_x(p_x-F_x)|\le1.
\]
Thus the risk-difference series converges absolutely, and every exponential of a finite partial sum is bounded by \(e^\lambda\). Conditional dominated convergence on the left, and convergence of the nonnegative squared-error series on the right, prove \eqref{legacy:eq:7.3}.

Applying conditional Markov inequality to \eqref{legacy:eq:7.3} proves \eqref{legacy:eq:7.4}. The independence used here concerns the remaining rounding variables; it does not require independence among the histogram coordinates or among the fractional predictions. 

\end{proof}
\legacyheading{7.3}{Concentration of a signed empirical histogram}

Given \(M\ge1\) independent observations, define
\[
\widehat\zeta_x
=\frac1{2M}\sum_{i=1}^{M}
Y_i\mathbf1_{\{X_i=x\}}.
\tag{S7.5}\label{legacy:eq:7.5}
\]
In particular, \(\widehat\zeta_x=0\) at every unobserved point.

\par\noindent\textbf{Role and dependencies.} This supplies the signed-histogram accuracy estimate used together with \legacyref{7.2}{Corollary} in \legacyref{7.4}{Theorem}.

\hypertarget{legacy:7.3}{}
\begin{legacytheorem}{Lemma S7.3}{Dimension-free \(\ell_2\) concentration of the signed histogram}

For every distribution \(D\) on a countable domain, every integer \(M\ge1\), and every \(L>0\),
\[
\Pr\!\left(
\|\widehat\zeta-\zeta\|_2
>
\frac{1+\sqrt{2L}}{2\sqrt M}
\right)
\le e^{-L}.
\tag{S7.6}\label{legacy:eq:7.6}
\]

\end{legacytheorem}
\begin{proof}

In the separable Hilbert space \(\ell_2(X)\), put
\[
V_i=\frac{Y_i}{2}e_{X_i}.
\]
Then
\[
\|V_i\|_2=\frac12,\qquad
\mathbb EV_i=\zeta,\qquad
\widehat\zeta=\frac1M\sum_iV_i.
\]
Independence and centering make the cross terms vanish, giving
\[
\begin{aligned}
\mathbb E\|\widehat\zeta-\zeta\|_2^2
&=\frac1M\mathbb E\|V_1-\zeta\|_2^2\\
&=\frac1M\left(\frac14-\|\zeta\|_2^2\right)
\le\frac1{4M}.
\end{aligned}
\]
These identities can also be obtained by summing the coordinate variances; Tonelli's theorem applies to the squared coordinates. Cauchy--Schwarz yields
\[
\mathbb E\|\widehat\zeta-\zeta\|_2
\le\frac1{2\sqrt M}.
\]

Let \(G=\|\widehat\zeta-\zeta\|_2\). Replacing one observation changes its associated \(V_i\) by a vector of norm at most \(1\). The triangle inequality therefore changes \(G\) by at most \(1/M\).

For completeness, expose the observations successively and let \(D_i\) be the increments of the Doob martingale for \(G\). Conditional on the preceding observations, \(D_i\) has mean zero and conditional range length at most \(1/M\). Applying the Hoeffding exponential-moment bound conditionally and iterating gives
\[
\mathbb E e^{\theta(G-\mathbb EG)}
\le e^{\theta^2/(8M)}.
\]
For \(\theta>0\), exponential Markov inequality gives
\[
\Pr(G-\mathbb EG>t)
\le e^{-\theta t+\theta^2/(8M)}.
\]
The choice \(\theta=4Mt\) yields \(e^{-2Mt^2}\). Taking
\(t=\sqrt{L/(2M)}\) proves \eqref{legacy:eq:7.6}. 

\end{proof}
\legacyheading{7.4}{The SOFT-VC construction}

\par\noindent\textbf{Role and dependencies.} This proves the direct part of Theorem~\ref{thm:retained-soft}. Its inputs are the ERM moments of \legacyref{3.4}{Lemma}, the proper core of \legacyref{6.1}{Lemma}, and \legacyref{7.1}{Lemma}--\legacyref{7.3}{Lemma}.

\hypertarget{legacy:7.4}{}
\begin{legacytheorem}{Theorem S7.4}{SOFT-VC}

Let \(X\) be countable, let \(\operatorname{VC}(H)=d\ge1\), and let
\(\alpha,\beta,\rho\in(0,1/10)\). Set
\[
L=\log\frac4\beta,\qquad
T=\left\lceil\frac{32}{\rho^2}\right\rceil,
\qquad
n=\left\lceil
\frac{32K_E}{\alpha^2}
\left(d+\log21+\frac LT\right)
\right\rceil,
\tag{S7.7}\label{legacy:eq:7.7}
\]
\[
\lambda=\frac{4L}{\alpha},
\qquad
M=\left\lceil\frac{32\lambda^2}{\rho^2}\right\rceil
=\left\lceil\frac{512L^2}{\rho^2\alpha^2}\right\rceil.
\tag{S7.8}\label{legacy:eq:7.8}
\]

Train deterministic proper ERMs \(h_1,\ldots,h_T\) on independent blocks of size \(n\). Use an independent block of size \(M\) to construct \eqref{legacy:eq:7.5}. Define
\[
p_x=\frac1T\sum_{j=1}^{T}h_j(x),
\qquad
q_x=\Pi(p_x+2\lambda\widehat\zeta_x),
\]
\[
F(x)=
\begin{cases}
+1,&U_x\le q_x,\\
-1,&U_x>q_x,
\end{cases}
\tag{S7.9}\label{legacy:eq:7.9}
\]
where the \(U_x\) are independent uniforms on \([-1,1]\), independent of all data and shared between executions.

This learner uses
\[
\boxed{
Tn+M
\le
\frac{C_{\mathrm{soft}}}{\alpha^2}
\left[
\frac{d+L^2}{\rho^2}+L
\right],
\qquad C_{\mathrm{soft}}=2^{37},
}
\tag{S7.10}\label{legacy:eq:7.10}
\]
and satisfies
\[
\Pr(e(F)>\operatorname{OPT}+\alpha)<\frac{3\beta}{4},
\qquad
\forall x\in X:\quad
\Pr(F_1(x)\ne F_2(x))\le\frac\rho4.
\tag{S7.11}\label{legacy:eq:7.11}
\]
In particular, it is an agnostic \((\alpha,\beta)\)-learner and is \(\rho\)-pointwise replicable. The final predictor may be improper.

\end{legacytheorem}
\begin{proof}[Proof of accuracy]

Put
\[
Z_j=e(h_j)-\operatorname{OPT}.
\]
Properness implies \(Z_j\ge0\) on every realization. \legacyref{3.4}{Lemma} and independence give
\[
\mathbb E\exp\!\left(
\frac n{2K_E}\sum_{j=1}^{T}Z_j^2
\right)
\le (21e^d)^T.
\]
If \(T^{-1}\sum_jZ_j>\alpha/4\), then Cauchy--Schwarz gives
\[
\sum_jZ_j^2>\frac{T\alpha^2}{16}.
\]
Consequently,
\[
\begin{aligned}
\Pr\!\left(\frac1T\sum_jZ_j>\frac\alpha4\right)
&\le
\exp\!\left(
T(d+\log21)-\frac{nT\alpha^2}{32K_E}
\right)\\
&\le e^{-L}.
\end{aligned}
\]
Since fractional risk is linear,
\[
\mathcal E_D(p)=\frac1T\sum_je(h_j).
\]
Thus, except with probability \(e^{-L}\),
\[
\mathcal E_D(p)\le\operatorname{OPT}+\frac\alpha4.
\tag{S7.12}\label{legacy:eq:7.12}
\]

\legacyref{7.3}{Lemma} supplies, except with probability \(e^{-L}\),
\[
\|\widehat\zeta-\zeta\|_2^2
\le\frac{(1+\sqrt{2L})^2}{4M}
\le\frac{6L}{4M}.
\tag{S7.13}\label{legacy:eq:7.13}
\]
Here \(L>1\), and
\[
(1+\sqrt{2L})^2
\le(3+2\sqrt2)L<6L.
\]

Let \(\mathcal F\) contain all training and histogram observations, but not the rounding variables. On the event in \eqref{legacy:eq:7.13}, \legacyref{7.2}{Corollary} gives
\[
\begin{aligned}
\Pr\!\left(
e(F)>\mathcal E_D(p)+\frac\alpha2
\mid\mathcal F
\right)
&\le
\exp\!\left(
-\frac{\lambda\alpha}{2}
+\frac23\lambda^2\frac{6L}{4M}
\right)\\
&\le
\exp\!\left(-2L+\frac{\rho^2L}{32}\right)
<e^{-L}.
\end{aligned}
\tag{S7.14}\label{legacy:eq:7.14}
\]
The last inequality uses \(\rho^2/32<1\).

When neither \eqref{legacy:eq:7.12}, \eqref{legacy:eq:7.13}, nor the conditional rounding conclusion fails, the final risk is at most
\[
\operatorname{OPT}+\frac\alpha4+\frac\alpha2
=\operatorname{OPT}+\frac{3\alpha}{4}.
\]
A union bound, with the last probability integrated over \(\mathcal F\), therefore gives
\[
\Pr(e(F)>\operatorname{OPT}+\alpha)
<3e^{-L}
=\frac{3\beta}{4}<\beta.
\]

\end{proof}
\begin{proof}[Proof of pointwise replicability]

Fix an arbitrary \(x\in X\). This argument is unconditional and uses no accuracy event.

Given the two data sets, shared-uniform rounding gives
\[
\Pr_U(F_1(x)\ne F_2(x)\mid\text{data})
=\frac{|q_x^{(1)}-q_x^{(2)}|}{2}.
\]
The clipping map is \(1\)-Lipschitz. Hence
\[
\Pr(F_1(x)\ne F_2(x))
\le
\frac12\mathbb E|p_x^{(1)}-p_x^{(2)}|
+\lambda\mathbb E|
\widehat\zeta_x^{(1)}-\widehat\zeta_x^{(2)}|.
\tag{S7.15}\label{legacy:eq:7.15}
\]

Because the same deterministic ERM rule is used on independent blocks,
\[
\mathbb E(p_x^{(1)}-p_x^{(2)})^2
=\frac{2\operatorname{Var}(h_1(x))}{T}
\le\frac2T.
\]
For the histogram,
\[
\operatorname{Var}(\widehat\zeta_x)
=\frac1M\left(\frac{w_x}{4}-\zeta_x^2\right)
\le\frac{w_x}{4M}.
\]
Cauchy--Schwarz applied to the two independent executions gives
\[
\mathbb E|
\widehat\zeta_x^{(1)}-\widehat\zeta_x^{(2)}|
\le\sqrt{\frac{w_x}{2M}}.
\]
Substitution into \eqref{legacy:eq:7.15} yields
\[
\begin{aligned}
\Pr(F_1(x)\ne F_2(x))
&\le
\frac1{\sqrt{2T}}+\lambda\sqrt{\frac{w_x}{2M}}\\
&\le\frac\rho8+\frac{\rho\sqrt{w_x}}8\\
&\le\frac\rho4<\rho.
\end{aligned}
\tag{S7.16}\label{legacy:eq:7.16}
\]
When \(w_x=0\), the histogram term is identically zero. The base-average bound still applies at that fixed zero-mass point.

\end{proof}
\begin{proof}[Proof of sample count]

The ceilings give
\[
Tn+M
\le
\frac{32K_E}{\alpha^2}
\bigl[T(d+\log21)+L\bigr]
+T+\frac{512L^2}{\rho^2\alpha^2}+1.
\tag{S7.17}\label{legacy:eq:7.17}
\]
Since \(T\le33/\rho^2\), \(\log21<4\), \(d\ge1\), and \(\alpha<1\),
\[
T(d+\log21)\le\frac{165d}{\rho^2},
\qquad
T+1\le\frac{34}{\rho^2}
\le\frac{34d}{\rho^2\alpha^2}.
\]
It follows that
\[
Tn+M
\le\frac1{\alpha^2}
\left[
\frac{(5280K_E+34)d+512L^2}{\rho^2}
+32K_E L
\right].
\]
The three coefficients satisfy
\[
5280K_E+34=88583700514<2^{37},
\qquad
512<2^{37},
\qquad
32K_E=2^{29}<2^{37}.
\]
This proves \eqref{legacy:eq:7.10}. No logarithm of \(1/\alpha\) or \(1/\rho\) is hidden in this calculation. 

\end{proof}
\legacyheading{7.5}{Exact order at fixed confidence}

\par\noindent\textbf{Role and dependencies.} This combines \legacyref{7.4}{Theorem} with \legacyref{5.1}{Theorem} to obtain the fixed-confidence statement in Theorem~\ref{thm:retained-soft}.

\hypertarget{legacy:7.5}{}
\begin{legacytheorem}{Theorem S7.5}{Fixed-confidence exact order}

For every fixed \(\beta_0\in(0,1/10)\), every countable domain, every class \(H\) of VC dimension \(d\ge1\), and every \(\alpha,\rho\in(0,1/10)\),
\[
\boxed{
\kappa\frac d{\rho^2\alpha^2}
\le
m_H(\alpha,\beta_0,\rho)
\le
C_{\beta_0}\frac d{\rho^2\alpha^2}.
}
\tag{S7.18}\label{legacy:eq:7.18}
\]
The same inequalities hold for \(m^*(d,\alpha,\beta_0,\rho)\). One valid choice is
\[
L_0=\log\frac4{\beta_0},
\qquad
C_{\beta_0}=2^{37}(1+L_0^2+L_0).
\tag{S7.19}\label{legacy:eq:7.19}
\]
Thus the constants are independent of \(H,X,d,\alpha,\rho\).

\end{legacytheorem}
\begin{proof}

The lower bound is \legacyref{5.1}{Theorem}.

For the upper bound, \legacyref{7.4}{Theorem} gives
\[
m_H(\alpha,\beta_0,\rho)
\le\frac{2^{37}}{\alpha^2}
\left(\frac{d+L_0^2}{\rho^2}+L_0\right).
\]
Since \(d\ge1\) and \(\rho^2<1\),
\[
\frac{d+L_0^2}{\rho^2}+L_0
\le
(1+L_0^2+L_0)\frac d{\rho^2}.
\]
Taking the supremum over classes proves the assertion for \(m^*\). 

In particular, at fixed confidence there is no general necessity for an additional \(\log(1/\alpha)\) factor as \(\alpha\to0\), nor for an additional \(\log(1/\rho)\) factor as \(\rho\to0\). This conclusion concerns the optimal sample complexity over the permitted learning rules.

\end{proof}
\legacyheading{7.6}{SOFT-VC with rare repair}

\par\noindent\textbf{Role and dependencies.} This proves the repaired and selected soft-shift bound in Theorem~\ref{thm:retained-soft}. It applies the repair principle of \legacyref{4.3}{Lemma} to \legacyref{7.4}{Theorem} and rechecks all block constants.

\hypertarget{legacy:7.6}{}
\begin{legacytheorem}{Theorem S7.6}{Full-parameter bounds from SOFT-VC and repair}

Under the hypotheses of \legacyref{7.4}{Theorem}, run
\[
B=\operatorname{SOFT\text{-}VC}
\left(\frac\alpha4,\frac{\rho}{128},\rho\right).
\tag{S7.20}\label{legacy:eq:7.20}
\]
Independently train a proper reference ERM \(g\), and use a further independent validation block, with
\[
\eta=\min\{\beta/8,\rho/128\},
\]
\[
n_{\rm ref}
=
\left\lceil
\frac{2^{22}}{\alpha^2}
\left(d+\log\frac2\eta\right)
\right\rceil,
\qquad
n_{\rm val}
=
\left\lceil
\frac{128}{\alpha^2}\log\frac4\eta
\right\rceil.
\tag{S7.21}\label{legacy:eq:7.21}
\]
Output
\[
F=
\begin{cases}
B,&\widehat e(B)\le\widehat e(g)+\alpha/2,\\
g,&\widehat e(B)>\widehat e(g)+\alpha/2.
\end{cases}
\tag{S7.22}\label{legacy:eq:7.22}
\]

The repaired learner is an agnostic \((\alpha,\beta)\)-learner, has pointwise disagreement strictly below \(9\rho/32\), and uses at most
\[
\boxed{
N_{\rm rep}
\le
\frac{C_3}{\alpha^2}
\left[
\frac{d+J^2}{\rho^2}+L
\right],
\qquad
J=\log\frac{512}{\rho},
\quad C_3=2^{39}.
}
\tag{S7.23}\label{legacy:eq:7.23}
\]

Choosing, using only the known parameters, between this learner and direct SOFT-VC gives
\[
\boxed{
\frac{\kappa/2}{\alpha^2}
\left(\frac d{\rho^2}+L\right)
\le
m_H(\alpha,\beta,\rho)
\le
\frac{C_3}{\alpha^2}
\left[
\frac d{\rho^2}+L+
\frac{\min\{L^2,J^2\}}{\rho^2}
\right].
}
\tag{S7.24}\label{legacy:eq:7.24}
\]
The same bounds hold for \(m^*\). In particular, for every fixed implicit constant in
\[
\min\{L^2,J^2\}\lesssim d+\rho^2L,
\tag{S7.25}\label{legacy:eq:7.25}
\]
these bounds establish the order
\[
\Theta\!\left(
\frac d{\rho^2\alpha^2}+\frac{L}{\alpha^2}
\right),
\]
with the comparison constants allowed to depend on that implicit constant. Condition \eqref{legacy:eq:7.25} is sufficient for this route's bound to attain the optimal order.

\end{legacytheorem}
\begin{proof}[Proof of the repair interface]

The inner confidence logarithm is
\[
\log\frac4{\rho/128}=J.
\]
\legacyref{7.4}{Theorem} supplies
\[
\Pr\!\left(e(B)>\operatorname{OPT}+\frac\alpha4\right)
<\frac{3\rho}{512}<\frac{\rho}{128},
\]
and, for every fixed \(x\),
\[
\Pr(B_1(x)\ne B_2(x))\le\frac\rho4.
\]

\legacyref{3.3}{Lemma}, with accuracy \(\alpha/8\) and confidence \(\eta\), gives
\[
\Pr\!\left(e(g)>\operatorname{OPT}+\frac\alpha8\right)<\eta.
\]
Condition on the already generated functions \(B,g\), including all randomness used to produce \(B\). The validation observations are still independent. Hoeffding's inequality and a union bound give
\[
\begin{aligned}
\Pr(V_{\rm bad}\mid B,g)
&\le4e^{-n_{\rm val}\alpha^2/128}\\
&\le\eta,
\end{aligned}
\]
where \(V_{\rm bad}\) is the event that either validation risk differs from its true risk by more than \(\alpha/16\).

Define the switching event by the decision branch,
\[
W=\{\widehat e(B)>\widehat e(g)+\alpha/2\}.
\]
On this event the algorithm selects \(g\) through the repair branch.

Properness gives \(e(g)\ge\operatorname{OPT}\) on every realization. If
\(e(B)\le\operatorname{OPT}+\alpha/4\) and validation is accurate, then
\[
\widehat e(B)-\widehat e(g)
\le\frac\alpha4+2\frac\alpha{16}
=\frac{3\alpha}{8}<\frac\alpha2.
\]
Therefore
\[
W\subseteq
\left\{e(B)>\operatorname{OPT}+\frac\alpha4\right\}
\cup V_{\rm bad},
\]
so
\[
\Pr(W)<\frac{\rho}{128}+\eta
\le\frac{\rho}{64}.
\]

For two executions and every fixed \(x\),
\[
\{F_1(x)\ne F_2(x)\}
\subseteq
\{B_1(x)\ne B_2(x)\}\cup W_1\cup W_2.
\]
A union bound gives
\[
\Pr(F_1(x)\ne F_2(x))
<
\frac\rho4+2\frac{\rho}{64}
=\frac{9\rho}{32}<\rho.
\]

For accuracy, suppose that the reference and validation conclusions both hold. If the repair branch is taken, then
\[
e(F)=e(g)\le\operatorname{OPT}+\alpha/8.
\]
If it is not taken, then
\[
\begin{aligned}
e(F)=e(B)
&\le\widehat e(B)+\alpha/16\\
&\le\widehat e(g)+\alpha/2+\alpha/16\\
&\le e(g)+\alpha/2+\alpha/8\\
&\le\operatorname{OPT}+3\alpha/4.
\end{aligned}
\]
Thus the accuracy failure probability is at most
\[
2\eta\le\beta/4<\beta.
\]

\end{proof}
\begin{proof}[Proof of sample count]

The inner construction uses
\[
T=\left\lceil\frac{32}{\rho^2}\right\rceil,
\qquad
\lambda_B=\frac{16J}{\alpha}.
\]
Expanding its actual sample count, rather than treating a fixed-confidence constant as uniform in \(\rho\), gives
\[
N_B
\le
\frac{512K_E}{\alpha^2}
\bigl[T(d+\log21)+J\bigr]
+T+\frac{8192J^2}{\rho^2\alpha^2}+1.
\]
Since \(T\le33/\rho^2\),
\[
N_B
\le
\frac1{\alpha^2}
\left[
\frac{16896K_E(d+\log21)+8192J^2+34}{\rho^2}
+512K_EJ
\right].
\]

The definition of \(\eta\) gives
\[
\log\frac2\eta
=\max\{L+\log4,\ J-\log2\}\le L+J,
\]
\[
\log\frac4\eta
=\max\{L+\log8,\ J\}\le L+J.
\]
Consequently,
\[
n_{\rm ref}+n_{\rm val}
\le
\frac1{\alpha^2}
\left[
2^{22}d+(2^{22}+128)(L+J)
\right]+2.
\]

Now \(J^2>\log21\), \(J>1\), \(d\ge1\), and \(\rho^2<1\). Absorbing \(J\) into \(J^2/\rho^2\), and the two reference/validation ceilings into \(2d/(\rho^2\alpha^2)\), yields
\[
N_{\rm rep}
\le
\frac1{\alpha^2}
\left[
\frac{A_d d+A_sJ^2}{\rho^2}+A_LL
\right],
\]
where
\[
\begin{aligned}
A_d&=16896K_E+2^{22}+36
=283472035876,\\
A_s&=17408K_E+8192+2^{22}+128
=292061978752,\\
A_L&=2^{22}+128
=4194432.
\end{aligned}
\]
Each coefficient is less than
\[
2^{39}=549755813888.
\]
This proves \eqref{legacy:eq:7.23}.

The direct learner has the bound \eqref{legacy:eq:7.10}, with
\(C_{\mathrm{soft}}\le C_3\). Selecting the smaller of the two explicit sample counts therefore gives the upper bound in \eqref{legacy:eq:7.24}. This selection is deterministic in the known parameters and does not inspect the training data.

For the lower bound, \legacyref{5.1}{Theorem} gives
\[
m_H\ge\frac{\kappa}{\alpha^2}\frac d{\rho^2}.
\]
Also \(L\le2b\), so \legacyref{5.2}{Lemma} gives
\[
m_H\ge\frac{3}{50}\frac b{\alpha^2}
\ge\frac{3}{100}\frac L{\alpha^2}
\ge\frac{\kappa L}{\alpha^2}.
\]
Here \(\kappa<3/100\). Averaging these two lower bounds proves the lower side of \eqref{legacy:eq:7.24}. Taking suprema preserves all the bounds.

Finally, \eqref{legacy:eq:7.25} absorbs the only additional numerator in the upper side of \eqref{legacy:eq:7.24}. 

\end{proof}
\legacyheading{7.7}{Gate estimates and agnostic counterexamples}

\par\noindent\textbf{Role and dependencies.} These estimates and counterexamples complete the explanation in Section~\ref{sec:retained}. They refine the gate analysis of \legacyref{3.2}{Lemma} and separate agnostic excess risk from Bayes excess risk.

\hypertarget{legacy:7.7}{}
\begin{legacydefinition}{Remark S7.7}{Variance-proxy control, a forced-gate obstruction, and an agnostic counterexample}

The following observations analyze concentration on the SMALL region, the cost of forced incorrect gate outputs, and the relation between agnostic excess risk and Bayes excess risk. The obstructions concern the gating and rounding rules specified in each construction.

\paragraph{(a) Expected SMALL-region variance proxy and its conditional rounding interface}

Consider the gate in \legacyref{3.2}{Lemma}, with integer \(M\ge1\), \(\tau\in(0,1)\), and
\[
a^2=M\tau\ge1.
\]
Define
\[
B_{\rm small}
=\sum_{x:\operatorname{TEST}(x)=\mathrm{SMALL}}w_x^2y_x^2.
\]
Then
\[
\boxed{\mathbb E B_{\rm small}\le16\tau.}
\tag{S7.26}\label{legacy:eq:7.26}
\]

To prove this, retain the conventions
\[
s_x=0\quad\text{when }N_x=0,\qquad
y_x=0\quad\text{when }w_x=0.
\]
Zero-mass points contribute zero to \(B_{\rm small}\), so fix \(x\) with \(w=w_x>0\). Write \(y=y_x\), \(N=N_x\), \(s=s_x\), and
\[
v=|y|\sqrt{Mw}.
\]
The conditional and unconditional expectations are, respectively,
\[
\mathbb E[s\mid N]=y\sqrt N,
\qquad
\mathbb Es=y\,\mathbb E\sqrt N.
\]
Furthermore,
\[
\begin{aligned}
\mathbb E|\sqrt N-\sqrt{Mw}|
&\le\frac{\mathbb E|N-Mw|}{\sqrt{Mw}}\\
&\le\frac{\sqrt{\operatorname{Var}(N)}}{\sqrt{Mw}}
=\sqrt{1-w}\le1.
\end{aligned}
\]
It follows that
\[
|\mathbb Es|\ge v-1.
\]

If \(v\ge4a\), then SMALL implies \(|s|\le2a\), and
\[
|\mathbb Es|-2a
\ge v-1-2a
\ge v-3a
\ge v/4.
\]
\legacyref{3.2}{Lemma} gives \(\operatorname{Var}(s)\le1\). Chebyshev's inequality therefore gives
\[
\Pr(\operatorname{TEST}(x)=\mathrm{SMALL})\le16/v^2.
\]
Hence
\[
w^2y^2\Pr(\mathrm{SMALL})
=\frac{wv^2}{M}\Pr(\mathrm{SMALL})
\le\frac{16w}{M}
\le\frac{16a^2w}{M}.
\]
If \(v<4a\), the probability bound \(1\) instead gives
\[
w^2y^2\Pr(\mathrm{SMALL})
\le\frac{wv^2}{M}
\le\frac{16a^2w}{M}.
\]
Summing by Tonelli's theorem proves \eqref{legacy:eq:7.26}. No independence across gate coordinates is needed for this summation.

The rounding interface requires a separate independence condition. Suppose the final rounding on the SMALL region uses independent coordinate seeds, jointly independent of the gate data, gate randomness, and base learners. Let \(\mathcal G\) fix all randomness except these final rounding seeds. Let
\[
R=\{x:\operatorname{TEST}(x)=\mathrm{SMALL}\},
\]
and let \(p_x\in[-1,1]\) be any \(\mathcal G\)-measurable rounding means. Write
\[
Z_R=\sum_{x\in R}w_x\frac{1-Q_xy_x}{2},
\]
where the \(Q_x\) are rounded using the independent final seeds.

For every \(\delta_1\in(0,1)\), Markov's inequality and \eqref{legacy:eq:7.26} give
\[
\Pr(B_{\rm small}>16\tau/\delta_1)\le\delta_1.
\]
On the complementary event, the conditional version of \legacyref{3.1}{Lemma} gives, for every \(t>0\),
\[
\boxed{
\Pr\!\left(
Z_R-\mathbb E[Z_R\mid\mathcal G]>t
\right)
\le
\delta_1+
\exp\!\left(-\frac{\delta_1t^2}{8\tau}\right).
}
\tag{S7.27}\label{legacy:eq:7.27}
\]
If \(B_{\rm small}=0\), its conditional fluctuation is zero, as required by the zero-variance convention.

For example, at fixed confidence \(\beta_0\), take
\[
\delta_1=\beta_0/8,\qquad t=\alpha/2,\qquad
\tau\le
\frac{\beta_0\alpha^2}{256\log(8/\beta_0)}.
\]
The right side of \eqref{legacy:eq:7.27} is at most \(\beta_0/4\). Thus this particular fluctuation can be controlled with \(\tau=c_{\beta_0}\alpha^2\), without a \(\log(1/\alpha)\) factor.

This statement controls only the SMALL-region rounding fluctuation. It neither certifies the signs forced by the gate nor proves accuracy of the complete hard-gate learner. In particular, the earlier numerical choice \(18\) in the gate scale cannot simply be retained without resetting the desired failure budget.

\paragraph{(b) A witnessed obstruction for a specified forced-negative gate family}

Let
\[
X=\mathbb N\cup\{c\},
\]
and let
\[
H=\{h_{s,t}:s,t\in\{-1,+1\}\},
\]
where \(h_{s,t}(i)=s\) for \(i\in\mathbb N\) and
\(h_{s,t}(c)=t\). This fixed class has VC dimension two: it shatters \(\{1,c\}\), and its four members cannot shatter three points.

Fix an integer \(q\ge2^{4096}\), and set
\[
\alpha=1/q,\qquad \beta=\rho=1/20.
\]
The distribution is supported on \(1,\ldots,q^2\), with
\[
D_q(i,+1)=\frac3{4q^2},\qquad
D_q(i,-1)=\frac1{4q^2}.
\tag{S7.28}\label{legacy:eq:7.28}
\]
Every positive-mass atom has conditional mean \(1/2\). Both the Bayes risk and \(\operatorname{OPT}\) equal \(1/4\).

Consider implementations with the following explicit contract:

\begin{enumerate}[leftmargin=1.6em]
\item TEST uses a designated block of a predetermined integer \(M\) iid observations and the gate of \legacyref{3.2}{Lemma}.
\item Its parameters are
\[
\tau=\frac1{18q^2\log80},\qquad a^2=M\tau,
\]
with \(a\ge1\).
\item Whenever TEST returns \(-1\), the final prediction at that point must be \(-1\).
\item All other computations and outputs may be arbitrary.
\end{enumerate}
For every such implementation, if
\[
q^2\le M\le\frac{q^2\log q}{24},
\tag{S7.29}\label{legacy:eq:7.29}
\]
the accuracy failure probability is at least \(31/32\).

Here is a proof with a specified binomial counting variable. Put
\[
\nu=M/q^2,\qquad k=\lceil\nu\rceil.
\]
Temporarily replace the block size by \(\operatorname{Poi}(M)\), while keeping the nominal threshold \(a^2=M\tau\) fixed.

The marked Poisson counts \(N_i^+,N_i^-\) are independent, with respective means \(3\nu/4,\nu/4\). This factorization follows directly by multiplying the Poisson probability of the total count by the conditional multinomial probability:
\[
e^{-M}\prod_{i,\sigma}
\frac{(M D_q(i,\sigma))^{n_i^\sigma}}{n_i^\sigma!}
=
\prod_{i,\sigma}
e^{-M D_q(i,\sigma)}
\frac{(M D_q(i,\sigma))^{n_i^\sigma}}{n_i^\sigma!}.
\]

Define the witness indicators and their count by
\[
I_i=\mathbf1_{\{N_i^+=0,\ N_i^-=k\}},
\qquad
B_0=\sum_{i=1}^{q^2}I_i.
\tag{S7.30}\label{legacy:eq:7.30}
\]
Then, in the Poissonized experiment,
\[
B_0\sim\operatorname{Bin}(q^2,p),
\qquad
p=e^{-\nu}\frac{(\nu/4)^k}{k!}.
\]
This binomial assertion concerns \(B_0\), not the total number of negative gate outputs.

At each witness,
\[
s_i=-\sqrt k<-2a,
\]
because
\[
4a^2=\frac{4\nu}{18\log80}<\nu\le k.
\]
Hence the witness forces a negative output for every allowed threshold realization.

Since \(\nu\ge1\),
\[
k\le2\nu,\qquad k!\le k^k.
\]
Therefore
\[
p\ge e^{-\nu}\left(\frac{\nu}{4k}\right)^k
\ge e^{-\nu}8^{-k}
\ge e^{-6\nu}
\ge q^{-1/4},
\]
where \(1+2\log8<6\) and the upper bound in \eqref{legacy:eq:7.29} were used. Thus
\[
\mu:=\mathbb EB_0\ge q^{7/4}.
\]

For completeness, the needed binomial lower-tail bound follows from exponential Markov inequality. For \(t>0\),
\[
\mathbb Ee^{-tB_0}
=(1-p+pe^{-t})^{q^2}
\le e^{\mu(e^{-t}-1)}.
\]
Taking \(t=1/2\) and using \(e^{-1/2}\le5/8\) gives
\[
\Pr(B_0\le\mu/2)
\le e^{\mu/4+\mu(e^{-1/2}-1)}
\le e^{-\mu/8}.
\]
Consequently,
\[
\Pr(B_0<q^{7/4}/2)\le e^{-q^{7/4}/8}.
\]

To condition back to exactly \(M\) observations, note that
\[
\log M!
\le\int_1^M\log t\,dt+\log M
=M\log M-M+1+\log M.
\]
It follows that
\[
\Pr(\operatorname{Poi}(M)=M)\ge\frac1{eM}>\frac1{3M}.
\]
Thus, in the original fixed-sample experiment,
\[
\Pr(B_0<q^{7/4}/2)
\le3M e^{-q^{7/4}/8}<\frac1{32}.
\tag{S7.31}\label{legacy:eq:7.31}
\]
The final inequality has an elementary margin: from \eqref{legacy:eq:7.29},
\[
\log(96M)\le\log(4q^2\log q)\le4\log q,
\]
whereas \(q^{7/4}/8\ge8q>4\log q\) for \(q\ge256\).

Every witnessed negative output increases the risk above the Bayes risk by \(1/(2q^2)\). On the complementary event in \eqref{legacy:eq:7.31},
\[
e(F)-\operatorname{OPT}
\ge\frac{B_0}{2q^2}
\ge\frac14q^{-1/4}
>\frac1q=\alpha.
\]
Predictions at other atoms cannot improve on the Bayes risk and cannot cancel this loss.

Finally, the imposed condition \(a\ge1\) implies
\[
M\ge18\log80\,q^2>q^2.
\]
There is therefore no allowed \(M<q^2\) escape interval when this obstruction is used to infer the implementation-specific requirement
\[
M=\Omega(q^2\log q)
=\Omega\!\left(\frac{\log(1/\alpha)}{\alpha^2}\right).
\]
No independence assertion is made after conditioning on the total sample count.

\paragraph{(c) The same class admits a proper two-atom reduction}

The preceding obstruction is not a lower bound for learning the class \(H\). Define
\[
\phi(i)=1\quad(i\in\mathbb N),\qquad
\phi(c)=2.
\]
Every member of \(H\) is the pullback of a function on the two-point full class.

Here is an explicit proper learner for every distribution on this compressed domain. Let
\[
\mu_j=\mathbb E[Y\mathbf1_{\{\phi(X)=j\}}],
\qquad
\widehat\mu_j=\frac1N\sum_{i=1}^N
Y_i\mathbf1_{\{\phi(X_i)=j\}},
\]
where
\[
N=\left\lceil
\frac{32}{\alpha^2}
\left(\rho^{-2}+\log\frac8\beta\right)
\right\rceil.
\tag{S7.32}\label{legacy:eq:7.32}
\]
Use independent shared
\[
u_j\sim\operatorname{Unif}[-\alpha/4,\alpha/4],
\]
and output \(f_j=+1\) exactly when \(\widehat\mu_j\ge u_j\).

For each fixed \(j\), the threshold interval has length \(\alpha/2\), so
\[
\Pr(f_j\ne f'_j)
\le\frac2\alpha\mathbb E|\widehat\mu_j-\widehat\mu'_j|
\le\frac{2\sqrt2}{\alpha\sqrt N}
\le\frac\rho2<\rho.
\]
The second inequality uses the variance bound \(1/N\) for an average of variables in \([-1,1]\).

Hoeffding's inequality and a union bound over the two coordinates give
\[
\Pr\!\left(\exists j:
|\widehat\mu_j-\mu_j|>\alpha/4\right)
\le4e^{-N\alpha^2/32}
\le\beta/2.
\]
On the complementary event, a prediction of the wrong sign of \(\mu_j\) requires \(|\mu_j|\le\alpha/2\). Since the full two-point class attains the coordinatewise optimum,
\[
e(f)-\operatorname{OPT}
=\sum_{j:\,f_j\mu_j<0}|\mu_j|
\le\alpha.
\]
The two-point output belongs to the full class, and its pullback belongs to \(H\). This proves the proper full-parameter order
\[
O\!\left(
\frac1{\rho^2\alpha^2}
+\frac{\log(1/\beta)}{\alpha^2}
\right)
\]
for this class, independently of the number of original atoms or their mass scales.

\paragraph{(d) A rational three-atom VC-two counterexample to a Bayes-excess inference}

Let \(X=\{a,b,c\}\), and let
\[
H=\{h_{s,t}:s,t\in\{-1,+1\}\},
\qquad
h_{s,t}(a)=h_{s,t}(b)=s,\quad h_{s,t}(c)=t.
\]
The class shatters \(\{a,c\}\), but cannot shatter three points, so its VC dimension is exactly two.

Use the rational distribution
\[
\begin{array}{c|cc|cc}
x&D(x,+1)&D(x,-1)&w_x&y_x\\ \hline
a&3/10&1/10&2/5&1/2\\
b&1/10&3/10&2/5&-1/2\\
c&1/10&1/10&1/5&0
\end{array}
\]
For every \(h_{s,t}\),
\[
e(h_{s,t})
=(1/5-s/10)+(1/5+s/10)+1/10
=\frac12.
\]
Thus every proper hypothesis is exactly optimal in \(H\), whereas the Bayes risk is \(3/10\).

The uniform mixture of the four proper hypotheses has
\(p_a=p_b=p_c=0\). Its unshifted, independent coordinatewise rounding satisfies
\[
e(F)=\frac12-\frac1{10}F_a+\frac1{10}F_b,
\]
and therefore
\[
e(F)=
\begin{cases}
3/10,&\text{with probability }1/4,\\
1/2,&\text{with probability }1/2,\\
7/10,&\text{with probability }1/4.
\end{cases}
\tag{S7.33}\label{legacy:eq:7.33}
\]
Consequently, for every \(\alpha\in(0,1/10)\),
\[
\Pr(e(F)>\operatorname{OPT}+\alpha)=\frac14,
\qquad
\operatorname{Var}(e(F))=\frac1{50}.
\]

This proves that small agnostic excess risk---even zero agnostic excess risk for every hypothesis in a mixture---does not imply small Bayes excess risk or high-confidence accuracy after independent marginal rounding.

This example concerns independent rounding of the specified mixture. SOFT-VC instead shifts the marginals using the signed histogram before rounding.

A related scale calculation must retain its accuracy denominator:
\[
\tau=\frac{\alpha^2}{18L}
\quad\Longrightarrow\quad
\frac{\alpha}{12\tau}
=\frac{3L}{2\alpha},
\]
not \(3L/2\). The same rational distribution can be placed in an arbitrarily small-accuracy fixed-confidence regime by choosing
\(\alpha=1/q\), integer \(q\ge2^{4096}\), and
\(\rho=\beta=1/20\).

\end{legacydefinition}

\subsection{Consolidated bounds for the earlier algorithms}\label{legacy:section:summary}

\par\noindent\textbf{Role and dependencies.} This consolidates the sample budgets proved in the preceding subsections. It compares the earlier algorithms in Section~\ref{sec:soft-retained}; the new all-parameter optimum is Theorem~\ref{thm:main}.

\hypertarget{legacy:2.2}{}
\begin{legacytheorem}{Theorem S2.2}{Explicit bounds for the earlier agnostic algorithms}

For every countable domain \(X\), every class \(H\subseteq\{-1,+1\}^{X}\) of VC dimension \(d\ge1\), and every \(\alpha,\beta,\rho\in(0,1/10)\), the following statements hold.

\textbf{(a) Full-parameter lower and upper bounds.} Put
\[
b=\log\frac1\beta,\qquad
L=\log\frac4\beta,\qquad
\ell=\log\frac{256}{\rho},\qquad
J=\log\frac{512}{\rho},
\]
\[
K=\log\frac{288L}{\alpha^2\beta},
\qquad
K_0=\log\frac{294912\ell}{\alpha^2\rho},
\]
and
\[
Q_R=\frac{d+\ell}{\rho^2}+\ell K_0+b,\qquad
Q_P=\frac{d+L}{\rho^2}+LK,
\]
\[
s_{\mathrm S}=\min\{L^2,J^2\},
\qquad
Q_{\mathrm S}=\frac{d+s_{\mathrm S}}{\rho^2}+L.
\]

The first two routes give
\[
\boxed{
\frac{\bar\kappa}{\alpha^2}
\left(\frac d{\rho^2}+\log\frac1\beta\right)
\le m_H(\alpha,\beta,\rho)
\le
\frac{2^{30}}{\alpha^2}\min\{Q_R,Q_P\}.
}
\tag{S2.1}\label{legacy:eq:2.1}
\]

A third route, using SOFT-VC and rare repair, gives
\[
\boxed{
m_H(\alpha,\beta,\rho)
\le
\frac{C_3}{\alpha^2}
\left[
\frac{d+\min\{L^2,J^2\}}{\rho^2}+L
\right],
\qquad C_3=2^{39}.
}
\tag{S2.1S}\label{legacy:eq:2.1S}
\]

Consequently, the three-route bound is
\[
\boxed{
\begin{aligned}
\frac{\bar\kappa}{\alpha^2}
\left(\frac d{\rho^2}+b\right)
&\le m_H(\alpha,\beta,\rho)\\
&\le
\frac1{\alpha^2}
\min\{2^{30}Q_R,\ 2^{30}Q_P,\ C_3Q_{\mathrm S}\}\\
&\le
\frac{C_3}{\alpha^2}\min\{Q_R,Q_P,Q_{\mathrm S}\}.
\end{aligned}
}
\tag{S2.1U}\label{legacy:eq:2.1U}
\]
The same inequalities hold for \(m^*(d,\alpha,\beta,\rho)\).

A further explicit upper bound is
\[
\boxed{
m_H(\alpha,\beta,\rho)
\le
\frac{2^{21}\mathcal B}{\rho^2\alpha^2}
+\frac{1152L}{\alpha^2}
\log\frac{288L}{\alpha^2\beta},
}
\tag{S2.2}\label{legacy:eq:2.2}
\]
where
\[
\mathcal B
=d+\log\frac{32}{\beta}
+\min\left\{\log\frac1\rho,\log\frac1\alpha\right\}.
\]

In asymptotic form, the first two routes are
\[
O\!\left(
\frac{d+\log(1/\rho)}{\rho^2\alpha^2}
+\frac{\log(1/\rho)\log(1/(\alpha\rho))}{\alpha^2}
+\frac{\log(1/\beta)}{\alpha^2}
\right)
\tag{S2.3}\label{legacy:eq:2.3}
\]
and
\[
O\!\left(
\frac{d+\log(1/\beta)}{\rho^2\alpha^2}
+\frac{\log(1/\beta)}{\alpha^2}
\log\frac{\log(1/\beta)}{\alpha^2\beta}
\right).
\tag{S2.4}\label{legacy:eq:2.4}
\]
The third route is
\[
O\!\left(
\frac{d+\min\{\log^2(1/\beta),\log^2(1/\rho)\}}
{\rho^2\alpha^2}
+\frac{\log(1/\beta)}{\alpha^2}
\right).
\tag{S2.4S}\label{legacy:eq:2.4S}
\]
In particular, \eqref{legacy:eq:2.5} holds:
\[
m^*(d,\alpha,\beta,\rho)
=
\widetilde\Theta_{\alpha,\rho}\!\left(
\frac d{\rho^2\alpha^2}
+\frac{\log(1/\beta)}{\alpha^2}
\right)
\tag{S2.5}\label{legacy:eq:2.5}
\]
where the suppressed factors may depend polylogarithmically only on \(1/\alpha\) and \(1/\rho\), and the confidence term is additive.

\textbf{(b) Exact absorption regions and fixed confidence.} Let
\[
B_*=\frac d{\rho^2}+b,
\]
\[
E_R=\frac{\ell}{\rho^2}+\ell K_0,\qquad
E_P=\frac L{\rho^2}+LK,\qquad
E_{\mathrm S}=\frac{s_{\mathrm S}}{\rho^2}.
\]
For every fixed \(A<\infty\), throughout the region
\[
\min\{E_R,E_P,E_{\mathrm S}\}\le A B_*,
\tag{S2.6}\label{legacy:eq:2.6}
\]
one has
\[
m^*(d,\alpha,\beta,\rho)
=
\Theta_A\!\left(
\frac d{\rho^2\alpha^2}
+\frac{\log(1/\beta)}{\alpha^2}
\right).
\tag{S2.7}\label{legacy:eq:2.7}
\]
Condition \eqref{legacy:eq:2.6} is the constant-factor absorption test for the three displayed general-purpose bounds, up to changing \(A\). The phase construction of Theorem~\ref{thm:main} attains the optimal order throughout the parameter range.

Two sufficient regions are
\[
d\ge\ell,\qquad
\log\frac1\rho\log\frac1{\alpha\rho}
\lesssim\frac d{\rho^2},
\tag{S2.8}\label{legacy:eq:2.8}
\]
or
\[
d\ge\log\frac1\beta,\qquad
\log\frac1\beta
\log\frac{\log(1/\beta)}{\alpha^2\beta}
\lesssim\frac d{\rho^2}.
\tag{S2.9}\label{legacy:eq:2.9}
\]
The third route supplies the additional sufficient region
\[
\min\{L^2,J^2\}\lesssim d+\rho^2b.
\tag{S2.9S}\label{legacy:eq:2.9S}
\]

For every fixed \(\beta\in(0,1/10)\),
\[
\boxed{
m^*(d,\alpha,\beta,\rho)
=\Theta_\beta\!\left(\frac d{\rho^2\alpha^2}\right)
\quad
\text{for all }d\ge1,\quad \alpha,\rho\in(0,1/10).
}
\tag{S2.10}\label{legacy:eq:2.10}
\]
There is no restriction on \(\log(1/\alpha)\) or \(\log(1/\rho)\); in particular the subregion \(\log(1/\alpha)\lesssim d/\rho^2\) is covered. The same fixed-confidence order holds for each individual class \(H\), with constants uniform over \(H\) and \(X\).

\textbf{(c) Two structural cases with no logarithmic gap.} For every VC-one class on a countable domain, there is a proper learner attaining
\[
m_H(\alpha,\beta,\rho)
=
\Theta\!\left(
\frac1{\rho^2\alpha^2}
+\frac{\log(1/\beta)}{\alpha^2}
\right).
\tag{S2.11}\label{legacy:eq:2.11}
\]
For \(X=[d]\) and \(H=\{-1,+1\}^{[d]}\), there is a proper learner attaining
\[
m_H(\alpha,\beta,\rho)
=
\Theta\!\left(
\frac d{\rho^2\alpha^2}
+\frac{\log(1/\beta)}{\alpha^2}
\right).
\tag{S2.12}\label{legacy:eq:2.12}
\]

\textbf{(d) Budget comparison for the three earlier routes.} Equation \eqref{legacy:eq:2.10} gives the exact fixed-confidence order. We now compare the three earlier bounds as \(\beta\) varies. Theorem~\ref{thm:upper} supplies the uniform optimum.

For the earlier general-purpose bounds, consider the low-dimensional regime
\[
2\le d\ll
\min\{\log(1/\rho),\log(1/\beta)\},
\]
outside the absorption region \eqref{legacy:eq:2.6}. For example, when
\[
d=2,\qquad \alpha=\frac1{20},\qquad \beta=\rho\longrightarrow0,
\]
the additional terms in the bounds for \(R\) and \(P\) each have order
\[
\frac{\log(1/\rho)}{\rho^2\alpha^2},
\]
whereas the additional term for \(\mathrm S\) has order
\[
\frac{\log^2(1/\rho)}{\rho^2\alpha^2}.
\]

The additional terms in the three bounds are
\[
\frac1{\alpha^2}
\left(\frac{\ell}{\rho^2}+\ell K_0\right),\qquad
\frac1{\alpha^2}
\left(\frac L{\rho^2}+LK\right),\qquad
\frac{s_{\mathrm S}}{\rho^2\alpha^2}.
\]
For any nonnegative additional numerator \(s\), including
\(s=\ell,L,s_{\mathrm S}\), its \(\rho^{-2}\)-amplified term is absorbable into the lower-bound scale with a uniform constant exactly when
\[
s\lesssim d+\rho^2\log(1/\beta).
\tag{S2.13}\label{legacy:eq:2.13}
\]
For routes \(R\) and \(P\), the additional terms \(\ell K_0\) and \(LK\) must also be absorbed.

The dependence on \(\alpha\) yields a second comparison. For the rational parameter family
\[
\rho=\beta=\frac1q,\qquad
\alpha=2^{-q^3},\qquad
d=\lceil\log q\rceil,\qquad q\in\mathbb N,\ q\ge16,
\]
the dimension is comparable to both logarithms, while the smallest displayed upper-bound numerator divided by \(B_*\) diverges. The proof below verifies this. Attainment of the additive scale \eqref{legacy:eq:2.7} by these three bounds is characterized by \eqref{legacy:eq:2.6}; part (c) gives separate structural guarantees.

Relative to the target \(B_*/\alpha^2\) in \eqref{legacy:eq:2.7}, uniform constant-factor comparison of these bounds amounts to boundedness of \(\min\{E_R,E_P,E_{\mathrm S}\}/B_*\). An unbounded ratio is equivalent to failure of \eqref{legacy:eq:2.6} for every fixed \(A\) along a subsequence. Part (e) compares the weaker target of \cite{HIY}.

\textbf{(e) Answer to the additional-term question.} Hopkins, Impagliazzo, and Ye \cite{HIY} ask whether the additive \(\rho^{-2}\alpha^{-O(1)}\) contribution is necessary and whether a \(\rho\)-pointwise replicable learner can use \(O(m(\alpha,\beta)\rho^{-2})\) samples. Here \(m(\alpha,\beta)\) denotes the agnostic sample complexity without replicability, which is of order \((d+\log(1/\beta))/\alpha^2\). The first question is answered in the negative for every countable domain and every class of finite VC dimension, with explicit constants: the additional \(\rho^{-2}\alpha^{-4}\) term in the earlier general construction is unnecessary, and confidence amplification can be achieved at an additive sample cost rather than a cost multiplied by \(\rho^{-2}\). The second question concerns the weaker target \(O(m(\alpha,\beta)\rho^{-2})=O((d+b)/(\rho^2\alpha^2))\), in which the confidence term may be multiplied by \(\rho^{-2}\); it must be distinguished from the additive target \(\Theta(d/(\rho^2\alpha^2)+b/\alpha^2)\) of parts (b)--(d). Put \(B_H=(d+b)/\rho^2\). For the three displayed routes, the constant-factor test for the target of \cite{HIY} is
\[
\min\{Q_R,Q_P,Q_{\mathrm S}\}\le A'B_H
\tag{S2.14}\label{legacy:eq:2.14}
\]
for a fixed \(A'<\infty\). On this region, \eqref{legacy:eq:2.14} and \eqref{legacy:eq:2.1U} give \(m_H\le C_3A'B_H/\alpha^2=O_{A'}(m(\alpha,\beta)\rho^{-2})\). The structural cases of part (c) attain this target by \eqref{legacy:eq:2.11}--\eqref{legacy:eq:2.12}.

The region certified for the target of \cite{HIY} strictly contains \eqref{legacy:eq:2.6}. Indeed, \eqref{legacy:eq:2.6} with constant \(A\) implies \eqref{legacy:eq:2.14} with \(A'=A+2\). The family \(d=2\), \(\alpha=1/20\), \(\beta=\rho\to0\) satisfies \eqref{legacy:eq:2.14} with \(A'=5\) through route \(R\), while violating \eqref{legacy:eq:2.6} for every fixed \(A\). Along the rational family in part (d), the ratio \(\min\{Q_R,Q_P,Q_{\mathrm S}\}/B_H\) is unbounded. These comparisons characterize the earlier bounds. Theorem~\ref{thm:main} attains the stronger additive target throughout the parameter range. For every fixed confidence \(\beta_0\), the order is exactly \(\Theta_{\beta_0}(d/(\rho^2\alpha^2))\), with no logarithmic factor.

\end{legacytheorem}

\subsubsection*{Completion of the proof of \legacyref{2.2}{Theorem}}

\legacyref{5.1}{Theorem} and \legacyref{5.2}{Lemma} prove the lower bound in \eqref{legacy:eq:2.1}. \legacyref{4.2}{Proposition} proves the \(Q_P\) upper bound and \eqref{legacy:eq:2.2}. \legacyref{4.4}{Theorem} proves the \(Q_R\) upper bound. \legacyref{7.6}{Theorem} proves the \(Q_{\mathrm S}\) upper bound. All these statements are uniform over the classes in \legacyref{2.2}{Theorem}, so they remain valid after taking the supremum over those classes.

On the parameter range in question,
\[
L=\Theta(b),\qquad
\ell=\Theta(\log(1/\rho)),\qquad
J=\Theta(\log(1/\rho)),
\]
\[
K_0=\Theta(\log(1/(\alpha\rho))),
\qquad
K=\Theta\!\left(
\log\frac{\log(1/\beta)}{\alpha^2\beta}
\right).
\]
For example, \(L\le2b\), and both \(\ell\) and \(J\) are at most
\(4\log(1/\rho)\). Expanding the three numerators gives \eqref{legacy:eq:2.3}, \eqref{legacy:eq:2.4}, and \eqref{legacy:eq:2.4S}. The first route already proves \eqref{legacy:eq:2.5}.

For the absorption test, observe that
\[
Q_R=B_*+E_R,
\qquad
Q_P=B_*+E_P-b,
\qquad
Q_{\mathrm S}=B_*+E_{\mathrm S}+\log4.
\]
Moreover,
\[
b\le\rho^2E_P,\qquad \rho^2<1/100,
\]
and \(\log4\le B_*\). Thus
\[
B_*+\frac{99}{100}E_P
\le Q_P\le B_*+E_P,
\]
and
\[
B_*+E_{\mathrm S}
\le Q_{\mathrm S}\le2B_*+E_{\mathrm S}.
\]
These comparisons establish \eqref{legacy:eq:2.6}--\eqref{legacy:eq:2.7}, including the stated constant-factor interpretation. The logarithmic comparisons above establish the sufficient regions \eqref{legacy:eq:2.8}--\eqref{legacy:eq:2.9}. Equation \eqref{legacy:eq:2.9S} is the soft-route absorption condition.

\legacyref{7.5}{Theorem} proves \eqref{legacy:eq:2.10}. \legacyref{6.2}{Theorem} and \legacyref{6.3}{Theorem} prove the proper structural assertions in part (c). Multiplying
\[
\frac{s}{\rho^2}\lesssim\frac d{\rho^2}+b
\]
by \(\rho^2\) proves \eqref{legacy:eq:2.13}. Substituting
\(d=2,\alpha=1/20,\beta=\rho\) gives the three overheads stated in part (d).

We now quantify the overhead of the three bounds in part (d) of \legacyref{2.2}{Theorem}.

Let \(q\ge16\) be an integer and put
\[
r=\log q,\qquad
\rho=\beta=q^{-1},\qquad
\alpha=2^{-q^3},\qquad
d=\lceil r\rceil.
\]
All three probability/accuracy parameters are rational and lie in the required range. Since \(r>1\),
\[
r\le d\le2r,
\qquad
B_*=dq^2+r\le3rq^2.
\]
Also
\[
\ell\ge r,\quad L\ge r,\quad
K_0\ge2q^3\log2,\quad
K\ge2q^3\log2.
\]
It follows that
\[
\frac{Q_R}{B_*}\ge\frac{2\log2}{3}q,
\qquad
\frac{Q_P}{B_*}\ge\frac{2\log2}{3}q.
\]
Since \(L<J\), the soft numerator uses \(s_{\mathrm S}=L^2\). Therefore
\[
\frac{Q_{\mathrm S}}{B_*}
\ge\frac{L^2q^2}{3rq^2}
\ge\frac r3.
\]
Consequently,
\[
\frac{\min\{Q_R,Q_P,Q_{\mathrm S}\}}{B_*}
\longrightarrow\infty,
\]
although
\[
\frac d{\min\{\log(1/\rho),\log(1/\beta)\}}
=\frac{\lceil r\rceil}{r}\longrightarrow1.
\]
Thus the overhead of these three bounds extends beyond \(d\ll\min\{\log(1/\rho),\log(1/\beta)\}\) when \(\alpha\) is unrestricted. The exact absorption test describes their full parameter dependence.

Part (e) follows from the established upper bounds and \legacyref{7.5}{Theorem}, together with the following three facts about \eqref{legacy:eq:2.14}. First, \(B_*=d/\rho^2+b\le(d+b)/\rho^2=B_H\). If \eqref{legacy:eq:2.6} holds with constant \(A\), then for a route attaining the minimum in \eqref{legacy:eq:2.6} the comparisons above give \(Q\le(2+A)B_*\); hence \(\min\{Q_R,Q_P,Q_{\mathrm S}\}\le(A+2)B_*\le(A+2)B_H\), which is \eqref{legacy:eq:2.14} with \(A'=A+2\). Second, let \(d=2\), \(\alpha=1/20\), and \(\beta=\rho\), so that \(b=\log(1/\rho)>\log10\). Then \(\ell\le4b\) and \(K_0=\log(294912\cdot400\,\ell)+b\le19+\log(4b)+b\le19+2b\), so
\[
\ell K_0+b\le4b(19+2b)+b=b(77+8b)\le b\,e^{2b}\le B_H,
\]
because \(77+8b\le e^{2b}\) for \(b\ge\log10\). Also \((2+\ell)/\rho^2\le(2+4b)/\rho^2\le4B_H\). Hence \(Q_R\le5B_H\), and \eqref{legacy:eq:2.14} holds with \(A'=5\) along this family. On the same family, \(E_R/B_*\ge(\ell/\rho^2)/(2/\rho^2+b)\ge b/(2+b\rho^2)\to\infty\), and likewise \(E_P/B_*\ge L/(2+b\rho^2)\to\infty\) and \(E_{\mathrm S}/B_*\ge L^2/(2+b\rho^2)\to\infty\), so \eqref{legacy:eq:2.6} fails for every fixed \(A\). Third, on the rational family above, \(B_H=(d+r)q^2\le3rq^2\le3B_*\), so \(\min\{Q_R,Q_P,Q_{\mathrm S}\}/B_H\ge\min\{Q_R,Q_P,Q_{\mathrm S}\}/(3B_*)\to\infty\).

\subsection{Notation, constants, and logarithmic factors}\label{legacy:section:constants}

\legacyheading{constants.1}{Notation}

\begin{center}\small
\begin{longtable}{>{\raggedright\arraybackslash}p{0.34\textwidth}>{\raggedright\arraybackslash}p{0.60\textwidth}}
\toprule
\textbf{Symbol} & \textbf{Meaning} \\
\midrule
\endhead
\(w_x,y_x\) & Atom mass and conditional label mean \\
\(e(f),\operatorname{OPT}\) & True risk and infimal risk in \(H\) \\
\(N_x,C_x,s_x\) & Hit count, label sum, \(C_x/\sqrt{N_x}\), with zero-hit value \(0\) \\
\(M,\tau,a\) & Gate sample count, danger scale, \(a=\sqrt{M\tau}\) \\
\(u_x^\pm\) & Shared independent gate uniforms on \([1,2]\) \\
\(v_x\) & Shared independent rounding uniforms on \([-1,1]\) \\
\(T,h_t,p_x\) & Block count, base outputs, average prediction \\
\(B_R\) & \(\sum_{x\in R}w_x^2y_x^2\), the rounding range-square sum \\
\(K_E\) & ERM square-moment constant \(2^{24}\), distinct from gate logarithms \\
\(\eta\) & \(\min\{\beta/8,\rho/128\}\) in rare repair \\
\(\Theta,U\) & Shared filter threshold and quantile uniform in the VC-one learner \\
\(H_c,\prec\) & Countable proper core and its laminar interval order \\
\(m_H,m^*\) & Class-specific and worst-case sample complexities \\
\bottomrule
\end{longtable}
\end{center}

The rare-repair algorithm uses no additional random threshold. The variables \(\Theta,U\) are used in the ordered VC-one construction.

\legacyheading{constants.2}{Closed-form verification of the constants}

The table evaluates the explicit formulas used in the sample-budget proofs.

\begin{center}\small
\begin{longtable}{>{\raggedright\arraybackslash}p{0.34\textwidth}>{\raggedright\arraybackslash}p{0.60\textwidth}}
\toprule
\textbf{Constant} & \textbf{Verification} \\
\midrule
\endhead
Hoeffding \(1/8\) & Tilted variance \(\le c^2/4\), integrated twice \\
Hoeffding tail \(2\) & Optimize \(-\lambda t+\lambda^2V/8\) at \(\lambda=4t/V\) \\
Gate \(\sqrt2/a\) & \(\operatorname{Var}(s_x)\le1\), two-sample difference variance \(\le2\) \\
Dangerous threshold \(9\tau\) & Mean magnitude \(>3a\), all thresholds have magnitude \(\le2a\) \\
Deterministic \(25\tau/4\) & Mean magnitude \(\ge5a/2\), deviation \(\le a/2\) \\
VC packing \((5e/\eta)^{2d}\) & Collision sample of size \(\lceil2\log N/\eta\rceil\), \(0<\eta\le1\) \\
Chaining \(35\) & \(1+24\sqrt2<35\) \\
Symmetrization \(70\) & Twice the chaining bound \\
VC \(2^{16}\) & \(99/(256\sqrt2)<1/2\) \\
\(K_E=2^{24}\) & \(2^{16}\cdot16^2\) \\
Moment \(21e^d\) & \(1+20e^d\le21e^d\) \\
Mean \(2^{14}\sqrt{d/n}\) & \(2K_E(d+\log21)\le2^{28}d\) \\
P block \(2^{27}\) & \(8K_E\) \\
Gate \(1152\) & \(64\cdot18\) \\
Gate failure \(13/144\) & \(1/16+1/36\) \\
General accuracy \(85/144\) & \(1/16+1/36+1/4+1/4\) \\
P accuracy \(67/144\) & \(1/16+1/36+1/8+1/4\) \\
General disagreement \(5/(4\sqrt2)\) & \(\sqrt2/8+1/\sqrt2\) \\
Repair disagreement \(35/64\) & \(1/2+2(1/64+1/128)\) \\
Route-one \(\tau_0\) denominator \(288\) & \(18\cdot4^2\) \\
Route-one log constant \(294912\) & \(16\cdot288\cdot64\) \\
Route-one gate \(18432\) & \(64\cdot288\) \\
Explicit route coefficient \(2^{30}\) & Dominates \(419504424\) and all other coefficients \\
Lower potential \(37/100\) & \(1-(9/4)(7/25)\) \\
\(\kappa\) & \((37/(500\pi))^2\) \\
\(\bar\kappa\) & Harmonic combination of \(\kappa\) and \(3/50\) \\
VC-one \(\varepsilon_{\rm mis}\) & \((4/\alpha)(2\rho\alpha/2^{11})=\rho/256\) \\
VC-one disagreement & \((16/15)(8\rho/256+4\rho/64)=\rho/10\) \\
Atom-class gate term \(512d\) & Ensures \(M\tau\ge8/\rho^2\) \\
Atom-class empirical excess & \(2\sqrt{d\tau}=\alpha/4\) \\
Realizable \(3072\) & \(64\cdot48\) \\
Realizable \(768\) & \(16\cdot48\) \\
Realizable failure \(19/36\) & \(1/4+1/36+1/4\) \\
Independent-rounding \(20\) & \(16+32/21<20\), with \(N\alpha^2\ge1\) \\
\bottomrule
\end{longtable}
\end{center}

Selected decimal values:
\[
\begin{array}{c|c}
\kappa&0.0005548348016254416\\
\bar\kappa&0.0005497511174224673\\
85/144&0.5902777777777778\\
35/64&0.546875\\
5/(4\sqrt2)&0.8838834764831844\\
99/(256\sqrt2)&0.2734514505369852
\end{array}
\]

\legacyheading{constants.3}{Origin of each logarithmic factor}

\begin{center}\small
\begin{longtable}{>{\raggedright\arraybackslash}p{0.34\textwidth}>{\raggedright\arraybackslash}p{0.60\textwidth}}
\toprule
\textbf{Factor} & \textbf{Source and treatment} \\
\midrule
\endhead
\(\log(1/\beta)\) in \(L\) & Rounding tail budget \\
\(\log(1/\tau)\) in \(K\) & Simultaneous safety/coverage over atoms \\
\(\log T\) in U & Union bound over base blocks \\
\(\log(1/\alpha)\) in M & Bad-block probability \(\alpha\beta/16\) \\
\(\log(1/\gamma)/T\) in P & Product of square-exponential moments; total cost is additive \\
\(\ell=\log(256/\rho)\) & Running the base construction at failure probability \(\rho/64\) \\
Additive \(\log(1/\beta)\) after repair & Proper reference and validation, neither multiplied by \(\rho^{-2}\) \\
VC-one validation \(\log(T+1)\) & Finite union bound for accuracy; absorbed by \(\rho^{-2}\) \\
Realizable \(d\log(1/\alpha)\) & One-sided consistent-learning growth bound \\
No agnostic \(d\log(1/\alpha)\) in the ERM block term & Removed by empirical chaining \\
\bottomrule
\end{longtable}
\end{center}

\legacyheading{constants.4}{Additional notation for SOFT-VC}

\begin{center}\small
\begin{longtable}{>{\raggedright\arraybackslash}p{0.34\textwidth}>{\raggedright\arraybackslash}p{0.60\textwidth}}
\toprule
\textbf{Symbol} & \textbf{Meaning} \\
\midrule
\endhead
\(\zeta_x\) & \(w_xy_x/2\), the signed population histogram \\
\(\widehat\zeta_x\) & Signed empirical histogram, zero at unobserved points \\
\(\mathcal E_D(p)\) & Linear risk of a fractional prediction \\
\(\Pi\) & Clipping map onto \([-1,1]\) \\
\(\lambda\) & Soft-shift strength; not a gate count mean \\
\(U_x\) & Independent shared rounding uniforms on \([-1,1]\) \\
\(J\) & \(\log(512/\rho)\), the repaired soft learner's internal confidence logarithm \\
\(s_{\mathrm S}\) & \(\min\{L^2,J^2\}\) \\
\(Q_{\mathrm S},E_{\mathrm S}\) & Third-route numerator and its additional overhead \\
\(C_{\mathrm{soft}}\) & \(2^{37}\), direct SOFT-VC coefficient \\
\(C_3\) & \(2^{39}\), coefficient valid after repair and selection \\
\(\delta_1\) & Markov budget for the random SMALL-region variance proxy \\
\(B_0\) & Count of the specified Poissonized witness events, not all negative gate outputs \\
\(W\) & Event of selecting the repair branch \\
\bottomrule
\end{longtable}
\end{center}

\legacyheading{constants.5}{Closed-form verification of the constants}

All values below are recomputed from the formulas in the text; none is a simulation result.

\begin{center}\small
\begin{longtable}{>{\raggedright\arraybackslash}p{0.34\textwidth}>{\raggedright\arraybackslash}p{0.60\textwidth}}
\toprule
\textbf{Constant or coefficient} & \textbf{Derivation} \\
\midrule
\endhead
Clipped-shift \(2/3\) & Complete the square in \(-3z^2/2-2z\Delta\) \\
Incorrect-endpoint \(1/2\) & \(2ab\le(a+b)^2/2\) \\
Histogram second moment \(1/(4M)\) & \(\|V_i\|_2=1/2\) and independence \\
Histogram change \(1/M\) & \(\|V_i-V_i'\|_2\le1\) \\
Histogram tail \(e^{-2Mt^2}\) & Doob increments have range length \(1/M\) \\
Average-ERM coefficient \(32K_E\) & Moment exponent \(n/(2K_E)\), squared threshold \(\alpha^2/16\) \\
\(\lambda=4L/\alpha\) & Gives \(-\lambda\alpha/2=-2L\) \\
Direct histogram coefficient \(512\) & \(32\cdot4^2\) \\
\((1+\sqrt{2L})^2\le6L\) & \(3+2\sqrt2<6\), \(L\ge1\) \\
Direct failure \(3\beta/4\) & Two budgets \(e^{-L}\), one budget strictly below \(e^{-L}\) \\
Direct disagreement \(\rho/4\) & \(\rho/8+\rho/8\) \\
Direct \(d\)-coefficient & \(5280K_E+34=88583700514<2^{37}\) \\
\(C_{\mathrm{soft}}=2^{37}\) & Dominates the coefficients of \(d/\rho^2,L^2/\rho^2,L\) \\
\(C_{\beta_0}\) & \(2^{37}(1+L_0^2+L_0)\) \\
Repaired \(\lambda_B\) & \(4J/(\alpha/4)=16J/\alpha\) \\
Repaired histogram coefficient \(8192\) & \(32\cdot16^2\) \\
Switch budget & \(<\rho/128+\eta\le\rho/64\) \\
Repaired disagreement \(9\rho/32\) & \(1/4+2(1/64)=9/32\) \\
Repaired accuracy & \(2\eta\le\beta/4\) \\
Repaired \(A_d\) & \(16896K_E+2^{22}+36=283472035876\) \\
Repaired \(A_s\) & \(17408K_E+8192+2^{22}+128=292061978752\) \\
Repaired \(A_L\) & \(2^{22}+128=4194432\) \\
\(C_3=2^{39}\) & Dominates \(A_d,A_s,A_L\) \\
Full-parameter lower \(c=\kappa/2\) & Separate lower bounds for \(d/\rho^2\) and \(L\), then average \\
Expected SMALL proxy \(16\tau\) & Split at \(v=4a\), use \(\operatorname{Var}(s)\le1\) \\
SMALL interface exponent \(1/8\) & \(2t^2/(16\tau/\delta_1)=\delta_1t^2/(8\tau)\) \\
Fixed-confidence SMALL denominator \(256\) & \(\delta_1=\beta_0/8\), \(t=\alpha/2\) \\
Witness probability exponent \(6\nu\) & \(1+2\log8<6\) \\
Witness lower-tail \(1/8\) & Exponential Markov at \(t=1/2\) \\
De-Poissonization factor \(3M\) & \(\Pr(\operatorname{Poi}(M)=M)>1/(3M)\) \\
Two-atom sample coefficient \(32\) & Hoeffding for \([-1,1]\) at deviation \(\alpha/4\) \\
Three-atom variance \(1/50\) & \(2(1/4)(1/5)^2\) \\
\bottomrule
\end{longtable}
\end{center}

\textbf{Algorithm constants.} The constant \(2^{37}\) belongs to the direct algorithm of \legacyref{7.4}{Theorem}. Expanding the inner accuracy and all sample blocks for the repaired algorithm gives the constant \(2^{39}\).

\legacyheading{constants.6}{Logarithmic factors of SOFT-VC}

\begin{center}\small
\begin{longtable}{>{\raggedright\arraybackslash}p{0.34\textwidth}>{\raggedright\arraybackslash}p{0.60\textwidth}}
\toprule
\textbf{Factor} & \textbf{Source and treatment} \\
\midrule
\endhead
\(L\) in the average-ERM term & Product of square-exponential moments; appears as \(L/T\) per block and costs \(O(L/\alpha^2)\) in total \\
No \(\log T\) in SOFT-VC & No union bound over the \(T\) ERMs \\
\(L\) in \(\lambda\) & Suppresses the final rounding-seed upper tail \\
\(L^2/\rho^2\) & Current pointwise stability calculation requires \(M\ge32\lambda^2/\rho^2\) \\
No atom-count logarithm in the histogram & Concentration is for one Hilbert norm, not a union over coordinates \\
No \(\log(1/\alpha)\) in the soft route & No simultaneous atomwise sign-safety requirement \\
\(J=\log(512/\rho)\) & Internal confidence \(\rho/128\) in the repaired soft learner \\
\(J^2/\rho^2\) & The same shift-sensitivity cost after substituting the internal confidence \\
Additive \(L\) after repair & Reference ERM and validation; not multiplied by \(\rho^{-2}\) \\
\(\min\{L^2,J^2\}\) & Deterministic parameter-based selection of direct or repaired soft learning \\
\(\log(8/\beta_0)\) in \legacyref{7.7}{Remark}(a) & A fixed rounding-tail budget after the Markov split; absorbed into \(c_{\beta_0}\) \\
\(\log q\) in the forced-gate obstruction & Implementation-specific accumulation of witnessed incorrect forced outputs \\
Gate factors \(\ell K_0\), \(LK\) & Sample-budget terms of the retained gate routes \\
\bottomrule
\end{longtable}
\end{center}

\subsection{Complete pseudocode for the earlier constructions}\label{legacy:section:algorithms}

The pseudocode calls only objects proved in the text; all sample blocks are mutually independent.

\begin{small}\begin{verbatim}
PR-VC(alpha, beta, rho, strategy):
    L   := log(4/beta)
    tau := alpha^2/(18 L)
    K   := log(16/(tau beta))
    M   := ceil(64 (rho^(-2)+K)/tau)
    T   := ceil(rho^(-2))

    Choose block size n by (S4.12), (S4.13), or (S4.14).
    Split samples into one gate block of size M and T blocks of size n.
    Train independent proper ERMs h[1],...,h[T].

    Shared randomness:
        u_x^+,u_x^- independently Uniform[1,2], for all x
        v_x independently Uniform[-1,1], for all x
        all gate and rounding variables mutually independent

    For any query x:
        N_x := number of gate observations at x
        C_x := sum of their labels
        s_x := 0 if N_x=0, otherwise C_x/sqrt(N_x)
        a   := sqrt(M tau)

        if s_x >= a u_x^+: return +1
        if s_x <= -a u_x^-: return -1
        p_x := average_t h[t](x)
        return +1 iff v_x <= p_x
\end{verbatim}\end{small}

\begin{small}\begin{verbatim}
ADDITIVE-CONFIDENCE(alpha, beta, rho):
    Train B with parameters (alpha/4, rho/64, rho/2),
        using (S4.27)-(S4.28).
    eta := min(beta/8, rho/128)
    Train independent proper ERM g using (S4.20).
    Validate B and g on the independent sample (S4.21).
    Return B if empirical_error(B) <= empirical_error(g)+alpha/2;
    otherwise return g.
\end{verbatim}\end{small}

\begin{small}\begin{verbatim}
ORDERED-VC-ONE(alpha, beta, rho):
    Fix the countable proper core and interval order of Lemma S6.1.
    Train T base ERMs and reference g using (S6.4)-(S6.5).
    Validate on the sample (S6.6).
    Shared Theta ~ Uniform[alpha/4, alpha/2], U ~ Uniform(0,1).
    Accept base h iff empirical_error(h)-empirical_error(g) <= Theta.
    If no model is accepted: return g.
    Otherwise quantile-sample the accepted multiset using U and the fixed order.
    On the quantile map's entire uncovered null set: return the single model h_0.
\end{verbatim}\end{small}

For the full class on \(d\) atoms, use \eqref{legacy:eq:6.15} and return \(+1\) at SMALL points. For the realizable corollary, use \eqref{legacy:eq:6.17}--\eqref{legacy:eq:6.18} with the same query rule as PR-VC.

\begin{small}\begin{verbatim}
SOFT-VC(H, d, alpha, beta, rho):

    Fix the countable proper core from Lemma S6.1.
    Fix one enumeration and the first-minimizer ERM rule.

    K_E    := 2^24
    L      := log(4 / beta)
    T      := ceil(32 / rho^2)
    n      := ceil((32*K_E / alpha^2) *
                   (d + log(21) + L/T))
    lambda := 4*L / alpha
    M      := ceil(32*lambda^2 / rho^2)

    Split T*n + M iid observations into:
        S[1],...,S[T], each of size n;
        V, of size M.

    For j = 1,...,T:
        h[j] := deterministic proper ERM on S[j]

    For every observed x in V:
        C[x] := sum of labels of observations in V located at x
    For every unobserved x:
        C[x] := 0

    Shared randomness:
        independent U_x ~ Uniform[-1,1], for every x in X;
        jointly independent of all observations.

    Return the binary function evaluated at a query x by:
        p := (1/T) * sum_j h[j](x)
        q := clip(p + lambda*C[x]/M, -1, 1)
        return +1 if U_x <= q, and -1 otherwise
\end{verbatim}\end{small}

\begin{small}\begin{verbatim}
SOFT-VC-WITH-REPAIR(H, d, alpha, beta, rho):

    B := SOFT-VC(H, d, alpha/4, rho/128, rho)

    eta   := min(beta/8, rho/128)
    n_ref := ceil((2^22/alpha^2) * (d + log(2/eta)))
    n_val := ceil((128/alpha^2) * log(4/eta))

    Train a deterministic proper reference ERM g
        on an independent block of size n_ref.

    On another independent block of size n_val:
        compute empirical_error(B) and empirical_error(g)

    If empirical_error(B) > empirical_error(g) + alpha/2:
        return g
    Else:
        return B
\end{verbatim}\end{small}

All population quantities \(w_x,y_x,\zeta_x\) are used only in proofs. The algorithms do not use a distribution oracle. Countable shared randomness follows the random-string convention permitted by \legacyref{2.1}{Definition}.

\subsection{Version-specific pointers for the cited comparisons}\label{legacy:section:sources}
The following pointers retain the reference annotations of the first version. They identify
the source numbering used in the preceding comparisons. They are external statement
numbers, not additional results numbered in this appendix.

\paragraph{Approximate replicability.}
The comparisons with Hopkins, Impagliazzo, and Ye~\cite{HIY} refer to version 2 of
arXiv:2510.20200, revised 7 April 2026. In particular, \legacyref{6.5}{Remark} concerns
the statements and proofs in that version.

\paragraph{Coin testing and the score argument.}
For Impagliazzo, Lei, Pitassi, and Sorrell~\cite{ILPS}, the pointer is Section 7,
Lemma 7.2 of arXiv:2201.08430. Its coin lower bound uses replicability at intermediate
biases. The separate multi-coordinate calculation and cubic potential retained here
are in \legacyref{5.1}{Theorem}.

\paragraph{Risk filtering and shared truncation.}
For Bun et al.~\cite{BGH23}, the reduction from sign-one-way marginals to agnostic
learning is in Section 5.2.2, Algorithm 9 and Theorem 5.9 of arXiv:2303.12921.
Step 3 of Algorithm 9 aborts on overflow when a shattered point occurs more than
$d/\log^c d$ times, with the parameters and exponent $c$ specified in that source.
The finite-class risk filter using a shared random order is in Section 5.3.1,
Algorithm 10, and Section 5.3.2, Theorem 5.14. For the distinction between coupling
and correlated sampling, see Kalavasis et al.~\cite{KKMV23}, Appendix A.4.

\paragraph{The classical VC-one structure.}
The pointers for Bollob\'as, Pritchard, Rothvo\ss{}, and Scott~\cite{BPRS12} refer to
arXiv:1009.6144v3, May 2012. Section 4 contains Definition 16 (cross-free), Definition 17
(laminar), and the observation after Definition 16 relating VC dimension one to a
cross-free dual hypergraph. See also Theorem 6 and the remark after Lemma 21 on the
directed-tree representation, attributed there to Edmonds and Giles. The corresponding
primal description in Ben-David~\cite{BD15} uses Definition 3 (tree ordering), Claim 2,
Theorem 4, and Lemma 5. The use of this structure for the proper learner is proved in
\legacyref{6.1}{Lemma} and \legacyref{6.2}{Theorem}.

\paragraph{Success-probability boosting and fallback.}
For Banihashem et al.~\cite{BBEGH}, see Theorem 5 for the informal statement and
Section 7, Theorem 12 and Remark 5 for the formal boosting result. The comparison uses
arXiv:2604.10423v1. For Aamand et al.~\cite{AAC}, the numbering is that of
arXiv:2507.02842v2: Section 3.4, Proposition 3.10 and its following post-processing
paragraph discuss early stopping with fallback to a nonreplicable algorithm;
Section 6.3.1, Theorem 6.15 concerns coin testing. The pointwise agnostic repair used
here is stated and proved in \legacyref{4.3}{Lemma} and \legacyref{7.6}{Theorem}.

\endgroup
\clearpage
\section*{AI methodology}
AI tools assisted with mathematical discussion and checking, literature searches, manuscript
organization, drafting, language editing, and preparation of the \LaTeX{} source.
\bibliographystyle{alphaurl}
\bibliography{references}

\newcommand{\etalchar}[1]{$^{#1}$}
\begin{thebibliography}{KKMV23}

\bibitem[AAC{\etalchar{+}}26]{AAC}
Anders Aamand, Maryam Aliakbarpour, Justin~Y. Chen, Shyam Narayanan, and
  Sandeep Silwal.
\newblock On the structure of replicable hypothesis testers.
\newblock In {\em Proceedings of the 2026 Annual ACM-SIAM Symposium on Discrete
  Algorithms}, pages 5771--5823. SIAM, 2026.
\newblock Statement numbers refer to arXiv:2507.02842v2, September 2025.
\newblock URL: \url{https://arxiv.org/abs/2507.02842v2}, \href
  {https://doi.org/10.1137/1.9781611978971.205}
  {\path{doi:10.1137/1.9781611978971.205}}.

\bibitem[Azu67]{Azu67}
Kazuoki Azuma.
\newblock Weighted sums of certain dependent random variables.
\newblock {\em Tohoku Mathematical Journal, Second Series}, 19(3):357--367,
  1967.
\newblock \href {https://doi.org/10.2748/tmj/1178243286}
  {\path{doi:10.2748/tmj/1178243286}}.

\bibitem[BBE{\etalchar{+}}26]{BBEGH}
Kiarash Banihashem, MohammadHossein Bateni, Hossein Esfandiari, Samira
  Goudarzi, and MohammadTaghi Hajiaghayi.
\newblock Replicable composition, 2026.
\newblock arXiv:2604.10423v1.
\newblock URL: \url{https://arxiv.org/abs/2604.10423v1}.

\bibitem[BD15]{BD15}
Shai Ben-David.
\newblock 2 notes on classes with {Vapnik--Chervonenkis} dimension 1, 2015.
\newblock arXiv:1507.05307.
\newblock URL: \url{https://arxiv.org/abs/1507.05307}.

\bibitem[BGH{\etalchar{+}}23]{BGH23}
Mark Bun, Marco Gaboardi, Max Hopkins, Russell Impagliazzo, Rex Lei, Toniann
  Pitassi, Satchit Sivakumar, and Jessica Sorrell.
\newblock Stability is stable: Connections between replicability, privacy, and
  adaptive generalization.
\newblock In {\em Proceedings of the 55th Annual ACM Symposium on Theory of
  Computing}, 2023.
\newblock Statement numbers refer to arXiv:2303.12921.
\newblock URL: \url{https://arxiv.org/abs/2303.12921}, \href
  {https://doi.org/10.1145/3564246.3585246}
  {\path{doi:10.1145/3564246.3585246}}.

\bibitem[BLM13]{BLM13}
St{\'e}phane Boucheron, G{\'a}bor Lugosi, and Pascal Massart.
\newblock {\em Concentration Inequalities: A Nonasymptotic Theory of
  Independence}.
\newblock Oxford University Press, 2013.
\newblock \href {https://doi.org/10.1093/acprof:oso/9780199535255.001.0001}
  {\path{doi:10.1093/acprof:oso/9780199535255.001.0001}}.

\bibitem[BPRS13]{BPRS12}
B{\'e}la Bollob{\'a}s, David Pritchard, Thomas Rothvo{\ss}, and Alex Scott.
\newblock Cover-decomposition and polychromatic numbers.
\newblock {\em SIAM Journal on Discrete Mathematics}, 27(1):240--256, 2013.
\newblock Statement numbers refer to arXiv:1009.6144v3, May 2012.
\newblock URL: \url{https://arxiv.org/abs/1009.6144v3}, \href
  {https://doi.org/10.1137/110856332} {\path{doi:10.1137/110856332}}.

\bibitem[Dud67]{Dud67}
Richard~M. Dudley.
\newblock The sizes of compact subsets of {Hilbert} space and continuity of
  {Gaussian} processes.
\newblock {\em Journal of Functional Analysis}, 1(3):290--330, 1967.
\newblock \href {https://doi.org/10.1016/0022-1236(67)90017-1}
  {\path{doi:10.1016/0022-1236(67)90017-1}}.

\bibitem[Hau95]{Haussler1995}
David Haussler.
\newblock Sphere packing numbers for subsets of the {Boolean} $n$-cube with
  bounded {Vapnik--Chervonenkis} dimension.
\newblock {\em Journal of Combinatorial Theory, Series A}, 69(2):217--232,
  1995.
\newblock \href {https://doi.org/10.1016/0097-3165(95)90052-7}
  {\path{doi:10.1016/0097-3165(95)90052-7}}.

\bibitem[HIY26]{HIY}
Max Hopkins, Russell Impagliazzo, and Christopher Ye.
\newblock Approximate replicability in learning, 2026.
\newblock arXiv:2510.20200v2, revised 7 April 2026.
\newblock URL: \url{https://arxiv.org/abs/2510.20200v2}.

\bibitem[Hoe63]{Hoe63}
Wassily Hoeffding.
\newblock Probability inequalities for sums of bounded random variables.
\newblock {\em Journal of the American Statistical Association},
  58(301):13--30, 1963.
\newblock \href {https://doi.org/10.1080/01621459.1963.10500830}
  {\path{doi:10.1080/01621459.1963.10500830}}.

\bibitem[ILPS22]{ILPS}
Russell Impagliazzo, Rex Lei, Toniann Pitassi, and Jessica Sorrell.
\newblock Reproducibility in learning.
\newblock In {\em Proceedings of the 54th Annual ACM SIGACT Symposium on Theory
  of Computing}, 2022.
\newblock Statement numbers refer to arXiv:2201.08430.
\newblock URL: \url{https://arxiv.org/abs/2201.08430}, \href
  {https://doi.org/10.1145/3519935.3519973}
  {\path{doi:10.1145/3519935.3519973}}.

\bibitem[KKMV23]{KKMV23}
Alkis Kalavasis, Amin Karbasi, Shay Moran, and Grigoris Velegkas.
\newblock Statistical indistinguishability of learning algorithms.
\newblock In {\em Proceedings of the 40th International Conference on Machine
  Learning}, volume 202 of {\em Proceedings of Machine Learning Research},
  pages 15586--15622. PMLR, 2023.
\newblock URL: \url{https://proceedings.mlr.press/v202/kalavasis23a.html}.

\bibitem[Sau72]{Sau72}
Norbert Sauer.
\newblock On the density of families of sets.
\newblock {\em Journal of Combinatorial Theory, Series A}, 13(1):145--147,
  1972.
\newblock \href {https://doi.org/10.1016/0097-3165(72)90019-2}
  {\path{doi:10.1016/0097-3165(72)90019-2}}.

\bibitem[She72]{She72}
Saharon Shelah.
\newblock A combinatorial problem; stability and order for models and theories
  in infinitary languages.
\newblock {\em Pacific Journal of Mathematics}, 41(1):247--261, 1972.
\newblock URL: \url{https://msp.org/pjm/1972/41-1/p21.xhtml}, \href
  {https://doi.org/10.2140/pjm.1972.41.247}
  {\path{doi:10.2140/pjm.1972.41.247}}.

\bibitem[VC71]{VC71}
Vladimir~N. Vapnik and Alexey~Ya. Chervonenkis.
\newblock On the uniform convergence of relative frequencies of events to their
  probabilities.
\newblock {\em Theory of Probability and Its Applications}, 16(2):264--280,
  1971.
\newblock \href {https://doi.org/10.1137/1116025} {\path{doi:10.1137/1116025}}.

\bibitem[Zha26]{Zhang2026}
Guangjian Zhang.
\newblock Pointwise replicable agnostic learning: The additive term is
  unnecessary and the fixed-confidence order is exact, 2026.
\newblock Zenodo preprint, version 1, 9 September 2026.
\newblock URL: \url{https://zenodo.org/records/22658876}, \href
  {https://doi.org/10.5281/zenodo.22658876}
  {\path{doi:10.5281/zenodo.22658876}}.

\end{thebibliography}
\end{document}